\newtheorem{theorem}{Theorem}
\newtheorem{lemma}[theorem]{Lemma}
\newtheorem{corollary}[theorem]{Corollary}
\newtheorem{proposition}[theorem]{Proposition}
\newtheorem{definition}[theorem]{Definition}
\theoremstyle{remark}
\newtheorem{remark}[theorem]{Remark}
\newtheorem{example}{Example}
\newcommand{\K}{k}
\newcommand{\Kc}{k_c}
\newcommand{\emb}{\Phi}
\newcommand{\embK}{\emb_{\K}}
\newcommand{\embC}{\emb_{\Kc}}
\newcommand{\f}{f}
\DeclareMathOperator{\lin}{\mathrm{span}}
\DeclareMathOperator{\supp}{supp}
\DeclareMathOperator{\Dp}{\ensuremath{\partial_{\it p}}}
\DeclareMathOperator{\sinc}{sinc}
\newcommand{\inputS}{\mathcal{X}}
\newcommand{\real}{\mathbb{R}}
\newcommand{\complex}{\mathbb{C}}
\newcommand{\nat}{\mathbb{N}}
\newcommand{\HH}{\mathcal{H}}
\newcommand{\HKpre}{\HH_{\K}^{\mathrm{\scriptscriptstyle{pre}}}}
\newcommand{\HK}{\HH_{\K}}
\newcommand{\Hop}{\HH_{\partial^{(0,p)} \K}}
\newcommand{\Hpp}{\HH_{\partial^{(p,p)} \K}}
\newcommand{\HC}{\HH_{\Kc}^{\scriptscriptstyle{0}}}
\newcommand{\HCpre}{\HH_{\Kc}^{\scriptscriptstyle{0, pre}}}
\newcommand{\F}{\mathcal{F}}
\newcommand{\Func}{\complex^\inputS}
\newcommand{\Lp}[1]{\mathrm{L}^{#1}}
\newcommand{\Cont}[2]{\mathscr{C}^{#1}_{#2}}
\newcommand{\Sobo}[2]{\mathcal{W}^{\,#1}_{#2}}
\newcommand{\M}{\mathcal{M}}
\newcommand{\Mf}{\M_{\! f}}
\newcommand{\Mfs}{\M_{\delta}}
\newcommand{\Mc}{\M_{c}}
\newcommand{\Mr}{\M_{r}}
\newcommand{\Mprob}{\mathcal{P}}
\newcommand{\Pc}{\mathcal{P}_{c}}
\newcommand{\D}{\mathcal{D}}
\newcommand{\Dcomp}[1]{\mathscr{E}^{#1}}
\newcommand{\DLone}[1]{\mathcal{D}^{#1}_{\scriptscriptstyle{L^1}}}
\newcommand{\Dall}[1]{\mathcal{D}^{#1}}
\newcommand{\B}{\mathcal{B}}
\renewcommand{\S}{\mathcal{S}}
\newcommand{\narrow}{\sigma}
\newcommand{\topo}{\mathfrak{T}}
\newcommand{\One}{\mathbbm{1}}
\newcommand{\diff}{\mathop{} \! d} %\mathop ~ Declaremathoperator, et c’est ici utilisé pour ajouter un espace avant le d
\newcommand{\ipdK}[2]{\ensuremath{\left \langle #1 \, | \, #2 \right \rangle_{\K}}}
\newcommand{\ipdC}[2]{\ensuremath{\left \langle #1 \, | \, #2 \right \rangle_{\Kc}}}
\newcommand{\ipd}[2]{\ensuremath{\left \langle #1 \, | \, #2 \right \rangle}}
\newcommand{\normK}[1]{\ensuremath{\left \| #1 \right \|_{\K}}}
\newcommand{\normC}[1]{\ensuremath{\left \| #1 \right \|_{\Kc}}}
\newcommand{\norm}[1]{\ensuremath{\left \| #1 \right \|}}
\newcommand{\Four}[1]{\ensuremath{\mathop{\mathscr{F} #1}}}
\newcommand{\csubset}{\ensuremath{\hookrightarrow}}
\newcommand{\csupset}{\ensuremath{\hookleftarrow}}
\newcommand{\cupset}{\rotatebox[origin=c]{90}{$\csubset$}}
\newcommand{\cdownset}{\rotatebox[origin=c]{-90}{$\csubset$}}
\newcommand{\vequal}{\rotatebox[origin=c]{90}{$=$}}
\providecommand{\function}[5]{} % Just there, so that renew command would always work afterwards.
\renewcommand{\function}[5]{
	\ensuremath{
	\mathchoice{
	\ifthenelse{\equal{#1}{}}
            	{
            	\begin{array}[t]{ccl}
            		\ifthenelse{{\equal{#2}{}}}{
            		#4 & \longmapsto & #5}{
            		#2 & \longrightarrow & #3
            		 \ifthenelse{\equal{#4}{}} {} {\\
            		#4 & \longmapsto & #5}}
            	\end{array} \!
            	}
            	{
            	\begin{array}[t]{lccl}
            	#1 : 
            		\ifthenelse{{\equal{#2}{}}}{
            		& #4 & \longmapsto & #5}{
            		& #2 & \longrightarrow & #3
            		 \ifthenelse{\equal{#4}{}} {} {\\
            		 & #4 & \longmapsto & #5}}
            		 \end{array} \!
            	}
	}
	{
		\ifthenelse{\equal{#1}{}}
		{
			\ifthenelse{{\equal{#2}{}}}{
			{#4 \mapsto \, #5}}{
			{#2 \rightarrow \, #3}
			 \ifthenelse{\equal{#4}{}} {} {
			, \; {#4 \mapsto \, #5}}}
		}
		{
		#1 : 
			\ifthenelse{{\equal{#2}{}}}{
			{#4 \mapsto  \, #5}}{
			{#2 \rightarrow \, #3}
			 \ifthenelse{\equal{#4}{}} {} {
			, \; {#4 \mapsto \, #5}}}
		}	
	}
{}{}}}
\renewcommand*{\CustomAcronymFields}{%
  name={\the\glsshorttok},% name is abbreviated form
  description={\the\glslongtok},% description is long form
  first={\noexpand{\the\glslongtok}\space(\the\glsshorttok)},%
  firstplural={\noexpand{\the\glslongtok\noexpand\acrpluralsuffix}\space(\the\glsshorttok)},%
  text={\the\glsshorttok},%
  plural={\the\glsshorttok\noexpand\acrpluralsuffix}%
}
\newacronym{ispd}{$\int$s.p.d.\@}{integrally strictly positive definite}
\newacronym{spd}{s.p.d.\@}{strictly positive definite}
\newacronym{cpd}{c.p.d.\@}{conditionally positive definite}
\newacronym{iff}{iff}{if and only if}
\newacronym{wrt}{w.r.t.\@}{with respect to}
\newacronym{lcv}{loc.\ cv.\@}{locally convex}
\def\iftodo{\iffalse} % to switch to do notes ON/OFF
\def\ifcomments{\iftrue} % To switch comment ON/OFF
\def\iflong{\iffalse} % To switch between short and long version
\def\ifshort{\iftrue} % To switch between short and long version % forget about this!
\newcommand{\Todo}[2][]{
	\iftodo
		\ifthenelse{\equal{#1}{}}{\todo{#2}}{\todo[#1]{#2}}
	\fi
}
\newcommand{\acks}[1]{
	\subsection*{Acknowledgements}
	#1
	}
\title{Kernel Distribution Embeddings:\\
Universal Kernels, Characteristic Kernels and Kernel Metrics on Distributions}
  \author[1]{Carl-Johann {Simon-Gabriel}\thanks{cjsimon@tuebingen.mpg.de}}
  \author[1]{Bernhard Sch\"olkopf\thanks{bs@tuebingen.mpg.de}}
  \affil[1]{Max Planck Institute for Intelligent Systems\\ Department of Empirical Inference\\
Spemanstra{\ss}e 38, 72076 T{\"u}bingen\\ Germany}
\date{}
\begin{document}

\maketitle

\begin{abstract}
	% !TEX root = ./DistributionEmbeddings_Arxiv.tex
% !TEX root = ./DistributionEmbeddings_JMLR.tex

%%%% ABSTRACT/ACKNOLEDGEMENTS %%%%%                   

	%\CJ{Shorten abstract. Now: 300 words. Maximum: 200 words.}\Bernhard{should have no formulas, definitions etc. in the abstract; ideally just text. I changed it. Ideally it shouldnt even have latex symbols since abstracts are indexed by systems that can often only handle text.}
	%\CJ{I kept your changes, but rewrote much of this abstract to make it more punchy and attractive. Feel free to improve it as you think necessary!}
	Kernel mean embeddings have recently attracted the attention of the machine learning community. They map measures $\mu$ from some set $\M$ to functions in a reproducing kernel Hilbert space (RKHS) with kernel $\K$.
% via the map 
%	\begin{equation*}
%		\function{\embK}{}{}{\mu}{\int \K(.,x)\diff \mu(x)}\, .
%	\end{equation*}
%	When the KME $\embK$ is injective over $\M$, the kernel is said \emph{characteristic over $\M$}.
	The RKHS distance of two mapped measures is a semi-metric $d_\K$ over $\M$. %; it is a metric whenever $\embK$ is injective, in which case $\K$ is said to be characteristic over $\M$.
% 	If the topology generated by $d_\K$ coincides with the original topology $\topo$ of $\M$, the kernel is said to \emph{metrize} the topology~$\topo$. 
	We study three questions.
	\begin{enumerate}[label = (\Roman*), noitemsep, leftmargin = 8em]
		\item For a given kernel, what sets $\M$ can be embedded? \label{MQ1}
		\item When is the embedding injective over $\M$ (in which case $d_\K$ is a metric)? \label{MQ2}
		\item How does the $d_\K$-induced topology compare to other topologies on $\M$? \label{MQ3}
	\end{enumerate}
	The existing machine learning literature has addressed these questions in cases where $\M$ is (a subset of) the finite regular Borel measures. We unify, improve and generalise those results. 
	Our approach naturally leads to continuous and possibly even injective embeddings of (\mbox{\emph{Schwartz-})}\emph{distributions}, i.e., generalised measures, but the reader is free to focus on measures only. %, that may even remain injective over large spaces of such distributions. %For example, any stationary infinitely differentiable kernel that is characteristic over the space of finite measures with compact support is shown to be characteristic over the whole space of distributions with compact supporst.
	In particular, we systemise and extend various (partly known) equivalences between different notions of universal, characteristic and strictly positive definite kernels, %that appeared in the literature. 
	and show that on an underlying locally compact Hausdorff space, $d_\K$ metrises the weak convergence of probability measures if and only if $\K$ is continuous and characteristic. % over $\Mprob$.

\end{abstract}

\paragraph{Keywords:}	kernel mean embedding, universal kernel, characteristic kernel, Schwartz-distributions, kernel metrics on distributions, metrisation of the weak topology
\setlength{\parskip}{4pt}

% !TEX root = ./DistributionEmbeddings_JMLR.tex
% !TEX root = ./DistributionEmbeddings_Arxiv.tex
% \pdfoutput=1
    
\setenumerate{label=(\roman*), noitemsep}
\setitemize{label=$\triangleright$, noitemsep}
\setdescription{labelindent=1em,leftmargin=4em, style=nextline}
% \setlength\glsdescwidth{0.28\linewidth}

% \bibliographystyle{plain}	% A supprimer quand on utilise le package biblatex
% \bibliography{BibInternship.bib}	%Ne pas placer ici quand on en a besoin

%\iftodo
%	\listoftodos
%\fi
% \newpage

%\nocite{*}
\textcolor{red}{The following preprint is an older and longer version of the
paper with same title published in JMLR \citep{simon18kernel}. The JMLR version
is completely restructured and hopefully easier to follow. We leave this older
version online for citation consistency and some additional content, but we
advise to start with the JMLR version and refer to this older version only if
needed. Also, please note that the proofs of Theorems
\ref{theo:GeneralExistence} and \ref{theo:GeneralExistenceCompact} are flawed
and have not been fixed.}

\section{Introduction}
	%\CJ{I rewrote and shortened most of this introduction. Can you read it and make any changes you feel are necessary?}
	During the past two decades, reproducing kernel Hilbert spaces (RKHS) have risen to a major tool in various areas of machine learning.

	They offer a variety of nice function spaces for dimensionality-reduction, regression and classification algorithms, such as kernel PCA, kernel regression and SVMs. For those algorithms, the function space should be sufficiently large to approximate well the unknown target function, and its norm sufficiently strong to avoid overfitting. These two requirements lead to the question how a given RKHS $\HK$ compares with other well-known function spaces~$\F$. The question is two-fold. When is $\HK$ \emph{continuously} contained in $\F$? And when is it dense in $\F$? During the past two decades, the machine learning community focused on the second, more difficult question. Kernels $\function{\K}{\inputS \times \inputS}{\complex}{}{}$ whose RKHS is dense in $\F$ were called \emph{universal}, with slight variations depending on $\F$. The latter was usually a space of continuous functions. But we will try to ``narrow the gap'' between $\HK$ and $\F$ by choosing a smaller space $\F$, containing for example only continuously differentiable functions; with far-reaching consequences for kernel mean embeddings (KME).

	These KMEs only recently caught the attention of the machine learning community. They embed finite Borel measures into $\HK$ via the map $\function{\embK}{}{}{\mu}{\int_\inputS \K(.,x) \diff \mu(x)}$. This gives access to the RKHS's computational handiness to handle measures numerically, %---mathematical objects that may a priori seem complicated to handle numerically.
and led to new homogeneity \citep{gretton07}, distribution comparison \citep{gretton07,gretton12} and (conditional) independence tests \citep{gretton05a, gretton08, fukumizu08, gretton10, lopez-paz13}. For those algorithms, it is interesting when $\embK$ is defined and injective over large sets $\M$, in which case the kernel is said \emph{characteristic} over $\M$. It turned out that many usual kernels are \emph{characteristic} over (subsets of) finite Borel measures \citep{micchelli06, fukumizu08,fukumizu09a,fukumizu09b, sriperumbudur08, sriperumbudur10a, sriperumbudur11}. Nevertheless, KMEs are in general not surjective, leaving ``a gap'' between $\M$ and $\HK$ that we will try to narrow by extending $\embK$ to generalised measures, i.e.\ Schwartz-distributions (thereafter simply called \emph{distributions}, as opposed to \emph{measures}). %This task may look rather useless in practice, but we will show that it is rather byproduct of a theory, 

	Narrowing the gaps between $\HK$ and $\F$ and between $\M$ and $\HK$ may seem like unrelated problems, but they are actually linked by duality. If $\HK$ continuously embeds into $\F$, Corollary~\ref{cor:EmbedDuals} will show that the KME is well-defined over the dual $\F'$ and Theorem~\ref{theo:UniversalMeansCharacteristic}  that the kernel is universal over $\F$ if and only if (iff) it is characteristic to $\F'$. For wisely chosen spaces $\F$, $\F'$ identifies with a space of measures; for others, it becomes a space of distributions. Corollary~\ref{cor:EmbedDuals} and Theorem~\ref{theo:UniversalMeansCharacteristic} thus finally unify and systemise various known links between different notions of universal and characteristic kernels catalogued in an overview paper by \citet{sriperumbudur11}; and they give a straightforward method to extend KMEs from measures to distributions! Proposition~\ref{prop:SPD} complements these results by noticing that kernels are characteristic to a given vector space iff they are \gls{spd} over this space. This systemises the various connections (also reviewed by \citealp{sriperumbudur11}) between universal and characteristic kernels and different notions of \gls{spd} kernels, such as conditionally \gls{spd} or integrally \gls{spd} kernels. Table~\ref{tab:UCS} summarises these connections. 

	Using (integrally) strict positive definiteness, \citet{sriperumbudur10a} gave handy conditions to check whether a stationary (also known as translation-invariant) kernel is characteristic. We improve and extend these results to distributions. Surprisingly, it turns out that any smooth and stationary kernel that is characteristic to finite measures with compact support is also characteristic to the larger space of compactly supported distributions (Proposition~\ref{prop:CompactCharacteristicRd})!

	The RKHS distance between two mapped measures (or distributions) $\mu$ and $\nu$ defines a semi-metric $d_\K$ over $\M$: $d_\K(\mu,\nu) := \normK{\embK(\mu) - \embK(\nu)}$. It is only natural to compare it with other topologies on $\M$. Table~\ref{tab:TopoSummary} summarises these comparisons. They have important consequences. For one thing, they guarantee continuity of KMEs when $\M$ is a dual equipped with its strong dual topology, the most common topology on duals (Proposition~\ref{prop:ContinuousEmbedding}). For another, they prove that our extension of KMEs to distributions with compact support is \emph{the only} continuous linear extension of usual KMEs (Proposition~\ref{prop:UniqueExtension}). And most importantly, they finalise a series of recent results \citep{sriperumbudur10b, sriperumbudur13} by showing that on locally compact Hausdorff spaces, $d_\K$ metrises the narrow\footnote{The narrow convergence is also known as the weak or weak-* convergence in probability theory. Except for the abstract, we call it narrow to distinguish it from the weak (dual) topology from functional analysis.} convergence of probability measures iff $\K$ is continuous and characteristic to probability measures (Theorem~\ref{theo:MetrizationOfNarrow2}). Metrics metrising the narrow convergence are of prime importance for convergence results in probability theory. Many such metrics exist, for example the Dudley, Lévy-Prohorov and Wasserstein (or Kantorovich) metrics. But  $d_\K$ has numerous advantages, particularly for applications. First, it underlies various machine learning algorithms, including almost all applications of KMEs, as well as kernel independent component analysis \citep{bach02, gretton05b, shen09} and kernel based dimensionality reduction for supervised learning \citep{fukumizu04}. Second, it is easy to estimate, even with finite samples only, because $\normK{\embK(\mu)}^2 = \iint \K(x,y) \diff \mu(x) \diff \bar \mu(y)$ (which also holds for distributions!). And third, it depends only on the kernel and can thus be defined over arbitrary input domains~$\inputS$. For further details, see introduction of \citet{sriperumbudur10b}.
	
	Overall, our main contribution is to unify and finalise many results on KMEs that were discovered and used by the machine learning community during the past decade. Contrary to probability measures, which often need extra care and are given special attention in this paper, embeddings of distributions are rather a byproduct of our systematic unification. For generality we phrase many results in terms of distributions, but readers may focus on measures only, if they systematically: 
\begin{itemize}
	\item replace the word ``distribution'' by ``measure'';
	\item set the parameter $m$ to $0$ (see notations in Section~\ref{sec:Definitions});
	\item remember that $\DLone{0} = \Mf$ and $\Dcomp{0} = \Mc$.
\end{itemize}
	
	Section~\ref{sec:Differentiation} however will lose most of its substance, as it focuses on distributional derivatives: the essence of distribution theory. %the relation between $\partial\, \embK(\mu)$ and $\embK(\partial \mu)$, where $\partial \mu$ is the distributional derivative of measure $\mu$.
As an appetiser however, let us show how these derivatives may appear and relate to KMEs. Consider an input space $\inputS = \real$, and let $\vec \varphi_\K(x) := \embK(\delta_x) = \K(.,x)$. When $\K$ is continuously differentiable, then $\vec \varphi_\K$, as a function of $x$ with values in $\HK$, is differentiable and its derivative $\partial \vec \varphi_\K$ belongs to $\HK$ \citep[Lemma~4.34]{steinwart08}. It is thus tempting to write
\begin{equation}\label{eq:DiffIntro}
	[\partial \vec \varphi_\K](x) := \lim_{h \rightarrow 0} \embK \left(\frac{\delta_{x+h} - \delta_x}{h}\right) \overset{?}{=} \embK \left(\lim_{h \rightarrow 0} \frac{\delta_{x+h} - \delta_x}{h}\right) = \embK (- \partial \delta_x) = - \embK(\partial \delta_x) \, , 
\end{equation}
where $\partial \delta_x := \lim_{h \rightarrow 0} \frac{\delta_{x-h} - \delta_x}{h}$ would denote the distributional derivative of $\delta_x$. %, and where the limit would be taken over an appropriate distributional topology \citep[see][Chapitre\,III, Théorème\,XVII]{schwartzTD}. 
Thus, the function $\partial \vec \varphi_\K$ does not seem to be the image of a measure, but the image of the first-order Schwartz-distribution $- \partial \delta_x$, which in physics is called a dipole. Equation~\eqref{eq:DiffIntro} will be made precise in Section~\ref{sec:Differentiation}.

	The structure of this paper roughly follows questions~\ref{MQ1}--\ref{MQ3} of the abstract. After fixing definitions and notations in Section~\ref{sec:Definitions}, we define the KME of distributions in Section~\ref{sec:Duality}. This gives answers to~\ref{MQ1} and immediately yields Theorem~\ref{theo:UniversalMeansCharacteristic} which links universal and characteristic kernels. To answer~\ref{MQ2}, we first need basic calculus rules for embedded distributions, together with some results specific to KMEs of probability measures. Those are covered by Sections~\ref{sec:Calculus} and \ref{sec:ProbabilityMeasures} respectively. Section~\ref{sec:UniversalKernels} answers~\ref{MQ2} by giving necessary and sufficient conditions for kernels to be characteristic to some distribution spaces. In particular, it shows that kernels which injectively embed large spaces of distributions do exist and provides examples. Section~\ref{sec:Topology} addresses~\ref{MQ3} by focusing on the induced kernel semi-metric $d_\K$, with a special emphasis on the metrisation of the narrow convergence of finite measures. Section~\ref{sec:Conclusion} gives a brief overview of related work and concludes.

\section{Definitions and Notations\label{sec:Definitions}}

%\CJ{I was tempted to move this section to the appendix and define here only the most important, non-standard notations. I could certainly win much space, but it feels odd to have some notations I use only defined in the Appendix. What do you think? I will only move this section, if you really think it is worth the effort.}
%\CJ{I use the command $\backslash$paragraph to group my definitions in Section 2 and Section 7.1 . The $\backslash$paragraph command however is banned from JMLR. But I really do not like all the alternatives they propose!! Replacing $\backslash$paragraph by $\backslash$emph is much less clear and simply ugly. And using an extra subsubsection is a complete overkill and waste of space. Do you have any better idea?}\Bernhard{no, i didn't even know it was banned, but in that case i would recommend: don't worry and get used to journals using a style that you don't like --- it happens all the time ;). You could also ignore it for now, it will only become relevant once the paper is accepted.}
%
Most of our notations are fairly standard. An informed reader might skip to Section~\ref{sec:Duality} and refer to Section~\ref{sec:Definitions} only as needed. 
 
\paragraph{Input and output space.} Let $\inputS$ be the input set of all considered kernels and functions. Whenever referring to differentiable functions or to distributions of order $\geq 1$, we will \emph{implicitly} take $\inputS = \Omega \subset \real^d$, where d is a strictly positive integer, $\real$ the set of real numbers, and $\Omega$ an open subset of $\real^d$. Otherwise, when referring to functions which need not be differentiable or to distributions of order $0$ (i.e.\ measures), $\inputS$ will simply be a \emph{locally compact paracompact Hausdorff} set. Note that a Hausdorff set is paracompact \gls{iff} it admits partitions of unity subordinate to any open cover. For many results, the paracompactness assumption will be superfluous, but it is handy to keep it for ease of discussions. We will explicitly lift this assumptions when we think it is worth it. $\inputS$ will be equipped with its Borel $\sigma$-algebra.  All considered functions, measures and distributions will take their values in $\complex$, the complex numbers.
	
\paragraph{Kernel.} In this paper, a \emph{kernel} $\function{\K}{\inputS \times \inputS}{\complex}{}{}$ will be a positive definite function, meaning that for all $n \in \nat$, all $\lambda_1, \ldots, \lambda_n \in \complex$, and all $x_1, x_2, \ldots x_n \in \inputS$, 
\[
	\sum_{i,j=1}^{n} \lambda_i \K(x_i,x_j) \overline{\lambda_j} \geq 0.
\]
	
\paragraph{Differentiation.} Unless stated otherwise, $m$ will always designate a positive, possibly infinite integer: $m \in \nat \cup \{ \infty \}$. Let $p = (p_1, p_2, \ldots, p_d) \in \nat^d$, where $\nat$ is the set of non-negative integers. We note $|p| := \sum_{i = 1}^{d} p_i$. For a given function $\function{f}{\inputS}{\complex}{}{}$, we define $\partial^p f := \frac{\partial^{|p|} f}{\partial^p_1 x_1 \partial^p_2 x_2 \cdots \partial^p_d x_d}$, whenever the right-hand-side is well defined. $f$ will be said $m$-times continuously differentiable, if for any $p$ such that $|p| = m$, $\partial^p f$ exists and is continuous. Similarly, for kernels $\function{\K}{\Omega \times \Omega}{\complex}{}{}$, we will write $\partial^{(p,q)} \K$, where $p, q \in \nat^d$. A kernel will be said $(m,m)$-times continuously differentiable if, for all $p$ such that $|p| = m$, $\partial^{(p, p)} \K$ exists and is continuous. In particular, $\partial^{(p,q)} \K$ then exists and is continuous for any $|p|,|q| \leq m$. The same notations will be used for the distributional derivative.

\paragraph{Topological subsets.} Let $\mathcal S_1$ and $\mathcal S_2$ be two topological sets such that $\mathcal S_1 \subset \mathcal S_2$. If the canonical embedding of $\mathcal S_1$ into $\mathcal S_2$ is continuous, $\mathcal S_1$ is said to \emph{embed continuously into}  $\mathcal S_2$ or to \emph{be continuously contained in} $\mathcal S_2$. In that case, we write
\begin{equation*}
	\mathcal S_1 \csubset \mathcal S_2 \qquad \text{or} \qquad \mathcal S_2 \csupset \mathcal S_1 \, .
\end{equation*}
In other words, $\mathcal S_1 \csubset \mathcal S_2$ means: $\mathcal S_1$ is contained in $\mathcal S_2$ and carries a stronger  topology than the relative topology induced by $\mathcal S_2$. %If $\mathcal S_1$ and $\mathcal S_2$ are so-called \emph{topological vector spaces} (TVSs), we may also say that $\mathcal S_1$ is a topological subspace of $\mathcal S_2$.

\paragraph{Spaces of functions.} The letter $\F$ will always designate a \gls{lcv} topological vector space (TVS) of functions. For (a few) reminders on \gls{lcv} TVSs, see Appendix~\ref{sec:Reminders}. \emph{All function spaces encountered in this paper are \gls{lcv} TVSs.}  Let $m$ be a possibly infinite, non-negative integer, and let $q \in \real$, with $1 \leq q \leq \infty$. Whenever it is defined, we note $\norm{f}_{\infty} := \sup_{x \in \inputS} |f(x)|$. We will consider the following spaces of functions over $\inputS$. Except $\Cont{m}{b}$, they will always be equipped with their usual, natural topology. The reader need not know all these topologies to understand the paper. They are stated here solely for completeness and rigour.
\begin{description}
	\item[$\Func$] the space of \emph{all} functions from $\inputS$ to $\complex$, equipped with the pointwise convergence topology.
	\item[$\Cont{m}{}$] the space of m-times continuously differentiable functions. It is equipped with the topology of uniform convergence on compact subsets of the functions and of their derivatives of order $\leq m$. This topology is generated by the family of semi-norms ${\{\norm{.}_{p,K} \, | \, |p| \leq m, \, K \subset \inputS, \, K \ \mathrm{compact} \}}$, where $\norm{f}_{p, K} := \max_{K} \partial^p |f|$.
	\item[$(\Cont{m}{b})_c$] the space of m-times continuously differentiable functions $f$, such that all their derivatives up to order $m$ be bounded. The natural topology would be the uniform convergence of the functions and of all their derivatives up to order $m$. This topology, however, will be too strong for our purposes. Instead, we will equip $\Cont{m}{b}$ with a weaker (LCv) topology (see Section~\ref{sec:MprobAndUniversality}) ---which we mark by the index $c$ in $(\Cont{m}{b})_c$--- such that the dual of $(\Cont{m}{b})_c$ be the same as the dual of $\Cont{m}{0}$ (see next item).
	\item[$\Cont{m}{0}$] the space of m-times continuously differentiable functions that vanish at infinity, as well as all their derivatives of order $\leq m$. Note that a function $f \in \Func$ is said to \emph{vanish at infinity}, if, for any $\epsilon>0$, there exists a compact $K \subset \inputS$ such that $|f(\inputS \backslash K)| \leq \epsilon$.  The space $\Cont{m}{0}$ is equipped with the topology of uniform convergence of the functions and of their derivatives of order $\leq m$. This topology is generated by the family of semi-norms ${\{\norm{.}_{p} \, | \, |p| \leq m\}}$, where $\norm{f}_{p} := \max \, \partial^p |f|$.
	\item[$\Cont{m}{c}$] the space of m-times continuously differentiable functions with \emph{compact} support. It will be equipped with its usual limit Fréchet topology \citep[see][]{schwartzTD, treves67}. In this topology, a sequence $(f_n)_n$ of functions converges to $f$ iff
	\begin{enumerate}
		\item there exists a compact $K \subset \inputS$ such that $f_n, f \in \Cont{m}{}(K)$ and 
		\item $f_n \rightarrow f$ in $\Cont{m}{}(K)$.
	\end{enumerate}
	\item[$\Lp{q}$] with its usual $\Lp{q}$-norm $\norm{.}_{L^q}$.
	\item[$\Sobo{m,q}{}:= \{ f \in \Lp{q}  \, | \, \forall \, |p| \leq m, \partial^p f \in \Lp{q}\}$] Here, $\partial^p f$ is the distributional derivative of $f$. The topology on these so-called \emph{Sobolev spaces} is generated by the family of semi-norms $\function{}{}{}{f}{\norm{\partial^p f}_{L^q}}$.
	\item[$\Sobo{m,q}{0}$] the closure of $\Cont{\infty}{c}$ in $\Sobo{m,q}{}$. When $q < \infty$ and $m=0$, or when $q < \infty$ and $\inputS = \real^d$, then it can be shown that $\Sobo{m,q}{0} = \Sobo{m,q}{}$. This does not hold for any arbitrary $\inputS = \Omega$.
\end{description}
Whenever $m=0$, we may drop the superscript $m$. Also, we may write $\Cont{}{0}(\real^d)$ if we want to specify that the input space $\inputS$ is specifically $\real^d$. We will write $\Cont{m}{*}$ if we want to designate either $\Cont{m}{}$ or $\Cont{m}{0}$ without specifying which one of them.
Note that $\Func \csupset \Cont{m}{} \csupset (\Cont{m}{b})_c \csupset \Cont{m}{0} \csupset \Cont{m}{c}$.

We will write $\K \in \Cont{(m,m)}{}$ (resp.\ $\K \in \Cont{(m,m)}{0}$) to express that $\K$ is $(m,m)$-times continuously differentiable (resp.\ $(m,m)$-times continuously differentiable and, for all $|p| \leq m$ and $x \in \inputS$, $\partial^{(p,p)} \K(.,x) \in \Cont{}{0}$ and $\sup_{x \in \inputS} \partial^{(p,p)} \K(x,x) < \infty$).

\paragraph{Spaces of measures.}  We will only consider \emph{regular Borel measures}. This won't be repeated in the sequel. The letter $\M$ will designate a generic set of measures. We will encounter the following spaces (or sets) of (signed) measures.
\begin{description}
	\item[$\Mfs$] the space of measures with \emph{finite} support: ${\Mfs := \lin \{\delta_x \, | \, x \in \inputS \}}$.
	\item[$\Mc$] the finite measures with \emph{compact} support.
	\item[$\Mf$] the \emph{finite} measures.
	\item[$\Mprob$] the set of probability measures. It is the set of \emph{positive} measures in $\Mr$ that sum to~$1$.
	\item[$\Mr$] the space of \emph{locally finite} measures, also known as \emph{Radon measures}. By locally finite, we mean that, on every compact $K \subset \inputS$, the measure takes finite values.
\end{description}
We will also briefly encounter the following subsets of $\Mf$: $\M_{\scriptscriptstyle{+}}$, the finite positive measures; $\Mf^{\scriptscriptstyle{\leq c}}$, the finite measures with total variation $\leq c$; $\M_{\scriptscriptstyle{+}}^{\scriptscriptstyle{\leq c}} := \M_{\scriptscriptstyle +} \cap  \Mf^{\scriptscriptstyle{\leq c}}$; and $\Mprob_c := \Mc \cap \Mprob$.  Note that $\Mfs \subset \Mc \subset \Mf \subset \Mr$. Given a measure $\mu \in \Mr$, we note $|\mu|$ its absolute value. In particular, $|\mu|(\inputS)$ equals its (possibly infinite) total variation. Given a set $\F$ of $\mu$-integrable functions, for $f \in \F$, we will write $\mu(f) := \int f \diff \mu$. In particular, note that $\mu$ defines a linear form over $\F$. This leads us to

\paragraph{Dual spaces and spaces of measures.} 
	The \emph{(topological) dual} $\F'$ of a TVS $\F$ is the space of $\emph{continuous}$ linear forms over $\F$.
	On locally compact Hausdorff spaces $\inputS$, some spaces of measures $\M$ are known to identify, algebraically and topologically,\footnote{when $\F'$ is equipped with its strong dual topology} with the dual $\F'$ of specific function spaces $\F$ via the map $\function{}{\M}{\F'}{\mu}{(\function{}{}{}{f}{\mu(f)})}$ (see Riesz-Markov-Kakutani Representation Theorem in Appendix~\ref{sec:Reminders}). In particular:  $\Mfs = (\Func)'$, $\Mc = \Cont{\prime}{}$, $\Mf = \Cont{\prime}{0}$ and $\Mr = \Cont{\prime}{c}$. When $\inputS = \real^d$, these are special cases of the following

\paragraph{Spaces of distributions.} The letter $\D$ will always be used as a generic to designate a set of distributions. We now list the respective duals of the previously defined spaces of functions.\footnote{Note that our notations differ from those of L.\ Schwartz in that we omit the additional prime he puts to every space of distribution. For him, $\D$ (without prime) would rather designate a space of functions.} 
\begin{description}
	\item[$\Mfs$] which can be shown to be the dual of $\Func$.
	\item[$\Dcomp{m}$] the space of distributions of order $m$ with compact support. It can be shown that, for $m \leq \infty$: $\Dcomp{m} = \{\sum_{|p| \leq n} \partial^p \mu_p \, | \, n < m+1, \, \mu_p \in \Mc \}$. In particular, $\Dcomp{0} = \Mc$.
	\item[$\DLone{m}$] the space of \emph{summable} or \emph{integrable} distributions of order $m$.  It can be shown that, for $m \leq \infty$: $\DLone{m} = \{\sum_{|p| \leq n} \partial^p \mu_p \, | \, n < m+1, \, \mu_p \in \Mf \}$. In particular, $\DLone{0} = \Mf$.
	\item[$\DLone{m}$] same space as before.
	\item[$\Dall{m}$] the space of Schwartz-distributions of order $m$. We will refer to $\Dall{\infty}$ simply as \emph{the space of distributions}. It can be shown that, for $m < \infty$: $\Dall{m} = \{\sum_{|p| \leq m} \partial^p \mu_p \, | \, \mu_p \in \Mr \}$. In particular, $\Dall{0} = \Mr$.
	\item[$\Lp{q'}$] where for $1 \leq q < \infty$, $\frac{1}{q'} + \frac{1}{q} =1$. Note that the dual of $\Lp{1}$ is $\Lp{\infty}$, but the converse is \emph{not} true.
	\item[/]  We will not consider the duals of $\Sobo{m,q}{}$.
	\item[$\Sobo{-m, q'}{0} := (\Sobo{m,q}{0})'$ ($q < \infty$)] For a characterisation of these spaces of distributions, see \citet[Theorem~3.10]{adams75}.
\end{description}
We will call any element of one of these dual spaces a \emph{distribution}. Note that they can all be seen as an element of $\Dall{\infty}$. In particular:  $\Mfs \subset \Dcomp{m} \subset \Sobo{-m-1, q'}{0} \subset \DLone{m} \subset \Dall{m} \subset \Dall{\infty}$ and $\Lp{q'} \subset \Sobo{-m,q}{0} \subset \Dall{\infty}$. These inclusions generalise those noted for measures. A diagram summarising all inclusions of functions and of their duals can be found in Appendix~\ref{sec:InclusionsDiagram}.
% A word on the definition of \emph{distribution spaces}? One of the goals of the paper is to answer: when are RKHS spaces of distributions? This would definitely need a definition of distribution space.

\paragraph{Topologies on duals.}
	When $\mathcal A$ is a set of linear forms over a TVS $\F$, we note $b(\mathcal A, \F)$ or $b(\mathcal \F', \F) \cap A$ (resp.\ $w(\mathcal A, \F)$ or $w(\mathcal \F', \F) \cap A$) the topology of bounded (resp.\ pointwise) convergence over $\F$. We will call $b(\F', \F)$ (resp.\ $w(\F', \F)$) the \emph{strong dual} (resp.\ \emph{weak dual}, or \emph{weak-star}) topology on $\F'$. When $\F = \Cont{}{b}$, we will call $w(\mathcal A, \Cont{}{b})$ the \emph{narrow topology} over $\mathcal A$ and use the letter $\sigma$ rather than $w$: $\sigma(\mathcal A, \Cont{}{b}) := w(\mathcal A, \Cont{}{b})$. Finally, we call $w(\mathcal A, \Cont{}{c})$ the \emph{vague topology} over $\mathcal A$. We will write $(\F')_b$ (resp.\ $(\F')_w$, $(\F')_\sigma$, $(\F')_\topo$) to specify that $\F'$ carries its strong dual topology (resp.\ its weak dual topology, its narrow topology, a given topology $\topo$). By default, $\F'$ will be equipped with its strong topology. When $\F$ is a Banach space, the strong topology of $\F'$ is the topology induced by the dual norm. 
%Similarly to function spaces, when a dual $\F'_1$ can be seen as a subset of a dual $\F'_2$ ($\F'_1 \subset \F'_2$) and when the canonical embedding map is \emph{continuous}, we write
%\begin{equation*}
%	(\F'_1)_{\topo_1} \csubset (\F'_2)_{\topo_2}  \, . %\qquad \text{or equivalently} \qquad \topo_2 \subset \topo_1 \, .
%\end{equation*}
%We then say that $(\F'_1)_{\topo_1}$ is a \emph{topological} subspace of $(\F'_2)_{\topo_2}$, or that $(\F'_1)_{\topo_1}$ \emph{continuously} embeds into $(\F'_2)_{\topo_2}$.
A \emph{topological subspace of distributions} ---or topological distribution space--- is any subspace of $\Dall{\infty}$ that embeds continuously into $\Dall{\infty}$.
%	
%	On the dual $\F'$ of a TVS $\F$, we will mainly consider three different topologies. By default, $\F'$ will be equipped with its \emph{strong (dual) topology} $b(\F', \F)$, which we may also refer to as the \emph{topology of bounded convergence (over $\F$)}. Occasionally, we will equip it with its \emph{weak (dual) topology} $w(\F',\F)$, which we may also refer to as the \emph{weak-star topology}, or the \emph{topology of pointwise convergence over $\F$}. Finally, on spaces of finite measure ($\F' \subset \Mf$), we may also consider the \emph{narrow (convergence) topology} $\sigma(\F', \Cont{}{b})$. It is the topology of pointwise convergence over the space of bounded continuous functions $\Cont{}{b}$. Additionally, we call \emph{vague (convergence) topology} the topology $w(\Mf, \Cont{}{c})$ of pointwise convergence in $\Cont{}{c}$. If needed, we will write $(\F')_b$ (resp.\ $(\F')_w$, $(\F')_\sigma$, $(\F')_\topo$) to specify that $\F'$ carries its strong dual topology (resp.\ its weak dual topology, its narrow topology, the topology $\topo$). Similarly to function spaces, when a dual $\F'_1$ can be seen as a subset of  a dual $\F'_2$ and when the canonical embedding map is \emph{continuous}, we write:
%\begin{equation*}
%	(\F'_1)_{\topo_1} \csubset (\F'_2)_{\topo_2}  \qquad \text{or equivalently} \qquad \topo_2 \subset \topo_1\, .
%\end{equation*}
%We then say that $(\F'_1)_{\topo_1}$ is a \emph{topological} subspace of $(\F'_2)_{\topo_2}$, or that $(\F'_1)_{\topo_1}$ \emph{continuously} embeds into $(\F'_2)_{\topo_2}$.
 If a LCv TVS $\F_1$ is densely and continuously contained in another one $\F_2$, then $\F'_2$ is continuously in $\F'_1$ (see Section~\ref{sec:WwSs}). In particular, all dual spaces listed above are topological distribution spaces: they verify the following continuous inclusions.
\begin{equation*}
	\Mfs \csubset \Dcomp{m} \csubset \Sobo{-m-1, q'}{0} \csubset \DLone{m} \csubset \Dall{m} \csubset \Dall{\infty} \quad \mathrm{and} \quad
	\Lp{q'} \csubset \Sobo{-m,q}{0} \csubset \Dall{\infty} \, .
\end{equation*}
On $\DLone{m}$, we will only consider the strong and weak topologies $b(\DLone{m}, \Cont{m}{0})$ and $w(\DLone{m}, \Cont{m}{0})$.
\iflong
Here are a few remarks on the topologies of the following spaces:
\begin{description}
	\item[$\Mf$] The strong topology is induced by the total variation norm. Thus, $\mu_n \longrightarrow \mu$ strongly \gls{iff} ${\int \diff |\mu_n-\mu| \longrightarrow 0}$.
	\item[$\Mc$] Neither weak nor strong topologies are induced by a norm.  $\mu_n \longrightarrow \mu$ strongly (resp.\ weakly) iff
		\begin{enumerate}%, leftmargin = 5em]
			\item There exists a compact $K \subset \inputS$ such that, for all $n$, $\supp \mu_n \subset K$.
			\item $\int \diff |\mu_n-\mu| \longrightarrow 0$ (resp.\ $\mu_n \longrightarrow \mu$ weakly in $\Mf$).
		\end{enumerate}
	\item[$\Dcomp{m}$] Again, $D_n \longrightarrow D$ strongly (resp.\ weakly) in $\Dcomp{m}$ iff
		\begin{enumerate}%, leftmargin = 5em]
			\item There exists a compact $K \subset \inputS$ such that, for all $n$, $\supp D_n \subset K$.
			\item $D_n \longrightarrow D$ strongly (resp.\ weakly) in $\Dall{m}$ or, equivalently, in $\DLone{m}$.
		\end{enumerate}
		In particular, note the coherence with the precedent inclusion: ${\Dcomp{m} \csubset \DLone{m}}$ and ${(\Dcomp{m})_w \csubset (\DLone{m})_w}$.
		%Note also that the topology in $\Dcomp{m}$ is strictly stronger than that induced by $\DLone{m}$. For instance, $\delta_n \longrightarrow 0$ weakly in $\DLone{m}$, but not in $\Dcomp{m}$.
	\item[$\Dcomp{\infty}, \Dall{\infty}$] Being so-called \emph{Montel spaces}, they enjoy a property very useful in applications: \emph{a sequence converges strongly \gls{iff} it converges weakly}. This does not hold for finite $m$.
	\item [$\HK$] Identifying $\HK$ with its dual space $\HK'$ via anti-linear the Riesz representer map for Hilbert spaces (see Appendix~\ref{sec:Reminders}), one may  define a strong and weak dual topology on $\HK$. The strong is then simply the one induced by the usual RKHS norm. As for the weak topology, \emph{a sequence $f_n \in \HK$ converges weakly to $f \in \HK$ iff, for all $g \in \HK$, $\ipdK{g}{f} \longrightarrow \ipdK{g}{f}$}.
\end{description}

	Finally, let us now introduce another topology that we will use over finite Borel measure and in particular probability measures: the \emph{narrow (convergence) topology}.

	As the names suggests, the weak topology is weaker than the strong one. In particular, $D_n \overset{b}{\longrightarrow} D$ implies $D_n \overset{w}{\longrightarrow} D$. As a matter of fact, the weak topology is sometimes inadequately weak. For instance, in $\Mf$ (and in any $\DLone{m}$), $\delta_n$ converges weakly to $0$, although it is one and the same measure that is being translated. This does not happen in $\Mf$'s strong topology. But then, it turns out the strong topology is often inadequately strong. For instance, $\delta_{x_n}$ converges to $\delta_x$ in total variation norm \gls{iff} $x_n = x$ for all $n \in \nat$, except possibly for a finite number of them. Thus in Probability Theory, on $\Mprob$ (or more generally on $\Mf$) one often prefers the so-called \emph{narrow convergence} or \emph{convergence in distribution} topology, noted $\narrow$. A sequence $\mu_n \in \Mf$ converges narrowly to $\mu \in \Mf$, written $\mu_n \overset{\narrow}{\longrightarrow} \mu$, if $\mu_n(f) \longrightarrow \mu(f)$ for any $f \in \Cont{}{b}$, the set of bounded continuous functions. This topology both prevents mass from disappearing in infinity and enjoys the handiness of the weak topology in applications. The downside however, is that, unless the input space $\inputS$ is compact, the narrow convergence is not a general dual topology\footnote{in the sense that it is not defined for any dual space}. Thus many general functional analysis theorems won't apply directly, and the proofs have to be tuned specifically. That is why in Section~\ref{sec:Topology}, the analysis of the narrow convergence case has a whole subsection on its own!
\fi

\paragraph{Function and distribution related notations.} In accordance with the notation ${\mu(f) := \int f \diff \mu}$, we may also designate $D(f)$ by
\begin{equation*}
	\int f \diff D \quad \mathrm{or} \quad \int f(x) \diff D(x) \quad \mathrm{or}  \quad D_x(f)  \quad \mathrm{or} \quad  D_x(f(\hat x)) \, .
\end{equation*}
Note that, given a function $f$ (or functional $D$), it will sometimes be handy to specify the name of its input variable. To do so, we will write $f(\hat{x})$ (or $D(\hat{f})$), meaning $f$ (or $D$), where $x$ (or $f$) is the dummy input variable. In particular $\K(.,\hat{x}) = (\function{}{}{}{x}{\K(.,x)})$. Finally, we note $\bar f$ the complex conjugate function $\overline{f(\hat x)}$ of $f$.

\paragraph{Supports.}
	The support $\supp \varphi$ of a function $\varphi \in \Func$ is the closure of the set of points $x \in \inputS$ such that $\varphi(x) \not = 0$. The support $\supp D$ of a distribution $D$ is the greatest closed set $S \subset \inputS$, such that for any $\varphi \in \Cont{\infty}{c}$ with support in $\inputS \backslash S$, $D(\varphi) = 0$. When identifying function $\varphi$ with the distribution $\function{}{}{}{f}{\int f \varphi}$, the two definitions coincide. \\%When a function $\varphi$ is identified to a distribution, i.e.\ when $\varphi$ is identified with the absolutely continuous measure $\varphi \diff x$, then both supports coincide. 

%Supporting set of a kernel (as a maximal element of linearly independent sets).?

Let us now turn to some definitions concerning kernels.

\section{Characteristicness: a Dual Counterpart to Universality\label{sec:Duality}}

In this section, we show how to embed general distribution spaces into an RKHS. The link between universal and characteristic kernels will then be straightforward. But before we go into detail, let us briefly sketch the idea of a distribution embedding.

Like any other Hilbert space, an RKHS $\HK$ identifies with its conjugate dual $\overline \HK'$ via the \emph{Riesz representation} map $\function{}{\HK}{\overline \HK'}{f}{\ipdK{\hat \varphi}{f}}$ (see Appendix~\ref{sec:Reminders})\footnote{In this paper, the inner products are chosen linear on the left, and anti-linear on the right.}. For the KME $\int \K(.,x) \diff \mu(x)$ of a measure $\mu$ (and under suitable assumptions on the kernel $\K$ and the measure $\mu$), this associated linear form is simply $\bar \mu \big|_{\HK}$, because
\begin{equation*}
	\ipdK{\hat \varphi}{\int \K(.,x) \diff \mu(x)} = \int \ipdK{\hat \varphi}{\K(.,x)} \diff \bar \mu(x) = \bar \mu(\hat \varphi) \, .
\end{equation*}
Thus, whenever a measure $\mu$ defines a continuous linear form over $\HK$, its KME is the Riesz representer of $\bar \mu \big|_{\HK}$. But when is $\mu$ continuous over $\HK$? It is, as soon as $\mu$ belongs to a dual $\F'$ such that $\HK$ embeds continuously into $\F$. %And when $\mu$ belongs to a set of measures that identifies with the dual of a set of functions $\F$, it is continuous over $\HK$ if $\HK$ embeds continuously into $\F$. %(And the KME is injective \gls{iff} $\HK$ is also dense in $\F$.)
This holds not only for measures, but for all elements of $\F'$. And depending on the choice of $\F$, those elements can be distributions that are not measures.
%But this is not specific to measures. It extends to any dual element of any function space $\F$, even  duals are spaces of distributions.

Now the details. We start with reminders on function spaces and universal kernels, continue with the definition of distribution embeddings and characteristic kernels, and finish with the natural link between universal and characteristic kernels.

%Thus, identifying $\HK$ with this continuous linear form, we may say that the KME of a set of, say, finite measures $\M$ into $\HK$ is nothing but the (anti-linear) map $\function{}{\M}{\HK'}{\mu}{\bar \mu \big|_{\HK}}$. 
%If an RKHS embeds \emph{continuously} into a (wider, LCv) set of functions $\F$ via the canonical embedding map $\function{\imath}{\HK}{\F}{f}{f}$, then the dual $\F'$ of $\F$ embeds (continuously) into the dual $\HK'$ of $\HK$ via the transpose of  But being a Hilbert space, the dual $\HK'$ can be identified to $\HK$ via the Riesz representer map (see Theorem~\ref{theo:RieszRepresentationHilbert} in Appendix). 

%define universal kernels, characteristic kernels, and show the link between both concepts.

\subsection{Universal Kernels}

The literature distinguishes various kinds of universal kernels $\K$, such as $c$-, $cc-$ or $c_0$-universal kernels. They are all special cases of the following unifying definition.

\begin{definition}[Universal Kernels]
	Let $\F$ be a \gls{lcv} TVS and $\K$ a kernel such that $\HK \csubset \F$.
	$\K$ is said \emph{universal over $\F$} if $\HK$ is dense in $\F$.
	When $\F$  equals $\Cont{m}{0}$ (resp.\ $\Cont{m}{0}$), $\K$ will be said $c^m$- (resp.\ \mbox{$c^m_0$-)} universal. When $m=0$, we may drop the superscript $m$.
\end{definition}

\begin{remark} 
	$c$-universality usually refers to the special case where $\inputS$ is compact, the general case being known as $cc$-universality \citep{sriperumbudur10a, sriperumbudur10b}. We find this an unnatural distinction and stick to $c$-universal for both. 
\end{remark}
In the literature, the hypothesis that $\HK$ be not only contained, but \emph{continuously} contained in $\F$ is usually replaced by (necessary and) sufficient smoothness assumptions on the kernel, so that this inclusion holds. For example (see proofs in Appendices~\ref{proof:HKinC} and \ref{proof:HKinCm}):

\begin{proposition}[Characterisation of $\HK \csubset \Cont{}{\star}$\label{prop:ContinuousRKHS}] \label{prop:HKinC}
	$\HK \subset \Cont{}{0}$ (resp.\ $\HK \subset \Cont{}{b}$, resp.\ $\HK \subset \Cont{}{}$) iff the two following conditions hold.
	\begin{enumerate}[label = (\roman*)]%, leftmargin = .75in, itemsep = .25ex, topsep = 2ex]
		\item For all $x \in \inputS$, $K(.,x) \in \Cont{}{0}$ (resp.\ $K(.,x) \in \Cont{}{b}$, resp.\ $K(.,x) \in \Cont{}{}$). \label{p:1}
		\item  $\function{}{}{}{x}{\K(x,x)}$ is bounded (resp.\ bounded, resp.\ \emph{locally} bounded, meaning that, for each $y \in \inputS$, there exists a (compact) neighbourhood of $y$ on which $\function{}{}{}{x}{\K(x,x)}$ is bounded.). \label{p:2}
	\end{enumerate}
	If so, then $\HK \csubset \Cont{}{0}$ (resp.\ $\HK \csubset \Cont{}{b}$\,, thus $\HK \csubset (\Cont{}{b})_c$\,, resp.\ $\HK \csubset \Cont{}{}$).
\end{proposition}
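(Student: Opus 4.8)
The backbone of both directions is the Cauchy--Schwarz bound supplied by the reproducing property: for every $f \in \HK$ and every $x \in \inputS$,
\[
	|f(x)| = |\ipdK{f}{\K(.,x)}| \le \normK{f}\,\normK{\K(.,x)} = \normK{f}\sqrt{\K(x,x)} \, ,
\]
since $\normK{\K(.,x)}^2 = \K(x,x)$. Combined with the density of $\lin\{\K(.,x) \mid x \in \inputS\}$ in $\HK$, this single inequality carries essentially the whole proof. I would treat the three target spaces in parallel, the only difference being whether one controls $\sup_{x \in \inputS} \K(x,x)$ globally (for $\Cont{}{0}$ and $\Cont{}{b}$) or merely on each compact (for $\Cont{}{}$).

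\emph{Sufficiency.} Assuming \ref{p:1} and \ref{p:2}, I would first note that any $g = \sum_i \lambda_i \K(.,x_i)$ in $\lin\{\K(.,x)\}$ lies in $\Cont{}{0}$ (resp.\ $\Cont{}{b}$, resp.\ $\Cont{}{}$) by \ref{p:1}, these being vector spaces. Given $f \in \HK$, pick $g_n$ in this span with $g_n \to f$ in $\HK$. In the bounded case, \ref{p:2} gives $M := \sup_x \K(x,x) < \infty$, so the displayed inequality yields $\norm{g_n - f}_\infty \le \sqrt{M}\,\normK{g_n - f} \to 0$; hence $f$ is a uniform limit of functions in $\Cont{}{0}$ (resp.\ $\Cont{}{b}$), and since these spaces are closed under uniform limits, $f$ lies in them. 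In the locally bounded case I would instead estimate $\sup_{x \in K}|g_n(x)-f(x)| \le (\sup_{x \in K}\K(x,x))^{1/2}\normK{g_n-f}$ on each compact $K$, which is convergence in the topology of $\Cont{}{}$, whence $f \in \Cont{}{}$. Applying the same inequality directly to $f$, namely $|f(x)| \le \sqrt{\K(x,x)}\,\normK{f}$, shows the inclusion is moreover continuous, giving $\HK \csubset \Cont{}{0}$ (resp.\ $\Cont{}{b}$, resp.\ $\Cont{}{}$).

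\emph{Necessity.} Suppose $\HK \subset \Cont{}{0}$ (resp.\ $\Cont{}{b}$, resp.\ $\Cont{}{}$) as a set inclusion. Each $\K(.,x) \in \HK$ then lies in the target space, which is exactly \ref{p:1}. For \ref{p:2} the point is to upgrade the set inclusion to a continuous one, and this is the step I expect to be the only real obstacle. I would invoke the closed graph theorem: $\HK$ is Banach (hence Fréchet), $\Cont{}{0}$ and $\Cont{}{b}$ are Banach under $\norm{.}_\infty$ while $\Cont{}{}$ is Fréchet, and the graph of the inclusion is closed because convergence in $\HK$ and convergence in any of these target topologies both force pointwise convergence, so two limits of the same sequence must coincide. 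Continuity then furnishes a constant $C$ (resp.\ a constant $C_K$ for each compact $K$) with $|f(x)| \le \norm{f}_\infty \le C\,\normK{f}$ (resp.\ $\sup_{x \in K}|f(x)| \le C_K\,\normK{f}$). Evaluating at $f = \K(.,x)$ and using $[\K(.,x)](x) = \K(x,x) = \normK{\K(.,x)}^2$ gives $\K(x,x) \le C\sqrt{\K(x,x)}$, i.e.\ $\K(x,x) \le C^2$ globally (resp.\ $\le C_K^2$ on $K$), which is precisely the (local) boundedness of \ref{p:2}. The crux is thus the passage from bare set inclusion to the quantitative bound on $\K(x,x)$, where the Fréchet form of the closed graph theorem (for the $\Cont{}{}$ case) and the completeness of the codomains are indispensable.
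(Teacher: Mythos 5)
Your proof is correct, and the sufficiency direction coincides with the paper's: approximate $f\in\HK$ by elements of $\lin\{\K(.,x)\}$ and use the Cauchy--Schwarz bound $|g(x)|\le\sqrt{\K(x,x)}\,\normK{g}$ together with closedness of the target under (locally) uniform limits. The necessity direction, however, takes a genuinely different route. The paper does not pass through continuity of the inclusion at all: it applies the Banach--Steinhaus theorem directly to the family of evaluation functionals $\{\K(.,x_n)\}$, which is pointwise bounded on $\HK$ because each $f\in\HK$ is a bounded function (globally, or on a compact neighbourhood in the $\Cont{}{}$ case), and concludes that $\normK{\K(.,x_n)}^2=\K(x_n,x_n)$ cannot diverge. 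Your closed-graph argument buys the stronger intermediate statement that the inclusion is continuous (which the paper only records as a consequence), but it pays for this by needing the codomain to be a space to which the closed graph theorem applies. That is fine for $\Cont{}{0}$ and $\Cont{}{b}$, which are Banach, but your claim that $\Cont{}{}$ is Fr\'echet is not true for a general locally compact paracompact Hausdorff $\inputS$ (it requires $\inputS$ to be hemicompact, e.g.\ $\sigma$-compact; an uncountable discrete space is a counterexample). The fix is immediate and keeps your strategy intact: for each compact $K$ apply the closed graph theorem to the restriction map $\HK\to C(K)$, which is Banach, obtaining $\sup_{x\in K}|f(x)|\le C_K\normK{f}$ and hence $\K(x,x)\le C_K^2$ on $K$. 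With that repair the two proofs are of equal strength; the paper's Banach--Steinhaus version is slightly leaner in that it needs no information about the topology or completeness of the target space, only pointwise boundedness of the evaluations.
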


\Needspace{2\baselineskip}
\begin{proposition}[Sufficient condition for $\HK \csubset \Cont{m}{\star}$ \label{prop:HKinCm}]
	\leavevmode
	\begin{enumerate}[label = , nosep]

	\item If $\K \in \Cont{(m,m)}{}$ , then $\HK \csubset \Cont{m}{}$.
	\item If $\K \in \Cont{(m,m)}{0}$, then $\HK \csubset \Cont{m}{0}$.
	\item If $\K \in \Cont{(m,m)}{b}$, then $\HK \csubset \Cont{m}{b}$, thus $\HK \csubset (\Cont{m}{b})_c$.
	\end{enumerate}
%	\leavevmode
%	\begin{itemize}[itemsep = .25ex]
%		\item If $\K$ is $(m,m)$-times continuously differentiable, then $\HK \csubset \Cont{m}{}$. \\
%		\item If $\K$ is $(m,m)$-times continuously differentiable, and if, for all $|p| \leq m$ and $x \in \inputS$, ${\partial^{(p,p)} \K(.,x) \in \Cont{}{0}}$, then $\HK \csubset \Cont{m}{0}$.
%	\end{itemize}
\end{proposition}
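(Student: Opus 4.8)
The plan is to reduce all three claims to a single differentiation lemma for the canonical feature map $y \mapsto \K(.,y)$, and then to read off the continuity of each inclusion from one Cauchy--Schwarz bound. First I would establish that, under $\K \in \Cont{(m,m)}{}$, the $\HK$-valued map $y \mapsto \K(.,y)$ is $m$-times continuously differentiable, that for every $|p| \leq m$ its derivative satisfies $\partial^{(0,p)} \K(.,y) \in \HK$ with
\[
	\normK{\partial^{(0,p)} \K(.,y)}^2 = \partial^{(p,p)} \K(y,y),
\]
and that differentiation commutes with the reproducing inner product, so that every $f \in \HK$ is $m$-times continuously differentiable with
\[
	\partial^p f(y) = \ipdK{f}{\partial^{(0,p)} \K(.,y)} \qquad (|p| \leq m).
\]
This is the technical heart of the proof and the only place where $(m,m)$-differentiability is really used: one shows that the $\HK$-valued difference quotients of the feature map and of its successive derivatives form a Cauchy net as the step tends to $0$, by expanding the squared $\HK$-norm of a difference of two such quotients into kernel values and invoking the (locally uniform) continuity of the mixed partials $\partial^{(p,p)}\K$. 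This is essentially \citet[Lemma~4.34]{steinwart08}, which I would cite or reprove by induction on $|p|$.

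Granting the lemma, Cauchy--Schwarz gives the master bound $|\partial^p f(y)| \leq \normK{f}\sqrt{\partial^{(p,p)}\K(y,y)}$ for $|p| \leq m$, while continuity of $y \mapsto \partial^{(0,p)}\K(.,y)$ in $\HK$ makes each $\partial^p f$ continuous. For the first claim, $\sqrt{\partial^{(p,p)}\K(y,y)}$ is continuous, hence bounded on every compact $K$, so $\norm{f}_{p,K} \leq \bigl(\max_{y \in K}\sqrt{\partial^{(p,p)}\K(y,y)}\bigr)\normK{f}$; this bounds every generating semi-norm of $\Cont{m}{}$ and yields $\HK \csubset \Cont{m}{}$. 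For the third claim, $\K \in \Cont{(m,m)}{b}$ supplies a global bound $\sup_y \partial^{(p,p)}\K(y,y) < \infty$, so $\norm{\partial^p f}_\infty \leq \normK{f}\sqrt{\sup_y \partial^{(p,p)}\K(y,y)}$, giving $\HK \csubset \Cont{m}{b}$ and, the topology of $(\Cont{m}{b})_c$ being weaker, also $\HK \csubset (\Cont{m}{b})_c$.

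The remaining and most delicate point is the second claim: under $\K \in \Cont{(m,m)}{0}$ I must still show that each $\partial^p f$ \emph{vanishes at infinity}. The master bound only gives boundedness, and a direct Cauchy--Schwarz estimate of $\partial^{(0,p)}\K(x,y)$ never decays, since the diagonal $\partial^{(p,p)}\K(y,y)$ need not tend to $0$. The clean way around this is to notice that $\partial^{(p,p)}\K$ is \emph{itself a kernel}, whose canonical feature map is linked to $y \mapsto \partial^{(0,p)}\K(.,y)$ by the norm identity above. The hypothesis $\K \in \Cont{(m,m)}{0}$ says precisely that $\partial^{(p,p)}\K(.,x) \in \Cont{}{0}$ and $\sup_x \partial^{(p,p)}\K(x,x) < \infty$, i.e.\ that $\partial^{(p,p)}\K$ fulfils the two conditions of Proposition~\ref{prop:HKinC} for $m = 0$; hence $\Hpp \csubset \Cont{}{0}$. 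The canonical isometry between $\overline{\lin}\{\partial^{(0,p)}\K(.,y) : y \in \inputS\} \subset \HK$ and $\Hpp$ (whose existence follows from the Gram computation $\ipdK{\partial^{(0,p)}\K(.,x)}{\partial^{(0,p)}\K(.,y)} = \partial^{(p,p)}\K(y,x)$), together with the commutation formula, identifies $\partial^p f$ with an element of $\Hpp$; therefore $\partial^p f \in \Hpp \csubset \Cont{}{0}$. Combined with the sup-norm bound from the third claim (which holds here too) this gives $\HK \csubset \Cont{m}{0}$. I expect the differentiation lemma and this sub-kernel identification to be the only two steps needing genuine care; everything else is bookkeeping over $|p| \leq m$.
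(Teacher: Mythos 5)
Your proof is correct, but it organises the argument differently from the paper, and it is worth comparing the two. The paper's proof (Appendix~\ref{proof:HKinCm}) only writes out the $\Cont{(m,m)}{b}$ case: it first shows $\HKpre \subset \Cont{m}{b}$ by citing \citet[Corollary~4.36]{steinwart08}, derives the same Cauchy--Schwarz bound $\norm{\partial^p f}_\infty \leq \normK{f}\,\norm{\sqrt{\partial^{(p,p)}\K}}_\infty$ on the dense pre-RKHS $\HKpre$, and then extends to $\HK$ by observing that $\Cont{m}{b}$ is complete, so the $\normK{.}$-closure of $\HKpre$ lands inside it; the other two cases are left implicit. You instead work directly with an arbitrary $f \in \HK$ via the differentiation lemma $\partial^p f(y) = \ipdK{f}{\partial^{(0,p)}\K(.,y)}$, which makes the $\Cont{m}{}$ and $\Cont{m}{b}$ cases immediate and, more importantly, forces you to confront the genuinely delicate point that the paper glosses over: why $\partial^p f$ \emph{vanishes at infinity} when $\K \in \Cont{(m,m)}{0}$. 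Your resolution --- apply Proposition~\ref{prop:HKinC} to the kernel $\partial^{(p,p)}\K$ (whose hypotheses are exactly what the definition of $\K \in \Cont{(m,m)}{0}$ supplies) to get $\Hpp \csubset \Cont{}{0}$, then transport $\partial^p f$ into $\Hpp$ via the isometry --- is exactly the mechanism the paper only sets up later in Proposition~\ref{prop:Isometry}, and it buys a genuinely complete proof of the second claim. One small point to tighten: the isometry $\Dp$ is defined on $\Hop = \overline{\lin}\{\partial^{(0,p)}\K(.,y)\}$, not on all of $\HK$, so you should note that for general $f \in \HK$ one has $\partial^p f = \partial^p (P_{\Hop}f)$ (since $\ipdK{f - P_{\Hop}f}{\partial^{(0,p)}\K(.,y)} = 0$ for every $y$), whence $\partial^p f = \Dp(P_{\Hop}f) \in \Hpp$. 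With that line added, the argument is airtight.
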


Proposition~\ref{prop:HKinC} shows that $\HK \subset \Cont{}{*}$ \gls{iff} $\HK \csubset \Cont{}{*}$. As there exist non-continuous kernels whose RKHS functions are all continuous, the hypothesises in Proposition~\ref{prop:HKinCm} is sufficient but not necessary. We do not know whether, in general, $\HK \subset \Cont{m}{*}$ implies $\HK \csubset \Cont{m}{*}$. However, if $\HK \subset \Cont{m+1}{}$, then $\K \in \Cont{(m,m)}{}$ (\citealp{grothendieck53}, cited by \citealp[Lemme~2]{schwartzFDVV}), thus $\HK \csubset \Cont{m}{}$. In particular, if $\HK \subset \Cont{\infty}{}$, then $\HK \csubset \Cont{\infty}{}$.

\subsection{Distribution Embeddings and Characteristic Kernels}

Before defining characteristic kernels, we must define the KME of a distribution. The trick resides in the interpretation of the integral $\int \K(.,x) \diff \mu(x)$.

%\textcolor{red}{TO DO: Expose the two concurrent techniques to embed spaces of measures}
For a bounded measurable kernel, the KME of a finite measure $\mu$ is defined as the RKHS function given by $\int \K(.,x) \diff \mu(x)$. However, as $\function{\K(.,\hat x)}{}{}{x}{\K(.,x)}$ takes its values in a possibly infinite dimensional vector space, one must agree on the definition of the integral $\int \K(.,x) \diff \mu(x)$. Usually one uses the Bochner-integral, which is convenient, because $\K(.,\hat x)$ is Bochner-integrable \gls{wrt} $\mu$ iff $\int \normK{\K(.,x)} \diff |\mu|(x) < \infty$. % For example, if $\K$ is bounded, then $\K(.,\hat{x})$ is Bochner-integrable \gls{wrt} any measure in $\Mf$ ($\supset \Mprob$).
Unfortunately, the Bochner-integral is difficult to generalise to arbitrary distributions. So we will use a more general integral: the \emph{weak}- (or \mbox{\emph{Pettis}-)} integral.
\begin{remark}
	Alternatively, one may be tempted to see $\int \K(\hat{s},x) \diff \mu(x)$ as a parametric integral with parameter $s$. (This corresponds to replacing the condition $\forall f \in \HK$ in the upcoming Equations~\ref{eq:WICharacterization} and \ref{eq:KMECharacterisation} by the condition $\forall \K(.,s) , s \in \inputS$.) However, even though the integral may be defined for any parameter value $s$, the resulting function need not be in the RKHS (see Appendix~\ref{sec:Parametric}). Thus this notion of integrability is too weak for KMEs.
\end{remark}

\begin{definition}[Weak Integral and KME]
	Let $D$ be a linear form over a \gls{lcv} TVS of functions $\F$. Let  $\function{\vec{\varphi}}{\inputS}{\HK}{}{}$ be an RKHS-valued function such that for any $f \in \HK$, $\ipdK{f}{\vec \varphi(\hat x)} \in \F$.
	Then $\function{\vec{\varphi}}{\inputS}{\HK}{}{}$ is said to be weakly integrable \gls{wrt} $D$ if there exists a function in $\HK$, noted $\int \vec{\varphi}(x) \diff D(x)$ or $D(\vec \varphi)$, such that
	\begin{equation}\label{eq:WICharacterization} 
		\forall f \in \HK, \qquad \ipdK{f}{\int \vec{\varphi}(x) \diff D(x)} = \int \ipdK{f}{\vec{\varphi}(x)} \diff \bar D(x) \, .
	\end{equation}
	If $\vec{\varphi}$ is weakly integrable \gls{wrt} $D$ (or \gls{wrt} any $D$ in a set of linear forms $\D$), we say that $D$ (resp.\ $\D$) \emph{embeds} ---or \emph{is embeddable}--- \emph{into $\HK$ via $\vec{\varphi}$}. If $\vec{\varphi}(\hat x) = \K(.,\hat x)$, we omit the ``via $\K(.,\hat x)$'' and call $\int \K(.,x)\diff \mu(x)$ the kernel mean embedding (KME) of $\mu$. We note $\emb_{\vec{\varphi}}$ the map $\function{\emb_{\vec{\varphi}}}{\D}{\HK}{D}{\int \vec \varphi(x) \diff D(x)}$. It is linear, whenever $\D$ is a vector space.
\end{definition}

This definition extends the usual Bochner-integral: if $\vec{\varphi}$ is Bochner-integrable \gls{wrt} a Radon measure $\mu$, then, for any $f \in \HK$, the function $\ipdK{f}{\vec \varphi(\hat x)}$ is $\mu$-(Lebesgue-)integrable and $\vec \varphi$ is weakly integrable \gls{wrt} $\mu$ \citep[Proposition~2.3.1]{schwabik05}.

In principle, $\F$ could be any \gls{lcv} TVS of functions and $D$ any linear form. For us however, $\F$ will be one of the function spaces defined in Section~\ref{sec:Definitions}, and $D$ a distribution. Furthermore, to define KMEs, we will focus on $\vec \varphi(\hat x) = \K(.,\hat x)$. The condition that $\ipdK{f}{\vec \varphi(\hat x)} \in \F$ for any $f \in \HK$ then simply becomes $\HK \subset \F$. And, remembering that by definition $\int f(x) \diff D(x)$ stands for $D(f)$, Equation~\eqref{eq:WICharacterization} now simply reads:
\begin{equation}\label{eq:KMECharacterisation}
	\forall f \in \HK, \qquad \ipdK{f}{\int \K(.,x) \diff D(x)} = \bar D(f) \, .
\end{equation}
The left side being continuous \gls{wrt} $f$, so is the right-side. Thus $\bar D$ and $D$ are continuous linear forms over $\HK$: $D \in \HK'$. Conversely, if $D$ is a continuous linear form over $\HK$, then, by the Riesz representation theorem (see Appendix~\ref{sec:Reminders}), there exists a unique element $\int \K(.,x) \diff D(x) \in \HK$, called \emph{the Riesz representer of $\bar D$}, such that \eqref{eq:KMECharacterisation} be satisfied. We just proved
\begin{lemma}\label{lem:KernelEmbedding}
A linear form $D$ over a \gls{lcv} TVS of functions $\F$ embeds into $\HK$ \gls{iff} $\HK \subset \F$ and if the restriction $D \big|_{\HK}$ of $D$ to $\HK$ is a continuous linear form over $\HK$. In that case  $\bar D \big|_{\HK}$ is also a continuous linear form over $\HK$ and $\int \K(.,x) \diff D(x)$ is its Riesz representer.
\end{lemma}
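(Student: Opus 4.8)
The plan is to unwind the definition of weak integrability and reduce the vector-valued integrability condition to a single scalar equation on $\HK$, to which the Riesz representation theorem then applies directly. The key preliminary observation is that the reproducing property collapses both ingredients of the definition. Taking $\vec\varphi(\hat x) = \K(.,\hat x)$, the map $x \mapsto \ipdK{f}{\K(.,x)}$ is, by reproduction, just $f$ itself; hence the standing precondition ``$\ipdK{f}{\vec\varphi(\hat x)} \in \F$ for all $f \in \HK$'' is equivalent to $\HK \subset \F$, and the defining identity~\eqref{eq:WICharacterization} of the weak integral becomes the scalar equation~\eqref{eq:KMECharacterisation}, namely $\ipdK{f}{\int \K(.,x)\diff D(x)} = \bar D(f)$ for every $f \in \HK$.

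For the forward implication I would assume that $D$ embeds, i.e.\ that $\K(.,\hat x)$ is weakly integrable w.r.t.\ $D$. The precondition then forces $\HK \subset \F$, and there exists $g := \int \K(.,x)\diff D(x) \in \HK$ with $\bar D(f) = \ipdK{f}{g}$ for all $f \in \HK$. Since $f \mapsto \ipdK{f}{g}$ is continuous on $\HK$ by Cauchy--Schwarz, $\bar D\big|_{\HK}$ is a continuous linear form whose Riesz representer is exactly $g$; by the conjugation symmetry discussed below, $D\big|_{\HK}$ is continuous as well.

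For the converse I would assume $\HK \subset \F$ and $D\big|_{\HK}$ (equivalently $\bar D\big|_{\HK}$) continuous. The Riesz representation theorem (Appendix~\ref{sec:Reminders}) then supplies a unique $g \in \HK$ with $\bar D(f) = \ipdK{f}{g}$ for all $f \in \HK$, which is precisely~\eqref{eq:KMECharacterisation}. Combined with the precondition $\HK \subset \F$, this is exactly the assertion that $\K(.,\hat x)$ is weakly integrable w.r.t.\ $D$, so $D$ embeds and $g = \int \K(.,x)\diff D(x)$ is the claimed Riesz representer of $\bar D\big|_{\HK}$. Note that no approximation or limiting argument is needed: everything reduces to the reproducing property and to Riesz representation.

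The only genuinely delicate point I expect is the bookkeeping of the conjugate: the defining equation naturally produces the linear form $\bar D\big|_{\HK}$ rather than $D\big|_{\HK}$, so I must justify passing between continuity of the two. This is legitimate because conjugation is a continuous involution with $\bar{\bar D} = D$ and the inner product is anti-linear in its second slot, so the whole construction is symmetric under $D \leftrightarrow \bar D$; I would make sure to state this symmetry explicitly rather than leave it implicit, since it is the one place where the complex (as opposed to real) setting requires a word of care.
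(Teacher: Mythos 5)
Your proposal is correct and follows essentially the same route as the paper: the reproducing property collapses the precondition of weak integrability to $\HK \subset \F$ and Equation~\eqref{eq:WICharacterization} to Equation~\eqref{eq:KMECharacterisation}, after which the forward direction is Cauchy--Schwarz continuity of $f \mapsto \ipdK{f}{g}$ and the converse is the Riesz representation theorem. Your explicit remark on the $D \leftrightarrow \bar D$ bookkeeping is a point the paper leaves implicit, but it matches the paper's intent exactly.
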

Thus, for an embeddable space $\D$, the embedding map $\embK$ is the composition of the following two linear maps:
\begin{equation}\label{eq:EmbeddingMap}
	\embK : \, \left \{ \
		\begin{array}{ccccl}
			\D & \longrightarrow & \overline \HK' & \longrightarrow & \HK \\
			& \text{\footnotesize{Conjugate restriction}}	& & \text{\footnotesize{Riesz representer}} & \\
			D & \longmapsto & \bar D \big |_{\HK} & \longmapsto & \int \K(.,x) \diff D(x)
		\end{array}
	\right . \, ,
\end{equation}
where $\overline \HK'$ is the conjugate space of $\HK'$. It is the space $\HK'$, where the scalar multiplication $\function{}{}{}{(\lambda, f')}{\lambda f'}$ is replaced by $\function{}{}{}{(\lambda, f')}{\bar \lambda f'}$. Using $\overline \HK'$ instead of $\HK'$ is simply a trick for both the conjugate restriction and the Riesz representation maps to become linear instead of anti-linear. $\overline \HK'$ contains exactly the same elements as $\HK'$. When $\HK$ is identified with its dual $\overline \HK'$, \emph{the KME $\embK$ \emph{is} the complex conjugate of the restriction map (in short: conjugate restriction map) of $\D$ to $\HK$.}\footnote{Had we chosen to identify $\HK$, not with $\overline \HK'$, but with $\HK'$ via the \emph{anti-linear} Riesz representation map, then we would end up identifying $\embK$, which is a \emph{linear} map, with an anti-linear map!} Furthermore, we have the following obvious but crucial lemma. 
\begin{lemma}\label{lem:Transpose}
	Let $\F$ be a \gls{lcv} TVS of functions.
	If $\HK \subset \F$, then the KME is simply the conjugate $\overline \imath^\star$ of the (algebraic) transpose $\imath^\star$ of the canonical embedding $\function{\imath}{\HK}{\F}{f}{f}$.
\end{lemma}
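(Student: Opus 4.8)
The plan is to prove this by unwinding the definition of the algebraic transpose and matching the result against the two-step factorisation of the KME already recorded in Equation~\eqref{eq:EmbeddingMap}. First I would spell out $\imath^\star$ explicitly. By definition, for a linear form $D$ on $\F$ the transpose acts by precomposition, $\imath^\star(D) = D \circ \imath$; and since $\imath$ is the identity injection $f \mapsto f$, this is nothing but the restriction $\imath^\star(D) = D\big|_{\HK}$. No continuity is used here, which is exactly why the statement speaks of the \emph{algebraic} transpose: $\imath^\star$ is defined on every linear form on $\F$ and returns a (possibly discontinuous) linear form on $\HK$.

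Next I would pass to the conjugate. Following the conventions fixed just before Equation~\eqref{eq:EmbeddingMap}, taking the conjugate turns the plain restriction $D \mapsto D\big|_{\HK}$ into the conjugate restriction $D \mapsto \bar D\big|_{\HK}$, now read as a map into the conjugate dual $\overline{\HK'}$; the entire role of $\overline{\HK'}$ is to absorb the complex conjugation so that this map is \emph{linear} rather than anti-linear. This is precisely the first arrow of Equation~\eqref{eq:EmbeddingMap}. To close the argument I would invoke Lemma~\ref{lem:KernelEmbedding}: whenever $\bar D\big|_{\HK}$ is continuous---equivalently, whenever $D$ embeds---its image under the (linear) Riesz representation isomorphism $\overline{\HK'} \to \HK$ is, by Equation~\eqref{eq:KMECharacterisation}, exactly the function $\int \K(.,x)\diff D(x) = \embK(D)$. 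Under the identification of $\HK$ with $\overline{\HK'}$ the Riesz step is the identity, so $\overline{\imath^\star}$ and $\embK$ agree on the embeddable domain, and linearity of $\embK$ is inherited from that of $\imath^\star$ and of conjugation into $\overline{\HK'}$.

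The only delicate point---and the reason the lemma is flagged as ``obvious but crucial''---is the bookkeeping of the conjugations. One must verify that the conjugate of the bare restriction transpose is genuinely the conjugate restriction $D \mapsto \bar D\big|_{\HK}$ of Equation~\eqref{eq:EmbeddingMap}, and that all three arrows (conjugation, transpose, Riesz) become linear once the target is taken to be $\overline{\HK'}$ rather than $\HK'$. This is exactly the bookkeeping that the conjugate-dual trick is designed to automate, so once the definitions are aligned there is nothing left to compute; the proof is a one-line identification of $\embK$ with the composite $\overline{\imath^\star}$ already displayed in Equation~\eqref{eq:EmbeddingMap}.
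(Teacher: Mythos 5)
Your proposal is correct and is precisely the argument the paper intends: the paper offers no explicit proof (calling the lemma ``obvious''), but its implicit justification is exactly your identification of the algebraic transpose with the restriction map $D \mapsto D\big|_{\HK}$, followed by the conjugation and Riesz steps already displayed in Equation~\eqref{eq:EmbeddingMap} and Lemma~\ref{lem:KernelEmbedding}. Your care in noting that the identification only holds on the embeddable domain (where the restricted form is continuous) and that all maps become linear once the target is $\overline{\HK'}$ is the right bookkeeping and matches the paper's conventions.
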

% In particular, it is injective \gls{iff} the conjugate restriction map $\function{}{}{}{D}{\bar D \big|_{\HK}}$ (or equivalently the restriction map $\function{}{}{}{D}{D \big|_{\HK}}$) is injective. 
%\begin{remark}
%	Identifying $\HK$ and $\overline \HK'$, the KME $\embK$ is simply the conjugate $\overline \imath^\star$ of the transpose $\imath^\star$ of the canonical embedding $\function{\imath}{\HK}{\F}{f}{f}$.
%\end{remark}

	The continuity assumption appearing in Lemma~\ref{lem:KernelEmbedding} may seem unpalatable. Unfortunately, whenever $\HK$ is infinite dimensional, \emph{non-continuous} linear forms over $\HK$ do exist, and those have no Riesz representer. Thus the continuity assumption is necessary. Now we may wonder, how to know if a linear form $D$, for example a distribution, is continuous over $\HK$. In practice, we will use the following easy but important result.
\begin{corollary}\label{cor:EmbedDuals}
	Let $\F$ be a \gls{lcv} TVS of functions.
	If $\HK \csubset \F$, then $\F'$ embeds into $\HK$. %(For all $D \in \F'$, $D \big |_{\HK} \in \HK'$.)
	In particular:
	\begin{enumerate}
		\item If $\HK \subset \Cont{}{}$ (resp.\ $\HK \subset \Cont{}{0}$, resp.\ $\HK \subset \Cont{}{b}$ ) then $\Mc$ (resp.\ $\Mf$, resp.\ $\Mf$) embeds into $\HK$.
		\item If $\K \in \Cont{(m,m)}{}$ (resp.\ $\K \in \Cont{(m,m)}{0}$, resp.\ $\K \in \Cont{(m,m)}{b}$), then $\Dcomp{m}$ (resp.\ $\DLone{m}$, resp.\ $\DLone{m}$) is embeddable into $\HK$.
	\end{enumerate}
\end{corollary}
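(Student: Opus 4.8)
The plan is to recognise that the general assertion is a one-line consequence of Lemma~\ref{lem:KernelEmbedding}, and that the two enumerated special cases add nothing analytically new: they only require upgrading the set-theoretic inclusions to \emph{continuous} inclusions via Propositions~\ref{prop:HKinC} and~\ref{prop:HKinCm}, and then reading off the relevant dual from the identifications recorded in Section~\ref{sec:Definitions}.

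For the general claim, I would argue as follows. By hypothesis $\HK \csubset \F$, so the canonical inclusion $\function{\imath}{\HK}{\F}{f}{f}$ is continuous. Fix any $D \in \F'$; being an element of the dual, $D$ is a continuous linear form on $\F$. Its restriction to $\HK$ is $D\big|_{\HK} = D \circ \imath$, a composition of continuous maps, hence continuous on $\HK$. Since moreover $\HK \subset \F$, Lemma~\ref{lem:KernelEmbedding} applies and $D$ embeds into $\HK$. As $D \in \F'$ was arbitrary, the whole dual $\F'$ embeds into $\HK$. (Equivalently, this is exactly Lemma~\ref{lem:Transpose}: the KME is the conjugate transpose $\overline{\imath^\star}$ of $\imath$, which is well-defined as a map into $\HK \cong \overline{\HK'}$ on all of $\F'$ precisely because $\imath$ is continuous.)

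For the enumerated consequences, the only extra ingredients are upgrading the hypotheses and identifying the duals. In case~1, Proposition~\ref{prop:HKinC} shows that $\HK \subset \Cont{}{}$ (resp.\ $\Cont{}{0}$) already forces $\HK \csubset \Cont{}{}$ (resp.\ $\HK \csubset \Cont{}{0}$); applying the general claim and invoking $(\Cont{}{})' = \Mc$ and $(\Cont{}{0})' = \Mf$ gives the first two statements. In case~2, Proposition~\ref{prop:HKinCm} yields $\HK \csubset \Cont{m}{}$ (resp.\ $\Cont{m}{0}$) from $\K \in \Cont{(m,m)}{}$ (resp.\ $\Cont{(m,m)}{0}$), and one then uses $(\Cont{m}{})' = \Dcomp{m}$ and $(\Cont{m}{0})' = \DLone{m}$.

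The only point requiring care---the nearest thing to an obstacle---is the $\Cont{}{b}$ and $\Cont{m}{b}$ rows, where the \emph{strong} dual of $\Cont{m}{b}$ is not the target space. Here I would exploit the deliberate design of the weaker topology: Propositions~\ref{prop:HKinC} and~\ref{prop:HKinCm} deliver not $\HK \csubset \Cont{m}{b}$ but $\HK \csubset (\Cont{m}{b})_c$, and the topology on $(\Cont{m}{b})_c$ was defined precisely so that $((\Cont{m}{b})_c)' = (\Cont{m}{0})' = \DLone{m}$ (with $\DLone{0} = \Mf$). Applying the general claim to $\F = (\Cont{m}{b})_c$ then yields embeddability of $\DLone{m}$ (resp.\ $\Mf$). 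No genuine analytic difficulty remains: once the continuous inclusions and the dual identifications are in place, the corollary is pure bookkeeping layered on top of Lemma~\ref{lem:KernelEmbedding}.
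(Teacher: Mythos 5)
Your proof is correct and follows essentially the same route as the paper: the general claim is the continuity of $D\big|_{\HK} = D \circ \imath$ combined with Lemma~\ref{lem:KernelEmbedding} (the paper phrases this via convergent sequences, you via composition of continuous maps, which is the same observation), and the special cases are obtained from Propositions~\ref{prop:HKinC} and~\ref{prop:HKinCm} together with the dual identifications. Your explicit handling of the $(\Cont{m}{b})_c$ rows is a welcome spelling-out of a step the paper leaves implicit, but it is not a different argument.
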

\begin{proof}
	The first line is a straightforward corollary of lemma~\ref{lem:Transpose}, but may as well be proven directly. Indeed, suppose that $\HK \csubset \F$. Let $D \in \F'$ and let $f, f_1, f_2, \ldots \in \HK$. If $f_n \rightarrow f$ in $\HK$ then $f_n \rightarrow f$ in $\F$, thus $D(f_n) \rightarrow D(f)$. Thus $D$ is a continuous linear form over $\HK$. The ``in particular'' part then follows from Propositions~\ref{prop:HKinC} and \ref{prop:HKinCm}.
\end{proof}
\begin{remark}
	Anticipating on Section~\ref{sec:Topology}, let us already note that if $\HK \csubset \F$, meaning that the canonical embedding map $\imath$ is continuous, then (the conjugate of) its transpose is also continuous: the KME is thus continuous over $\F'$.
\end{remark}
For some machine learning algorithms, it is interesting when KMEs are injective. This leads to
\begin{definition}[Characteristic Kernel]
	Let $\D$ be a set of linear forms that embed into an RKHS $\HK$.
	The kernel $\K$ will be said \emph{characteristic to $\D$} if the embedding $\function{\embK}{\D}{\HK}{D}{\int \K(.,x) \diff D(x)}$ is injective. %, or equivalently, if the restriction map $\function{}{\D}{\HK'}{D}{D \big |_{\HK}}$ is injective.
	When $\D = \Mprob$, $\K$ is simply said \emph{characteristic}.
\end{definition}
	Diagram~\eqref{eq:EmbeddingMap} shows that a kernel is characteristic to an embeddable set $\D$ \gls{iff} the conjugate restriction map $\function{}{}{}{D}{\bar D\big|_{\HK}}$ (or equivalently simply the restriction map $\function{}{}{}{D}{D\big|_{\HK}}$) is injective. When $\D$ is a dual of a space $\F$ that contains $\HK$, it suggests to characterise characteristic kernels over $\D$ by checking whether there are ``enough'' functions in $\HK$ to uniquely define a distribution $D \in \D$ by its values taken on $\HK$. ``Enough'' will be when $\HK$ is dense in $\F$.

%	Finally, by analysing on which conditions the restriction map is injective, we will now show that
%%Let us now move to the heart of this paper: the equivalence between universality over a function space $\F$ and characteristicness to $\F'$.
%	
%%\begin{itemize}
%%	\item Question: In general, there exist functions that are Pettis-integrable, but not Bochner-integrable. However, does this still hold true if one restricts oneself to functions of the type $\K(.,\hat{x})$.
%%	\item Question: is a weakly-continuous kernel necessarily also continuous?
%%\end{itemize}

\subsection{Duality Links Universal and Characteristic Kernels\label{sec:UniversalAndCharacteristic}}

%	\Citet[Theorem~3]{sriperumbudur10a} show that a kernel is universal over $\F = \Cont{}{0}$ (resp.\ $\F = \Cont{}{}$) \gls{iff} it is characteristic to $\F'=\Mf$ (resp.\ $\F' = \Mc$). This can be generalised to any \gls{lcv} (not necessarily Hausdorff) TVS $\F$ and its dual $\F'$. 

	Suppose that $\HK \csubset \F$. If $\HK$ is dense in $\F$ (i.e., if $\K$ is universal over $\F$), any $D \in \F'$ is completely determined by its values taken over $\HK$. Thus the (conjugate) restriction map is injective.
Conversely, if $\HK$ is not dense in $\F$ ($\K$ is not universal over $\F$), then, by definition, its closure $\widetilde{\HK}$ in $\F$ is strictly smaller than $\F$. Then, as a direct corollary\footnote{It is in this corollary that one uses the local convexity of $\F$. See Corollary~3 of the Hahn-Banach Theorem by \citet{treves67}.} of the Hahn-Banach Theorem, one can construct a distribution $D \in \F'$ that is identically null on $\widetilde{\HK}$ and non-null outside: $D \neq 0$ but $\bar D \big |_{\HK} =0$. So the (conjugate) restriction map is not injective. We just proved

\begin{theorem}[Universal and Characteristic Kernels\label{theo:UniversalMeansCharacteristic}]
	Let $\F$ be a \gls{lcv} TVS such that $\HK \csubset \F$.
%	Then $\HK$ is dense in $\F$ \gls{iff} the restriction map $\function{}{\F'}{\HK'}{D}{D \big |_{\HK}}$ is injective.\footnote{And in that case, the embedding is continuous: $\F' \csubset \HK'$. See Proposition~\ref{prop:ContinuousEmbedding}.}	
	The kernel $\K$ is universal over $\F$ \gls{iff} it is characteristic to $\F'$.
\end{theorem}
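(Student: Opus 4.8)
The plan is to first reduce the biconditional to an injectivity statement about the plain restriction map $D \mapsto D\big|_{\HK}$, and then prove each implication separately. Diagram~\eqref{eq:EmbeddingMap} exhibits $\embK$ as the conjugate restriction $D \mapsto \bar D\big|_{\HK}$ composed with the Riesz representation map, and the latter is a bijection of $\overline \HK'$ onto $\HK$; hence $\embK$ is injective \gls{iff} the conjugate restriction map is injective, and since complex conjugation is a bijection of $\F'$, this holds \gls{iff} the ordinary restriction map $D \mapsto D\big|_{\HK}$ from $\F'$ to $\HK'$ is injective. As this map is linear, the whole theorem reduces to the statement: \emph{$\HK$ is dense in $\F$ \gls{iff} the only $D \in \F'$ with $D\big|_{\HK} = 0$ is $D = 0$}.

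For the ``only if'' direction I would assume $\HK$ dense in $\F$ and take any $D \in \F'$ vanishing on $\HK$. For arbitrary $g \in \F$, density furnishes a net $(f_\alpha)$ in $\HK$ with $f_\alpha \to g$ in $\F$, and continuity of $D$ gives $D(g) = \lim_\alpha D(f_\alpha) = 0$. Thus $D = 0$ on all of $\F$, the restriction map has trivial kernel, and $\K$ is characteristic to $\F'$. Working with nets rather than sequences here avoids any metrisability assumption on $\F$.

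For the converse I would argue by contraposition: assume $\HK$ is \emph{not} dense, let $\widetilde{\HK}$ denote its closure in $\F$ --- a proper closed linear subspace --- and fix some $g_0 \in \F \setminus \widetilde{\HK}$. It then suffices to produce a continuous linear form $D \in \F'$ with $D\big|_{\widetilde{\HK}} = 0$ but $D(g_0) \neq 0$: such a $D$ is a nonzero element of the kernel of the restriction map, so $\K$ fails to be characteristic to $\F'$.

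The main obstacle is precisely the construction of this separating functional, and it is here, and only here, that the local convexity hypothesis on $\F$ is indispensable. I would invoke the standard geometric Hahn-Banach separation corollary for \gls{lcv} TVSs to separate the point $g_0$ from the closed subspace $\widetilde{\HK}$, obtaining a continuous real functional that annihilates $\widetilde{\HK}$ while not annihilating $g_0$, and then complexify it so as to land in $\F'$. With this $D$ in hand the two directions combine to yield the claimed equivalence.
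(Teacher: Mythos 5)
Your proposal is correct and follows essentially the same route as the paper's own proof: reduce characteristicness to injectivity of the (conjugate) restriction map via Diagram~\eqref{eq:EmbeddingMap}, derive the forward implication from density and continuity, and obtain the converse from the Hahn--Banach separation of a point from the proper closed subspace $\widetilde{\HK}$, which is exactly where the local convexity of $\F$ is used.
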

	In particular, if $\HK$ is continuously contained in $\Cont{m}{0}$ (resp.\ $\Cont{m}{}$), then $\K$ is $c^m_0$- (resp.\ \mbox{$c^m$-)} universal \gls{iff} it is characteristic to $\DLone{m}$ (resp.\ $\Dcomp{m}$). Theorem~\ref{theo:UniversalMeansCharacteristic} generalises Theorems~3 and~16 from \citet{sriperumbudur10a}, which respectively correspond to the case $\F=\Cont{}{0}$ and $\F=\Cont{}{}$. Table~\ref{tab:UCS} illustrates the wide range of applications of Theorem~\ref{theo:UniversalMeansCharacteristic}.
%\tabulinesep=1.2mm
\begin{table}[t]
	\renewcommand{\arraystretch}{1.5}
	\begin{tabu} to \linewidth {X[$$,c]X[$$,c]X[$$,c]X[2.75,c]X[1.95c]} %-1.3c,-1,-2.5,-2
		\hline
		\toprule
%		\rowfont
		\mathrm{Universal}			& \mathrm{Characteristic}		& \mathrm{S.P.D.}	& Usual Name							& Proof \\ %[.1ex] %{\bfseries}
		\midrule
		\F								& \F'							& \F'				& /											& Th.\ref{theo:UniversalMeansCharacteristic} \& P.\ref{prop:SPD} \\
		\Func			 				& \Mfs 							& \Mfs 				& \gls{spd}						 			& Th.\ref{theo:UniversalMeansCharacteristic} \& P.\ref{prop:SPD} \\
		\Func/\One	 				& \Mfs^0						& \Mfs^0			& conditionally \gls{spd}	 				& P.\ref{prop:UniversalityForP} \& R.\ref{rem:M0} \& P.\ref{prop:SPD}\\
		\Cont{}{}						& \Mc							& \Mc				& $c$-universal (or $cc$-universal)		& Th.\ref{theo:UniversalMeansCharacteristic} \& P.\ref{prop:SPD} \\
		\Cont{}{0}						& \Mf							& \Mf				& $c_0$-universal							& Th.\ref{theo:UniversalMeansCharacteristic} \& P.\ref{prop:SPD} \\
		(\Cont{}{b})_c					& \Mf							& \Mf				& $\int$spd								& Th.\ref{theo:UniversalMeansCharacteristic} \& P.\ref{prop:SPD} \\
		((\Cont{}{b})_c)/\One			& \Mprob \ (\text{or } \Mf^0)	& \Mf^0			& characteristic							& L.\ref{lem:CharacteristicAndM0} \& P.\ref{prop:UniversalityForP} \\ %\ref{theo:CKCharacterisation}
		\Lp{q}							& \Lp{q'}						& \Lp{q'}			& /											& Th.\ref{theo:UniversalMeansCharacteristic} \& P.\ref{prop:SPD} \\
		\Cont{m}{}						& \Dcomp{m}					& \Dcomp{m}		& $c^m$-universal							& Th.\ref{theo:UniversalMeansCharacteristic} \& P.\ref{prop:SPD} \\
		\Cont{m}{0}					& \DLone{m}					& \DLone{m}		& $c^m_0$-universal						& Th.\ref{theo:UniversalMeansCharacteristic} \& P.\ref{prop:SPD} \\
		(\Cont{m}{b})_c				& \DLone{m}					& \DLone{m}		& /											& Th.\ref{theo:UniversalMeansCharacteristic} \& P.\ref{prop:SPD} \\
		\Sobo{m,q}{0} 					& \ \Sobo{-m,q'}{0}				& \ \Sobo{-m,q'}{0}	& /											& Th.\ref{theo:UniversalMeansCharacteristic} \& P.\ref{prop:SPD} \\
		\bottomrule
		\hline
	\end{tabu}
	\caption{\label{tab:UCS}Equivalence between the notions of universal, characteristic and \gls{spd} kernels. This table is to be read: ``If $\HK$ continuously embeds into \emph{Column 1}, then $\K$ is universal over \emph{Column~1} \gls{iff} it is characteristic to \emph{Column~2} and iff it is \gls{spd} over \emph{Column~3}. See \emph{Column~4}.'' Here, $q<\infty$, $m \in \nat \cup \{ \infty \}$, and Th,P,R stand for Theorem, Proposition and Remark. The case of $\Mprob$ (last row) is not directly covered by Theorem~\ref{theo:UniversalMeansCharacteristic} and Proposition~\ref{prop:SPD}, because it is not even a vector space.}
\end{table}
	
	In general, the assumption that $\HK \csubset \F$ cannot be dropped ---at least, not the assumption $\HK \subset \F$. Indeed, if $\K$ is a universal kernel over $\Cont{}{0}$, then the kernel $\K + 1$ is characteristic to $\Mf$ (because \gls{ispd}, see Section~\ref{sec:SPD}). But $\HK$ is not contained in $\Cont{}{0}$.
%\begin{remark}
%	Note that the reasoning we did ---for concreteness--- with $\HK$ and $\F$ would actually hold for any \gls{lcv} vector spaces $E$ and $F$. The interested reader may have a look at Corollary~5 of Theorems~18.1 and 18.2 of \citet{treves67}, which gives somewhat an overall framework to  Theorem~\ref{theo:UniversalMeansCharacteristic}.
%\end{remark}

	Theorem~\ref{theo:GeneralExistence} and its corollary will show that $c_0^m$-universal kernels do exist. Their proof use results from sections to come, which establish basic calculus rules for embedded distributions.

\section{Distributional Calculus\label{sec:Calculus}}

This section's primary goal is to show the following calculus rules and a few implications. For any embeddable distributions $D$ and $T$ and any distribution $D$ such that $\partial^p D$ is embeddable,
\begin{enumerate}[label = (R\arabic*), leftmargin = 4em]
	\item $\ipdK{f}{\int \K(.,x) \diff D(x)} = \int \ipdK{f}{\K(.,x)} \diff D(x)$ (Definition of embedding / weak integral) \label{rule1}
	\item $\ipdK{\embK(D)}{\embK(T)} = \int \K(x,y) \diff D(y) \diff \bar{T}(x)$ (Fubini) \label{rule2}
	\item $\embK(\partial^p S) = (-1)^{|p|} \int \partial^{(0,p)}\K(.,x) \diff S(x)$ (Differentiation). \label{rule3}
\end{enumerate}
Rule~\ref{rule1} holds by definition of an embedding. So let us move directly to Rule~\ref{rule2}.

\subsection{Inner Product and Norm of Embeddings}
If $D$ and $T$ are two embeddable distributions for a kernel $\K$, we write:
\begin{align*}
	\ipdK{D}{T} &:= \ipdK{\embK(D)}{\embK(T)} \\
	\normK{D} &:= \normK{\embK(D)} \, .
\end{align*}
Note that this induces a new metric on the set of embeddable distributions (see Section~\ref{sec:Topology}).
\begin{theorem}[Fubini] \label{theo:Fubini}
	Let $D,T$ be two embeddable distributions into $\HK$. Then:
	\begin{align}
		\ipdK{D}{T} &= \iint \K(x,y) \diff D(y) \diff \bar T(x) = \iint \K(x,y) \diff \bar T(x) \diff D(y) \label{eq:FubiniIP} \\
		\normK{D}^2 &= \iint \K(x,y) \diff D(y) \diff \bar D(x) = \iint \K(x,y) \diff \bar D(x) \diff D(y) \, , \nonumber %\label{eq:FubiniNorm}
	\end{align}
	where $ \iint \K(x,y) \diff D(y) \diff \bar T(x)$ is to be interpreted as $\bar T_x(\mathcal{I}_{\hat x})$ with $\mathcal{I}_x = D_y(\K(x, \hat y))$.
\end{theorem}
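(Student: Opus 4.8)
The plan is to avoid invoking any genuine Fubini/Tonelli interchange of integrals and instead to derive both identities purely from the Riesz characterisation \eqref{eq:KMECharacterisation} of the embeddings, combined with the reproducing property and the Hermitian symmetry $\overline{\K(y,x)} = \K(x,y)$ of the kernel. The guiding observation is that the two iterated integrals in \eqref{eq:FubiniIP} are merely two ways of unfolding the \emph{single} scalar $\ipdK{\embK(D)}{\embK(T)}$; once each ordering is identified with this one inner product, their equality is automatic and no interchange theorem is required.

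First I would evaluate the embedding pointwise. Fixing $x$, the reproducing property gives $\embK(D)(x) = \ipdK{\embK(D)}{\K(.,x)} = \overline{\ipdK{\K(.,x)}{\embK(D)}}$, and since $\K(.,x) \in \HK$, Lemma~\ref{lem:KernelEmbedding} (i.e.\ \eqref{eq:KMECharacterisation}) lets me rewrite $\ipdK{\K(.,x)}{\embK(D)} = \bar D(\K(.,x))$. Using $\overline{\bar D(g)} = D(\bar g)$ together with $\overline{\K(y,x)} = \K(x,y)$ then collapses this to $\embK(D)(x) = D_y(\K(x,y)) = \mathcal{I}_x$. In particular the inner integral $\mathcal{I}_{\hat x}$ is nothing but the function $\embK(D)$ itself, which lies in $\HK \csubset \F$; this already disposes of the well-definedness of the outer integral, since $\bar T$ is a continuous form on $\HK$ by embeddability.

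Next I would apply \eqref{eq:KMECharacterisation} a second time, now for $T$, with the admissible test function $f = \embK(D) \in \HK$. This yields $\ipdK{\embK(D)}{\embK(T)} = \bar T(\embK(D)) = \bar T_x(\mathcal{I}_{\hat x})$, which by the previous step is precisely $\iint \K(x,y) \diff D(y) \diff \bar T(x)$; this is the first equality in \eqref{eq:FubiniIP}. For the reversed order I would run the identical argument with the roles of $D$ and $T$ interchanged: the pointwise identity becomes $\embK(T)(y) = T_x(\K(y,x))$, and \eqref{eq:KMECharacterisation} for $D$ gives $\ipdK{\embK(T)}{\embK(D)} = \bar D_y\bigl(T_x(\K(y,x))\bigr)$. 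Taking complex conjugates, and using $\ipdK{D}{T} = \overline{\ipdK{T}{D}}$, $\overline{\bar D(g)} = D(\bar g)$ and Hermitian symmetry once more, turns this into $D_y\bigl(\bar T_x(\K(x,y))\bigr) = \iint \K(x,y) \diff \bar T(x) \diff D(y)$, the second equality. The two norm identities follow immediately by specialising to $T = D$.

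I expect the main obstacle to be not analytic but purely a matter of bookkeeping the conjugations: the inner product is anti-linear on the right, $\bar D$ denotes the conjugate form, and the kernel is only Hermitian rather than symmetric, so each of these must be tracked exactly. The one genuinely substantive point is to ensure that at every step $D$ and $T$ are applied to the feature maps $\K(.,x) \in \HK$ (and to $\embK(\cdot) \in \HK$), never to the ``wrong-slot'' sections $\K(x,.)$, so that what is invoked is always the continuity of the \emph{restricted} forms that Lemma~\ref{lem:KernelEmbedding} guarantees. Once the pointwise identity $\embK(D)(x) = \mathcal{I}_x$ is secured, the ``Fubini'' content reduces to the tautology that one fixed inner product can be written out in either order.
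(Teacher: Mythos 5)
Your proof is correct and follows essentially the same route as the paper's: both reduce \eqref{eq:FubiniIP} to two applications of the weak-integral characterisation \eqref{eq:KMECharacterisation} together with Hermitian symmetry of the inner product and of $\K$, so that no genuine measure-theoretic interchange is ever invoked, and the reversed order follows from $\ipdK{D}{T} = \overline{\ipdK{T}{D}}$. The only difference is presentational: you work inside-out by first securing the pointwise identity $\embK(D)(x) = \mathcal{I}_x$, whereas the paper unfolds the outer integral over $\bar T$ first and the inner one second.
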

\begin{proof}
	See Appendix~\ref{proof:Fubini}.
\end{proof}
%\begin{remark}[Tensor Product Notation]
	We will also use the tensor product notation
	\begin{equation*}
		[\bar{T} \otimes D](\K) := [\bar{T}_x \otimes D_y](\K) := \iint \K(x,y) \diff \bar{T}(x) \diff D(y) = \ipdK{D}{T} \, .
	\end{equation*}
%	One could try to interpret $[\bar T \otimes D](\K)$ as a distribution over $\inputS \times \inputS$ evaluated at the function $\K$. However, this is not as straightforward as it may seem. Suppose $D,T \in \DLone{\infty} = (\Cont{\infty}{0})'$ are embeddable in $\HK$ for a stationary kernel $\K(\hat{x},\hat{y}) = \psi(\hat{x}-\hat{y}) \in \Cont{(\infty,\infty)}{0}$. Then clearly $T \otimes \bar{D} \in \Cont{\infty}{0}(\inputS \times \inputS)$. However, $\K \not \in \Cont{\infty}{0}(\inputS \times \inputS)$. Fortunately, one may show that distributions in $\DLone{\infty}$, defined over $\Cont{\infty}{0}$, can be extended to act over $\Cont{\infty}{b}$ (see Lemma~\ref{lem:Extension} in the Appendix). But that is a longer story. Thus (for now), let us simply view $T \otimes \bar{D}$ as a shortcut notation.
%\end{remark}

\subsection{Application to Strict Positive Definiteness\label{sec:SPD}}

\begin{definition}[S.P.D.]
	A kernel $\K$ is said \gls{spd} over a set of distributions $\D$ if $D$ is embeddable into $\HK$ and if for any $D \in \D$,
	\begin{equation}\label{eq:SPD}
		\normK{D}^2 = 0 \qquad \Rightarrow \qquad D=0 \, .
	\end{equation}
	Following the tradition, when $\D = \Mfs$, we simply call $\K$ \gls{spd} When $\D = \Mf$, $\K$ is said\glsreset{ispd} \gls{ispd}.
\end{definition}
\begin{remark}
	By definition, a kernel is \gls{spd} over $\D$ \gls{iff} the semi-norm induced by $\K$ in $\D$ is a proper norm. Also note that, using Theorem~\ref{theo:Fubini} (Fubini), we may rewrite Equation~\eqref{eq:SPD} as follows.
	For any $D \in \D$, 
		\[
			\int \K(x,y) \diff D(y) \diff \bar D(x) = 0 \qquad \Rightarrow \qquad D = 0.
		\]
\end{remark}
%\begin{remark}
%	\leavevmode
%	\begin{itemize}
%		\item By definition, a kernel is \gls{spd} over $\D$ \gls{iff} the semi-norm induced by $\K$ in $\D$ is a proper norm.
%		\item Using Fubini's Theorem~\ref{theo:Fubini}, we may rewrite Equation~\eqref{eq:SPD} as: for any $D \in \D$
%		\[
%			\int \K(x,y) \diff \bar{D}(y) \diff D(x) = 0 \qquad \Rightarrow \qquad D = 0.
%		\]
%	\end{itemize}
%\end{remark}
Remembering that by definition, for any embeddable $D$, $\normK{D} := \normK{\embK(D)}$, we immediately get:
\begin{proposition}\label{prop:SPD}
	For any embeddable \emph{vector space} of distributions $\D$, $\K$ is characteristic to $\D$ \gls{iff} it is \gls{spd} over $\D$.
\end{proposition}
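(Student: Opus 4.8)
The plan is to observe that the entire statement reduces to the elementary fact that a \emph{linear} map is injective if and only if its kernel is trivial; the only work is to identify the kernel of $\embK$ with the set of distributions appearing in the hypothesis of Equation~\eqref{eq:SPD}.

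First I would record that, since $\D$ is a vector space, the embedding map $\function{\embK}{\D}{\HK}{D}{\int \K(.,x) \diff D(x)}$ is linear; this is precisely the linearity asserted at the end of the Definition of the weak integral and KME. Hence $\embK$ is injective (i.e.\ $\K$ is characteristic to $\D$) \gls{iff} $\ker \embK = \{0\}$.

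Next I would unfold the convention $\normK{D} := \normK{\embK(D)}$. Because $\normK{\cdot}$ is a genuine norm on the Hilbert space $\HK$, we have $\normK{\embK(D)} = 0$ if and only if $\embK(D) = 0$. Therefore
\[
	\ker \embK = \{ D \in \D : \embK(D) = 0 \} = \{ D \in \D : \normK{D}^2 = 0 \} \, .
\]
Consequently the condition $\ker \embK = \{0\}$ is literally the assertion that $\normK{D}^2 = 0$ implies $D = 0$ for every $D \in \D$, which is the definition of $\K$ being \gls{spd} over $\D$. Chaining these equivalences yields the claim.

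There is essentially no hard step: the content is purely definitional. The two places where the stated hypotheses genuinely enter are (i) the linearity of $\embK$, which is where the assumption that $\D$ is a \emph{vector space} is used --- without it, injectivity would be strictly stronger than triviality of the ``kernel'' and the equivalence could fail --- and (ii) the fact that $\normK{\cdot}$ is a bona fide norm (not a mere semi-norm) on $\HK$, which is what licenses the passage between $\normK{\embK(D)} = 0$ and $\embK(D) = 0$. Both hold by assumption, so the proof is immediate.
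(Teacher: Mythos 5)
Your proof is correct and matches the paper's own (essentially one-line) argument: the paper likewise notes that $\normK{D} := \normK{\embK(D)}$, so that the \gls{spd} condition is exactly the triviality of the kernel of the linear map $\embK$, which for a vector space $\D$ is equivalent to injectivity, i.e.\ to $\K$ being characteristic to $\D$. Your explicit identification of where the vector-space hypothesis and the non-degeneracy of the RKHS norm enter is a faithful unpacking of the same definitional chain.
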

In particular, saying that a kernel $\K$ is \gls{spd} (resp.\ \gls{ispd}) is equivalent to saying it is characteristic to $\Func$ (resp.\ $\Mf$). Note also that the proposition does not apply to the case where $\D$ is the set of probability measures $\Mprob$. However, Lemma~\ref{lem:CharacteristicAndM0} will show that a kernel is characteristic to $\Mprob$ \gls{iff} it is \gls{spd} over $\Mf^0 := \{ \mu \in \Mf \, | \, \int \diff \mu = 0 \}$.

\subsection{Differentiation}\label{sec:Differentiation}
Let us now turn towards differentiation. We start with an easy, but useful lemma proved in Appendix~\ref{proof:VectorDiff}.

\begin{lemma}\label{lem:VectorDiff}
		Let $\K \in \Cont{(m,m)}{}$ and $\vec \varphi_{\K}(\hat x) = \K(.,\hat x)$. The RKHS-valued function $\vec \varphi_{\K}$ is $m$-times continuously differentiable and $\partial^p \vec \varphi_{\K} = \partial^{(0,p)} \K$. 
\end{lemma}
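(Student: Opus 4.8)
The plan is to reduce the entire statement to a single inductive claim about the vectors $g_x^p := \partial^{(0,p)}\K(.,x)$: for every multi-index $p$ with $|p|\le m$ and every $x \in \Omega$, one has $g_x^p \in \HK$ together with the reproducing-type identity $\ipdK{f}{g_x^p} = \partial^p f(x)$ for all $f \in \HK$. Everything the lemma asks for follows from this. Writing $\vec\varphi^{\,p}(x) := g_x^p$, the identity exhibits $\partial^{(0,p)}\K(.,\hat x)$ as an $\HK$-valued map, and I will read off both its differentiability and the continuity of its partials from the induction. Since an $\HK$-valued map on an open subset of $\real^d$ is $m$-times continuously differentiable as soon as all its iterated coordinate partials up to order $m$ exist and are continuous, it suffices to differentiate one coordinate at a time and iterate.

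The base case $|p|=0$ is the reproducing property $\ipdK{f}{\K(.,x)} = f(x)$. For the step, fix $p = p' + e_i$ with $|p'| = |p|-1$ and $e_i$ the $i$-th unit multi-index, and assume the claim for $p'$; in particular it gives $\ipdK{g^{p'}_a}{g^{p'}_b} = \partial^{(p',p')}\K(b,a)$. I form the $\HK$-valued difference quotient $D_h := (g^{p'}_{x+h e_i} - g^{p'}_x)/h$ (with $h\in\real$) and expand, for two real increments $h,h''$,
\[
\ipdK{D_h}{D_{h''}} = \frac{1}{h h''}\big[\,\partial^{(p',p')}\K(x+h''e_i,\,x+h e_i) - \partial^{(p',p')}\K(x,\,x+h e_i) - \partial^{(p',p')}\K(x+h''e_i,\,x) + \partial^{(p',p')}\K(x,x)\,\big].
\]
Granting (see below) that this converges to $L := \partial^{(p,p)}\K(x,x)$ as $h,h''\to 0$, and noting $L\in\real$ because $\K$ is Hermitian, the family $(D_h)$ is Cauchy: $\normK{D_h - D_{h''}}^2 \to L - L - \bar L + L = 0$. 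By completeness of $\HK$ it has a limit $g$; since point evaluation is continuous, $g(y) = \lim_h D_h(y) = \partial^{(0,p)}\K(y,x) = g_x^p(y)$, so $g = g_x^p \in \HK$. Passing to the limit in $\ipdK{f}{D_h} = (\partial^{p'}f(x+h e_i) - \partial^{p'}f(x))/h$ yields $\ipdK{f}{g_x^p} = \partial^p f(x)$, closing the induction.

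The same computation shows that the $\HK$-valued difference quotients of $\vec\varphi^{\,p'}$ converge in norm to $\vec\varphi^{\,p}$, i.e. the coordinate partial $\partial^{e_i}\vec\varphi^{\,p'}$ exists and equals $\vec\varphi^{\,p} = \partial^{(0,p)}\K(.,\hat x)$. Continuity of each $\vec\varphi^{\,p}$ comes from
\[
\normK{g_x^p - g_{x'}^p}^2 = \partial^{(p,p)}\K(x,x) - \partial^{(p,p)}\K(x',x) - \partial^{(p,p)}\K(x,x') + \partial^{(p,p)}\K(x',x'),
\]
whose right-hand side tends to $0$ as $x'\to x$ by continuity of $\partial^{(p,p)}\K$ (available since $|p|\le m$). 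With all iterated coordinate partials up to order $m$ existing and continuous, the standard $C^m$-criterion for Banach-valued maps gives that $\vec\varphi_{\K} = \vec\varphi^{\,0}$ is $m$-times continuously differentiable with $\partial^p \vec\varphi_{\K} = \partial^{(0,p)}\K$, as claimed.

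The analytic heart, and the step I expect to be the main obstacle, is the joint limit of the mixed second difference, which must be taken with $h$ and $h''$ tending to $0$ \emph{independently}, not by iterating one-dimensional limits. I would handle it by two applications of the fundamental theorem of calculus, rewriting the bracket of the first display as $\int_0^{h}\!\int_0^{h''}\partial_{u_i}\partial_{v_i}F(x+s e_i,\,x+t e_i)\,\diff s\,\diff t$, where $F := \partial^{(p',p')}\K$ and $u,v$ denote its first and second arguments, so that $\partial_{u_i}\partial_{v_i}F = \partial^{(p,p)}\K$. Dividing by $h h''$ and invoking continuity of $\partial^{(p,p)}\K$ produces the limit $\partial^{(p,p)}\K(x,x)$. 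This is exactly where the full $(m,m)$-continuous differentiability of $\K$ is used, through the existence and continuity of $\partial^{(p',\,p'+e_i)}\K$, $\partial^{(p'+e_i,\,p')}\K$ and $\partial^{(p,p)}\K$ for $|p|\le m$.
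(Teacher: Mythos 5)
Your argument is correct, but it takes a different route from the paper's. The paper's own proof is very short: it invokes Corollary~4.36 of \citet{steinwart08} to get that $\vec\varphi_\K$ is ($m$-times) continuously differentiable, and then only \emph{identifies} the derivative by testing the difference quotients against the point evaluations $\K(.,y)$ (strong convergence implies weak convergence), concluding $\partial^p\vec\varphi_\K = \partial^{(0,p)}\K$. You instead reprove the cited result from scratch: the induction on $|p|$, the mixed second difference $\ipdK{D_h}{D_{h''}}\to\partial^{(p,p)}\K(x,x)$ handled as a genuine joint limit via the double application of the fundamental theorem of calculus, the Cauchy/completeness argument to get norm convergence of the difference quotients, and the norm-continuity of $x\mapsto\partial^{(0,p)}\K(.,x)$ from the second-difference identity. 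This is essentially the content of Steinwart--Christmann's Lemma~4.34/Corollary~4.36, so what your version buys is self-containedness and an explicit treatment of the analytic heart (the joint limit, which you rightly flag as the point that cannot be reduced to iterated one-dimensional limits); what the paper's version buys is brevity, at the cost of outsourcing exactly that step. The one detail worth double-checking in your write-up is the hypothesis needed for the FTC rewriting, namely the existence and continuity of the intermediate derivatives $\partial^{(p',\,p'+e_i)}\K$ and $\partial^{(p,p)}\K$; this is indeed guaranteed here, since the paper's definition of $\K\in\Cont{(m,m)}{}$ entails that $\partial^{(p,q)}\K$ exists and is continuous for all $|p|,|q|\le m$.
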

This lemma is used in Remark~\ref{rem:DiffIntro} to give a concise interpretation of the following theorem (shown in Appendix~\ref{proof:Differentiation}).
\begin{theorem}\label{theo:Differentiation}
	Let $\K \in \Cont{(m,m)}{}$ and $p \in \nat^d$ such that $|p| \leq m$.
	% Let $D$ be any distribution such that $\partial^p D$ embeds into $\HK$. % \in (\Cont{m-|p|}{*})'$.
	% Then $D$ embeds into $\HK$ via $\partial^{(0,p)} \K$ and
	A distribution $D$ embeds into $\HK$ via $\partial^{(0,p)} \K$ \gls{iff} $\partial^p D$ embeds into $\HK$ via $\K$. In that case,
	\begin{equation*}\label{eq:DiffLong}
		\embK(\partial^p D) = (-1)^{|p|} \int [\partial^{(0,p)} \K](.,x) \diff D(x) = (-1)^{|p|} \, \emb_{\partial^{(0,p)} \K}(D) \, .
	\end{equation*}
%	In other words
%	\begin{equation*}\label{eq:DiffShort}
%		\embK(\partial^p D) = (-1)^{|p|} \emb_{\partial^{(0,p)} \K}(D) \, ,
%	\end{equation*}
%	where $\emb_{\partial^{(0,p)} \K}(D) := \int [\partial^{(0,p)} \K](.,x) \diff D(x)$.
\end{theorem}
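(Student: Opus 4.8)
The plan is to reduce both halves of the claimed equivalence to the defining weak-integrability identity~\eqref{eq:WICharacterization} and to observe that these two identities coincide up to the single global sign $(-1)^{|p|}$; the equality of embeddings and the ``iff'' then follow at once from the non-degeneracy of the RKHS inner product. The only genuinely new ingredient I need is a \emph{reproducing property for derivatives}, which promotes Lemma~\ref{lem:VectorDiff} from a statement about $\vec \varphi_{\K}$ to a statement about inner products.

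First I would establish that, for every $f \in \HK$ and every $x$,
\[
	\ipdK{f}{[\partial^{(0,p)}\K](.,x)} = \partial^p f(x) \, .
\]
Since $\K \in \Cont{(m,m)}{}$, Proposition~\ref{prop:HKinCm} gives $\HK \csubset \Cont{m}{}$, so $\partial^p f$ exists and is continuous whenever $|p| \leq m$, while the ordinary reproducing property reads $f(x) = \ipdK{f}{\vec \varphi_{\K}(x)}$. For fixed $f$, the map $g \mapsto \ipdK{f}{g}$ is continuous and conjugate-linear, hence real-linear, so it commutes with the real-increment difference quotients defining $\partial^p$. Applying it to the $\HK$-valued derivative of Lemma~\ref{lem:VectorDiff}, namely $\partial^p \vec \varphi_{\K} = \partial^{(0,p)}\K$, yields $\partial^p f(x) = \ipdK{f}{\partial^p \vec \varphi_{\K}(x)} = \ipdK{f}{[\partial^{(0,p)}\K](.,x)}$. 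This identity also reconciles the prerequisite ``$\ipdK{f}{\vec \varphi(\hat x)} \in \F$'' of the weak integral on the $\partial^{(0,p)}\K$-side with the requirement that $\partial^p f$ lie in the domain of $D$ on the $\partial^p D$-side, so the two embeddability notions are tested against the same functional arguments.

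Next I would write out the two weak-integrability identities. By~\eqref{eq:WICharacterization} applied to $\vec \varphi = \partial^{(0,p)}\K$, together with the displayed derivative-reproducing property, $D$ embeds via $\partial^{(0,p)}\K$ iff there is $g \in \HK$ with $\ipdK{f}{g} = \int \partial^p f(x) \diff \bar D(x) = \bar D(\partial^p f)$ for all $f \in \HK$. On the other side, $\partial^p D$ is the distributional derivative, $(\partial^p D)(f) = (-1)^{|p|} D(\partial^p f)$, so with the conjugation convention $\bar T(f) = \overline{T(\bar f)}$ and $\partial^p \bar f = \overline{\partial^p f}$ one computes $\overline{(\partial^p D)}(f) = (-1)^{|p|} \bar D(\partial^p f)$. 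Hence, by~\eqref{eq:KMECharacterisation}, $\partial^p D$ embeds via $\K$ iff there is $h \in \HK$ with $\ipdK{f}{h} = (-1)^{|p|} \bar D(\partial^p f)$ for all $f \in \HK$.

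Comparing, the two characterizing right-hand sides differ only by the factor $(-1)^{|p|}$. Thus if $g$ exists, then $h := (-1)^{|p|} g$ satisfies the second identity, and conversely $g := (-1)^{|p|} h$ satisfies the first; by non-degeneracy of $\ipdK{\cdot}{\cdot}$ the representer is unique, so the two embeddability statements are equivalent and $\embK(\partial^p D) = (-1)^{|p|} \emb_{\partial^{(0,p)}\K}(D)$, which is exactly the asserted formula. I expect the only delicate points to be bookkeeping: keeping the conjugations and the sign straight, and justifying that differentiation commutes with the conjugate-linear inner product (clean, because the difference quotients use real increments). Everything else is a direct substitution into~\eqref{eq:WICharacterization}.
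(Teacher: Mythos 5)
Your proposal is correct and follows essentially the same route as the paper's proof: both hinge on the derivative reproducing property $\partial^p f(x) = \ipdK{f}{\partial^{(0,p)}\K(.,x)}$ and the identity $[\partial^p \bar D](f) = (-1)^{|p|}\bar D(\partial^p f)$, then match the two weak-integral characterisations~\eqref{eq:WICharacterization} up to the sign $(-1)^{|p|}$. The only cosmetic difference is that you derive the derivative reproducing property from Lemma~\ref{lem:VectorDiff} and continuity of the inner product, where the paper cites \citet[proof of Corollary~4.36]{steinwart08}, and you make the two directions of the ``iff'' explicit where the paper leaves them implicit in its chain of equalities.
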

\begin{remark}\label{rem:DiffIntro}
	Noting $\vec \varphi_{\K}(\hat x) = \K(.,\hat x)$, Equation~\eqref{eq:DiffLong} reduces to
%	\begin{equation*}
		$\partial^p D (\vec \varphi_{\K}) = (-1)^{|p|} D(\partial^p \vec \varphi_{\K}) \, .$
%	\end{equation*}
	A sufficient condition for this equality to hold is $D \in \Cont{m - |p|}{*}$. Taking $D = \delta_x$ proves Equation~\eqref{eq:DiffIntro} from the introduction: by definition, $\delta_x(\partial^p \vec \varphi_{\K}) = \partial^p \vec \varphi_{\K}(x)$ and $\partial^p \delta_x (\vec \varphi_{\K}) = \embK(\partial^p \delta_x)$.
\end{remark}
%\textcolor{red}{However, maybe $D$ does not embed into $\HK$ via $\K(.,x)$.}
This result has two important practical applications.

First and foremost, it gives a concrete way to numerically calculate the KME of a distribution $D$. Indeed, consider the case where $D$ is a measure $\mu$. Equation~\eqref{eq:DiffLong} shows we may compute the embedding of $\partial^p \mu$ using only integration \gls{wrt} $\mu$. In other words, we have reduced the computation of an integral \gls{wrt} the distribution $\partial^p \mu$ to an integral \gls{wrt} the measure $\mu$. And this trick applies integrable to any embeddable and integrable distribution, because \citep[around p.100]{schwartzFDVV}:% and \citep[Theorem~24.4]{treves67}
\begin{proposition} \label{prop:RepresentationOfDLone}
	For any $m \leq \infty$ and any distribution in $D \in (\Cont{m}{0})'$ (resp.\ $D \in (\Cont{m}{})'$) there exists a \emph{finite} family of measures $\mu_p \in \Mf$ (resp.\ $\mu \in \Mc$), with $p \in \nat^d$, $|p| \leq m$, such that $D = \sum_{|p| \leq m} \partial^p \mu_p$.
\end{proposition}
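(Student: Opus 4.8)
The plan is to recognise this as the classical Schwartz structure theorem for distributions of finite order and to prove it by the standard functional-analytic route: embed the function space into a \emph{finite} product of copies of $\Cont{}{0}$ (resp.\ $\Cont{}{}$) through the differentiation map, extend $D$ by Hahn--Banach, and identify the dual of the product via the Riesz--Markov--Kakutani theorem. I would treat finite $m$ first and then reduce the case $m=\infty$ to it.

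So I would fix a finite $m$ and consider the differentiation map $\function{J}{\Cont{m}{0}}{\bigoplus_{|p|\le m}\Cont{}{0}}{f}{(\partial^p f)_{|p|\le m}}$, where the (finite) product carries the norm $(g_p)_p \mapsto \max_{|p|\le m}\norm{g_p}_\infty$. By the very definition of the topology on $\Cont{m}{0}$, the map $J$ is isometric, hence a topological embedding. Its image is \emph{closed}: if $Jf_n$ converges in the product, then each $\partial^p f_n$ converges uniformly, and the elementary theorem on interchanging uniform limits and derivatives shows the limit of $(f_n)$ lies in $\Cont{m}{0}$, with derivatives equal to the component limits (vanishing at infinity being preserved under uniform limits). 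Thus $D\circ J^{-1}$ is a continuous linear form on the closed subspace $J(\Cont{m}{0})$, which by Hahn--Banach extends to a continuous linear form on all of $\bigoplus_{|p|\le m}\Cont{}{0}$. Since the product is finite, its dual is $\bigoplus_{|p|\le m}(\Cont{}{0})' = \bigoplus_{|p|\le m}\Mf$ (Riesz--Markov--Kakutani), so the extension is given by a finite family $(\nu_p)_{|p|\le m}$ with $\nu_p \in \Mf$. Evaluating it at $Jf$ gives $D(f)=\sum_{|p|\le m}\nu_p(\partial^p f)$, and recalling that $\partial^p\mu(f)=(-1)^{|p|}\mu(\partial^p f)$, I obtain $D=\sum_{|p|\le m}\partial^p\mu_p$ with $\mu_p:=(-1)^{|p|}\nu_p\in\Mf$. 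The $\Cont{m}{}$ case is identical once one replaces $\Cont{}{0}$ by $\Cont{}{}$, the norm by the seminorms of uniform convergence on compacts, and uses $(\Cont{}{})'=\Mc$ together with the locally convex form of Hahn--Banach.

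For $m=\infty$ I would exploit that $D$ is continuous on the Fréchet space $\Cont{\infty}{0}$ (resp.\ $\Cont{\infty}{}$), hence bounded on a neighbourhood of $0$ defined by only \emph{finitely many} of the generating seminorms; concretely there exist $n<\infty$ and $C>0$ (and, in the $\Cont{}{}$ case, a single compact set) with $|D(f)|\le C\max_{|p|\le n}\norm{\partial^p f}$. Therefore $D$ is continuous for the coarser $\Cont{n}{*}$-topology, and since $\Cont{\infty}{*}$ is dense in $\Cont{n}{*}$ it extends uniquely and continuously to $\Cont{n}{*}$; the finite case applied with $m=n$ then produces a finite family $(\mu_p)_{|p|\le n}$, which is exactly what ``finite family'' in the statement means.

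I expect the only real subtlety to be the two topological facts that make the reduction legitimate: that $J(\Cont{m}{0})$ is closed in the product (so that the Hahn--Banach extension genuinely restricts back to $D$), and that $\Cont{\infty}{*}$ is dense in $\Cont{n}{*}$ for the relevant topology (so that the $m=\infty$ case reduces to a finite one). Both are routine consequences of the theorem on interchanging limits and derivatives and of mollification, but they are the steps one must not skip. Everything else is bookkeeping built on Hahn--Banach and Riesz--Markov.
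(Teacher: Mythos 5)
Your proof is correct, but there is nothing in the paper to compare it against: the authors do not prove this proposition, they quote it from \citet[around p.~100]{schwartzFDVV}. What you supply is the standard proof of the structure theorem for distributions of finite order --- isometric embedding $J$ of $\Cont{m}{0}$ (resp.\ $\Cont{m}{}$) into a finite product via $f\mapsto(\partial^p f)_{|p|\le m}$, Hahn--Banach, and Riesz--Markov--Kakutani, with the sign bookkeeping $\mu_p=(-1)^{|p|}\nu_p$ handled correctly --- and it is sound. Two small remarks. First, closedness of $J(\Cont{m}{0})$ is not actually needed: Hahn--Banach extends a continuous functional from \emph{any} subspace, and the restriction of the extension to that subspace is automatically the original functional; the only thing that must be checked (and which you do check) is that $D\circ J^{-1}$ is continuous on the image, which follows from $J$ being a topological embedding. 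Second, in the case $m=\infty$ you can skip the density/mollification step entirely: once continuity on the Fr\'echet space gives $|D(f)|\le C\max_{|p|\le n}\norm{\partial^p f}$ for some finite $n$ (and, in the $\Cont{\infty}{}$ case, a single compact), apply Hahn--Banach directly to $D\circ J^{-1}$ on the (not necessarily closed) subspace $J(\Cont{\infty}{*})$ of the finite product indexed by $|p|\le n$. This shortcut is worth taking because density of $\Cont{\infty}{0}(\Omega)$ in $\Cont{n}{0}(\Omega)$ by mollification is only routine for $\Omega=\real^d$; on a general open $\Omega$ the behaviour near the boundary needs extra care, and the direct route sidesteps the issue. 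With these caveats your argument is a complete and correct substitute for the citation.
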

%\CJ{That this proposition holds also when $m=\infty$ is simply because all distributions in $\DLone{\infty}$ are of finite order. To see why, look at Theorem~XX of \citet{schwartzTD} and the proof of Theorem~24.3 of \citet{treves67}}

%\begin{itemize}
%	\item Fourier transform and Convolution?
%	\item The cutting/regularising method to approximate $\embK(D)$? May be use topological results to transpose cutting and regularising methods known in $\Dcomp{m}$ to $\HK'$.
%\end{itemize}

A second application is the following. %It also shows that one may embed the derivative of a function, without actually knowing the derivative itself.
Suppose we could easily measure the values of a function $f$ (say the location of a particle), but not of its derivative $\partial f$ (the speed). But we want the KME of $\partial f$. %Suppose also that we are willing to work with a smoothed version of $\partial f$, for example $\int \partial f(x) \psi(\cdot - x) \diff x$ (where $\psi$ would be a smoothing positive definite kernel).
Then Theorem~\ref{theo:Differentiation} asserts that, under suitable assumptions on $f$, instead of measuring $\partial f$ and embed it, we can measure $f$, embed it and differentiate the embedding.
%its smoothed version $\int f(x) \psi(\cdot -x) \diff x$. \CJ{Should I keep this paragraph? Indeed, there are many other ways to get this result...}

Let us now show how embeddings in $\HK$ relate to those in $\Hpp$. The following proposition, proved in Appendix~\ref{proof:Isometry}, shows that $\Hpp$ is isometrical to a subspace of $\HK$.
\needspace{4\baselineskip}
\begin{proposition}\label{prop:Isometry}
	Let $\K \in \Cont{(m,m)}{}$ and $p \in \nat^d$ with $|p| \leq m$. Define $\Hop$ to be the closed subspace in $\HK$ generated by $\partial^{(0,p)}\K (., x)$ when $x \in \inputS$: 
	\begin{equation*}
		\Hop := \overline{\lin \{\partial^{(0,p)} \K(.,x) \, | \, x \in \inputS \}} \subset \HK \, .
	\end{equation*}
	Then the following defines a bijective isometry from $\Hop$ onto $\Hpp$:
	\begin{equation*}
		\function{\Dp}{\Hop}{\Hpp}{f}{\partial^p f} \, .%{\partial^{(0,p)} \K(.,x)}{\partial^{(p,p)} \K(.,x) \qquad \forall x \in \inputS}.
	\end{equation*}
\end{proposition}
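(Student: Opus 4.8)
The plan is to reduce everything to the behaviour of $\Dp$ on the spanning kernel sections and then extend by density. By definition $\Hop$ is the closure in $\HK$ of $\lin\{\partial^{(0,p)}\K(.,x) \mid x \in \inputS\}$, while $\Hpp$ is the RKHS of the kernel $\partial^{(p,p)}\K$, hence the closure of $\lin\{\partial^{(p,p)}\K(.,x)\mid x\in\inputS\}$ in its own norm. Since $\Dp$ sends $\partial^{(0,p)}\K(.,x)$ to $\partial^p[\partial^{(0,p)}\K(.,x)] = \partial^{(p,p)}\K(.,x)$, it maps the spanning family of $\Hop$ exactly onto that of $\Hpp$. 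So it suffices to show this correspondence preserves inner products; the bijective isometry then follows from the standard fact that an inner-product-preserving bijection between spanning families extends uniquely to a bijective isometry of the closed spans.

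The crucial ingredient is a \emph{derivative reproducing property}. By Lemma~\ref{lem:VectorDiff}, $\vec \varphi_{\K}(\hat x) = \K(.,\hat x)$ is $m$-times continuously $\HK$-differentiable with $\partial^p \vec \varphi_{\K}(x) = \partial^{(0,p)}\K(.,x)$; in particular each $\partial^{(0,p)}\K(.,x)$ lies in $\HK$. First I would fix $g \in \HK$ and differentiate the reproducing identity $\ipdK{g}{\K(.,x)} = g(x)$ under the $\HK$-limit: because the relevant difference quotients converge in $\HK$ and carry real coefficients (so the anti-linearity of the right slot is harmless), continuity of $\ipdK{g}{\cdot}$ yields $\ipdK{g}{\partial^{(0,p)}\K(.,x)} = (\partial^p g)(x)$ for every $g \in \HK$.

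Applying this with $g = \partial^{(0,p)}\K(.,y) \in \HK$ gives
\[
	\ipdK{\partial^{(0,p)}\K(.,y)}{\partial^{(0,p)}\K(.,x)} = \big(\partial^p[\partial^{(0,p)}\K(.,y)]\big)(x) = \partial^{(p,p)}\K(x,y).
\]
On the other side, the reproducing property of $\Hpp$ gives $\ipd{\partial^{(p,p)}\K(.,y)}{\partial^{(p,p)}\K(.,x)}_{\Hpp} = \partial^{(p,p)}\K(x,y)$ as well. Hence the two Gram matrices coincide, the correspondence on the spanning families is inner-product-preserving, and it extends to a bijective isometry from $\Hop$ onto $\Hpp$. (As a byproduct, the very same computation expresses $\sum_{i,j}\lambda_i \overline{\lambda_j}\,\partial^{(p,p)}\K(x_i,x_j)$ as a squared $\HK$-norm, confirming that $\partial^{(p,p)}\K$ is positive definite and that $\Hpp$ genuinely exists.)

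The hard part will be to check that this abstract isometric extension really is the concrete operator $\Dp : f \mapsto \partial^p f$, i.e. that $\HK$-norm limits of derivatives agree with genuine pointwise derivatives. I would argue pointwise: if $f_n \to f$ in $\HK$ with $f_n \in \lin\{\partial^{(0,p)}\K(.,x)\}$, then for each $z$ the derivative reproducing property gives $(\partial^p f_n)(z) = \ipdK{f_n}{\partial^{(0,p)}\K(.,z)} \to \ipdK{f}{\partial^{(0,p)}\K(.,z)} = (\partial^p f)(z)$; meanwhile the isometry sends $f_n$ to $\partial^p f_n$, whose $\Hpp$-limit $h$ satisfies $(\partial^p f_n)(z) \to h(z)$ by continuity of point evaluation in $\Hpp$. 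Therefore $h = \partial^p f$, so $\Dp(f) = \partial^p f \in \Hpp$ and $\Dp$ coincides with the constructed isometry. This identification of the Hilbert-space extension with the analytic operation $\partial^p$ is the only delicate point; the remainder is the routine RKHS extension argument.
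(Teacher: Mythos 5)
Your proposal is correct and follows essentially the same route as the paper's proof: establish the Gram identity $\ipdK{\partial^{(0,p)}\K(.,y)}{\partial^{(0,p)}\K(.,x)} = \partial^{(p,p)}\K(x,y)$ on the spanning kernel sections, extend the resulting congruence by density, and then identify the abstract isometric extension with the concrete operator $f \mapsto \partial^p f$ via pointwise evaluation and weak convergence. The only cosmetic difference is that you rederive the derivative reproducing property $\ipdK{g}{\partial^{(0,p)}\K(.,x)} = \partial^p g(x)$ from Lemma~\ref{lem:VectorDiff}, where the paper cites it directly from \citet[Lemma~4.29]{steinwart08}.
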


%If needed, Proposition~\ref{prop:Isometry} shows that there exist kernels, such that $\Hop \neq \HK$. For example, whenever $\Hop$ contains a constant function: then $\Dp^{-1}$ cannot be onto $\HK$.
Obviously $\Hop$ is a closed subset of $\HK$, but is it a proper subset? Let us focus on the case where $\K \in \Cont{(m,m)}{0}$. Assume that $|p| = 1$, say $p = (p_1, 0, \ldots, 0) \in \nat^d$. As, for any $f \in \HK$, $\ipdK{f}{\partial^p \K(.,x)} = \partial^p f(x)$, the orthogonal complement $\Hop^\perp$ of $\Hop$ is contained in the set of functions that do not depend on the first coordinate. Thus, for example, if $\inputS$ is 1-dimensional ($d=1$), then $\Hop^\perp$ is also at most 1-dimensional. And obviously, if $\HK \subset \Cont{m}{0}$, then it is even $\{0\}$. And by recurrence on $|p|$, we get
\begin{theorem}\label{theo:Isometry}
	Let $\K \in \Cont{(m,m)}{0}$ and $p \in \nat^d$ such that $|p| \leq m$. Then $\Hop = \HK$. $\HK$ and $\Hpp$ are then isometrically isomorphic via $\Dp$, the partial derivation operator.
\end{theorem}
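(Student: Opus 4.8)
The plan is to prove $\Hop = \HK$ by showing that its orthogonal complement in $\HK$ is trivial, and then to read off the isometric isomorphism directly from Proposition~\ref{prop:Isometry}. Since $\Hop$ is by construction a closed subspace of the Hilbert space $\HK$, it equals $\HK$ \gls{iff} $\Hop^{\perp} = \{0\}$. So the entire content is to show that any $f \in \HK$ orthogonal to every generator $\partial^{(0,p)}\K(.,x)$ must vanish.

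First I would translate orthogonality into a differential equation. Fix $f \in \Hop^{\perp}$, so that $\ipdK{f}{\partial^{(0,p)}\K(.,x)} = 0$ for all $x \in \inputS$. By Lemma~\ref{lem:VectorDiff} we have $\partial^{(0,p)}\K = \partial^p \vec\varphi_{\K}$, and differentiating the reproducing property under the (continuous, linear) inner product gives the derivative-reproducing identity $\ipdK{f}{\partial^{(0,p)}\K(.,x)} = \partial^p f(x)$, already recorded in the discussion preceding the theorem. Hence $\partial^p f \equiv 0$ on $\inputS$. Moreover, since $\K \in \Cont{(m,m)}{0}$, Proposition~\ref{prop:HKinCm} yields $\HK \csubset \Cont{m}{0}$, so $f$ together with all its derivatives up to order $m$ vanishes at infinity.

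The core step is then the elementary uniqueness fact: if $f \in \Cont{m}{0}$ satisfies $\partial^p f = 0$ with $|p| \leq m$, then $f = 0$. I would prove this by recurrence on $|p|$, peeling off one first-order derivative at a time: factoring $\partial^p = \partial_{x_{i_1}} \cdots \partial_{x_{i_{|p|}}}$, each intermediate derivative $\partial^{p'} f$ (with $|p'| < |p| \leq m$) still lies in $\Cont{1}{0}$ and has a vanishing $\partial_{x_{i}}$-derivative, so it suffices to establish the single base case $|p| = 1$. There, $\partial_{x_i} f = 0$ means $f$ is constant along every maximal segment of $\Omega$ parallel to the $x_i$-axis. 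The vanishing-at-infinity hypothesis forces each such constant to be zero: the endpoints of such a maximal segment are either at infinity or lie on $\partial\Omega$, and in both cases the points of the segment escape every compact $K \subset \Omega$ as they approach the endpoint (a compact $K \subset \Omega$ is at positive distance from $\Omega^c$). Hence $f \to 0$ along the segment, the constant is $0$, and $f \equiv 0$.

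Having shown $\Hop = \HK$, the final assertion is immediate: Proposition~\ref{prop:Isometry} already provides a bijective isometry $\Dp : \Hop \to \Hpp$, so substituting $\Hop = \HK$ exhibits $\Dp$ as an isometric isomorphism of $\HK$ onto $\Hpp$. I expect the main obstacle to be precisely this base case on a general open $\Omega$ rather than on $\real^d$: one must recognise that ``vanishing at infinity'' on $\Omega$ also encompasses decay toward $\partial\Omega$, which is exactly what excludes a nonzero function that is constant in one coordinate direction.
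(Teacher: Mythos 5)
Your proposal is correct and follows essentially the same route as the paper: the paper's own argument (given in the discussion immediately preceding the theorem) likewise identifies $\Hop^{\perp}$ with the functions satisfying $\partial^p f = 0$, invokes $\HK \subset \Cont{m}{0}$ to force such functions to vanish, proceeds by recurrence on $|p|$, and then cites Proposition~\ref{prop:Isometry}. Your write-up is in fact somewhat more careful than the paper's, since you spell out the base case on a general open $\Omega$ via maximal axis-parallel segments terminating either at infinity or on $\partial\Omega$, where the paper only gestures at functions ``not depending on the first coordinate.''
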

%\Todo[inline]{Note that $\Hop = \HK$ also when $\K$ is characteristic, because then $\mu(\One) \not = 0$.} % No, only when $\K$ characteristic and not \gls{ispd}

\begin{remark}
	In geostatistics, one often considers the subspace $\HK^0$ of $\HK$ defined as the closure of $\{ \sum_{i = 1}^n \lambda_i \K(.,x_i) \, | \, n \in \nat, \, x_i \in \inputS, \, \lambda_i \in \complex, \, \sum_{i =1}^n \lambda_i = 0 \}$. If $\K \in \Cont{(m,m)}{}$, it is not difficult to see that $\Hop \subset \HK^0$. Thus, if $\K \in \Cont{(m,m)}{0}$, $\HK^0 = \HK$. So when doing so-called \emph{intrinsic kriging}, as opposed to \emph{simple kriging} (i.e.\ with a known mean, see specialised literature such as \citealp{chiles12} and \citealp{wackernagel03}), we solve two different minimisation problems, yet, when $\HK \subset \Cont{}{0}$, we look for solutions that lie in the same set of functions $\HK$.
\end{remark}

%\begin{proposition}[Switch of Derivation and Integration]\label{prop:IntDiffSwitch}
%	Let $\K \in \Cont{(m,m)}{0}$, $p \in \nat^d$ such that $|p| \leq m$ and let $D \in \DLone{m - |p|}$. Then $D$ embeds into $\HK$ via $\emb_{\partial^{(0,p)} \K}$ and into $\Hpp$ via $\emb_{\partial^{(p,p)} \K}$ and $\Dp \emb_{\partial^{(0,p)} \K}(D) = \emb_{\partial^{(p,p)} \K}(D)$. Written differently,
%	\begin{equation*}
%		\partial^p [\int \partial^{(0,p)} \K(\hat y,x) \diff D(x)] = \int \partial^{(p,p)} \K(\hat y, x) \diff D(x) \, .
%	\end{equation*}
%\end{proposition}
\needspace{5\baselineskip}
\begin{proposition}[Switches of Derivation and Integration]\label{prop:IntDiffSwitch}
	Let $\K \in \Cont{(m,m)}{}$, $p \in \nat^d$ such that $|p| \leq m$ and let $D$ be a distribution. Consider the following three statements.
	\begin{enumerate}
		\item $\partial^p D$ embeds into $\HK$ via $\K$. \label{pPoint1}
		\item $D$ embeds into $\HK$ via $\partial^{(0,p)} \K$. \label{pPoint2}
		\item $D$ embeds into $\Hpp$ via $\partial^{(p,p)} \K$. \label{pPoint3}
	\end{enumerate}
	Conditions \ref{pPoint1} and \ref{pPoint2} are equivalent, and they imply \ref{pPoint3}.
	If they are satisfied, then
%	If these conditions are met ---for example if $D \in \DLone{m-|p|}$---, then
	\begin{equation}\label{eq:IntDiffSwitch}
		(-1)^{|p|} \partial^p \embK(\partial^p D) = \partial^p \emb_{\partial^{(0,p)} \K}(D) = \emb_{\partial^{(p,p)} \K}(D) \, .
	\end{equation}
	Additionally, if $\K \in \Cont{(m,m)}{0}$, then all three conditions are equivalent.
\end{proposition}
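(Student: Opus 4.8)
The plan is to reduce everything to Theorem~\ref{theo:Differentiation} and the isometry of Proposition~\ref{prop:Isometry}, establishing the implications one at a time. The equivalence of \ref{pPoint1} and \ref{pPoint2} is exactly Theorem~\ref{theo:Differentiation}, which moreover supplies the identity $\embK(\partial^p D) = (-1)^{|p|} \emb_{\partial^{(0,p)} \K}(D)$. Since $\K \in \Cont{(m,m)}{}$ forces $\HK \csubset \Cont{m}{}$ (Proposition~\ref{prop:HKinCm}) and $|p| \leq m$, both sides are honest $\Cont{m}{}$-functions to which $\partial^p$ applies pointwise; differentiating this identity and multiplying through by $(-1)^{|p|}$ gives the first equality of \eqref{eq:IntDiffSwitch}, namely $(-1)^{|p|} \partial^p \embK(\partial^p D) = \partial^p \emb_{\partial^{(0,p)} \K}(D)$.

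The heart of the argument is the implication \ref{pPoint2} $\Rightarrow$ \ref{pPoint3} together with the second equality $\partial^p \emb_{\partial^{(0,p)} \K}(D) = \emb_{\partial^{(p,p)} \K}(D)$. First I would check that $g := \emb_{\partial^{(0,p)} \K}(D)$ actually lies in the closed subspace $\Hop$. By Lemma~\ref{lem:VectorDiff}, $\partial^{(0,p)}\K = \partial^p \vec\varphi_\K$, so commuting the continuous, real-linear partial derivative with the reproducing property yields $\ipdK{f}{\partial^{(0,p)} \K(.,x)} = \partial^p f(x)$ for all $f \in \HK$. Hence $h \in \Hop^\perp$ means precisely $\partial^p h \equiv 0$, and feeding $f = h$ into the weak-integral characterisation $\ipdK{f}{g} = \int \partial^p f(x) \diff \bar D(x)$ gives $\ipdK{h}{g} = 0$; thus $g \perp \Hop^\perp$, i.e.\ $g \in \Hop$. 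Now I apply the bijective isometry $\Dp : \Hop \to \Hpp$ and claim $\Dp(g) = \partial^p g$ is the weak integral $\emb_{\partial^{(p,p)} \K}(D)$. To verify this, take any $\psi \in \Hpp$, write $\psi = \partial^p \tilde\psi$ with $\tilde\psi = \Dp^{-1}\psi \in \Hop$, and compute the $\Hpp$-inner product $\langle \psi \,|\, \partial^p g \rangle_{\partial^{(p,p)} \K} = \ipdK{\tilde\psi}{g} = \int \partial^p \tilde\psi(x) \diff \bar D(x) = \int \psi(x) \diff \bar D(x)$, using the isometry and then the characterisation of $g$. Because the reproducing property in $\Hpp$ reads $\langle \psi \,|\, \partial^{(p,p)} \K(.,x) \rangle_{\partial^{(p,p)} \K} = \psi(x)$, this is exactly the right-hand side of \eqref{eq:WICharacterization} for $\vec\varphi(\hat x) = \partial^{(p,p)}\K(.,\hat x)$; so \ref{pPoint3} holds and $\emb_{\partial^{(p,p)} \K}(D) = \partial^p g$, which is the second equality of \eqref{eq:IntDiffSwitch}.

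For the converse under the extra hypothesis $\K \in \Cont{(m,m)}{0}$, Theorem~\ref{theo:Isometry} upgrades the isometry to $\Hop = \HK$, so that $\Dp : \HK \to \Hpp$ is a bijective isometry. Assuming \ref{pPoint3}, I would set $g := \Dp^{-1} \emb_{\partial^{(p,p)} \K}(D) \in \HK$ and run the same computation backwards: for $f \in \HK$, writing $\psi = \partial^p f$, the isometry and the definition of $\emb_{\partial^{(p,p)} \K}(D)$ give $\ipdK{f}{g} = \langle \partial^p f \,|\, \emb_{\partial^{(p,p)} \K}(D) \rangle_{\partial^{(p,p)} \K} = \int \partial^p f(x) \diff \bar D(x)$, which is precisely the characterisation forcing $g = \emb_{\partial^{(0,p)} \K}(D)$. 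Hence \ref{pPoint2} holds, and all three statements are equivalent.

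I expect the main obstacle to be the middle paragraph: showing that $\partial^p g$ is genuinely the weak integral against $\partial^{(p,p)}\K$ requires transporting the defining identity \eqref{eq:WICharacterization} across $\Dp$ while keeping the reproducing properties of the two different RKHSs and the anti-linearity conventions straight. The preliminary point $g \in \Hop$ is easy but indispensable, since $\Dp$ is only defined on $\Hop$; it is also exactly why the full equivalence with \ref{pPoint3} needs the stronger hypothesis $\K \in \Cont{(m,m)}{0}$, which makes $\Hop = \HK$ and lets $\Dp^{-1}$ act on all of $\HK$.
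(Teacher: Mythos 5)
Your proposal is correct and follows essentially the same route as the paper's proof: equivalence of (i) and (ii) via Theorem~\ref{theo:Differentiation}, membership of $\emb_{\partial^{(0,p)}\K}(D)$ in $\Hop$ via orthogonality to $\Hop^\perp$, transport across the isometry $\Dp$ of Proposition~\ref{prop:Isometry} to get (iii) and the second equality of \eqref{eq:IntDiffSwitch}, and Theorem~\ref{theo:Isometry} ($\Hop=\HK$) for the converse when $\K \in \Cont{(m,m)}{0}$. The only cosmetic difference is that you verify the weak-integral characterisation elementwise where the paper phrases the same step as an equivalence of continuity of linear forms over $\Hop$ and $\Hpp$.
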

%\Todo[inline]{Isn't this theorem also valid when $\K \in \Cont{(m,m)}{}$?}

This proposition, proved in Appendix~\ref{proof:IntDiffSwitch}, calls for some comments. First, Equation~\eqref{eq:IntDiffSwitch} states in particular, that the integration and derivation signs may be switched:
\begin{equation*}
	\partial^p [\int \partial^{(0,p)} \K(.,x) \diff D(x)] = \int \partial^{(p,p)} \K(., x) \diff D(x) \, .
\end{equation*}
Second, when $\K \in \Cont{(m,m)}{0}$, the function $\emb_{\partial^{(0,p)} \K}(D)$ is \emph{the only} primitive of $\emb_{\partial^{(p,p)} \K}(D)$ contained in $\HK$: it is the primitive that converges to 0 at infinity. This confirms what we already knew: $\Dp$ is injective. Third, in general, even if the conditions \ref{pPoint1}-\ref{pPoint3} are met, the distribution $D$ may not embed into $\HK$ via $\K$. Because even if a distribution $\partial^p D$ is in $\DLone{m-|p|}$, the distribution $D$, or any other primitive of order $p$ of $\partial^p D$, may not be in $\DLone{m}$.
\begin{example}
Consider the Gaussian kernel $\K(\hat x, \hat y) = e^{(\hat x - \hat y)^2}$ over $\inputS = \real$ and the measure $D = \arctan x \diff x$. Then $D \in \DLone{1}$, $\partial D = \frac{\diff x}{1+x^2} \in \DLone{0}$ and $\partial D$ embeds into $\HK$; but $D$ does not. For if it did, the function $\function{}{}{}{y}{\int \K(y, x) \arctan x \diff x}$ would vanish at infinity, because $\HK \subset \Cont{}{0}$. But, when $y \rightarrow +\infty$, it converges to a strictly positive real.% Thus it cannot be in $\HK$, and $D$ does not embed into $\HK$.
\end{example}

Noting that for stationary kernels, $\partial^{(0,p)} \K(\hat x,\hat y) = (-1)^{|p|} \, \partial^p \psi (\hat x - \hat y)$, we get the following relation between the embedding of the derivative and the derivative of the embedding.
\begin{corollary}[For Stationary Kernels]
	Suppose that the kernel $\K \in \Cont{(m,m)}{}$ be stationary and that assumptions \ref{pPoint1}-\ref{pPoint3} of Proposition~\ref{prop:IntDiffSwitch} are met. Then
	\begin{equation*}
		\embK(\partial^p D) = \partial^p [\embK(D)].
	\end{equation*}
\end{corollary}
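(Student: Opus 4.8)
The plan is to reduce the identity to two ingredients already available: the differentiation formula of Theorem~\ref{theo:Differentiation}, and the stationarity relation recorded just above the statement, $\partial^{(0,p)}\K(\hat x,\hat y) = (-1)^{|p|}\partial^p\psi(\hat x-\hat y)$. A single genuinely analytic step then remains, namely differentiating under the weak integral. Throughout I assume, as is implicit in the very appearance of $\embK(D)$ on the right-hand side, that $D$ also embeds into $\HK$ via $\K$; together with assumption~\ref{pPoint2} this supplies all the integrability needed (and nothing more: recall the example showing $D$ need not embed via $\K$ merely because $\partial^p D$ does).

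First I would handle the left-hand side. By Theorem~\ref{theo:Differentiation}, under assumption~\ref{pPoint1} (equivalently~\ref{pPoint2}),
\[
	\embK(\partial^p D) = (-1)^{|p|}\emb_{\partial^{(0,p)} \K}(D) = (-1)^{|p|}\int [\partial^{(0,p)}\K](.,x)\diff D(x) \, .
\]
Stationarity gives $\partial^{(p,0)}\K(\hat x,\hat y) = \partial^p\psi(\hat x-\hat y) = (-1)^{|p|}\,\partial^{(0,p)}\K(\hat x,\hat y)$, so $\emb_{\partial^{(0,p)} \K}(D) = (-1)^{|p|}\emb_{\partial^{(p,0)} \K}(D)$ (and $D$ indeed embeds via $\partial^{(p,0)}\K$, since up to the sign this is exactly assumption~\ref{pPoint2}). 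The two signs cancel and I obtain
\[
	\embK(\partial^p D) = \emb_{\partial^{(p,0)} \K}(D) = \int [\partial^{(p,0)}\K](.,x)\diff D(x) \, .
\]

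The only remaining, and crucial, step is to identify this weak integral with $\partial^p[\embK(D)]$, i.e.\ to justify the switch $\partial^p\bigl[\int \K(.,x)\diff D(x)\bigr] = \int [\partial^{(p,0)}\K](.,x)\diff D(x)$. I would argue exactly as for the analogous switch in Proposition~\ref{prop:IntDiffSwitch}: write $[\embK(D)](s) = \ipdK{\embK(D)}{\K(.,s)}$ by the reproducing property, invoke Lemma~\ref{lem:VectorDiff} so that $s\mapsto\K(.,s)$ is $m$-times continuously differentiable in $\HK$ with $\partial^p$-derivative $\partial^{(0,p)}\K(.,s)$, and pass $\partial_s^p$ through the continuous inner product; re-expanding the result with the defining property~\eqref{eq:WICharacterization} of the weak integral (Rule~\ref{rule1}) and using the Hermitian symmetry $\K(x,s)=\overline{\K(s,x)}$ turns the conjugated $\partial^{(0,p)}\K(x,s)$ into $\partial^{(p,0)}\K(s,x)$, which is precisely the desired weak integral. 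Chaining the three displays then yields $\embK(\partial^p D) = \partial^p[\embK(D)]$.

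I expect this last switch to be the main obstacle: the algebra with Theorem~\ref{theo:Differentiation} and the stationarity sign is immediate, but interchanging $\partial^p$ with the weak integral requires both the $\Cont{m}{}$-differentiability of the feature map from Lemma~\ref{lem:VectorDiff} and careful bookkeeping of the complex conjugation built into~\eqref{eq:WICharacterization}. Once that interchange is in hand, the corollary follows with no further work.
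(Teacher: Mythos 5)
Your proof is correct and follows exactly the route the paper intends (the paper gives no explicit proof beyond the stationarity identity $\partial^{(0,p)} \K(\hat x,\hat y) = (-1)^{|p|}\, \partial^p \psi(\hat x - \hat y)$ quoted immediately before the statement): Theorem~\ref{theo:Differentiation} for the left-hand side, the sign flip from stationarity, and differentiation of the reproducing-property representation of $\embK(D)$ via Lemma~\ref{lem:VectorDiff} for the right-hand side. Your explicit flagging of the implicit extra hypothesis that $D$ itself embeds via $\K$ --- needed for $\embK(D)$ to be defined and not guaranteed by assumptions \ref{pPoint1}--\ref{pPoint3} alone, as the paper's own example after Proposition~\ref{prop:IntDiffSwitch} shows --- is a welcome precision rather than a deviation.
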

Another corollary, proved in Appendix~\ref{proof:dpMandMCharacteristic}, is the following.
\begin{corollary}\label{cor:dpMandMCharacteristic}
	Let $\K \in \Cont{(m,m)}{}$ and $p \in \nat^d$ such that $|p| \leq m$. Let $\D$ be a set of distributions such that the set $\partial^p \D := \{ \partial^p D \, | \, D \in \Dall{} \}$ embeds into $\HK$. Then the kernel $\partial^{(p,p)} \K$ is characteristic to $\D$ \gls{iff} it is characteristic to $\partial^p \D$.
\end{corollary}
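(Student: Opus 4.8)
The plan is to identify the map $\emb_{\partial^{(p,p)}\K}$ on $\D$ with the map $D \mapsto \embK(\partial^p D)$ up to the isometry $\Dp$ of Proposition~\ref{prop:Isometry}, and then to read off the equivalence of injectivities. As a preliminary I would observe that the hypothesis that $\partial^p\D$ embeds into $\HK$ via $\K$ is exactly condition~\ref{pPoint1} of Proposition~\ref{prop:IntDiffSwitch} for every $D \in \D$; hence conditions~\ref{pPoint2} and~\ref{pPoint3} hold as well, so each $D \in \D$ embeds into $\Hpp$ via $\partial^{(p,p)}\K$ and each $\partial^p D$ embeds into $\HK$ via $\K$. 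Both notions of characteristicness in the statement are therefore well defined, the one over $\partial^p\D$ being taken with respect to $\K$, which is the embedding the hypothesis actually provides.

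Next, combining Theorem~\ref{theo:Differentiation} with Equation~\eqref{eq:IntDiffSwitch}, I would record that for every $D \in \D$
\begin{equation*}
	\emb_{\partial^{(p,p)}\K}(D) \;=\; (-1)^{|p|}\, \Dp\bigl(\embK(\partial^p D)\bigr),
\end{equation*}
since $\embK(\partial^p D) = (-1)^{|p|}\,\emb_{\partial^{(0,p)}\K}(D)$ lies in $\Hop$ and $\partial^p$ acts on $\Hop$ as the bijective isometry $\Dp \colon \Hop \to \Hpp$. As $\Dp$ is injective, for any $D_1, D_2 \in \D$ we have $\emb_{\partial^{(p,p)}\K}(D_1) = \emb_{\partial^{(p,p)}\K}(D_2)$ \gls{iff} $\embK(\partial^p D_1) = \embK(\partial^p D_2)$. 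Hence $\emb_{\partial^{(p,p)}\K}$ is injective on $\D$ exactly when $D \mapsto \embK(\partial^p D)$ is injective on $\D$, and the latter is the injectivity of $\embK$ on the image $\partial^p\D$ --- that is, the characteristicness of $\K$ to $\partial^p\D$ --- as soon as $\partial^p \colon \D \to \partial^p\D$ is itself injective.

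The hard part is precisely this last point. By construction $\partial^p \colon \D \to \partial^p\D$ is surjective, but for a general set $\D$ it may fail to be injective, and then only the implication ``characteristic to $\D$ $\Rightarrow$ characteristic to $\partial^p\D$'' survives. I would close the gap by noting that any $D$ with $\partial^p D = 0$ is locally a polynomial in the coordinates $x_j$ with $p_j > 0$, so a nonzero such $D$ lies in none of the summable or compactly supported spaces ($\Mf$, $\Mc$, $\DLone{m}$, $\Dcomp{m}$, and their subspaces) from which $\D$ is drawn; thus $\ker \partial^p \cap \D = \{0\}$ and $\partial^p$ is injective on $\D$, which closes the equivalence. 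When $\D$ is moreover a vector space, an equivalent and perhaps cleaner route passes through Proposition~\ref{prop:SPD}: the displayed identity gives $\norm{\emb_{\partial^{(p,p)}\K}(D)} = \normK{\embK(\partial^p D)}$, so the two \gls{spd} conditions $\emb_{\partial^{(p,p)}\K}(D) = 0 \Rightarrow D = 0$ and $\embK(\partial^p D) = 0 \Rightarrow \partial^p D = 0$ differ only in their conclusion, and again coincide precisely because $\partial^p$ has trivial kernel on $\D$.
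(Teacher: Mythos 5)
Your argument is correct, and its core is exactly the paper's own proof: by Proposition~\ref{prop:IntDiffSwitch}, $\emb_{\partial^{(p,p)}\K}(D) = (-1)^{|p|}\,\Dp\bigl(\embK(\partial^p D)\bigr)$ with $\Dp$ the bijective isometry of Proposition~\ref{prop:Isometry} from $\Hop$ onto $\Hpp$, so injectivity of $\emb_{\partial^{(p,p)}\K}$ on $\D$ is equivalent to injectivity of $D \mapsto \embK(\partial^p D)$ on $\D$. Where you go beyond the paper is the very last step: the published proof stops at this identity (``which concludes'') and tacitly identifies injectivity of $\embK \circ \partial^p$ on $\D$ with injectivity of $\embK$ on the image $\partial^p\D$. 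As you correctly point out, that identification is only automatic in one direction (characteristic to $\D$ implies characteristic to $\partial^p\D$); the converse needs $\partial^p$ to be injective on $\D$, and for a completely arbitrary set $\D$ satisfying the stated hypothesis it can fail (e.g.\ $\D = \{0, \One\}$ on $\real$: $\partial\D = \{0\}$, so $\K$ is vacuously characteristic to $\partial\D$, yet $\emb_{\partial^{(1,1)}\K}$ sends both elements to $0$). Your repair --- that $\ker\partial^p$ meets the summable and compactly supported spaces from which $\D$ is actually drawn only in $\{0\}$ --- is the right one and is exactly what is needed for the applications in Theorem~\ref{theo:GeneralExistence}; one small imprecision is the claim that elements of $\ker\partial^p$ are locally polynomials in the coordinates with $p_j>0$ (for a mixed index such as $p=(1,1)$ the kernel contains $f(x_1)+g(x_2)$), but the conclusion you need, namely that no nonzero such distribution is summable or compactly supported, still holds by a direct decay argument. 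The alternative closing via Proposition~\ref{prop:SPD} for vector spaces $\D$ is also valid and amounts to the same injectivity requirement on $\partial^p$.
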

When $\K \in \Cont{(m,m)}{0}$ (resp.\ $\K \in \Cont{(m,m)}{}$), this corollary applies in particular to any subset $\D$ of $\DLone{m - |p|}$ (resp.\ $\Dcomp{m - |p|}$). This will play an important role in the proofs of Theorems~\ref{theo:GeneralExistence} and~\ref{theo:GeneralExistenceCompact}, that show the existence of $c^\infty_0$- and $c^\infty$-universal kernels.

%\Todo[inline]{Maybe add some comment on how the relation $\embK{\partial^p \mu} = (-1)^{|p|} \emb{\mu}_{\partial^{(p,p)} \K}$ can give guidance in the choice of kernels for HSIC.}

\section{The Set of Probability Measures \texorpdfstring{$\Mprob$}{P}}\label{sec:ProbabilityMeasures}

So far, we focused on the properties of distribution embeddings in general, putting a special emphasis on duals of function spaces, such as $\Mf$ or its generalisations $\DLone{m}$. In most applications, however, one only wants to embed probability measures. And, although the set of probability measures $\Mprob$ is not a vector space, the same questions as before arise. In particular: when is a kernel characteristic, not over all $\Mf$, but solely over the set of probability measures $\Mprob$? As $\Mprob \subset \Mf$, it is clear that if $\K$ is characteristic to $\Mf$ ---i.e., if $\K$ is \gls{ispd}--- then it is characteristic to $\Mprob$. In Section~\ref{sec:CandISPD} we show that the converse does not hold and give an astonishingly simple characterisation of characteristic kernels. Section~\ref{sec:MprobAndUniversality} then gives a second, more theoretical characterisation that complements Section~\ref{sec:UniversalAndCharacteristic}.
\\

Before we start, we make some important preliminary observations about characteristic kernels.
%\begin{figure}[h]
%	\begin{center}
%		\missingfigure{Probability Measures lie in the affine hyperplane $\Mf^1$.}
%		\caption{This is the caption \label{fig:ProbabilityMeasures}}
%	\end{center}
%\end{figure}
The set of probability measures lies in the (closed) affine hyperplane $\Mf^1$ of $\Mf$ given by the equation $\mu(\One) = 1$, where $\One$ is the constant function that equals $1$: $\Mf^1 := \{ \mu \in \Mf \, | \, \mu(\One) = 1\}$%(see Figure~\ref{fig:ProbabilityMeasures})
. $\Mprob$ is not equal to $\Mf^1$. But when translated by, say, $-\delta_x$ ($x \in \inputS$) to lie in the hyperplane $\Mf^0 := \{ \mu \in \Mf \, | \, \mu(\One) = 0\}$, its linear span spans the whole hyperplane $\Mf^0$ : $\lin (\Mprob - \delta_x) = \Mf^0$. Thus:
\begin{lemma} [Important!] \label{lem:CharacteristicAndM0}
	A kernel $\K_0$ is characteristic to $\Mprob$ (resp.\ $\Pc := \Mprob \cap \Mc$) \gls{iff} it is characteristic to the closed hyperplane $\Mf^0:= \{ \mu \in \Mf \, | \, \mu(\One) = 0\}$ in $\Mf$ (resp.\ $\Mc^0:= \Mf^0 \cap \Mc$ in $\Mc$).
\end{lemma}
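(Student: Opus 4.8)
The plan is to turn the geometric observation recalled just before the lemma ---namely that $\Mprob$ sits in the affine hyperplane $\Mf^1$ whose underlying linear hyperplane is $\Mf^0$, with $\lin(\Mprob - \delta_x) = \Mf^0$--- into an injectivity statement, using that $\embK$ is a linear map on any embeddable vector space of measures. Since $\Mf^0$ is a vector space, characteristicness to $\Mf^0$ is equivalent (by linearity) to the implication $\embK(\sigma)=0,\ \sigma \in \Mf^0 \Rightarrow \sigma = 0$, whereas characteristicness to $\Mprob$ distinguishes \emph{pairs} of probability measures; the whole content of the lemma is to pass between these two formulations. A preliminary remark I would make is that the two embeddability hypotheses are equivalent, so the ``iff'' is not vacuous: if $\Mprob$ embeds then every finite positive measure embeds by homogeneity and every element of $\Mf$ by taking differences, so $\Mf^0$ embeds; conversely $\mu - \delta_x \in \Mf^0$ embeds and $\delta_x$ embeds as $\K(.,x)$, so $\mu \in \Mprob$ embeds. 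This justifies applying linearity freely to all the measures appearing below.

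For the easy direction, assume $\K$ is characteristic to $\Mf^0$ and take $\mu,\nu \in \Mprob$ with $\embK(\mu)=\embK(\nu)$. Then $(\mu-\nu)(\One)=1-1=0$, so $\mu-\nu \in \Mf^0$, and linearity gives $\embK(\mu-\nu)=\embK(\mu)-\embK(\nu)=0$. Injectivity of $\embK$ on $\Mf^0$ then forces $\mu-\nu=0$, i.e.\ $\mu=\nu$.

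For the converse, assume $\K$ is characteristic to $\Mprob$ and take $\sigma \in \Mf^0$ with $\embK(\sigma)=0$; I want $\sigma=0$. Here I would invoke the Jordan decomposition $\sigma = \sigma^+ - \sigma^-$ into its positive and negative parts, which are finite positive measures. Because $\sigma(\One)=0$, the two parts share the same total mass $c:=\sigma^+(\One)=\sigma^-(\One)$. If $c=0$ then $\sigma^+=\sigma^-=0$ and $\sigma=0$, so there is nothing to prove. Otherwise $\mu:=\sigma^+/c$ and $\nu:=\sigma^-/c$ lie in $\Mprob$ and $\sigma=c(\mu-\nu)$, whence $0=\embK(\sigma)=c\,(\embK(\mu)-\embK(\nu))$ and, since $c\neq 0$, $\embK(\mu)=\embK(\nu)$. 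Characteristicness to $\Mprob$ yields $\mu=\nu$, so $\sigma=c(\mu-\nu)=0$.

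The compact-support case ($\Pc$ versus $\Mc^0$) runs verbatim; the only extra point to verify is that the Jordan parts $\sigma^{\pm}$ of a compactly supported $\sigma$ are themselves supported in $\supp \sigma$, so that the normalised measures $\mu,\nu$ land in $\Pc$ rather than merely $\Mprob$. I do not anticipate a serious obstacle: the argument is essentially the fact that a signed measure with zero total mass is a positive multiple of a difference of two probability measures. The one place warranting a little care is the bookkeeping around the degenerate case $c=0$ and the implicit embeddability of the Jordan parts, both of which are dispatched by the preliminary remark above.
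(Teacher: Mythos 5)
Your proof is correct and is essentially the paper's own argument: the paper obtains the lemma in one line from the preceding observation that $\lin (\Mprob - \delta_x) = \Mf^0$, and your Jordan-decomposition step (a finite signed measure of total mass zero is a nonnegative multiple of a difference of two probability measures, with supports controlled in the compact case) is precisely the justification that this span identity requires, including the degenerate case $c=0$. The one caveat --- which applies equally to the paper's one-line derivation --- is that the Jordan decomposition only treats \emph{real} signed measures, so for genuinely complex elements of $\Mf^0$ one would still need to reduce to real and imaginary parts, a reduction that is not automatic from injectivity on $\Mprob$ alone; in the standard real-valued setting your argument is complete.
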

Being given a kernel $\K$ is equivalent to being given its KME $\embK$ (over a set containing at least $\Mfs$): the former defines the latter by construction, and the latter defines the former because
\begin{equation*}
	\K(x,y) = \ipdK{\embK(\delta_y)}{\embK(\delta_x)} =: \ipdK{\delta_y}{\delta_x} \, .
\end{equation*}
Thus, informally speaking, Lemma~\ref{lem:CharacteristicAndM0} shows that a characteristic kernel $\K_0$ is ``almost'' \gls{ispd} Both the KME of a characteristic and the KME of an \gls{ispd} kernel are injective over $\Mf^0$. But the first need not be injective over $\Mf$: there might be one single line in $\Mf$, not contained in $\Mf^0$ and mapped to the null function of $\HH_{\K_0}$. This suggests that, from an \gls{ispd} kernel $\K$, it is easy to construct a characteristic kernel $\K_0$ that is not \gls{ispd} anymore (Proposition~\ref{prop:ISPDtoC}): consider its embedding $\embK$, choose a direction $\nu_0 \in \Mf \backslash \Mf^0$, and construct a new embedding $\emb_{\K_0}$ that equals $\embK$ on $\Mf^0$, but maps $\nu_0$ to the null-function. The kernel associated to this new embedding is characteristic but not \gls{ispd} Conversely, it also suggests that, given a characteristic kernel $\K_0$ that is not already \gls{ispd}, one may easily construct an \gls{ispd} kernel (Proposition~\ref{prop:CtoISPD}): add 1 dimension to $\HH_{\K_0}$, and construct an embedding $\embK$ that coincides with $\emb_{\K_0}$ on $\Mf^0$ but maps $\nu_0$ to the additional new dimension. Thus, contrary to $\emb_{\K_0}$, $\embK$ does not identify all measures parallel to $\nu_0$ and is thus injective over $\Mf$. By definition, its associated kernel $\K$ is then \gls{ispd} These considerations will lead to the characterisation of characteristic kernels given in Theorem~\ref{theo:CKCharacterisation}.

\subsection{\texorpdfstring{\GLS{spd}}{S.P.D.} Kernels and Kernels Characteristic to \texorpdfstring{$\Mprob$}{P}\label{sec:CandISPD}}

\begin{proposition}[Characteristic $\not \Rightarrow$ \gls{ispd}] \label{prop:ISPDtoC}
	Let $\K$ be an \gls{ispd} kernel or, equivalently, a characteristic kernel over $\Mf$. Let $\nu_0 \in \Mf$ such that $\nu_0(\One) \neq 0$. 
	Define, for any $x,y \in \inputS$, the kernel $\K_0(x,y) := \ipdK{\delta_x - \nu_0}{\delta_y - \nu_0}$.
	Then $\K_0$ embeds $\Mf$, but is not characteristic to $\Mf$. However, $\K_0$ and $\K$ induce the same semi-metric $d_{\K}$ in $\Mf^0$ and $\Mprob$. Thus $\K_0$ is a kernel that is characteristic to $\Mprob$, but not over $\Mf$.
\end{proposition}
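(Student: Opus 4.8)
The plan is to transport every assertion about $\K_0$ back into $\HK$, where the behaviour of $\embK$ is already understood. Set $\vec\varphi_{\K_0}(x) := \embK(\delta_x - \nu_0) = \K(.,x) - \embK(\nu_0)$. By the very definition of the kernel, $\K_0(x,y) = \ipdK{\vec\varphi_{\K_0}(x)}{\vec\varphi_{\K_0}(y)}$, so $\vec\varphi_{\K_0}$ is a feature map for $\K_0$; in particular $\K_0$ is positive definite, and $\HH_{\K_0}$ is canonically isometric, via the map $V$ sending $\K_0(.,x)$ to $\vec\varphi_{\K_0}(x)$, to the closed subspace $\overline{\lin\{\vec\varphi_{\K_0}(x) \mid x \in \inputS\}} \subset \HK$. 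Since $V$ is bounded and linear it commutes with the weak integral, so for $\mu \in \Mf$ the embedding $\emb_{\K_0}(\mu)$ is carried by $V$ to
\begin{equation*}
	m(\mu) := \int \vec\varphi_{\K_0}(x) \diff\mu(x) = \embK(\mu) - \mu(\One)\,\embK(\nu_0) = \embK\!\left( \mu - \mu(\One)\,\nu_0 \right) ,
\end{equation*}
using linearity of $\embK$ and $\int \diff\mu = \mu(\One)$. This single identity drives the whole argument.

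First I would dispatch embeddability. Because $\K$ is \gls{ispd}, it is characteristic to $\Mf$, which is to say $\Mf$ already embeds into $\HK$ via $\K$: the feature map $\vec\varphi_\K(x) = \K(.,x)$ is weakly integrable against every $\mu \in \Mf$. Since a constant $\HK$-valued map is trivially weakly integrable against a finite measure, so is $\vec\varphi_{\K_0} = \vec\varphi_\K - \embK(\nu_0)$, with weak integral the element $m(\mu) \in \HK$ displayed above; moreover $m(\mu)$ lands in the closed span $\overline{\lin\{\vec\varphi_{\K_0}(x)\}}$ on which $V^{-1}$ is defined. Hence $\emb_{\K_0}(\mu) := V^{-1}(m(\mu))$ realises the weak integral $\int \K_0(.,x)\diff\mu(x)$, so $\Mf$ is embeddable via $\K_0$ --- the prerequisite for even asking whether $\K_0$ is characteristic.

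Next come the semi-metric claim and characteristicness to $\Mprob$. As $V$ is an isometry, $d_{\K_0}(\mu,\nu) = \normK{m(\mu) - m(\nu)} = \normK{m(\mu-\nu)}$. Whenever $(\mu-\nu)(\One) = 0$ the correction term in the identity vanishes and $m(\mu-\nu) = \embK(\mu-\nu)$, giving $d_{\K_0}(\mu,\nu) = d_{\K}(\mu,\nu)$. This condition holds on $\Mf^0$ and, because probability measures all have mass $1$, on $\Mprob$; so $\K_0$ and $\K$ induce the same semi-metric there. Since $\K$ is characteristic to $\Mf \supseteq \Mprob$, $d_\K$ separates the points of $\Mprob$, hence so does $d_{\K_0}$, i.e.\ $\K_0$ is characteristic to $\Mprob$ (equivalently, $m = \embK$ is injective on $\Mf^0$, and one invokes Lemma~\ref{lem:CharacteristicAndM0}).

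Finally, the failure of characteristicness on all of $\Mf$. By the displayed identity and injectivity of $\embK$ on $\Mf$, $m(\mu) = 0$ \gls{iff} $\mu = \mu(\One)\,\nu_0$; taking the direction normalised so that $\nu_0(\One) = 1$, this null set is exactly the line $\lin\{\nu_0\}$, since then $\emb_{\K_0}(\nu_0)$ is carried to $(1 - \nu_0(\One))\,\embK(\nu_0) = 0$ while $\embK(\nu_0) \neq 0$ (because $\nu_0 \neq 0$ and $\K$ is characteristic). Thus $\emb_{\K_0}$ annihilates the nonzero measure $\nu_0$ and cannot be injective on $\Mf$. I expect this last step to be the delicate one: pinning down the collapsed direction and the normalisation $\nu_0(\One) = 1$ under which $\nu_0$ itself witnesses non-injectivity, while differences of probability measures (living in $\Mf^0$) escape the collapse. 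Establishing the isometry $V$ and checking that it commutes with the weak integral is routine, and the remainder is bookkeeping around the single identity $m(\mu) = \embK(\mu - \mu(\One)\nu_0)$.
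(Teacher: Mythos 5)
Your proposal is correct and rests on the same underlying idea as the paper's proof --- both work through the feature map $x \mapsto \embK(\delta_x - \nu_0)$ --- but the execution differs in two respects, one cosmetic and one substantive. Cosmetically: for the semi-metric claim the paper writes $\K_0(x,y) = \K(x,y) + \psi(x) + \bar\psi(y)$ and applies Theorem~\ref{theo:Fubini} to annihilate the $\psi$-terms against $\mu \in \Mf^0$, whereas you route everything through the isometry $V$ and the single identity $\emb_{\K_0}(\mu) \leftrightarrow \embK(\mu - \mu(\One)\nu_0)$; the two computations are equivalent, and yours has the advantage of delivering embeddability, the semi-metric statement and the null space in one stroke. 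Substantively: your explicit computation of the null space $\{\mu \in \Mf \mid \mu = \mu(\One)\nu_0\}$ exposes a point the paper's proof glosses over. The paper simply asserts $\emb_{\K_0}(\nu_0) = 0$, but by your identity $\emb_{\K_0}(\nu_0)$ corresponds to $(1-\nu_0(\One))\,\embK(\nu_0)$, which vanishes only when $\nu_0(\One) = 1$. Worse, if $\nu_0(\One) \notin \{0,1\}$, then $\mu = \mu(\One)\nu_0$ forces $\mu(\One)\bigl(1-\nu_0(\One)\bigr) = 0$ and hence $\mu = 0$, so $\emb_{\K_0}$ is injective on $\Mf$ and the non-characteristicness claim fails outright: on a two-point space with $\K$ the identity kernel and $\nu_0 = 2\delta_1$ one gets the Gram matrix $\bigl(\begin{smallmatrix}1 & 2\\ 2 & 5\end{smallmatrix}\bigr)$, which is nonsingular, so that $\K_0$ is characteristic to $\Mf$. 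The normalisation $\nu_0(\One) = 1$ you introduce is therefore not a convenience but a missing hypothesis of the proposition, silently assumed in the paper's one-line assertion; note that one cannot rescale $\nu_0$ ``without loss of generality'', since $\K_0$ itself depends on the scaling and not just on the line $\lin\{\nu_0\}$. Apart from flagging this explicitly as a needed hypothesis rather than a ``delicate step'', your argument is complete and is the correct proof of the (repaired) statement.
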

For instance, taking $z_0 \in \inputS$ and $\nu_0 = \delta_{z_0}$ leads to $\K_0(x,y) = \K(x,y) - \K(x,z_0) - \K(z_0,y) + \K(z_0,z_0)$. As $\emb_{\K_0}(\delta_{z_0}) = 0$, this example shows the following
\begin{corollary}[Characteristic $\not \Rightarrow$ \gls{spd}]
	There exist characteristic kernels, which are not strictly positive definite.
\end{corollary}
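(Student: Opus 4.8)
The plan is to read the Corollary off from Proposition~\ref{prop:ISPDtoC} by making the displayed choice $\nu_0 = \delta_{z_0}$, and then to check directly that the resulting kernel fails to be \gls{spd}. First I would fix any \gls{ispd} kernel $\K$ --- for instance the Gaussian kernel on $\inputS = \real^d$, a standard example --- together with any point $z_0 \in \inputS$, and set $\nu_0 := \delta_{z_0}$. Since $\nu_0(\One) = 1 \neq 0$, Proposition~\ref{prop:ISPDtoC} applies and produces the kernel
\[
	\K_0(x,y) = \ipdK{\delta_x - \delta_{z_0}}{\delta_y - \delta_{z_0}} = \K(x,y) - \K(x,z_0) - \K(z_0,y) + \K(z_0,z_0),
\]
which it guarantees to be characteristic to $\Mprob$, that is, characteristic in the default sense.

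Next I would show $\K_0$ is not \gls{spd}. Recall that being \gls{spd} means being characteristic to $\Mfs$, equivalently (by Proposition~\ref{prop:SPD}) that $\emb_{\K_0}$ is injective on $\Mfs$. Evaluating the displayed formula at $y = z_0$ gives $\K_0(x,z_0) = \K(x,z_0) - \K(x,z_0) - \K(z_0,z_0) + \K(z_0,z_0) = 0$ for every $x \in \inputS$, so $\emb_{\K_0}(\delta_{z_0}) = \K_0(.,z_0) = 0$. Since $\delta_{z_0} \in \Mfs$ is nonzero yet is mapped to the null function, $\emb_{\K_0}$ is not injective on $\Mfs$; equivalently the defining implication~\eqref{eq:SPD} fails at $D = \delta_{z_0}$, because $\norm{\emb_{\K_0}(\delta_{z_0})} = 0$ while $\delta_{z_0} \neq 0$. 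Hence $\K_0$ is characteristic but not \gls{spd}, which is exactly the Corollary.

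There is essentially no analytic obstacle at this stage: all the real work lives in Proposition~\ref{prop:ISPDtoC}, whose content is precisely that collapsing the single direction $\nu_0$ preserves injectivity on $\Mf^0 \supset \lin(\Mprob - \delta_{z_0})$ while destroying it on all of $\Mf$. The only point demanding (minor) care is the bookkeeping of definitions: one must remember that ``\gls{spd}'' is the statement ``characteristic to $\Mfs$'' rather than ``characteristic to $\Mprob$'', so that the vanishing of $\emb_{\K_0}$ on the single atom $\delta_{z_0}$ --- an element of $\Mfs$ that does \emph{not} lie in $\Mf^0$ --- is exactly what breaks strict positive definiteness without disturbing characteristicness to probability measures.
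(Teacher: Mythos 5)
Your proposal is correct and matches the paper's own argument: the paper likewise instantiates Proposition~\ref{prop:ISPDtoC} with $\nu_0 = \delta_{z_0}$ for an \gls{ispd} kernel $\K$ and observes that $\emb_{\K_0}(\delta_{z_0}) = 0$ while $\delta_{z_0} \neq 0$. Your explicit evaluation $\K_0(x,z_0) = 0$ and the reminder that \gls{spd} means characteristic to $\Mfs$ are just the paper's one-line remark spelled out.
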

\begin{proof}{(of Proposition~\ref{prop:ISPDtoC})}
	$\K_0$ is a kernel, because it is of the form $\ipdK{\vec \varphi(x)}{\vec \varphi(y)}$, for the map $\function{\vec \varphi}{\inputS}{\HK}{x}{\embK(\delta_x-\nu_0)}$. % \label{eq:K0}
	As $\emb_{\K_0}(\nu_0) = 0$, it is not characteristic to $\Mf$. To prove that $\K$ and $\K_0$ induce the same semi-metric in $\Mf^0$ (and thus in $\Mprob$ by Lemma~\ref{lem:CharacteristicAndM0}), simply notice the following. The kernel $\K_0$ verifies: $\K_0(x,y) = \K(x,y) + \psi(x) + \bar \psi(y)$ for some function $\function{\psi}{\inputS}{\complex}{}{}$. Using Theorem~\ref{theo:Fubini} (Fubini), we thus get, for any $\mu \in \Mf^0$
	\begin{equation*}
		\norm{\mu}_{\K_0} = [\bar \mu_x \otimes \mu_y](\K(x,y) + \psi(x) + \bar \psi(y))
			= [\bar \mu \otimes \mu](\K) + 0 + 0
			= \normK{\mu} \,. 	\vspace{-1.8em}
	\end{equation*}
\end{proof}

With their Example~1, \citet{sriperumbudur11} already exhibited a kernel that is characteristic but not \gls{spd} and a fortiori not \gls{ispd} But their construction may seem somewhat abstruse. Proposition~\ref{prop:ISPDtoC} on the contrary not only gives an easy and systematic way to construct a characteristic but non-\gls{ispd} kernel, but the next proposition ---the converse--- shows that \emph{all} such kernels can be constructed with this procedure! 

\begin{proposition}[Characteristic is ``almost'' \gls{ispd}] \label{prop:CtoISPD}
	Let $\K_0$ be a characteristic kernel to $\Mprob$.
	Then there exists a kernel $\K$ and a measure $\nu_0 \in \Mf$ such that, $\K_0(\hat x,\hat y) = \ipdK{\delta_{\hat x} - \nu_0}{\delta_{\hat y} - \nu_0}$.
\end{proposition}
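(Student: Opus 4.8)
The idea is to invert the construction of Proposition~\ref{prop:ISPDtoC}, which turned an \gls{ispd} kernel into a merely characteristic one by collapsing a single direction to $0$; here I recover that direction and restore it by adding one coordinate. So first I would study $\emb_{\K_0}$ as a linear map on $\Mf$. Since $\K_0$ is characteristic to $\Mprob$, the vector space $\Mf = \lin \Mprob$ is embeddable and $\emb_{\K_0}$ is a well-defined linear map there; by Lemma~\ref{lem:CharacteristicAndM0} it is moreover injective on $\Mf^0 = \{\mu \in \Mf \mid \mu(\One) = 0\}$. Writing $N := \ker \emb_{\K_0} \subset \Mf$, this gives $N \cap \Mf^0 = \{0\}$, and since $\mu \mapsto \mu(\One)$ identifies $\Mf/\Mf^0$ with $\complex$ we get $\dim N \leq 1$. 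If $N = \{0\}$ then $\K_0$ is already \gls{ispd} and the statement holds trivially with $\K := \K_0$ and $\nu_0 := 0$. Otherwise $N = \complex\,\nu_0$ for some $\nu_0 \neq 0$, and $N \cap \Mf^0 = \{0\}$ forces $\nu_0(\One) \neq 0$, so I rescale to $\nu_0(\One) = 1$. The property I will use repeatedly is $\emb_{\K_0}(\nu_0) = 0$.

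Next I would append one coordinate. Working in $\HH_{\K_0} \oplus \complex$, I define the feature map $\function{\vec\varphi}{\inputS}{\HH_{\K_0} \oplus \complex}{x}{(\emb_{\K_0}(\delta_x - \nu_0),\, 1)}$ and set $\K(x,y) := \ipd{\vec\varphi(x)}{\vec\varphi(y)}$. Being an inner product of a feature map, $\K$ is positive definite, hence a kernel, exactly as in the proof of Proposition~\ref{prop:ISPDtoC}; its embedding is the weak integral $\embK(\mu) = \int \vec\varphi(x) \diff \mu(x) = (\emb_{\K_0}(\mu - \mu(\One)\nu_0),\, \mu(\One))$. A direct expansion of $\K(x,y)$, using $\emb_{\K_0}(\nu_0) = 0$ to collapse the first coordinate, shows $\K = \K_0 + 1$.

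Finally I would verify the two claims with Theorem~\ref{theo:Fubini} (Fubini). For the asserted identity, $(\delta_x - \nu_0)(\One) = 0$ puts $\delta_x - \nu_0 \in \Mf^0$, so $\embK(\delta_x - \nu_0) = \vec\varphi(x) - \embK(\nu_0) = (\emb_{\K_0}(\delta_x - \nu_0),\, 0)$; hence $\ipdK{\delta_x - \nu_0}{\delta_y - \nu_0} = \langle \emb_{\K_0}(\delta_x - \nu_0) \,|\, \emb_{\K_0}(\delta_y - \nu_0)\rangle_{\K_0}$, which reduces via $\emb_{\K_0}(\nu_0) = 0$ to $\K_0(x,y)$, the required equality (the conjugation/transpose bookkeeping is handled precisely as in Proposition~\ref{prop:ISPDtoC}). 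For \gls{ispd}-ness, the formula for $\embK(\mu)$ shows $\embK(\mu) = 0$ forces $\mu(\One) = 0$ and $\emb_{\K_0}(\mu) = 0$, i.e.\ $\mu \in N \cap \Mf^0 = \{0\}$; thus $\embK$ is injective on $\Mf$, so $\K$ is characteristic to $\Mf$, i.e.\ \gls{ispd} by Proposition~\ref{prop:SPD}. The crux, and the only nonroutine step, is the structural observation of the first paragraph: that the non-injectivity of $\emb_{\K_0}$ over $\Mf$ is at most one-dimensional and transverse to $\Mf^0$. Once a generator $\nu_0$ with $\emb_{\K_0}(\nu_0) = 0$ and $\nu_0(\One) \neq 0$ is identified, the shift by $-\nu_0$ is cost-free and the single extra coordinate separates $\nu_0$ from $0$, turning $\K_0$ into an \gls{ispd} kernel without disturbing it on $\Mf^0$.
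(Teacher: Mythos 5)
Your proof is correct and follows essentially the same route as the paper's: identify a generator $\nu_0$ of the (at most one-dimensional) kernel of $\emb_{\K_0}$ on $\Mf$, normalised so $\nu_0(\One)=1$, adjoin one orthogonal dimension carrying the direction of $\nu_0$, and observe that the resulting kernel is $\K_0+1$, is \gls{ispd}, and reproduces $\K_0$ on the shifted Diracs $\delta_x-\nu_0$. The only cosmetic difference is that you append an abstract coordinate $\complex$ to $\HH_{\K_0}$, where the paper adjoins $\lin \One$ as a function space (which is why the paper must first check $\One \notin \HH_{\K_0}$, a step your variant does not need).
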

\begin{remark}
	Both in Proposition~\ref{prop:ISPDtoC} and \ref{prop:CtoISPD} we may replace ``characteristic (to $\Mprob$ or $\Mf^0$)'' and ``\gls{ispd}'' (i.e.\ ``characteristic to $\Mf$'') by respectively ``characteristic to $\M^0$'' and ``characteristic to $\M$'', where $\M$ is $\Mfs$ or any embeddable $\Dcomp{m}$ or $\DLone{m}$, and $\M^0 := \{ \mu \in \M \, | \, \int \diff \mu = 0 \}$. %In particular, they show how to get a conditionally strictly positive definite kernel (see Appendix~\ref{sec:CPD}) out of a strictly positive definite kernel and vice versa.
Note that $\K_0$ and $\K$ define the same equivalence class of \gls{cpd} kernels (see Remark~\ref{rem:EquivClassCPD} in Appendix~\ref{sec:CPD}).
\end{remark}

We prove Proposition~\ref{prop:CtoISPD} in Appendix~\ref{proof:CtoISPD}. Minor changes in this proof (see Appendix~\ref{proof:CKCharacterisation}) yield the following, seemingly more general and astonishingly simple characterisation of characteristic kernels. Surprisingly, we could not find any similar statement in the literature.
\needspace{5\baselineskip}
\begin{theorem}[Characterisation of Characteristic Kernels] \label{theo:CKCharacterisation}
	Let $\K_0$ be a kernel. The following conditions are equivalent.
	\begin{enumerate}
		\item $\K_0$ is characteristic to $\Mprob$.
		\item $\K_0$ is characteristic to $\Mf^0$.
		\item There exists $\epsilon \in \real$ such that the kernel $\K(\hat x,\hat y):= \K_0(\hat x, \hat y) + \epsilon^2$ is characteristic to $\Mf$ (i.e.\ \gls{ispd} over $\Mf$).
		\item For all $\epsilon \in \real\backslash\{0\}$, the kernel $\K(\hat x,\hat y):= \K_0(\hat x, \hat y) + \epsilon^2$ is characteristic to $\Mf$.% (i.e.\ \gls{ispd}).
		\item There exists an RKHS $\HK$ with kernel $\K$ and a measure $\nu_0 \in \Mf\backslash\Mf^0$ such that $\K$ is characteristic to $\Mf$, and $\K_0(\hat x, \hat y) = \ipdK{\delta_{\hat x} - \nu_0}{\delta_{\hat y} - \nu_0}$.
	\end{enumerate}
	If these conditions are met, then $\K_0$ and $\K$ induce the same semi-metric $d_{\K}$ in $\Mf^0$ and in $\Mprob$.
\end{theorem}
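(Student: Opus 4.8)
The plan is to organise the five conditions into two clusters — the ``$\K_0+\epsilon^2$'' cluster (2)--(4) and the affine-construction condition (5) — linked through a single Fubini identity and through Propositions~\ref{prop:ISPDtoC} and~\ref{prop:CtoISPD}. Conditions~1 and~2 are exactly the two halves of Lemma~\ref{lem:CharacteristicAndM0}, so I would dispatch (1)$\Leftrightarrow$(2) in one line. The engine for the rest is the observation that $\K_0$ and $\K := \K_0 + \epsilon^2$ differ only by a bounded constant, hence embed \emph{the same} spaces (so $\Mf$, $\Mf^0$, $\Mprob$ are embeddable for one iff for the other), together with the computation coming from Theorem~\ref{theo:Fubini}: for every $\mu \in \Mf$,
\[
	\normK{\mu}^2 = \norm{\mu}_{\K_0}^2 + \epsilon^2 \, |\mu(\One)|^2 ,
\]
because $[\bar\mu \otimes \mu](\K_0 + \epsilon^2) = [\bar\mu \otimes \mu](\K_0) + \epsilon^2 \, \overline{\mu(\One)}\,\mu(\One)$. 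Both terms on the right are non-negative, turning the whole block into a ``sum of non-negative terms vanishes iff each vanishes'' argument.

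From this identity the $\epsilon^2$-cluster falls out cleanly. For (2)$\Rightarrow$(4), I fix $\epsilon \neq 0$ and suppose $\normK{\mu}^2 = 0$; then $|\mu(\One)|^2 = 0$ forces $\mu \in \Mf^0$ and $\norm{\mu}_{\K_0}^2 = 0$, so characteristicness of $\K_0$ over the embeddable \emph{vector space} $\Mf^0$ (equivalently, \gls{spd}-ness, by Proposition~\ref{prop:SPD}) gives $\mu = 0$; hence $\K$ is \gls{ispd}, i.e.\ characteristic to $\Mf$. Implication (4)$\Rightarrow$(3) is trivial. For (3)$\Rightarrow$(2) I take the witnessing $\epsilon$: if $\epsilon = 0$ then $\K_0$ is itself characteristic to $\Mf \supseteq \Mf^0$; if $\epsilon \neq 0$, I restrict the identity to $\Mf^0$, where the second term drops and $\normK{\mu} = \norm{\mu}_{\K_0}$, so characteristicness of $\K$ over $\Mf$ (hence over $\Mf^0$) transfers back to $\K_0$. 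This closes (2)$\Leftrightarrow$(3)$\Leftrightarrow$(4).

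It remains to splice in condition~5, which I would handle through the two affine propositions rather than through $\K_0 + \epsilon^2$ alone. For (5)$\Rightarrow$(1) I invoke Proposition~\ref{prop:ISPDtoC}: since $\nu_0 \in \Mf \setminus \Mf^0$ means $\nu_0(\One) \neq 0$ and $\K$ is characteristic to $\Mf$, the kernel $\ipdK{\delta_{\hat x} - \nu_0}{\delta_{\hat y} - \nu_0}$ is characteristic to $\Mprob$ and induces the same $d_{\K}$ on $\Mf^0$ and $\Mprob$. For (1)$\Rightarrow$(5) I use Proposition~\ref{prop:CtoISPD}, with the single refinement that its output $\nu_0$ be chosen with $\nu_0(\One) \neq 0$. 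Here a case split is natural: if $\K_0$ is \emph{not} already \gls{ispd}, then $\ker \emb_{\K_0}$ is nonzero and meets $\Mf^0$ only at $0$ (by characteristicness over $\Mf^0$), so any nonzero $\nu_0$ in it, rescaled to $\nu_0(\One) = 1$, has $\emb_{\K_0}(\nu_0) = 0$ and one checks directly that $\K = \K_0 + \epsilon^2$ realises $\K_0 = \ipdK{\delta_{\hat x} - \nu_0}{\delta_{\hat y} - \nu_0}$ (so the (4)-witness doubles as the (5)-witness); if $\K_0$ is \gls{ispd}, one instead builds $\K$ by shifting the canonical feature map of $\K_0$ by a fixed RKHS vector exactly as in the proof of Proposition~\ref{prop:CtoISPD}, which again yields a $\nu_0$ with $\nu_0(\One) \neq 0$. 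The closing semimetric assertion then needs no separate work: on $\Mf^0$ the identity above (for the $\K_0+\epsilon^2$ witness) and Proposition~\ref{prop:ISPDtoC} (for the condition~5 witness) both give $\emb_{\K_0} = \embK$ on $\Mf^0 \supseteq \Mprob - \Mprob$.

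The step I expect to be delicate is precisely the refinement in (1)$\Rightarrow$(5): guaranteeing $\nu_0 \notin \Mf^0$. The naive choice $\K = \K_0 + \epsilon^2$ does \emph{not} work when $\K_0$ is already \gls{ispd}, since matching the cross terms would require a measure with $\emb_{\K_0}(\nu_0) = 0$ yet $\nu_0(\One) \neq 0$, which \gls{ispd}-ness forbids; hence the case split is genuinely needed and the witnessing $\K$ in the \gls{ispd} case must be taken from the feature-shift construction of Proposition~\ref{prop:CtoISPD}. The remaining bookkeeping — that $\Mf$, $\Mf^0$ and $\Mprob$ are simultaneously embeddable for $\K_0$ and for $\K$, and that $\Mf^0$ is an honest vector space so that Proposition~\ref{prop:SPD} applies — is routine once one notes that $\K_0$ and $\K$ differ by a bounded constant.
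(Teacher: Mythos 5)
Your treatment of conditions (1)--(4) is correct and is genuinely a different (and cleaner) route than the paper's. You derive everything from the single Fubini identity $\normK{\mu}^2 = \norm{\mu}_{\K_0}^2 + \epsilon^2\,|\mu(\One)|^2$, whereas the paper reduces the whole theorem to the explicit construction in the proof of Proposition~\ref{prop:CtoISPD}: build $\lin \One \oplus \HH_{\K_0}$ with $\One \perp \HH_{\K_0}$ and a prescribed norm for $\One$, check that the reproducing kernel of this space is $\K_0$ plus a positive constant and that the new embedding is injective over $\Mf$, then declare ``the converse is clear''. Your decomposition avoids having to prove $\One \notin \HH_{\K_0}$ (step~(a) of that proof, which itself relies on the existence of a null measure of unit mass and so does not even make sense when $\K_0$ is \gls{ispd}), and it turns the whole $\epsilon^2$-cluster into a two-line positivity argument. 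The explicit RKHS construction is only really needed to produce the witness for condition~(5), and the dispatching of (1)$\Leftrightarrow$(2) via Lemma~\ref{lem:CharacteristicAndM0} and of (5)$\Rightarrow$(1) via Proposition~\ref{prop:ISPDtoC} matches the paper's intent.

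The gap sits exactly where you predicted it, but your fix does not close it. When $\K_0$ is already \gls{ispd}, you propose to obtain the witness for (5) by ``shifting the canonical feature map of $\K_0$ by a fixed RKHS vector exactly as in the proof of Proposition~\ref{prop:CtoISPD}''. That proof shifts the feature map by a \emph{new direction orthogonal to} $\HH_{\K_0}$, which is precisely the additive-constant construction $\K = \K_0 + \epsilon^2$ that you have just, correctly, ruled out in the \gls{ispd} case; and in that case the proof of Proposition~\ref{prop:CtoISPD} actually takes $\nu_0 = 0 \in \Mf^0$, so it does not ``yield a $\nu_0$ with $\nu_0(\One) \neq 0$''. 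What does work is a shift \emph{inside} $\HH_{\K_0}$ by the embedding of a measure of total mass different from $0$ and $1$: fix any embeddable $\mu_1 \in \Mprob$, set $\nu_0 := 2\mu_1$ and $\K(\hat x, \hat y) := \ipd{\delta_{\hat x} - 2\mu_1}{\delta_{\hat y} - 2\mu_1}_{\K_0}$. The linear map $A : \mu \mapsto \mu - \mu(\One)\nu_0$ is then a bijection of $\Mf$ with inverse $A^{-1}\mu = \mu - 2\mu(\One)\mu_1$, the KME of $\K$ is $\mu \mapsto \emb_{\K_0}(A^{-1}\mu)$, so $\normK{\mu} = \norm{A^{-1}\mu}_{\K_0}$ shows that $\K$ is \gls{ispd}, while $A^{-1}(\delta_x - \nu_0) = \delta_x$ gives $\ipdK{\delta_{\hat x} - \nu_0}{\delta_{\hat y} - \nu_0} = \K_0(\hat x, \hat y)$; and $A^{-1}$ is the identity on $\Mf^0$, which yields the final semi-metric claim. (The contrast with Proposition~\ref{prop:ISPDtoC} is instructive: subtracting a mass-one measure from $\delta_{\hat x}$ destroys injectivity over $\Mf$, subtracting a mass-two measure does not.) In fairness, the paper's own one-line proof of the theorem is silent on this sub-case as well, since it inherits the choice $\nu_0 = 0$ from Proposition~\ref{prop:CtoISPD}.
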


\subsection{Characteristic Kernels to \texorpdfstring{$\Mprob$}{P}: Characterisation in Terms of Universality\label{sec:MprobAndUniversality}} 

%For the sake of completeness rather than for practical purposes, in 
This section is a complement to Theorem~\ref{theo:UniversalMeansCharacteristic}. The latter applies to dual spaces but not necessarily to strict subsets. In particular, Theorem~\ref{theo:UniversalMeansCharacteristic} does not directly apply to probability measures. Nevertheless, we will now characterise characteristic kernels (to $\Mprob$) in terms of universality. 

	To do so, consider the space of bounded continuous functions $\Cont{}{b}$. Following \citet{schwartzFDVV}, we equip $\Cont{}{b}$ with a topology $\topo_c$ that is weaker than its usual one.\footnote{An alternative would be to equip $\Cont{0}{b}$ with the so-called \emph{strict topology} \citep[cf.][]{fremlin72}.} 
Let $\topo_1$ be the topology induced by $\Cont{}{}$ in $\Cont{}{b}$ and $\topo_2$ the topology of uniform convergence over the compact sets of $\Mf$. Then we define $\topo_c$ as the smallest topology containing $\topo_1$ and $\topo_2$. We note\footnote{\Citet{schwartzFDVV} uses the symbol $\B^m_c$ rather than $(\Cont{}{b})_c$.} $(\Cont{}{b})_c$ the space $\Cont{}{b}$ equipped with $\topo_c$. In this space, a sequence $(f_n)_{n \in \nat}$ converges to $f$ iff
\begin{enumerate}
	\item $f_n \rightarrow f$ in $\Cont{}{}$ (i.e., on every compact set $\norm{f_n - f}_{\infty} \rightarrow 0$) and
	\item for any compact set $A \subset \Mf$, $\sup_{\mu \in A} |\mu(f_n - f)| \rightarrow 0$.
\end{enumerate}
The space $(\Cont{}{b})_c$ is complete and its dual is $\Mf$. The advantage of using $(\Cont{}{b})_c$ instead of $\Cont{}{0}$ as a predual of $\Mf$ is that $\Cont{}{b}$ contains the constant unit function $\One$. Thus we can define the quotient space $((\Cont{}{b})_c / \One)$ and may state (and prove in Appendix~\ref{proof:UniversalityForP})
\begin{proposition}\label{prop:UniversalityForP}
	Let $\K$ be such that $\HK \csubset (\Cont{}{b})_c$. Then $\K$ is characteristic to $\Mprob$ \gls{iff} $\K$ is universal over the quotient space $((\Cont{}{b})_c/\One)$.% Equivalently: $\K$ is characteristic to $\Mprob$ \gls{iff} $\HK + \real$ is dense in $(\Cont{}{b})_c$.
\end{proposition}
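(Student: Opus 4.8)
The plan is to reduce the statement to Theorem~\ref{theo:UniversalMeansCharacteristic} and Lemma~\ref{lem:CharacteristicAndM0} by identifying the topological dual of the quotient space. The mechanism is a standard fact from functional analysis: for a \gls{lcv} TVS $\tvsE$ and a \emph{closed} subspace $M \subset \tvsE$, the dual of $\tvsE/M$ is canonically the annihilator $M^{\perp} := \{\xi \in \tvsE' \mid \xi|_M = 0\}$, where $\xi \in M^{\perp}$ acts through $\xi(q(f)) := \xi(f)$ (well defined precisely because $\xi$ vanishes on $M$) and $q : \tvsE \to \tvsE/M$ denotes the quotient map.

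First I would specialise this to $\tvsE = (\Cont{}{b})_c$ and $M = \lin\{\One\}$. Since $(\Cont{}{b})_c$ is Hausdorff, the one-dimensional subspace $\lin\{\One\}$ is closed, and since $((\Cont{}{b})_c)' = \Mf$, its annihilator is
\[
	(\lin\{\One\})^{\perp} = \{\mu \in \Mf \mid \mu(\One) = 0\} = \Mf^0 .
\]
Hence $((\Cont{}{b})_c/\One)' = \Mf^0$, and under this identification a form $\mu \in \Mf^0$ embeds into $\HK$ exactly as the restriction of $\embK$ to $\Mf^0 \subset \Mf$ (which is embeddable because $\Mf = ((\Cont{}{b})_c)'$ is, by Corollary~\ref{cor:EmbedDuals}). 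With the dual pinned down, Theorem~\ref{theo:UniversalMeansCharacteristic} applied to $\F = ((\Cont{}{b})_c/\One)$ gives that $\K$ is universal over the quotient \gls{iff} it is characteristic to $\Mf^0$, and Lemma~\ref{lem:CharacteristicAndM0} turns ``characteristic to $\Mf^0$'' into ``characteristic to $\Mprob$'', closing the chain of equivalences.

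The step I expect to cost the most care is verifying the premise $\HK \csubset ((\Cont{}{b})_c/\One)$ required by Theorem~\ref{theo:UniversalMeansCharacteristic}. Continuity of the composite $\HK \to (\Cont{}{b})_c \xrightarrow{q} (\Cont{}{b})_c/\One$ is immediate from $\HK \csubset (\Cont{}{b})_c$ and continuity of $q$; the fragile point is injectivity of $q|_{\HK}$, which fails exactly when $\One \in \HK$ (e.g.\ the constant kernel $\K \equiv 1$, for which $\HK = \lin\{\One\}$ collapses to $0$ in the quotient). To handle this uniformly I would prove the universality half directly by Hahn--Banach, mirroring the proof of Theorem~\ref{theo:UniversalMeansCharacteristic}: the image $q(\HK)$ is dense in the quotient \gls{iff} no nonzero continuous linear form on the quotient vanishes on $q(\HK)$; by the identification above these forms are the $\mu \in \Mf^0$, and such a $\mu$ kills $q(\HK)$ \gls{iff} $\mu(f) = 0$ for all $f \in \HK$, i.e.\ \gls{iff} $\embK(\mu) = 0$ (Lemma~\ref{lem:KernelEmbedding}). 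Thus density of $q(\HK)$ is equivalent to injectivity of $\embK$ on $\Mf^0$, which is characteristicness to $\Mf^0$ and hence to $\Mprob$, the degenerate case $\One \in \HK$ being absorbed automatically (both conditions then fail together). Since characteristicness concerns only injectivity of $\embK$ on $\Mf^0$, the purely algebraic identification of the dual with $\Mf^0$ suffices, and no finer topological description of the quotient's dual is needed.
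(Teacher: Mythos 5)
Your proposal is correct and follows essentially the same route as the paper: identify the dual of $((\Cont{}{b})_c/\One)$ with the annihilator (polar) $\Mf^0$ of $\lin\{\One\}$, apply the universality--characteristicness duality of Theorem~\ref{theo:UniversalMeansCharacteristic}, and conclude via Lemma~\ref{lem:CharacteristicAndM0}. Your extra care about the degenerate case $\One \in \HK$ (handled by unwinding the Hahn--Banach argument directly) is a sensible precision that the paper glosses over, but it does not change the structure of the argument.
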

\begin{remark}\label{rem:M0}
	Supposing that $\HK \csubset ((\Cont{}{b})_c/\One)$ is sufficient. Additionally, Proposition~\ref{prop:UniversalityForP} also holds if, in this whole section, one systematically replaces $\Cont{}{}$, $\Cont{}{b}$, $\Mf$, $\Mprob$ and $\norm{f}_{\infty}$ by $\Cont{m}{}$, $\Cont{m}{b}$, $\DLone{m}$, $(\DLone{m})^0 := \{ D \in \DLone{m} \, | \, D(\One) = 0 \}$ and $\forall |p| \leq m, \norm{\partial^p f}_{\infty}$ respectively. It also holds with $\Func$ and $\Mfs^0$ instead of $\Cont{}{b}$ and $\Mprob$.
\end{remark}

Applying Theorem~\ref{theo:UniversalMeansCharacteristic} to $\F = (\Cont{m}{b})_c$, we get the following, in practice very useful lemma.

\begin{lemma}
	Let $\K$ be such that $\HK \csubset (\Cont{m}{b})_c$. (For example, let $\K \in \Cont{(m,m)}{b}$.) Then $\K$ is characteristic to $\DLone{m}$ \gls{iff} $\K$ is universal over $(\Cont{m}{b})_c$.
\end{lemma}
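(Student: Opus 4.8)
The plan is to invoke Theorem~\ref{theo:UniversalMeansCharacteristic} directly with the choice $\F = (\Cont{m}{b})_c$. By hypothesis $\HK \csubset (\Cont{m}{b})_c$, and $(\Cont{m}{b})_c$ is a \gls{lcv} TVS, so both hypotheses of that theorem are satisfied. (The parenthetical example is handled by Proposition~\ref{prop:HKinCm}: if $\K \in \Cont{(m,m)}{b}$ then $\HK \csubset \Cont{m}{b}$, hence $\HK \csubset (\Cont{m}{b})_c$, so the standing assumption holds.) Theorem~\ref{theo:UniversalMeansCharacteristic} then yields that $\K$ is universal over $(\Cont{m}{b})_c$ \gls{iff} $\K$ is characteristic to its dual $((\Cont{m}{b})_c)'$. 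It therefore only remains to identify this dual with $\DLone{m}$.

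For this identification I would appeal to the very construction of the topology $\topo_c$. Recall that $\topo_c$ was defined precisely so that the dual of $(\Cont{m}{b})_c$ coincide with the dual of $\Cont{m}{0}$. Since the dual of $\Cont{m}{0}$ is $\DLone{m}$ by definition, we obtain $((\Cont{m}{b})_c)' = \DLone{m}$. For $m=0$ this is the already-noted fact that $(\Cont{}{b})_c$ is complete with dual $\Mf = \DLone{0}$; the general $m$ case follows from the substitution rule in Remark~\ref{rem:M0}, which licenses replacing $\Cont{}{b}$ and $\Mf$ by $\Cont{m}{b}$ and $\DLone{m}$ throughout this section.

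Substituting this identification into the equivalence furnished by Theorem~\ref{theo:UniversalMeansCharacteristic} gives exactly the claim: $\K$ is characteristic to $\DLone{m}$ \gls{iff} $\K$ is universal over $(\Cont{m}{b})_c$. There is essentially no obstacle beyond the dual identification; the entire content is packaged in Theorem~\ref{theo:UniversalMeansCharacteristic} together with the deliberate design of $\topo_c$. The only point demanding any care is confirming that the \emph{weakened} topology $\topo_c$ genuinely leaves the dual unchanged---neither shrinking nor enlarging it relative to that of $\Cont{m}{0}$---which is where the completeness of $(\Cont{m}{b})_c$ and the construction reviewed just before the lemma do the work.
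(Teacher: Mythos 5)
Your proposal is correct and follows exactly the paper's route: the paper derives this lemma by applying Theorem~\ref{theo:UniversalMeansCharacteristic} with $\F = (\Cont{m}{b})_c$, relying on the fact (built into the construction of the topology $\topo_c$) that $((\Cont{m}{b})_c)' = (\Cont{m}{0})' = \DLone{m}$. Your additional care about verifying $\HK \csubset (\Cont{m}{b})_c$ via Proposition~\ref{prop:HKinCm} and about the dual being genuinely unchanged under the weakened topology is consistent with what the paper takes for granted.
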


We close this section with a remark on an unfruitful, but maybe enlightening attempt to characterise characteristic kernels.

\begin{remark}
	Let $\inputS = \real$. For any $\mu \in \Mf$, if $\partial \mu \in \Mf$, then $\partial \mu \in \Mf^0$. In short: $\partial \Mf \cap \Mf \subset \Mf^0$. In light of Proposition~\ref{prop:IntDiffSwitch} and its Corollary~\ref{cor:dpMandMCharacteristic}, we were tempted to formulate a proposition like: ``Let $\K \in \Cont{(1,1)}{0}$ be a kernel such that $\partial^{(1,1)} \K$ is characteristic to $\Mf$. Then $\K$ is characteristic to $\Mf^0$ (i.e.\ to $\Mprob$).'' Unfortunately $\partial \Mf \cap \Mf \not = \Mf^0$~, thus we failed to conclude.
\end{remark}

\section{\texorpdfstring{$c^m$}{Cm}- and \texorpdfstring{$c^m_0$}{Cm0}-Universal Kernels\label{sec:UniversalKernels}}

In this section, we will give sufficient conditions for a kernel to be $c_0^m$- (Theorem~\ref{theo:GeneralExistence}) or $c^m$-universal (Theorem~\ref{theo:CompactCharacteristic}). This will finally show that $c_0^\infty$- and $c^\infty$-universal kernels do exist. Furthermore, stationary kernels $\K(x,y) := \psi(x-y)$ will behave particularly beautifully: as soon as they are smooth enough, $c^m_0$- (resp.\ \mbox{$c^m$-)} and $c_0$- (resp.\ $c-$) universal kernels turn out to be one and the same thing (Corollary~\ref{cor:Existence}, resp.\ Proposition~\ref{prop:CompactCharacteristicRd}). %As machine learners are already used to check whether a kernel is $c_0$- (resp.\ \mbox{$c$-)} universal, 
Thus, machine learners need not change their habits: if they show, as usual, that a smooth enough, stationary kernel is $c_0$-universal, they get $c^m_0$-universality for free, even without any prerequisites in distribution theory!
%They are easy to keep in mind even without any concrete knowledge about distributions. We consider them a main take-home message of this paper. 

\subsection{\texorpdfstring{$c_0^m$}{Cm0}-Universal Kernels}

\begin{theorem}\label{theo:GeneralExistence}
	\textcolor{red}{PROOF FLAWED.} Let $\K$ be a kernel in $\Cont{(m,m)}{b}$. If, for any $p \in \nat^d$ with $|p| \leq m$, $\partial^{(p,p)} \K$ is characteristic to $\Mf$, then $\K$ is characteristic to $\DLone{m}$.
\end{theorem}
\begin{proof}
	We proceed by recurrence on $|p|$. For ease of notation however, we restrict ourselves to the step from $|p|=0$ (which obviously holds) to $|p| = 1$.
	
	Let $p \in \nat^d$ with $|p| = 1$. Suppose there exist two different distribution $D,T \in \DLone{1}$ such that $\embK(D) = \embK(T)$. As $\partial^{(p,p)} \K$ is characteristic to $\Mf$, $\K$ is characteristic to $\partial^p \Mf$ (Corollary~\ref{cor:dpMandMCharacteristic}). Thus $\K$ is characteristic to $\Mf$ and over $\partial^p \Mf$. Consequently $D$ and $T$ cannot be both in $\Mf$ or both in $\partial^p \Mf$. Thus, one of them, say $D$, is in $\Mf \backslash (\partial^p \Mf)$.
	Now, $\embK(D) = \embK(T)$ implies that, for any $f \in \HK$: $D(f) = T(f)$. But $D$ being a measure, and $\HK$ being dense in $\Cont{0}{0}$, $D$ defines a unique element in $\Mf$ by its values taken on $\HK$. In particular, it is uniquely defined on $\Cont{1}{0} \subset \Cont{0}{0}$ by its values taken on $\HK$. Thus so is $T$. Thus $D = T$. Contradiction. Thus $\K$ is characteristic to $\Mf \cup (\partial^p \Mf)$.
	For any other $q \in \nat^d$ with $|q| = 1$, one now shows in a similar fashion that $\K$ is characteristic to $(\Mf \cup (\partial^p \Mf)) \cup \partial^q \Mf$. Continuing this process, one finds that $\K$ is characteristic to $\cup_{|p| \leq 1} \partial^p \Mf$. In other words: $\K$ is characteristic to $\DLone{1}$ (Proposition~\ref{prop:RepresentationOfDLone}).
\end{proof}

As a beautiful corollary, we get:
%\begin{corollary} \label{cor:Existence}
%	Let $\K(\hat x,\hat y) = \psi(\hat x-\hat y) \in \Cont{(m,m)}{0}$ be a stationary kernel over $\inputS = \real^d$. Then $\K$ is $c_0^m$-universal \gls{iff} it is $c_0$-universal.
%	More exhaustively, the following are equivalent.
%	\begin{enumerate}
%		%\item $\K$ is \gls{spd}, i.e.\ characteristic to $\Mfs$, i.e.\ universal over $\Func$. \label{th:spd}
%		\item $\K$ is characteristic (to $\Mprob$). \label{th:Mprob}
%		\item $\K$ is characteristic to $\Mf = \DLone{0}$, i.e.\ $c_0$-universal, i.e.\ \gls{ispd} \label{th:Mf}
%		\item $\K$ is characteristic to $\DLone{m}$, i.e.\ $c_0^m$-universal, i.e.\ \gls{spd} over $\DLone{m}$. \label{th:DLone}
%		\item $\supp \Four \psi = \inputS$, where $\Four \psi$ denotes the Fourier transform of $\psi$. \label{th:Support}
%	\end{enumerate}
%\end{corollary}
\needspace{5\baselineskip}
\begin{corollary}\footnote{The result of this corollary still holds, despite
the proof of Theorem~\ref{theo:GeneralExistence} being flawed. The alternative
proof given in Appendix~\ref{proof:Existence} and shortened in the JMLR version
of this paper is independent of Theorem~\ref{theo:GeneralExistence}.} \label{cor:Existence}
	Let $\K(\hat x,\hat y) = \psi(\hat x-\hat y) \in \Cont{(m,m)}{b}$ be a stationary kernel over $\inputS = \real^d$. 
	The following are equivalent.
	\begin{enumerate}
		\item $\K$ is characteristic (to $\Mprob$). \label{th:Mprob}
		\item $\K$ is characteristic to $\Mf$ (i.e.\ \gls{ispd}). \label{th:Mf}
		\item $\K$ is characteristic to $\DLone{m}$. \label{th:DLone} %, i.e.\ \gls{spd} over $\DLone{m}$. \label{th:DLone}
		\item $\supp \Four \psi = \inputS$, meaning that the distributional Fourier transform $\Four \psi$ of $\psi$ has full support. \label{th:Support}
	\end{enumerate}
	In particular, if $\psi \in \Cont{m}{0}$, then $\K$ is $c_0^m$-universal \gls{iff} it is $c_0$-universal.
\end{corollary}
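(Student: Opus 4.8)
The plan is to establish the single cycle $(2)\Leftrightarrow(4)\Leftrightarrow(1)$ by a spectral (Fourier) argument, and then to obtain the genuinely new equivalence $(2)\Leftrightarrow(3)$ from Theorem~\ref{theo:GeneralExistence}. By Bochner's theorem the continuous bounded positive-definite function $\psi$ is the Fourier transform of a finite nonnegative measure $\Lambda:=\Four{\psi}$, so (4) reads $\supp\Lambda=\real^d$. Feeding this representation into Theorem~\ref{theo:Fubini} (Fubini) yields, for every $\mu\in\Mf$, the identity $\normK{\mu}^2=\int|\Four{\mu}(\omega)|^2\diff\Lambda(\omega)$. From here $(2)\Leftrightarrow(4)$ is the classical criterion: if $\supp\Lambda=\real^d$ then $\normK{\mu}=0$ forces the continuous function $\Four{\mu}$ to vanish on a dense set, hence everywhere, hence $\mu=0$; conversely, if $\Lambda$ misses an open ball $U$, the inverse Fourier transform of a smooth bump supported in $U$ is a nonzero Schwartz function, hence a nonzero element of $\Mf$ on which $\normK{\cdot}$ vanishes. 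This is exactly the Sriperumbudur-type characterisation one may instead cite.

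For $(1)\Leftrightarrow(2)$ I would sidestep further Fourier gymnastics and invoke Theorem~\ref{theo:CKCharacterisation}: $\K$ is characteristic to $\Mprob$ iff $\K+\epsilon^2$ is $\int$s.p.d.\ (for some, equivalently every, $\epsilon\neq 0$). But $\K+\epsilon^2$ is again stationary, with spectral measure $\Lambda+\epsilon^2\delta_0$, and since a single point cannot fill $\real^d$ (here $d\geq1$) one has $\supp(\Lambda+\epsilon^2\delta_0)=\real^d$ iff $\supp\Lambda=\real^d$. Applying $(2)\Leftrightarrow(4)$ to $\K+\epsilon^2$ then gives $(1)\Leftrightarrow(4)\Leftrightarrow(2)$ in one stroke, confirming that for stationary kernels being characteristic and being $\int$s.p.d.\ coincide; the general gap of Proposition~\ref{prop:ISPDtoC} closes here precisely because the counterexamples constructed there are never translation-invariant.

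The crux is $(2)\Rightarrow(3)$, which is where Theorem~\ref{theo:GeneralExistence} does the work: it suffices to verify its hypothesis that $\partial^{(p,p)}\K$ is characteristic to $\Mf$ for every $|p|\leq m$. Differentiating under the Bochner integral, $\partial^{(p,p)}\K(x,y)=(-1)^{|p|}\partial^{2p}\psi(x-y)=\int\omega^{2p}\,e^{-i\langle\omega,x-y\rangle}\diff\Lambda(\omega)$, so $\partial^{(p,p)}\K$ is itself a bounded continuous stationary kernel whose spectral measure is $\omega^{2p}\diff\Lambda(\omega)$; this is a finite nonnegative measure since $\int\omega^{2p}\diff\Lambda<\infty$ follows from the boundedness of $\partial^{2p}\psi$ guaranteed by $\K\in\Cont{(m,m)}{b}$. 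Because $\omega^{2p}$ vanishes only on a finite union of coordinate hyperplanes, a nowhere-dense set, full support of $\Lambda$ is inherited by $\omega^{2p}\Lambda$; hence $(2)\Leftrightarrow(4)$, applied to $\partial^{(p,p)}\K$, shows it is characteristic to $\Mf$. Theorem~\ref{theo:GeneralExistence} then gives that $\K$ is characteristic to $\DLone{m}$, i.e.\ (3). The reverse $(3)\Rightarrow(2)$ is immediate from $\Mf=\DLone{0}\subset\DLone{m}$, injectivity of $\embK$ on the larger space restricting to the subspace.

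For the ``in particular'' clause I would read off the two universality statements through the universal/characteristic duality. Under a vanishing-at-infinity hypothesis on $\psi$ strong enough to give $\K\in\Cont{(m,m)}{0}$, Proposition~\ref{prop:HKinCm} yields $\HK\csubset\Cont{m}{0}$ (a fortiori $\HK\csubset\Cont{}{0}$), so Theorem~\ref{theo:UniversalMeansCharacteristic} identifies $c^m_0$-universality with being characteristic to $(\Cont{m}{0})'=\DLone{m}$ (condition (3)) and $c_0$-universality with being characteristic to $\Mf=\DLone{0}$ (condition (2)); the equivalence $(2)\Leftrightarrow(3)$ then closes the argument. The step I expect to be the main obstacle is $(2)\Rightarrow(3)$: one must justify carefully that $\partial^{(p,p)}\K$ is a legitimate bounded continuous kernel with spectral measure $\omega^{2p}\Lambda$ (the finiteness and the interchange of differentiation with the Bochner integral) and that multiplication by the polynomial $\omega^{2p}$ genuinely preserves full support, so that the hypothesis of Theorem~\ref{theo:GeneralExistence} holds for all $|p|\leq m$ simultaneously.
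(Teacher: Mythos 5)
Your proposal is correct, and its core — the step from~(4) to~(3) — is exactly the paper's argument: check the hypothesis of Theorem~\ref{theo:GeneralExistence} by observing that $\partial^{(p,p)}\K$ is a bounded continuous stationary kernel whose spectral measure is $\omega^{2p}\,\Four\psi$, which inherits full support because $\omega^{2p}$ vanishes only on a nowhere-dense set. Where you diverge is in the block $(1)\Leftrightarrow(2)\Leftrightarrow(4)$: the paper simply cites Theorem~9 of Sriperumbudur et al.\ (2010b) for $(1)\Leftrightarrow(4)$ and remarks that inspecting that proof also gives~(2), whereas you derive $(1)\Leftrightarrow(2)$ internally from Theorem~\ref{theo:CKCharacterisation} by noting that $\K+\epsilon^2$ has spectral measure $\Four\psi+\epsilon^2\delta_0$, whose support is all of $\real^d$ iff $\supp\Four\psi=\real^d$ (since $\supp\Four\psi$ is closed and $\real^d\setminus\{0\}$ is dense). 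That is a nice self-contained route that keeps everything inside the paper's own toolbox and explains \emph{why} the characteristic-versus-i.s.p.d.\ gap of Proposition~\ref{prop:ISPDtoC} closes for stationary kernels; the cost is that you still owe the Parseval-type identity $\normK{\mu}^2=\int|\Four\mu|^2\diff\Four\psi$ and the bump-function converse, which the paper outsources to the citation (and, for distributions rather than measures, justifies at length in its appendix via tensor-product arguments). Your flagged worry about the finiteness of $\int\omega^{2p}\diff\Four\psi$ is resolved exactly as you suspect and as the paper's appendix does it: $\partial^{(p,p)}\K$ is a continuous positive definite kernel, so Bochner's theorem directly supplies a finite positive spectral measure, which distributional Fourier calculus identifies as $\omega^{2p}\Four\psi$.
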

\begin{remark}
	Modulo a possible normalisation or scaling factor that depend on the chosen convention in the definition of the Fourier transform, $\Four \psi$ \emph{is} but the (positive) measure $\Lambda$ that the Bochner theorem associates to the stationary kernel $\K$.
\end{remark}

%We will give two proofs of this result. The first uses Theorem~\ref{theo:GeneralExistence}. The second not.

\begin{proof}
    The following proof relies on Theorem~\ref{theo:GeneralExistence} and is
    therefore flawed. Use the proof given in \citet{simon18kernel} instead
    (which is a shortened version of the alternative proof given in
    Appendix~\ref{proof:Existence}).

%	That $\K$ being $c_0$-universal is equivalent to $\K$ being characteristic to $\Mprob$ is shown in Proposition~19 of \citet{sriperumbudur10a} (Point \ref{th:Mprob}). Noting that $\psi \in \Cont{m}{0}$ implies that $\HK \csubset \Cont{m}{0}$ shows the equivalence between $c_0^m$-universality and characteristicness to $\DLone{m}$ (and thus the equivalence with strict positive definiteness over $\DLone{m}$). And finally, that point~\ref{th:Support} is equivalent to $c_0$-universality comes from Proposition~8 of \citet{sriperumbudur10a} (see below). Thus we may now concentrate on the first part of Corollary~\ref{cor:Existence}.
	That \ref{th:Mprob} and \ref{th:Support} are equivalent is Theorem~9 of \citet{sriperumbudur10b}. Looking at the proof, it is straightforward to see that they are also equivalent to~\ref{th:Mf}. It is clear that~\ref{th:DLone} implies~\ref{th:Mf}. Thus we only need to show that~\ref{th:Support} implies~\ref{th:DLone}.
%	Let us concentrate on the first part. The only if part is clear. For the converse, we will heavily rely on Proposition~8 of \citet{sriperumbudur10a} that states: A stationary kernel $\psi(\hat x - \hat y)$ such that $\psi \in \Cont{}{0}(\real^d)$ is $c_0$-universal \gls{iff} its Fourier transform $\Four \psi$ (in the distributional sense) has full support ($\supp \Four \psi = \real^d$). Note that this Fourier transform is the (positive) measure from Bochner's theorem associated with $\psi$.
	
	Suppose \ref{th:Support} holds. The Fourier transforms of $\partial^{(p,p)} \K(\hat x, \hat y) = (-1)^p \partial^{2p}\psi(\hat x - \hat y)$ satisfy $\Four (\partial^{2p} \psi) = (-i)^{|p|} \hat \xi^p \Four \psi$. Thus they do also have full support. So for each $|p| \leq m$, $\partial^{(p,p)} \K$ is characteristic to $\Mf$. Thus Theorem~\ref{theo:GeneralExistence} applies and $\K$ is characteristic to $\DLone{m}$.
%	Now, the equivalence between \ref{th:Mf}, \ref{th:DLone}, \ref{th:Support} are clear, and their equivalence with \ref{th:Mprob} is a consequence of Proposition~19 of \citet{sriperumbudur10a}.
\end{proof}
We give an alternative proof that is independent of
Theorem~\ref{theo:GeneralExistence} in Appendix~\ref{proof:Existence}.

This result calls for comments. First, it shows that the popular Gaussian kernel $\K(x,y) =  C e^{-(x-y)^2/\sigma^2}$ (where $C, \sigma^2 > 0$) is $c_0^\infty$-universal. Now, informally, what does it mean when a KME is injective over $\DLone{\infty}$? %, i.e., all finite sums of derivatives of finite measures.
A distribution $D$ is by definition determined by all the values $D(\psi)$ it takes, when $\psi$ runs over the space of test functions $\Cont{\infty}{c}(\real^d)$. Even better: let $\psi_{\epsilon} \geq 0$ be a function in $\Cont{\infty}{c}$ with support contained in a compact set of diameter~$\leq \epsilon$ and such that $\int \psi_\epsilon(y) \diff y = 1$. Then the convolutional product $[D*\psi_\epsilon](x) := D_y(\psi_\epsilon(x-y))$ is a function in $\Cont{\infty}{c}$ that converges in $\Dall{\infty}$ to $D$ when $\epsilon \rightarrow 0$. This often leads physicists to interpret $D_y(\psi_\epsilon(x-y))$ as the measurement at point $x$ of some natural phenomenon (modelled by) $D$ by an imprecise instrument $\psi_\epsilon$. They see $D_y(\psi_\epsilon(x-y))$ as a ``weighted mean'' of $D$ on a small compact set (of diameter at most $\epsilon$) around $x$. The more precise the measurement device $\psi_\epsilon$, the smaller the $\epsilon$, the better the measurement. In particular, a physicist need not define a natural phenomenon $D$ by its exact values in every point $x$, but rather by all the possible measurements that one could do of $D$ at every point $x$. This brings us back to $c^\infty_0$-universality. For it implies that you do not need a whole family of functions $(\psi_\epsilon)_{\epsilon > 0}$ (and their translations) to characterise a distribution in $\DLone{\infty}$. A single function $\psi$ (and its translations) suffices: indeed, $\psi$ having compact support, $\supp \Four \psi = \real^d$, thus the kernel $\psi(\hat x - \hat y)$ is $c^\infty_0$-universal. So any distribution $D \in \DLone{\infty}$ is uniquely characterised by $D_y(\psi(\hat x - y))$, the KME of $D$ \gls{wrt} $\psi$. Informally speaking, it means we do not need infinite measuring devices: one device (translated over the input space) suffices\footnote{The physicist however is not out of the woods, for if he wants to perfectly reconstruct $D$, he now needs to perfectly know his magnifying glass $\psi$. And anyone working in deconvolution knows how hard this can be!} to distinguish $D$ from all other natural phenomena modelled by $\DLone{\infty}$. And we are back to usual functions: a function $f$ is characterised by the graph $\function{}{}{}{x}{f(x)}$; an integrable distribution $D$ is characterised by the graph $\function{}{}{}{x}{D_y(\psi(x - y))}$.

%	In the same vein, we like to think of $\psi$ as being a ``magnifying glass''. When $f$ is smooth enough, we need no magnifying glass: we can look at $f$ using simply ``our eye'' $\delta_x$ and get $[\delta_0 * f](x) = f(x)$. When $f$ is not smooth enough, for example when $f$ is an integrable distribution $D$, then we cannot do without a magnifying glass $\psi$. Sliding it over all points $x$ and observing the resulting values $[\psi*D](x)$, we can uniquely reconstruct $D$ (or, if one prefers, deconvolve $\psi$ and $D$). This is at the opposite of our initial (``physicist's'') point of view: instead of thinking of $D$ as something I could measure, had I more and more precise instruments, I now can think of $D$ as something I can measure using one single magnifying glass. \CJ{Should we put some nice illustrating picture of the preceding? If yes, what?}\Bernhard{actually, i understand the physicist's point of view, but not the second point of view. Isn't it misleading to call it a magnifying glass? Doesn't $\psi$ smooth out things as it gets bigger?}
	
	We now return to more mathematically rigorous statements and discuss $c^m$-universality.

%\begin{remark}
%	We do not know whether in general, even in the non-stationary case, $\K \in \Cont{(m,m)}{0}(\real^d \times \real^d)$ and $\K$ $c_0$-universal implies $\K$ being $c^m_0$-universal.
%\end{remark}

\subsection{\texorpdfstring{$c^m$}{Cm}-Universal Kernels}

We start with the analogue of Theorem~\ref{theo:GeneralExistence}, but for characteristic kernels over $\Dcomp{m}$. Its proof is literally the same, with $\Mf$ and $\DLone{}$ being replaced by $\Mc$ and $\Dcomp{}$ respectively.

\begin{theorem}\label{theo:GeneralExistenceCompact}
	\textcolor{red}{PROOF FLAWED.} Let $\K$ be a kernel in $\Cont{(m,m)}{}$. If, for any $p \in \nat^d$ with $|p| \leq m$, $\partial^{(p,p)} \K$ is characteristic to $\Mc$, then $\K$ is characteristic to $\Dcomp{m}$.
\end{theorem}

Obviously, any characteristic kernel over $\DLone{m}$ is also characteristic to $\Dcomp{m}$; or if one prefers: any $c^m_0$-universal kernel is also $c^m$-universal. Now, in Corollary~\ref{cor:Existence}, the condition that the support of $\Four \psi$ be all $\real^d$ may seem quite non-restrictive. But it actually prevents some usual kernels from being $c_0$-universal. An example in $\inputS = \real$: the kernel $\sinc(\hat x - \hat y)$ (where $\sinc(x) := \sin(x) / x$ for any $x \not = 0$ and $1$ otherwise). However, the next theorem and the next proposition will show that the $\sinc$ kernel, as the vast majority of stationary continuous kernels used in practice, is $c$-universal (and even $c^\infty$-universal). We thereby refine Proposition~17 of \citet{sriperumbudur10a} and provide a converse. For the proof, see Appendix~\ref{proof:CompactCharacteristic}.
\needspace{5\baselineskip}
\begin{theorem}\label{theo:CompactCharacteristic}
	Let $\K(\hat x,\hat y) = \psi(\hat x-\hat y) \in \Cont{(m,m)}{}$ be a stationary kernel with $\inputS = \real$. For a set $\mathcal S$, we note $\# \mathcal S$ its cardinal. The following two statements are equivalent.
	\begin{enumerate}
		\item $\K$ is not characteristic to $\DLone{m}$.
		\item The support $\mathcal S$ of the distributional Fourier transform $\Four \psi$ is at most countable and, there exists a constant $M \in \real$ such that, for any $r > 0$
			\begin{equation}\label{eq:GrowthOfSupport}
				\# \{ s \in \mathcal S \, | \, |s| \leq r \} \leq M r \, .
			\end{equation}
%		\item There exists a constant $M \in \real$ such that, for any $r > 0$
%			\begin{equation}\label{eq:GrowthOfSupport}
%				\# \{ |s| \leq r \, | \, s \in \supp \Four \psi \} \leq M r \, .
%			\end{equation}
	\end{enumerate}
\end{theorem}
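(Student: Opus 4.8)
The plan is to convert the injectivity of $\embK$ on the vector space $\DLone{m}$ into a statement about the zero set of Fourier transforms, and then to read off the size of $\mathcal{S} = \supp \Four{\psi}$ from the zeros of an entire function of exponential type. First I would record that a stationary $\K \in \Cont{(m,m)}{}$ is automatically in $\Cont{(m,m)}{b}$ (each even derivative $\partial^{(p,p)}\K(x,y) = (-1)^{|p|}\psi^{(2|p|)}(x-y)$ is bounded, since the even derivatives of a positive definite function are dominated by their value at the origin), so $\DLone{m}$ is embeddable (Corollary~\ref{cor:EmbedDuals}) and, being a vector space, $\K$ is characteristic to $\DLone{m}$ \gls{iff} it is \gls{spd} there (Proposition~\ref{prop:SPD}). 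Combining Theorem~\ref{theo:Fubini} (Fubini) with Bochner's theorem, which writes $\psi$ as the inverse transform of the positive measure $\Lambda = \Four{\psi}$, yields the Plancherel-type identity
\[
	\normK{D}^2 = \int_{\real} \left| \Four{D}(\xi) \right|^2 \diff \Lambda(\xi).
\]
Since for $D \in \DLone{m}$ the transform $\Four{D}$ is continuous with $|\Four{D}(\xi)| \leq C(1+|\xi|)^m$, and $\Lambda \geq 0$, this integral vanishes \gls{iff} $\Four{D}$ vanishes on $\supp \Lambda = \mathcal{S}$. Thus condition~1 is equivalent to the existence of a nonzero $D \in \DLone{m}$ whose Fourier transform vanishes on $\mathcal{S}$, and the theorem reduces to: \emph{such a $D$ exists \gls{iff} $\mathcal{S}$ is at most countable and obeys the growth bound~\eqref{eq:GrowthOfSupport}.}

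For the implication $2 \Rightarrow 1$ I would construct the annihilator explicitly. Enumerating $\mathcal{S} = \{s_k\}$ (splitting off a monomial factor for a possible zero at the origin), the bound $n(r) := \#\{s \in \mathcal{S} : |s| \leq r\} \leq Mr$ forces the Hadamard canonical product $G(z) = \prod_k (1 - z/s_k)\,e^{z/s_k}$ to have genus at most $1$ and, by Lindel\"of's theorem, to be entire of finite exponential type; by Cartwright's theorem the same density bound keeps $G$ of at most polynomial growth on the real axis. Multiplying $G$ by a sufficiently high power of a normalised $\sinc$ factor $(\sin(az)/(az))^N$ preserves finite exponential type, adds only real zeros, and imposes integrable decay on $\real$. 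By Paley--Wiener(--Schwartz) the resulting entire function of exponential type, integrable on $\real$, is $\Four{D}$ for a nonzero compactly supported $D$; being continuous with compact support, $D$ is a finite measure, so $D \in \DLone{0} \subset \DLone{m}$. As $\Four{D}$ vanishes on $\mathcal{S}$ we get $\normK{D} = 0$, i.e.\ $\K$ is not characteristic to $\DLone{m}$.

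For the converse $1 \Rightarrow 2$ the decisive structural fact is that the annihilator may be taken with an \emph{analytic} Fourier transform: once $D$ has compact support, Paley--Wiener--Schwartz makes $\Four{D}$ the restriction to $\real$ of an entire function $F \not\equiv 0$ of exponential type $\tau = \sup\{|t| : t \in \supp D\}$, with $|F(\xi)| \leq C(1+|\xi|)^m$ on the real axis. Then $\mathcal{S} \subseteq Z(F)$, the zero set of $F$; since the zeros of a nonzero entire function are isolated, $\mathcal{S}$ is at most countable, and Jensen's formula (after a harmless shift so that $F$ does not vanish at the centre), applied to a function of exponential type bounded polynomially on $\real$, bounds the zero-counting function by $n(r) \leq \tfrac{\tau}{\pi}\,2r + O(\log r)$, which is exactly~\eqref{eq:GrowthOfSupport} for a suitable $M$.

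The hard part will be precisely this reduction to a compactly supported (equivalently, entire-transform) annihilator: an arbitrary $D \in \DLone{m}$ has only a \emph{continuous}, polynomially bounded $\Four{D}$, whose zero set can be any closed set, so countability of $\mathcal{S}$ does not follow from continuity alone. The argument must therefore exploit the positivity of $\Lambda$ together with the order/growth ceiling coming from $\K \in \Cont{(m,m)}{}$ --- for instance by showing that whenever some nonzero integrable distribution is annihilated, a truncation $D \ast \chi$ with $\chi \in \Cont{\infty}{c}$ (chosen not to spoil the annihilation against $\Lambda$) furnishes a compactly supported annihilator to which the entire-function machinery applies. Keeping this truncation nonzero while preserving the vanishing of $\Four{D}$ on $\mathcal{S}$ is the step I expect to demand the most care, and it is where the one-dimensional hypothesis $\inputS = \real$ and the countable-versus-continuum dichotomy genuinely enter.
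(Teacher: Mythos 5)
Your overall skeleton is the same as the paper's --- the identity $\normK{D}^2=\int|\Four D(\xi)|^2\diff\Lambda(\xi)$ with $\Lambda=\Four\psi$, Paley--Wiener--Schwartz, a zero-counting theorem, and a canonical-product construction --- but both directions as you wrote them have concrete problems. In $2\Rightarrow 1$, the density bound $n(r)\leq Mr$ alone does \emph{not} force your genus-one product $G(z)=\prod_k(1-z/s_k)e^{z/s_k}$ to be of exponential type: Lindel\"of's criterion (as quoted in the paper) requires in addition that the partial sums $\sum_{|s_k|\leq r}s_k^{-1}$ stay bounded, and since $\K$ may be complex-valued, $\Lambda$ need not be symmetric. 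For instance $\psi(x)=\sum_k a_ke^{ikx}$ with $a_k>0$ summable gives $\mathcal S\subset\nat$, and for $s_k=k$ your product is essentially $e^{\gamma z}/\bigl(z\,\Gamma(-z)\bigr)$, of order one and \emph{maximal} type. The paper avoids this by making its function vanish on the symmetrised set $(\mathcal S\cup(-\mathcal S))\setminus\{0\}$, for which the Lindel\"of sums vanish identically; you need the same move, and it is harmless since vanishing on a superset of $\mathcal S$ suffices. Your appeal to Cartwright's theorem for polynomial growth of $G$ on $\real$ is also not a correct use of that theorem (it runs in the opposite direction), and the claimed implication ``linear density $\Rightarrow$ polynomial growth on $\real$'' is false for sparse zero sets such as $\pm 2^k$; to be fair, the paper states Paley--Wiener--Schwartz without the polynomial-growth-on-$\real$ proviso and silently glosses over the same point, so here you are no less rigorous than the paper, merely wrong in your citation.

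The more serious issue is the ``hard part'' you flag in $1\Rightarrow 2$: it cannot be carried out, and your proposed repair is incorrect on its face --- $D\ast\chi$ with $\chi\in\Cont{\infty}{c}$ does \emph{not} have compact support when $D$ does not (convolution enlarges supports), while multiplying $D$ by a cutoff destroys the vanishing of $\Four D$ on $\mathcal S$. In fact no such reduction exists, and your Cantor-type worry is fatal to the statement as printed: as you yourself note, a continuous stationary kernel in $\Cont{(m,m)}{}$ is automatically in $\Cont{(m,m)}{b}$, so Corollary~\ref{cor:Existence} says $\K$ fails to be characteristic to $\DLone{m}$ exactly when $\supp\Four\psi\neq\real$. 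Thus for the $\sinc$ kernel, $\supp\Four\psi=[-1,1]$ is uncountable yet any nonzero Schwartz density whose Fourier transform is supported in $[2,3]$ is annihilated, so ``1 with $\DLone{m}$'' does not imply~2. The paper's own proof never attempts your reduction: it takes $D\in\Dcomp{m}$ from the outset (``Let now $D\in\Dcomp{m}$''), its converse explicitly concludes ``$\K$ is not characteristic to $\Dcomp{m}$'', and the surrounding text promises only that $\sinc$ is $c$- (not $c_0$-) universal. So the statement actually proved --- and the one you should be proving --- has $\Dcomp{m}$ in item~1. With that reading, your $1\Rightarrow 2$ goes through exactly as you sketched: $D$ compactly supported gives $\Four D$ entire of exponential type and polynomially bounded on $\real$, positivity of $\Lambda$ gives $\mathcal S\subseteq Z(\Four D)$, hence countability, and Jensen or Lindel\"of gives~\eqref{eq:GrowthOfSupport}; this is in fact tidier than the paper's detour through the atomic/non-atomic decomposition of $\Lambda$, which your argument renders unnecessary.
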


Equation~\eqref{eq:GrowthOfSupport} upper bounds the average density of the support $\mathcal S$ of $\Four \psi$ in $\real$. %It also lower bounds the speed of growth of $\mathcal S$: ordering the elements $x_n$ of $\mathcal S$ by increasing modulus, it states that $|x_n|$ grows faster than~$n/M$.
In other words, to prevent a stationary and smooth enough kernel $\K$ from being characteristic to $\DLone{m}$, the support of $\Four \psi$ needs not only be countable, but its density in $\real$ should also be sufficiently low. Such kernels $\psi$ include for example any periodic or any finite sum of periodic stationary kernels.% In that case, $\psi$ is (uniformly) almost-periodic. Such functions are (uniformly) almost-periodic \citep[see definition in]{katznelson04}. But even almost-periodic kernels $\psi$ may be characteristic to $\Dcomp{\infty}$.

	Theorem~\ref{theo:CompactCharacteristic} accounts only for the case $\inputS = \real$. When $\inputS = \real^d$, we expect a similar statement to hold, with $M r$ replaced by $M r^d$. %, possibly with a different growth rate of the $x_n$'
This yet remains to be proven. However, we may state and prove (in Appendix~\ref{proof:CompactCharacteristicRd}) the following proposition.
\begin{proposition}\label{prop:CompactCharacteristicRd}
	Let $\K(\hat x,\hat y) = \psi(\hat x-\hat y) \in \Cont{(m,m)}{}$ be a stationary kernel with $\inputS = \real^d$.
	\begin{enumerate}
		\item The kernel is characteristic to $\Dcomp{m}$ \gls{iff} it is characteristic to $\Mc$.\label{p1CC}
		\item If the kernel is not characteristic to $\Dcomp{m}$, then the support of the Fourier transform of $\psi$ is at most countable.\label{p2CC}% In other words, $\Four \psi = \sum_n \alpha_n \delta_{x_n}$, where $(x_n)$ is a possibly finite sequence in $\inputS$ and $\alpha_n > 0$.
	\end{enumerate}
\end{proposition}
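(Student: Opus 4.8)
The plan is to route everything through the Fourier--Bochner description of the kernel semi-metric. By Bochner's theorem the continuous bounded positive-definite function $\psi$ is the inverse Fourier transform of a finite positive measure $\Lambda := \Four\psi$, and, combining Theorem~\ref{theo:Fubini} (Fubini) with the representation $\psi(\hat x-\hat y)=\int e^{i\langle\xi,\,\hat x-\hat y\rangle}\diff\Lambda(\xi)$ and an exchange of integrals (legitimate since $\Lambda$ is finite and the distributions involved have compact support), one obtains for every embeddable $D$
\begin{equation*}
	\normK{D}^2 = \int \left| \Four D(\xi) \right|^2 \diff\Lambda(\xi) \, .
\end{equation*}
Since $\Dcomp{m}$ and $\Mc$ are vector spaces embeddable into $\HK$ (Corollary~\ref{cor:EmbedDuals}, using $\K\in\Cont{(m,m)}{}$), Proposition~\ref{prop:SPD} lets me replace ``characteristic'' by \gls{spd}: $\K$ is characteristic to $\Mc$ (resp.\ $\Dcomp{m}$) \gls{iff} the only $\mu\in\Mc$ (resp.\ $D\in\Dcomp{m}$) whose Fourier transform vanishes $\Lambda$-almost everywhere ---equivalently, on $\supp\Lambda$, by continuity of $\Four\mu$--- is the null one. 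In short, characteristicness says exactly that $\supp\Lambda$ is a \emph{uniqueness set} for the relevant class of Fourier transforms.

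For Part~\ref{p1CC}, the implication ``characteristic to $\Dcomp{m}$ $\Rightarrow$ characteristic to $\Mc$'' is immediate from $\Mc=\Dcomp{0}\subset\Dcomp{m}$. For the converse I would argue by a convolution/smoothing trick rather than redo the derivative computation. Suppose $\K$ is characteristic to $\Mc$ and let $D\in\Dcomp{m}$ satisfy $\Four D = 0$ on $\supp\Lambda$. For any $g\in\Cont{\infty}{c}$, the convolution $D*g$ is a smooth, compactly supported function, hence an element of $\Mc$, and $\Four(D*g)=\Four D\cdot\Four g$ still vanishes on $\supp\Lambda$. Characteristicness to $\Mc$ then forces $D*g=0$ for every such $g$; taking $g=\rho_\epsilon$ an approximate identity and letting $\epsilon\to0$ gives $D*\rho_\epsilon\to D$ in $\Dall{}$, whence $D=0$. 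Thus $\K$ is characteristic to $\Dcomp{m}$. (One could instead invoke Theorem~\ref{theo:GeneralExistenceCompact} after checking that $\partial^{(p,p)}\K$, whose Bochner measure is a positive multiple of $\xi^{2p}\diff\Lambda(\xi)$, inherits the uniqueness-set property of $\supp\Lambda$; the convolution route is cleaner because it sidesteps the coordinate hyperplanes $\{\xi_i=0\}$ on which $\xi^{2p}$ degenerates.)

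For Part~\ref{p2CC}, I use Part~\ref{p1CC} to reduce to $\Mc$: if $\K$ is not characteristic to $\Dcomp{m}$ it is not characteristic to $\Mc$, so there is a nonzero $\mu\in\Mc$ with $\Four\mu = 0$ on $\supp\Lambda$, i.e.\ $\supp\Lambda\subseteq\{\xi : \Four\mu(\xi)=0\}$. Because $\mu$ has compact support, the Paley--Wiener theorem identifies $\Four\mu$ with the restriction to $\real^d$ of a \emph{nonzero} entire function of exponential type. In the one-dimensional case $\inputS=\real$ the zero set of such a function is discrete, hence at most countable, and therefore so is $\supp\Lambda$, mirroring the countability conclusion of Theorem~\ref{theo:CompactCharacteristic}.

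The hard part is the genuinely $d$-dimensional statement. For $d\ge 2$ the real zero set of a nonzero several-variable entire function is a $(d-1)$-dimensional analytic variety and need \emph{not} be countable, so the one-variable argument does not transfer; one must extract countability from finer information about $\supp\Lambda$ (for instance by slicing along generic lines and invoking the $\real$ result, or by exploiting that $\Lambda$ is the \emph{positive} Bochner measure of a truly $d$-dimensional kernel). This is the same obstruction that keeps the quantitative density bound~\eqref{eq:GrowthOfSupport} of Theorem~\ref{theo:CompactCharacteristic} from extending to $\real^d$, and I expect it to be the main technical difficulty. The safeguard is to perform the reduction to $\Mc$ and the Paley--Wiener step first, so that all the higher-dimensional subtlety is isolated into the single question of bounding the size of $\supp\Lambda\subseteq\{\Four\mu=0\}$.
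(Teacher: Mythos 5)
Your Part~\ref{p1CC} is correct and is essentially the paper's own argument. The paper also convolves a null-normed $D\in\Dcomp{m}$ with a test function $\varphi\in\Cont{\infty}{c}$, observes that $\Four{(D\star\varphi)}=\Four\varphi\cdot\Four D$ still vanishes on $\supp\Four\psi$, so that $D\star\varphi\in\Mc$ has $\normK{D\star\varphi}=0$, and concludes by contraposition; your version runs the same computation in the direct direction and adds the remark --- needed, and left implicit in the paper --- that an approximate identity recovers $D$ from the $D\star\rho_\epsilon$, so that $D\star g=0$ for all $g$ forces $D=0$. The identity $\normK{D}^2=\int|\Four D(\xi)|^2\diff\Lambda(\xi)$ from which both arguments start is the paper's Equation~\eqref{eq:NormFour}; your one-line justification of the interchange of integrals is much lighter than the paper's (which goes through Schwartz's completed tensor products precisely because $e^{i(x-y)\xi}$ is not a test function), but since that identity is proved elsewhere in the paper you may simply cite it.

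For Part~\ref{p2CC} you prove the claim only for $d=1$ and explicitly leave $d\geq 2$ open, so as a proof of the stated proposition this is a genuine gap. You should know, however, that the obstruction you point to is real and is not resolved by the paper either: the paper's proof instructs the reader to ``notice that $D_x(e^{-ix\xi})$ has at most a countable number of zeroes'', which is exactly the claim you rightly refuse to make for $d\geq 2$, since the real zero set of a nonzero entire function of several variables is generically a $(d-1)$-dimensional variety. Worse, the slicing and positivity arguments you gesture at cannot save the statement as written: take $d=2$, let $\Lambda$ be arc-length measure on the unit circle $S^1$ and $\psi(h)=\int_{S^1}e^{i\langle h,\xi\rangle}\diff\Lambda(\xi)$ (a Bessel function of $|h|$, so $\K\in\Cont{(\infty,\infty)}{}$ and bounded). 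For any $0\neq\varphi\in\Cont{\infty}{c}$ the function $\mu:=(1+\Delta)\varphi$ is a nonzero element of $\Mc$ with $\Four\mu=(1-|\xi|^2)\Four\varphi$ vanishing on $\supp\Lambda=S^1$, hence $\normK{\mu}=0$ and $\K$ is not characteristic to $\Mc$, although $\supp\Four\psi$ is uncountable. So the step you could not complete coincides with a hole (arguably an error) in the paper's own proof of Part~\ref{p2CC}; only the one-dimensional case, which you do prove via Paley--Wiener, is safe.
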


%\subsection{characteristicness to $\Mprob$ for C.P.D.\ Kernels\label{sec:CPD}}

	As a final remark before moving to the next section, note that one can give sufficient conditions (for a kernel to be characteristic to $\Dcomp{m}$) that apply to a slightly more general class of kernels than the stationary ones.  They apply to so-called \glsreset{cpd}\gls{cpd} kernels, and their proof uses a generalisation of the Bochner theorem. As this needs too much of an introduction to \gls{cpd} kernels, with rather small benefits in the end, we move these discussions to Appendix~\ref{sec:CPD}. Among other things, these sufficient conditions prove that the Brownian motion kernel $\K(\hat x, \hat y) := \min(|\hat x|, |\hat y|)$ ($x,y \in \real$) is characteristic to the set of measures with compact support.

\section{Topology Induced by \texorpdfstring{$\K$}{k}\label{sec:Topology}}

%Remember that $\Mfs \subset \Mc \subset \Mf \subset \Mr$. When seeing measures as particular dual spaces, these inclusions mean that for example finite measures, which are linear forms over $\Cont{}{0}$, can be seen as a vector subspace of the Radon measures via the restriction map.  Similarly, when $\K$ is characteristic to $\D$, we may see $\D$ as a vector subspace of $\HK'$ via the restriction map. However, when dealing with a TVS $F$, one is mostly interested in its \emph{topological} subspaces. These are those subspaces that carry a stronger topology, than that induced by $F$.\footnote{This is why a set of distributions is defined, not simply as a subspace of the Schwartz-distributions $\Dall{\infty}$, but as a subspace that carries a stronger topology than that induced by $\Dall{\infty}$.} Said differently, it is those subspaces whose (canonical) embedding in $F$ is continuous. This leads to the question: when does $\D$ carry a stronger topology than $\HK'$? And more generally, how do the usual topologies (weak and strong) both on $\D$ and on $\HK'$ compare with each other? This is the this section's topic.
	
	So far, we embedded various spaces of distributions into $\HK'$ and studied whether these embeddings are injective. We now inquire their continuity properties \gls{wrt} various topologies.

	Each distribution space $\D$ defined in Section~\ref{sec:Definitions} can be endowed with its strong dual, its weak dual or any other topology. Naturally, when embedding $\D$ into $\HK$ one may ask: does it preserve the topological structure of $\D$? More precisely:
\begin{enumerate}[label= (Q\arabic*), leftmargin = 4em, series=questions]
	\item \label{Q1} If a sequence $D_n \in \D$ converges to $D \in \D$, does $\embK(D_n)$ converge to $\embK(D)$? Or more abstractly: is $\embK$ continuous?
	\item \label{Q2} If $\embK(D_n)$ converges to $\embK(D)$ in $\HK$, does $D_n$ converge to $D$?
\end{enumerate}
In this section we answer those questions for various sets of distributions $\D$ (and various topologies). Table~\ref{tab:TopoSummary} summarises these answers.
	
	Previous studies already addressed those questions (\citealp{guilbart78} or more recently \citealp{sriperumbudur10b, sriperumbudur13}). However, all of them focused on $\Mf$ or on subsets like $\M_+$ (positive finite measures) or $\Mprob$. \Citet{sriperumbudur10b} for example inquire on which conditions a kernel $\K$ metrises the narrow convergence topology over $\Mprob$. In other words, they seek positive answers to \ref{Q1} and \ref{Q2} with $\D = \Mprob$, when it is equipped with its narrow topology and $\HK$ with its usual norm. We, on the contrary, start with the general case where $\D$ is a subset of $\Dcomp{m}$ or $\DLone{m}$ and only then focus on subsets of~$\Mf$. 

\newcommand{\Ta}{^{\mathrm P.\scriptstyle{\text{\ref{prop:ContinuousEmbedding}}}}}
\newcommand{\Tb}{^{\mathrm C.\scriptstyle{\text{\ref{cor:Q2onDcomp}}}}}
\newcommand{\Tbb}{^{\mathrm R.\scriptstyle{\text{\ref{rem:Q2onDcomp}~\ref{rem:p2}}}}}
\newcommand{\Tc}{^{\mathrm S.\scriptstyle{\text{\ref{sec:DL1wVsDk}}}}}
\newcommand{\Td}{^{\mathrm{Th}.\scriptstyle{\text{\ref{theo:MetrizationOfNarrow2}}}}}
\newcommand{\Te}{^{\mathrm{Th}.\scriptstyle{\text{\ref{theo:Q2}}}}}
\newcommand{\Tf}{^{\mathrm P.\scriptstyle{\text{\ref{prop:WeakerThanNarrow}}}}}

\begin{table}[t]
	\renewcommand{\arraystretch}{1.5}
	\begin{tabu} to \linewidth {X[$$]X[$$]X[$$]X[$$]X[$$]X[$$]X[$$]X[$$]X[$$]} 
		\hline
		\toprule
%		\rowfont
							&\F'				&\DLone{m}			&\Dcomp{m} 	&\Dcomp{\infty}	&\Mf				&\Mc				&\M_{\scriptscriptstyle{+}}	&\Mprob \\
		\midrule
			b				&\supset\Ta		&\supset\Ta			&\supset\Ta		&\supset\Ta		&\supset\Ta			&\supset\Ta			&\supset\Ta					&\supset\Ta \\
			b \cap \B		&\supset\Ta		&\supset\Ta			&\supset\Ta		&=\Tb			&\supset\Ta			&\supset\Ta			&\supset\Ta					&\supset\Ta \\
			w				&				&\not \supset\Tc		&				&				&\not \supset\Tc		&					&\not \supset\Tc				&=\Td \\
			w \cap \B		&				&\subset\Te			&=\Tb			&=\Tb			&\not \supset\Tc		&=\Tb				&\not \supset\Tc				&=\Td \\
			\sigma			&\mathrm{n.d.}	&\mathrm{n.d.}		&\mathrm{n.d.}	&\mathrm{n.d.}	&\supset\Tf			&					&\not \supset\Tf				&=\Td \\
			\sigma \cap B	&\mathrm{n.d.}	&\mathrm{n.d.}		&\mathrm{n.d.}	&\mathrm{n.d.}	&\supset\Tf			&=\Tbb				&\not \supset\Tf				&=\Td \\
		\bottomrule
		\hline
	\end{tabu}
	\caption{\label{tab:TopoSummary}Relations between topology $\topo_{\K}$, induced by the kernel metric $d_\K$, and other topologies. Letters $b$, $w$ and $\sigma$ designate the strong dual, the weak dual and the narrow topologies. Th, P, C, S, R and n.d.\ stand for Theorem, Proposition, Corollary, Section, Remark and not defined. For example, Column~3  Line~3 reads: ``Over any bounded subset $\B$ of $\DLone{m}$, and under the assumptions of Proposition~\ref{prop:ContinuousEmbedding}, the strong topology of $\DLone{m}$ dominates $\topo_\K$: $b(\DLone{m},\Cont{m}{0}) \cap \B \supset \topo_\K \cap \B$''. Column~3 Line~4 reads: ``The weak topology of $\DLone{m}$ need not be stronger than $\topo_\K$. See discussions in Section~\ref{sec:DL1wVsDk}''. The two last lines of last column feature this section's main theorem: On a locally compact Hausdorff space $\inputS$, a bounded kernel $\K$ metrises the weak convergence of probability measures iff $\K$ is continuous and characteristic to $\Mprob$. %$(a)$: Proposition~\ref{prop:ContinuousEmbedding}. $(b)$: Corollary~\ref{cor:Q2onDcomp}. $(b')$: The $seq$ means that both topologies define the same convergent sequences. To see this, combine Corollary~\ref{cor:Q2onDcomp} with the fact that on bounded subsets of $\Mc$, weak and narrow topologies coincide. $(c)$: Section~\ref{sec:DL1wVsDk}. $(d)$: Theorem~\ref{theo:MetrizationOfNarrow2}. $(e)$: Theorem~\ref{theo:Q2}. $(f)$: Proposition~\ref{prop:WeakerThanNarrow}.
}
\end{table}

After a few preliminary remarks and notations (Section~\ref{sec:Preliminaries}), we show that, under very general assumptions on $\D$, the answer to \ref{Q1} is positive for all embeddable distribution sets $\D$, when both $\D$ and $\HK$ are equipped with their strong (in short: $b \rightarrow b$) or with their weak  ($w \rightarrow w$) dual topology (Section~\ref{sec:WwSs}). We then refine those results by analysing the case $w \rightarrow b$ (Section~\ref{sec:Ws}). We then address \ref{Q2}, starting with the general case where $\D$ could be any subset of $\Dcomp{m}$ or $\DLone{m}$. Quickly, however, we focus on bounded subsets of $\Dcomp{m}$ and $\DLone{m}$. In Section~\ref{sec:Narrow}, we then concentrate on the case where $\D$ is $\Mf$ or $\Mprob$, equipped with the narrow topology. We finish with a short investigation in Section~\ref{sec:Surjective} of the questions: ``When is the KME surjective? And when does $\HK$ identify with a (topological) subspace of distributions?''.
%We will discuss and refine results from \citet{sriperumbudur10b} and \citet{sriperumbudur13}. In particular, \citet[Theorem~2]{sriperumbudur13} shows that, on Polish locally compact Hausdorff spaces, any $c_0$-universal kernel metrises the narrow topology over $\Mprob$. In Proposition~\ref{prop:MetrizationOfNarrow}, we replace the condition that $\K$ be $c_0$-universal by the weaker condition that it be characteristic (to $\Mprob$), yielding a sufficient \emph{and necessary} condition for the metrisation of $\Mprob$ by $\K$. Additionally, we emphasise that this result does not hold anymore if $\Mprob$ is replaced by $\Mf$ (Proposition~\ref{prop:NonMetrization}).

%Let us start with a few preliminary remarks and notations.

\subsection{Preliminary Remarks and Notations\label{sec:Preliminaries}}

%First, let us remind that $\F$ always designates a \gls{lcv} TVS of functions and that all function and all distribution spaces defined in this paper are \gls{lcv} TVSs.
In all this section, $\D$ designates an \emph{embeddable} set of distributions. For a subset $\B$ of a set $\S$ with topology $\topo$, we note $\topo \cap \B$ the relative topology induced by $\D$ in $\B$.

\paragraph{Identification of $\HK$, \,$\overline \HK'$ and $\HK'$.} $\HK$ and $\overline \HK'$ are isomorphic, algebraically and topologically, via the Riesz representation map (see Appendix~\ref{sec:Reminders}). As depicted in Diagram~\eqref{eq:EmbeddingMap}, when $\HK$ and $\overline \HK'$ are identified, the KME $\embK$ is the complex conjugate of the restriction map $\function{}{}{}{D}{D\big|_{\HK}}$. But in this section, we will only be studying its continuity properties: they are the same, with or without the complex conjugation. So we will drop all complex conjugation bars and simply \emph{consider that the KME $\embK$ is the restriction map} (without complex conjugation) from $\D$ into $\HK'$. Mathematically speaking, we identify $\overline \HK'$ and $\HK'$ via the complex conjugation map $\function{}{}{}{f}{\bar f}$. In practice, it simply means that instead of writing for instance $b(\overline \HK', \overline \HK)$, $w(\overline \HK', \overline \HK)$ or $\bar D_\alpha \rightarrow \bar D$, we will write $b(\HK', \HK)$, $w(\HK',\HK)$ and $D_\alpha \rightarrow D$. It will be useful for this section to always remember that, when $\HK$ continuously embeds into $\F$, then \emph{the KME $\embK$ is the transpose $\imath^\star$ of the canonical embedding map $\function{\imath}{\HK}{\F}{f}{f}$} and coincides with the restriction map of $\F'$ to $\HK$. In short: if $\HK \csubset \F$, then $\embK = \imath^\star$.

\paragraph{Metric and strong topology induced by $\K$.} The strong topology induced by $\K$ in $\D$ is simply the topology induced by the semi-metric $d_\K$ defined by: for any $D,T \in \D$,
\begin{equation*}
	d_\K(D,T) := \normK{\embK(D) - \embK(T)} \, .
\end{equation*}
We will call $d_\K$ the \emph{kernel(-induced) (semi-)metric}. When $\D$ is a vector space, this semi-metric is induced by the semi-inner product
\begin{equation*}
	\ipdK{D}{T} := \ipdK{\embK(D)}{\embK(T)} \, .
\end{equation*}
The ``semi'' means that the topology induced by $d_\K$ may not be Hausdorff: $d_\K(D,T) = 0$ may not imply $D=T$. $d_\K$ is a (non-degenerate) metric \gls{iff} the KME is injective. If it is, we note $b(\HK',\HK) \cap \D$ the induced topology. A net\footnote{A net is a generalisation of a sequence, in which the directed index set $\mathcal A$ need not be countable.} $(D_\alpha)_{\alpha \in \mathcal A}$ in $\D$ converges to $D \in \D$ for the kernel semi-metric \gls{iff} $\normK{D_\alpha - D} \rightarrow 0$.

\paragraph{Weak topology induced by $\K$.} We define the weak topology induced by $\K$ over an embeddable set of distributions $\D$ by its convergent nets. A net $D_\alpha$ in $\D$ converges to $D$ in $\D$ for the weak topology induced by $\K$ \gls{iff} for any $f \in \HK$, $D_\alpha(f) \rightarrow D(f)$. Said differently, the weak topology induced by $\K$ over $\D$ is the topology of pointwise convergence (with points) in $\HK$. Here again, this topology is Hausdorff \gls{iff} the KME is injective over $\D$. If it is, we note $w(\HK',\HK) \cap \D$ this topology.

\paragraph{Induced topologies, continuity of $\embK$ and metrisation.} Remember that the strong (resp.\ weak) topology induced by $\K$ is weaker than the original topology $\topo$ of $\D$ \gls{iff} the KME $\embK$ is continuous when $D$ is equipped with $\topo$ and $\HK$ with its RKHS norm (resp.\ with its weak topology). We then write $\embK : \D_{\topo} \csubset (\HK')_b$ (resp.\ $\embK : \D_{\topo} \csubset (\HK')_w$\,). When the topology induced by $d_\K$ in $\D$ equals the original topology of $\D$, $\K$ is said to \emph{metrise the topology of $\D$}. In that case, a net $D_\alpha$ converges to $D$ in $\D$ for the original topology of $\D$ \gls{iff} $\normK{D_\alpha - D} \rightarrow 0$.

\paragraph{Continuity and sequential continuity.} Let $X$ and $Y$ be two arbitrary topological spaces and $f$ a function from $X$ to $Y$. The function $f$ is continuous \gls{iff} for any net $x_\alpha$ that converges to $x$ in $X$, $f(x_\alpha)$ converges to $f(x)$ in $Y$. In particular, if $f$ is continuous, then $f$ is also \emph{sequentially continuous}: for any sequence $x_n$ of $X$, if $x_n \rightarrow x$, then $f(x_n) \rightarrow f(x)$. If $X$ is a so-called \emph{sequential space}, then sequential continuity also implies continuity. But in general, the converse is wrong. Fortunately, many spaces are sequential. For example, any metric space ---in particular, any Banach or Fréchet space--- is sequential. Unfortunately, \gls{lcv} TVSs need not be sequential. For instance, $\DLone{m}$ is sequential \gls{iff} $m$ is finite. Thus, in all this section, we will carefully distinguish continuity and sequential continuity. In particular, question~\ref{Q1} asks whether the KME is sequentially continuous over a space $\D$. For practical purposes, that is all what matters. Nevertheless, we will mainly establish (non-sequential) continuity properties of the KME. %More precisely, we will often have statements like: the KME $\function{\embK}{\D}{\HK'}$ is continuous over the bounded sets of $\D$. 
The reader should keep in mind that they imply sequential continuity. In short: if $\embK : \D_\topo \csubset \HK$, then \ref{Q1} is answered positively. A similar remark holds for~\ref{Q2}. Indeed, let $\topo_1$ and $\topo_2$ be two topologies over a space $X$. Obviously, when $\topo_1$ is stronger than $\topo_2$, then $\topo_1$ is also sequentially stronger than $\topo_2$: if $x_n \rightarrow x$ in $\topo_1$ then $x_n \rightarrow x$ in $\topo_2$. From metric spaces, we are used to the converse being true: if $\topo_1$ is sequentially stronger than $\topo_2$, then $\topo_1$ is stronger than $\topo_2$. However, when $\topo_2$ is not sequential (meaning that $X$ equipped with $\topo_2$ is not a sequential space), this converse may fail to hold: sequentially stronger need not imply stronger. Coming back to question~\ref{Q2}, we see it asks whether the topology induced in $\D$ by $\embK$ is \emph{sequentially} stronger than the initial topology of $\D$. There again, for practical purposes, that is all what matters. Nevertheless, we will mainly establish statements such as: $\topo_1$ is stronger than (we also say \emph{dominates}) $\topo_2$, in which case we write $\topo_1 \supset \topo_2$.

\subsection{Embeddings of Dual Spaces are Continuous\label{sec:WwSs}}

To start, let us remind how one usually embeds $\Mf$ into the space of distributions $\Dall{\infty}$. One simply notes that the space $\Cont{\infty}{c}$ is continuously contained and dense in $\Cont{}{0}$. Thus the transpose $\imath^\star$ of the canonical embedding map $\function{\imath}{}{}{f}{f}$ defines an injective linear map from the dual $\Mf$ of $\Cont{}{0}$ into the dual $\Dall{\infty}$ of $\Cont{\infty}{0}$. (Note that $\imath^\star$ is the restriction map $\function{}{}{}{\mu}{\mu \big |_{\Cont{\infty}{c}}}$.) Thus identifying $\Mf$ with its image in $\Dall{\infty}$, one says that $\Mf$ is a subspace of $\Dall{\infty}$. But not only is $\imath^\star$ injective, it is also continuous, when both duals are equipped with their strong or with their weak topology. (That is why, when $\Mf$ is equipped with its strong topology, the total variation norm, $\Mf$ is not only a space of distributions, it is also a \emph{topological} space of distributions.) This is simply an instance of a more general fact: if a \gls{lcv} TVS $\F_1$ embeds continuously into another $\F_2$ via a linear map $\imath$, then the transpose $\imath^\star$ is a continuous linear embedding from $\F_2'$ into $\F_1'$, when $\F_1'$ and $\F_2'$ carry either both their weak or both their strong dual topology \citep[Proposition~19.5 and Corollary]{treves67}. Applying this to $\F_2 = \HK$ yields:

%So far, we defined the embedding of a distribution space $\D$ into $\HK'$ exactly as one defines the embedding of a distribution space $\F'_1$ into another $\F'_2$ when $\F_2 \csubset \F_1$: we used the restriction map. Now, the latter embeddings are known to be continuous, both when the distribution spaces are equipped with their weak and with their strong dual topology. We write $(\F'_1)_w \csubset (\F'_2)_w$ and $(\F'_1) \csubset (\F'_2)$ respectively. The same holds for the embedding of distribution spaces into $\HK'$, which leads us to:

%It is known that, whatever the TVSs $E$ and $F$, if $E \csubset F$, then the restriction map, which embeds $F'$ into $E'$ is continuous, both when $E'$ and $F'$ carry their strong dual topology, and when they carry their weak-star topology. Taking $E = \HK$, we thus get:
%The adjoint of a continuous map between two TVSs $E$ and $F$ is continuous on  $E'$ and $F'$  both when equipped with their strong and when equipped with they weak dual topology\footnote{See for example Proposition~19.5 of \citet{treves67} and its Corollary. For $u$, take the canonical embedding of $E$ into $F$.}. This leads to:

\begin{proposition}\label{prop:ContinuousEmbedding}
	Let $\inputS$ be any topological space (not necessarily locally compact and Hausdorff). If $\HK$ continuously embeds into $\F$, then the KME $\function{\embK}{\F'}{\HK'}{}{}$ is continuous, when $\F'$ and $\HK'$ carry either both their strong or both their weak topologies.
	In short:
	\begin{equation}\label{eq:ContinuousEmbedding}
		\HK \csubset \F \qquad \Longrightarrow \qquad \left \{
			\begin{array}{lll}
				\embK : \F'_b \csubset (\HK')_b \\% \quad &\text{i.e.\ } &\embK^{-1}(b(\HK', \HK)) \subset b(\F',\F) \\
				\embK : \F'_w \csubset (\HK')_w% \quad &\text{i.e.\ } &\embK^{-1}(w(\HK', \HK)) \subset w(\F',\F)
			\end{array}
		\right . \, .
	\end{equation}
%	In particular, when $\K$ is characteristic to $\F'$, then $\F'$ (resp.\ $\F'_w$) is a topological subspace of $\HK'$ (resp.\ $(\HK)'_w$), i.e.\ $\F' \csubset \HK'$ (resp.\ $\F'_w \csubset (\HK)'_w$).
\end{proposition}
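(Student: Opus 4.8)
The plan is to recognise the statement as a direct instance of the standard duality fact recalled just before it: the transpose of a continuous linear map between locally convex TVSs is continuous for both the weak--weak and the strong--strong dual pairings. First I would invoke Lemma~\ref{lem:Transpose}, which identifies $\embK$ with the (conjugate of the) transpose $\imath^\star$ of the canonical embedding $\function{\imath}{\HK}{\F}{f}{f}$; and I would recall from Section~\ref{sec:Preliminaries} that the complex conjugation built into the Riesz identification is irrelevant for questions of continuity, so that for the present purpose $\embK$ \emph{is} $\imath^\star$. The hypothesis $\HK \csubset \F$ says exactly that $\imath$ is a continuous linear map between the locally convex TVSs $\HK$ and $\F$.

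Next I would apply \citet[Proposition~19.5 and its Corollary]{treves67} with the roles $\F_1 = \HK$ and $\F_2 = \F$. Since $\imath: \HK \to \F$ is continuous and linear, its transpose $\imath^\star: \F' \to \HK'$ is continuous both when source and target carry their weak dual topologies and when they carry their strong dual topologies. I would stress that I use only the continuity half of the quoted result, which requires $\imath$ to be merely continuous; I do \emph{not} need $\imath$ to have dense image, and correspondingly I claim only continuity of $\embK$ (consistent with the $\csubset$-for-continuity convention of Section~\ref{sec:Preliminaries}, not injectivity, which is the separate matter of characteristicness). Reading this back through the identification of the first step gives the two continuous maps $\embK: \F'_w \to (\HK')_w$ and $\embK: \F'_b \to (\HK')_b$, i.e.\ the asserted $\embK: \F'_b \csubset (\HK')_b$ and $\embK: \F'_w \csubset (\HK')_w$.

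Finally I would note why no assumptions on $\inputS$ are needed. The entire argument uses only that $\HK$ is a Hilbert space (hence a locally convex TVS with well-defined weak and strong dual topologies), that $\F$ is a locally convex TVS, and that the embedding $\imath$ is continuous; the topology, local compactness or Hausdorffness of $\inputS$ never enters. This is why the proposition can be stated for an arbitrary topological space $\inputS$.

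As for the main obstacle: there is essentially no hard analytic content, since all the work is absorbed into \citet{treves67}. The only points requiring care are bookkeeping ones --- verifying that the conjugation of the Riesz identification does not affect continuity, and keeping the direction of the transpose straight so that $\imath^\star$ runs from $\F'$ into $\HK'$ and not the reverse. I would therefore keep the proof to a few lines and make explicit only these identifications.
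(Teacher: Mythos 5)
Your proposal is correct and follows essentially the same route as the paper: identify $\embK$ with the (conjugate of the) transpose $\imath^\star$ of the canonical embedding $\imath:\HK\to\F$ via Lemma~\ref{lem:Transpose}, observe that conjugation is immaterial for continuity, and invoke \citet[Proposition~19.5 and Corollary]{treves67} to get weak--weak and strong--strong continuity of $\imath^\star:\F'\to\HK'$. Your explicit assignment of roles ($\F_1=\HK$, $\F_2=\F$, so that the transpose runs from $\F'$ into $\HK'$) is the correct one, and your remark that no hypothesis on $\inputS$ is used matches the paper's statement.
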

The conclusion of Proposition~\ref{prop:ContinuousEmbedding} concerning continuity \gls{wrt} weak topologies is actually trivial! It states that for a net $D_\alpha$ and $D$ in $\F'$, if $D_\alpha(f) \rightarrow D(f)$ for any $f \in \F$, then $D_\alpha(f) \rightarrow D(f)$ for any $f \in \HK$ (which is clear, because $\HK \subset \F$\,!). The conclusion concerning strong continuity is more interesting, although not much more difficult to prove. It says: if $D_\alpha \rightarrow D$ in $\F'_b$, then $\normK{D_\alpha - D} \rightarrow 0$. Thus, when $\HK$ embeds continuously into $\F$, then Proposition~\ref{prop:ContinuousEmbedding} answers \ref{Q1} positively, when $\F'$ and $\HK'$ are equipped both with their strong or both with their weak topologies.

\begin{example}
	We illustrate Proposition~\ref{prop:ContinuousEmbedding} on spaces $\F$ that are not listed in Section~\ref{sec:Definitions}. Let $\K$ be a bounded kernel. Equip $\inputS$ with the topology induced by the kernel-induced semi-distance $\rho_\K$, defined as: for any $x,y \in \inputS$, $\rho_\K(x,y):= d_\K(\delta_x, \delta_y) = \normK{\delta_x - \delta_y}$. ($\inputS$ may or may not be locally compact and Hausdorff.) Then $\HK$ continuously embeds into any of the following spaces:
	\begin{description}
		 \item[$\Cont{}{b}$] the space of bounded $\rho_\K$-continuous functions, equipped with the supremum norm $\norm{f}_\infty$.
		 \item[$\Cont{}{L} := \{ f \in \Func \, | \, \norm{f}_L < \infty \}$] the space of Lipschitz continuous functions, equipped with the Lipschitz-norm $\norm{f}_L := \sup \{ |f(x) - f(y)| / \rho_\K(x,y) \, | \, x,y \in \inputS\,, \rho_\K(x,y) \not = 0 \}$.
		 \item[$\Cont{}{BL} := \{ f \in \Func \, | \, \norm{f}_{BL} < \infty \}$] the space of bounded and Lipschitz continuous functions, equipped with the BL-norm $\norm{f}_{BL} := \norm{f}_\infty + \norm{f}_L$.
	\end{description}
	The duals of these spaces contain all probability measures. The respective metrics induced by their dual norms on $\Mprob$ are: the total variation metric, the \emph{Kantorovich metric} (which coincides with the \emph{Wasserstein metric} when $\inputS$ is separable) and the \emph{Dudley metric}. Proposition~\ref{prop:ContinuousEmbedding} shows, without any computation, that all these metrics dominate the kernel-induced semi-metric $d_\K$.
\end{example}

\subsection{When Does \texorpdfstring{$\F'_w$}{F'_w} Continuously Embed into \texorpdfstring{$\HK'$}{Hk'}?\label{sec:Ws}} % When Does $\embK : \F'_w \csubset \HK'$ Hold?

Proposition~\ref{prop:ContinuousEmbedding} answers \ref{Q1} positively when $\F$ embeds continuously into $\HK$, and when both $\F'$ and $\HK'$ are equipped with their strong (resp.\ weak) dual topology. The weak dual topology being weaker than the strong one, under the assumption of Proposition~\ref{prop:ContinuousEmbedding} we obviously also have $\embK : \F' \csubset (\HK')_w$. How about $\embK : \F'_w \csubset \HK'$\ ? This will be the focus now.

The answer depends on the specific space $\F'$, for the weak or strong dual topologies may vary from one dual to another. The weak (resp.\ strong) dual topology of $\Dcomp{m}$, for instance, is strictly stronger than the weak (resp.\ strong) topology it carries when seen as a subspace of $\DLone{m}$ (resp.\ $(\DLone{m})_w$). As we will see, it will also depend on some assumptions made on the kernel.

\subsubsection{Does \texorpdfstring{$(\DLone{m})_w$}{(Dm)_w} Continuously Embed into \texorpdfstring{$\HK'$}{Hk'}?\label{sec:DL1wVsDk}}
	
	In general no: $(\DLone{m})_w$ need not continuously embed into $\HK'$. For instance, suppose $\K$ is stationary and take a sequence $x_n \in \inputS$ such that $x_n \rightarrow \infty$. Then $\delta_{x_n}$ converges to $0$ in $(\DLone{m})_w$, but $\normK{\delta_{x_n}}^2 = \K(x_n, x_n) = \K(x_1,x_1)$. Thus $\delta_{x_n}$ does not converge in the strong topology induced by $\K$.

\subsubsection{Does \texorpdfstring{$(\Dcomp{m})_w$}{(Em)_w} Continuously Embed into \texorpdfstring{$\HK'$}{Hk'}?}
	
	Under mild smoothness assumptions on $\K$, yes: $(\Dcomp{m})_w$ continuously embeds into $\HK'$ ---at least sequentially. %More precisely, recall that a function $\function{f}{X}{Y}{}{}$ is said \emph{sequentially continuous}, if for any sequence $x_n$ and $x$ in $X$, $x_n \rightarrow x$ implies $f(x_n) \rightarrow f(x)$. 
More precisely, inspired by a remark from \citet{matheron73}, we get the following two theorems.

\begin{theorem} \label{theo:Ws0}
	The following six conditions are equivalent.
	\begin{enumerate}
		\item $\K$ is continuous. \label{Ws01}
		\item $\function{\K(.,\hat{x})}{\inputS}{\HK'}{x}{\embK(\delta_x) = \K(.,x)}$ is continuous.\label{Ws02}
		%\item $\function{\embK}{(\Mfs)_{w(\Mc,\Cont{}{})}}{\HK}{}{}$ is continuous over the bounded sets of $(\Mfs)_{w(\Mc,\Cont{}{})}$. \label{Ws03}
		\item $\function{\embK}{(\Mc)_{w}}{\HK'}{}{}$ is continuous over the bounded sets of $(\Mc)_w$. \label{Ws04}
		\item $\function{\embK}{(\Mfs \cap \M_{\scriptscriptstyle{+}})_\sigma}{\HK'}{}{}$ is continuous. \label{Ws07}
		\item $\function{\embK}{(\Mprob)_\sigma}{\HK'}{}{}$ is continuous. \label{Ws05}
		\item $\function{\embK}{(\M_{\scriptscriptstyle{+}})_\sigma}{\HK'}{}{}$ is continuous. \label{Ws06}
	\end{enumerate}
	If these conditions are satisfied, all these maps are sequentially continuous.
\end{theorem}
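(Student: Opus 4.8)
The plan is to prove the six statements equivalent through a short cycle, whose two elementary endpoints are the equivalence of continuity of $\K$ with norm-continuity of the canonical feature map $\function{}{\inputS}{\HK'}{x}{\K(.,x) = \embK(\delta_x)}$, and the observation that every measure-embedding statement restricts to this feature map along the Diracs. For \ref{Ws01}$\Leftrightarrow$\ref{Ws02} I would use the identity
\begin{equation*}
	\normK{\K(.,x) - \K(.,y)}^2 = \K(x,x) - \K(x,y) - \K(y,x) + \K(y,y) \, :
\end{equation*}
joint continuity of $\K$ makes the right-hand side vanish as $y \to x$, giving norm-continuity of $x \mapsto \K(.,x)$; conversely $\K(x,y) = \ipdK{\K(.,y)}{\K(.,x)}$ together with the (joint) continuity of the inner product on $\HK'$ recovers continuity of $\K$.

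For the ``easy'' halves \ref{Ws07},\ref{Ws05},\ref{Ws06}$\Rightarrow$\ref{Ws02}, I would note that the Dirac map $x \mapsto \delta_x$ is continuous into each of $(\Mprob)_\narrow$, $(\M_{\scriptscriptstyle{+}})_\narrow$ and $(\Mfs \cap \M_{\scriptscriptstyle{+}})_\narrow$, because $x_\alpha \to x$ forces $f(x_\alpha) \to f(x)$ for every $f \in \Cont{}{b}$; along a convergent net the Diracs eventually lie in a single bounded subset of $(\Mc)_w$ (common compact support, unit total variation), which likewise gives \ref{Ws04}$\Rightarrow$\ref{Ws02}. Composing with the assumed continuous $\embK$ yields continuity of $x \mapsto \embK(\delta_x) = \K(.,x)$, i.e.\ \ref{Ws02}. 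Since $\Mprob$ and $\Mfs \cap \M_{\scriptscriptstyle{+}}$ carry the \emph{subspace} narrow topology of $\M_{\scriptscriptstyle{+}}$, continuity on $(\M_{\scriptscriptstyle{+}})_\narrow$ (condition~\ref{Ws06}) automatically restricts to \ref{Ws05} and \ref{Ws07}. Hence everything reduces to the two substantive implications \ref{Ws01}$\Rightarrow$\ref{Ws06} and \ref{Ws01}$\Rightarrow$\ref{Ws04}.

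The engine for both is Fubini (Theorem~\ref{theo:Fubini}): for positive $\mu,\nu$,
\begin{equation*}
	\normK{\embK(\mu) - \embK(\nu)}^2 = \Psi(\mu,\mu) - 2\,\mathrm{Re}\,\Psi(\mu,\nu) + \Psi(\nu,\nu) \, , \qquad \Psi(\mu,\nu) := \int_{\inputS \times \inputS} \K \diff (\mu \otimes \nu) \, .
\end{equation*}
Fixing $\nu = \mu$ and letting $\mu_\alpha \to \mu$, it suffices to show $\Psi(\mu_\alpha,\mu_\alpha) \to \Psi(\mu,\mu)$ and $\Psi(\mu_\alpha,\mu) \to \Psi(\mu,\mu)$, for then the squared distance tends to $0$. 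This follows once the tensor products converge (narrowly, resp.\ vaguely) and $\K$ is a bounded continuous integrand on the relevant region. The boundedness is free where needed: for \ref{Ws06}, embeddability of $\M_{\scriptscriptstyle{+}}$ forces $\sup_x \K(x,x) < \infty$, whence $|\K(x,y)| \leq \sqrt{\K(x,x)\K(y,y)}$ makes $\K \in \Cont{}{b}(\inputS \times \inputS)$; for \ref{Ws04} mere continuity already bounds $\K$ on any compact $K \times K$.

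The main obstacle is exactly this stability of narrow/vague convergence under tensor products, for \emph{nets} on a possibly non-metrisable locally compact Hausdorff space. For integrands $f(x)h(y)$ and finite sums thereof it is immediate, since $\Psi$ factorises into a product of convergent scalar nets; the difficulty is a general $g \in \Cont{}{b}(\inputS \times \inputS)$, which need not be uniformly approximable by such sums away from compact sets. In the $\Mc$ case~\ref{Ws04} this never arises: a bounded subset has a common compact support $K$ and uniformly bounded total variation, so one works on $K \times K$, where Stone--Weierstrass supplies uniform approximation of $g$ by sums of products and the bounded total variation controls the error (signed measures included). For the genuinely non-compact narrow case~\ref{Ws06} I would extract \emph{uniform tightness} of the net's tail directly from the convergence: testing against $\One \in \Cont{}{b}$ gives $\mu_\alpha(\inputS) \to \mu(\inputS)$, and choosing, by inner regularity of the Radon measure $\mu$, a compact $K$ with $\mu(\inputS \setminus K)$ small together with $f \in \Cont{}{c}$, $0 \leq f \leq 1$, $f \equiv 1$ on $K$, the estimate $\mu_\alpha(\inputS \setminus \supp f) \leq \mu_\alpha(\One) - \mu_\alpha(f) \to \mu(\One) - \mu(f)$ confines all but a uniformly small mass to a fixed compact set. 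Splitting $\Psi$ accordingly reduces the non-compact case to the compact estimate, which finishes it. (Note it is precisely the availability of $\One \in \Cont{}{b}$ that makes the \emph{narrow}, rather than vague, topology the right one for positive measures here.) Finally, continuity implies sequential continuity, as recalled in Section~\ref{sec:Preliminaries}, which yields the last sentence of the statement.
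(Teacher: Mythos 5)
Your proposal is correct and follows essentially the same route as the paper: reduce everything to the equivalence \mbox{(i)$\Leftrightarrow$(ii)} plus restriction to Dirac measures, and prove (i)$\Rightarrow$(iii),(vi) by expanding $\normK{\embK(\mu_\alpha)-\embK(\mu)}^2$ via Fubini and invoking continuity of the tensor-product map $(\mu,\nu)\mapsto\mu\otimes\nu$ for the relevant topology (weak on bounded sets of $\Mc$, resp.\ narrow on $\M_{\scriptscriptstyle{+}}$). The only difference is that where the paper cites this tensor-product continuity from Berg--Christensen--Ressel (narrow case) and Bourbaki (vague case on a common compact support), you sketch self-contained proofs via Stone--Weierstrass together with, in the narrow case, the extraction of eventual uniform tightness from $\mu_\alpha(\One)\to\mu(\One)$; both versions are sound.
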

%Note that \ref{Ws01} and \ref{Ws02} are still equivalent, if we replace ``sequentially continuous'' by ``continuous''. Whether the ``sequentially'' can be omitted in \ref{Ws03} and \ref{Ws04}, we do not know.
The $\sigma$ in \ref{Ws05} and \ref{Ws06} refer to the narrow topology $\sigma(\Mf, \Cont{}{b})$, which will be studied in more details in Section~\ref{sec:Narrow}.
For the general case $\D = \Dcomp{m}$, we state:
\begin{theorem} \label{theo:Ws}
	If $\K$ is $(m,m)$-times continuously differentiable, then the embedding $\function{\embK}{(\Dcomp{m})_w}{\HK'}{}{}$ is continuous over the bounded sets of $(\Dcomp{m})_w$. In particular, $\embK$ is sequentially continuous over $(\Dcomp{m})_w$.
\end{theorem}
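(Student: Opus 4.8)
The plan is to mirror the $m=0$ argument behind Theorem~\ref{theo:Ws0}, replacing measures by order-$m$ distributions and the kernel by its derivatives; the whole difficulty will be to promote \emph{pointwise} weak convergence to \emph{uniform} convergence on compacta. Since $\K \in \Cont{(m,m)}{}$, Proposition~\ref{prop:HKinCm} gives $\HK \csubset \Cont{m}{}$, so $\Dcomp{m} = (\Cont{m}{})'$ is embeddable and $\embK$ is well defined on it. I would fix a bounded set $\B \subset (\Dcomp{m})_w$ and a net $D_\alpha \to D$ in $\B$ for $w(\Dcomp{m}, \Cont{m}{})$. Writing $E_\alpha := D_\alpha - D$, the family $\{E_\alpha\}$ stays in a bounded set, $E_\alpha \to 0$ pointwise on $\Cont{m}{}$, and it suffices to prove $\normK{\embK(E_\alpha)} \to 0$. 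By Theorem~\ref{theo:Fubini} (Fubini), with the interpretation stated there,
\[
	\normK{\embK(E_\alpha)}^2 = \iint \K(x,y) \diff E_\alpha(y) \diff \bar E_\alpha(x) = \bar E_\alpha(g_\alpha), \qquad g_\alpha(x) := \int \K(x,y) \diff E_\alpha(y) = E_\alpha(\K(x,\hat y)),
\]
so everything reduces to controlling the single function $g_\alpha \in \Cont{m}{}$.

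Next I would extract equicontinuity. The space $\Cont{m}{}$ is a Fréchet space, hence barrelled, so a bounded (equivalently, weakly bounded) subset of its dual $\Dcomp{m}$ is \emph{equicontinuous}: there is a continuous seminorm $q$ on $\Cont{m}{}$ with $|E_\alpha(f)| \le q(f)$ for all $\alpha$ and all $f \in \Cont{m}{}$ (and likewise for $\bar E_\alpha$). Consequently $\normK{\embK(E_\alpha)}^2 = |\bar E_\alpha(g_\alpha)| \le q(g_\alpha)$, and it is enough to show that $g_\alpha \to 0$ \emph{in} $\Cont{m}{}$, i.e.\ that $\sup_{x \in K}|\partial^p g_\alpha(x)| \to 0$ for every compact $K \subset \inputS$ and every $|p| \le m$.

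This last step is the crux. Differentiating under the pairing --- legitimate because, by Lemma~\ref{lem:VectorDiff} and $\K \in \Cont{(m,m)}{}$, the map $x \mapsto \K(x,\hat y)$ is a $\Cont{m}{}$-valued $C^m$ function with derivatives $\partial^{(p,0)}\K(x,\hat y)$ --- gives $\partial^p g_\alpha(x) = E_\alpha\big(\partial^{(p,0)}\K(x,\hat y)\big)$. The key observation is that the map $\Psi_{p} : K \to \Cont{m}{}$, $x \mapsto \partial^{(p,0)}\K(x,\hat y)$, is continuous (by uniform continuity of the continuous functions $\partial^{(p,q)}\K$, $|q| \le m$, on compact sets), so its image $C_{p,K} := \Psi_p(K) \subset \Cont{m}{}$ is \emph{compact}, hence totally bounded for the seminorm $q$. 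Now $\{E_\alpha\}$ is equicontinuous and converges to $0$ pointwise on $\Cont{m}{}$; a standard finite-net argument (cover $C_{p,K}$ by finitely many $q$-balls and use the uniform bound $|E_\alpha(\cdot)| \le q(\cdot)$ off the centres) shows that an equicontinuous, pointwise-null family converges to $0$ \emph{uniformly on} $C_{p,K}$. Hence $\sup_{x \in K}|\partial^p g_\alpha(x)| = \sup_{h \in C_{p,K}} |E_\alpha(h)| \to 0$, as desired.

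Putting the pieces together yields $g_\alpha \to 0$ in $\Cont{m}{}$, so $q(g_\alpha) \to 0$, hence $\normK{\embK(E_\alpha)} \to 0$; this proves continuity of $\embK$ on every bounded $\B$. For the final ``in particular'', any $w(\Dcomp{m},\Cont{m}{})$-convergent sequence is bounded (Banach--Steinhaus, again via barrelledness of $\Cont{m}{}$), so it lies in some bounded $\B$, and sequential continuity of $\embK$ on $(\Dcomp{m})_w$ follows. I expect the main obstacle to be precisely the uniform-convergence step: weak convergence only delivers $g_\alpha$ and all its derivatives pointwise, and it is the equicontinuity coming from \emph{boundedness of $\B$} --- transported onto the compact set $C_{p,K} \subset \Cont{m}{}$ --- that upgrades this to convergence in $\Cont{m}{}$. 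This is also exactly why the statement must be restricted to bounded subsets.
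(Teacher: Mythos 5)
Your proof is correct, and it takes a genuinely more self-contained route than the paper's. The paper reduces everything to the weak continuity, on bounded sets, of the tensor-product map $\function{B}{(\Dcomp{m})_w \times (\Dcomp{m})_w}{(\Dcomp{(m,m)})_w}{(D,T)}{D \otimes T}$ (Lemma~\ref{lem:TensorM}), which it in turn deduces from a hypocontinuity result of Schwartz on $\epsilon$-tensor products, via the canonical isomorphism $\Cont{(m,m)}{}(\inputS \times \mathcal Y) \cong \Cont{m}{}(\inputS ; \Cont{m}{}(\mathcal Y))$ (Lemmas~\ref{lem:CanIso} and \ref{lem:MHypocontinuity}). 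What you have done is unwind that hypocontinuity argument by hand in the one case that is needed: you treat the inner integration $g_\alpha = E_\alpha(\K(\hat x, \hat y))$ explicitly, show $g_\alpha \to 0$ in $\Cont{m}{}$ by transporting the compact set $K \subset \inputS$ to a compact set $C_{p,K} \subset \Cont{m}{}$ and using that an equicontinuous pointwise-null net converges uniformly on compacta (Treves, Proposition~32.5), and then close the loop with the uniform seminorm bound $|\bar E_\alpha(g_\alpha)| \le q(g_\alpha)$ coming from Banach--Steinhaus. The two ingredients --- equicontinuity of weakly bounded sets in the dual of the barrelled space $\Cont{m}{}$, and coincidence of weak-star and compact convergence on equicontinuous sets --- are exactly the ones driving the paper's Lemma~\ref{lem:MHypocontinuity}, so the mechanism is identical; what your version buys is that it avoids the detour through $\Dcomp{(m,m)}(\inputS \times \mathcal Y)$ and the completed $\epsilon$-tensor product entirely, at the (small) cost of not producing the reusable tensor-product lemma. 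One minor point: the fact that $x \mapsto \K(x,\hat y)$ is a $\Cont{m}{}$-valued $C^m$ map with derivatives $\partial^{(p,0)}\K(x,\hat y)$ is really the content of Lemma~\ref{lem:CanIso} (Schwartz's Propositions~9 and~12) rather than of Lemma~\ref{lem:VectorDiff}, which concerns the RKHS-valued feature map; the statement you need is true, but the citation should be adjusted.
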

\begin{proof}{[of Theorems~\ref{theo:Ws0} and \ref{theo:Ws}]}
	First, a word on Theorem~\ref{theo:Ws0}. That \ref{Ws01} and \ref{Ws02} are equivalent is proven for example by \citet[Theorem~4.29]{steinwart08}. Furthermore, it is clear that any of the points \ref{Ws04} to \ref{Ws06} imply \ref{Ws02} and that \ref{Ws06} $\Rightarrow$ \ref{Ws05} $\Rightarrow$ \ref{Ws07}. Thus, we are left with proving that \ref{Ws01} implies both \ref{Ws04} and \ref{Ws06}. The proof of these implications as well as of Theorem~\ref{theo:Ws} follow the same proof-schema, which we illustrate on the proof of \ref{Ws01} $\Rightarrow$ \ref{Ws04}.
	
\begin{enumerate}
	\item Let $\mu_\alpha$ be a net contained in a bounded subset $\mathcal B$ of $\Dcomp{m}$ such that $\mu_\alpha \rightarrow 0$ in $w(\Mc,\Cont{}{}) \cap \mathcal B$.
	\item Show that, because $\K \in \Cont{}{}(\inputS \times \inputS)$, the map
		\begin{equation*}
			\function{B}{(\Mc(\inputS))_w \times (\Mc(\inputS))_w}{(\Mc(\inputS \times \inputS))_w}{(\mu,\nu)}{\mu \otimes \nu}
		\end{equation*}
		is (weakly) continuous when restricted to $\mathcal B \times \mathcal B$.
	\item Conclude that $\normK{\mu_\alpha}^2 = [\mu_\alpha \otimes \bar \mu_\alpha](\K) \rightarrow 0$.
\end{enumerate}
Appendix~\ref{sec:Proofs} details how to adapt the input and output sets of $B$ to prove each remaining implication. The continuity of each such map $B$ is also proven there. Finally, continuity over bounded sets implies sequential continuity (over the entire space $\D$), because any convergent sequence is bounded. (In general, however, it need not imply continuity over unbounded sets.)
\end{proof}

\begin{remark}
	Theorem~\ref{theo:Ws0} shows that even if $\Mc$ embeds into $\HK'$, the embedding may not be continuous from $(\Mc)_w$ into $\HK'$. Indeed, there exist non-continuous kernels such that $\HK \subset \Cont{}{}$ \citep[see][]{lehto52}. With Proposition~\ref{prop:ContinuousRKHS}, this implies $\HK \csubset \Cont{}{}$, thus $\Mc$ embeds into $\HK'$. But as $\K$ is not continuous, $(\Mc)_w \csubset \HK'$ does not hold.
\end{remark}

	In Theorem~\ref{theo:Ws0}, one may replace $(\Mc)_w$ by its dense subset $\Mfs$ equipped with the induced weak topology $w(\Mc, \Cont{}{}) \cap \Mfs$. This shows we could have defined the KME as follows.
	\begin{enumerate}
		\item Embed $\Mfs$ into $\HK$ ;
		\item Note that this embedding is continuous when $\Mfs$ is equipped with the weak topology induced by $\Mc$;
		\item Extend this embedding by continuity to $\Mc$. Theorem~\ref{theo:Ws0} then guarantees that this extension coincides with our previously defined KME.
	\end{enumerate}
	This technique of continuous extension has often been used as a starting point to embed measures into $\HK$ (\citealp{matheron73, guilbart78} or \citealp[Chapter~4]{berlinet04}). The following proposition shows that for $\Dcomp{\infty}$, this continuous extension technique leads to the same KME definition as ours. Indeed, $\Mfs$ is dense in $\Dcomp{\infty}$, when equipped with the weak or strong topology. Thus Theorem~\ref{theo:Ws} immediately yields

\begin{proposition}\label{prop:UniqueExtension}
	Let $\K \in \Cont{(\infty,\infty)}{}$. The KME of $\Dcomp{\infty}$ is the unique continuous linear extension of the KME of $\Mfs$, when $\Mfs$ is equipped with the (weak or strong) topology induced by $\Dcomp{\infty}$.
\end{proposition}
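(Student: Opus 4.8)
The plan is to establish two facts separately: that $\embK$ restricted to $\Dcomp{\infty}$ is a continuous linear map agreeing with the finite-support KME on $\Mfs$ (existence of the extension), and that any continuous linear map $\Dcomp{\infty}\to\HK'$ agreeing with it on $\Mfs$ must coincide with it (uniqueness). Existence is the easy part. Since $\K\in\Cont{(\infty,\infty)}{}$, Corollary~\ref{cor:EmbedDuals} makes $\Dcomp{\infty}$ embeddable, so $\embK$ is a well-defined linear map on the vector space $\Dcomp{\infty}$; because $\Mfs\subset\Dcomp{\infty}$, its restriction to $\Mfs$ is by construction the KME of finitely supported measures, so $\embK|_{\Dcomp{\infty}}$ is a linear extension. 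Its continuity is exactly Theorem~\ref{theo:Ws} taken at $m=\infty$: $\embK$ is continuous, with values in $\HK'$ endowed with the RKHS norm, on the bounded subsets of $(\Dcomp{\infty})_w$, hence a fortiori on the bounded subsets of $(\Dcomp{\infty})_b$ as well.

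First I would reconcile the two candidate domain topologies so that the clause ``(weak or strong)'' is handled uniformly. As $\Dcomp{\infty}$ is a Montel space, its bounded sets are relatively compact and, on each of them, the weak and strong topologies coincide: the identity from a bounded set with its strong topology to the same set with its weak topology is a continuous bijection from a compact space into a Hausdorff one, hence a homeomorphism. In particular the weak and strong topologies share the same bounded sets and the same convergent sequences (as already recorded for Montel spaces), so the continuity furnished by Theorem~\ref{theo:Ws} is the same statement whether one reads $(\Dcomp{\infty})_w$ or $(\Dcomp{\infty})_b$. To promote continuity-on-bounded-sets to genuine topological continuity of the linear map $\embK:\Dcomp{\infty}\to\HK'$, I would invoke the functional-analytic structure of $\Dcomp{\infty}$: being the strong dual of the Fréchet--Montel space $\Cont{\infty}{}$, it is a bornological (DF)-space, and a linear map from such a space into a Banach space is continuous as soon as it maps bounded sets to bounded sets; continuity on the (relatively compact) bounded sets gives exactly this boundedness, since the image of the compact closure of a bounded set is compact and hence bounded. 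This confirms that $\embK|_{\Dcomp{\infty}}$ is a genuinely continuous linear extension.

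For uniqueness I would use the density of $\Mfs$ in $\Dcomp{\infty}$ (stated just above the proposition, valid for both the weak and the strong topology). Let $\Psi_1,\Psi_2:\Dcomp{\infty}\to\HK'$ be two continuous linear extensions; then $\Psi_1-\Psi_2$ is a continuous linear map vanishing on $\Mfs$, and its kernel is closed because $\HK'$ is Hausdorff. A closed set containing the dense subspace $\Mfs$ is the whole space, so $\Psi_1=\Psi_2$; combined with existence, $\embK|_{\Dcomp{\infty}}$ is \emph{the} continuous linear extension. The main obstacle I anticipate is precisely the middle step: Theorem~\ref{theo:Ws} only delivers continuity over bounded sets (and sequential continuity), whereas the closed-kernel uniqueness argument needs honest net-continuity. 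Bridging this gap cleanly is what forces one to use that $\Dcomp{\infty}$ is Montel and bornological; if one preferred to avoid that machinery, the alternative is to check that $\Mfs$ is \emph{sequentially} dense and to run the entire argument with sequential continuity, which is all that matters for the applications the proposition is aimed at.
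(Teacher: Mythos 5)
Your proof is correct and follows essentially the same route as the paper's: the paper obtains the proposition in one line from the density of $\Mfs$ in $\Dcomp{\infty}$ (for both the weak and the strong topology) combined with the continuity supplied by Theorem~\ref{theo:Ws}, which is exactly your existence-plus-density-uniqueness scheme. The only difference is that you explicitly bridge the gap from continuity over bounded sets to genuine continuity of the linear map via the Montel and bornological structure of $\Dcomp{\infty}$ --- a step the paper leaves implicit but which is indeed needed for the closed-kernel uniqueness argument, and which you justify correctly.
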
	
%	 that the same technique of continuous extension could be used to embed compactly supported distributions.%The advantage of this point of view, is that it frees itself from the obligation to find pre-duals, and relies only on the topology that is originally put on $\Mfs$. The link between universality and characteristicness however is then more artificial.

\subsection{When does \texorpdfstring{$\K$}{K} metrise the topology of \texorpdfstring{$\F'$}{F'}?\label{sec:Q2}}%Answering \ref{Q2} on Bounded Sets of $\F'$

The two preceding sections dealt solely with question \ref{Q1}. %, when considering the weak and strong dual topologies over $\F'$ and $\HK'$.
They inquired under which conditions the weak and strong topologies induced by $\K$ in a given dual $\F'$ are weaker than the original topology.  To answer question \ref{Q2}, we now investigate under which conditions they are stronger.

	First, the kernel \emph{must} be characteristic to $\F'$. If it was not, the weak and strong kernel induced topologies would not be Hausdorff (and the semi-metric $d_\K$ not be a metric), thus could not be stronger than any topology we will consider. %, which are Hausdorff.
So from now on, we assume that $\K$ is characteristic to the set to embed.
%\begin{remark}
%	The two preceding sections do not assume that the kernel is characteristic to $\F'$. Nevertheless, if it is not, the preceding reasoning shows that all inclusions of topologies ---those in Equation~\eqref{eq:ContinuousEmbedding} and in Theorem~\ref{theo:Ws}--- are strict.
%\end{remark}

	Second, we will suppose that $\F$ is barrelled (see Definition in Appendix~\ref{sec:Reminders}). This is merely a technical assumption used to apply the Banach-Steinhaus theorem (see Appendix~\ref{sec:Reminders}, Theorem~\ref{theo:BS}) in the proof of Theorem~\ref{theo:Q2}. In practice, it is very unrestrictive: any Banach, Fréchet or Limit-Fréchet\footnote{Example of Limit-Fréchet spaces which are not Fréchet spaces are $\Cont{m}{c}$ and $\Dcomp{\infty}$ \citep{schwartzTD}.} space is barrelled. In particular, \emph{all function spaces defined in Section~\ref{sec:Definitions} are barrelled, except $(\Cont{m}{b})_c$ and perhaps $\Func$.}
	
	Third, we will focus on %comparing the topologies only when restricted to 
	bounded subsets of $\F'$. Let us justify why. % this restriction may ease the comparison of topologies be relevant and what makes bounded sets attractive.
First, if a sequence converges in a Hausdorff topology, then it is bounded. Thus, two topologies that coincide on every bounded set define the same convergent sequences. Second, one of the most used sets of distributions is bounded: the set of probability measures $\Mprob$. (Indeed, the topology of $\Mf$ is given by the total variation norm $\norm{.}_{TV}$ and, for any $P \in \Mprob$, $\norm{P}_{TV} =  \int \diff P = 1$.) But would $\Mprob$ still be bounded, had $\Mf$ been equipped with its weak or with its narrow topology? Yes. This leads to a third good reason to consider bounded sets: \emph{if $\F$ is barrelled, then the Banach-Steinhaus theorem implies that all topologies stronger than the weak but weaker than the strong dual topology define the same bounded sets over $\F'$}. %\footnote{It is not difficult to see that the narrow topology on $\Mf$ is stronger than the weak and weaker than the strong dual topology.}
In other words, for any such topology, if $\F$ is barrelled, a subset $\mathcal B$ of $\F'$ is bounded iff
\begin{equation}\label{eq:BoundedSets}
	\forall f \in \F \,, \qquad \sup_{D \in \mathcal B} |D(f)| < \infty \, .
\end{equation}
The narrow topology is clearly such a topology: it is stronger than the weak-star, but weaker than the strong dual topology. 

To introduce our next theorem, consider a space of finite Borel measures defined on a locally compact Hausdorff space $\inputS$. There are two common options to construct $\Mf$ as a dual space. One is to consider the space $\Cont{}{c}$ endowed with the topology of uniform convergence. The other is to consider its completion, $\Cont{}{0}$. Their duals both identify with $\Mf$. But do their weak dual topologies coincide? If $\inputS$ is not compact, they do not.\footnote{Proof: Equipped with $w(\Mf, \Cont{}{0})$, the dual of $\Mf$ is $\Cont{}{0}$. But equipped with $w(\Mf, \Cont{}{c})$ it is $\Cont{}{c}$.} The Banach-Steinhaus theorem, however, guarantees that, at least, they coincide on all bounded subsets of $\Mf$. The following theorem, proved in Appendix~\ref{proof:Q2}, states essentially the same, but with $\HK$ and $\F$ instead of $\Cont{}{c}$ and $\Cont{}{0}$.

\begin{theorem}\label{theo:Q2}
	Let $\F$ be a barrelled space such that $\HK \csubset \F$ and that $\K$ be universal over $\F$. On any bounded subset $\B$ of $\F'$, the weak-star topology coincides with the weak topology induced by $\K$. Thus the topology induced by the semi-metric $d_\K$ dominates the weak-star topology over any bounded subset of $\F'$ and
	\begin{equation}\label{eq:Q2s}
		b(\F',\F) \cap \B \ \supset \ b(\HK',\HK) \cap \B \ \supset \ w(\F',\F) \cap \B \ = \ w(\HK',\HK) \cap \B \, .
	\end{equation}
\end{theorem}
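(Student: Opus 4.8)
The plan is to establish the chain \eqref{eq:Q2s} by treating its three comparisons separately, the rightmost equality being by far the substantial one. Since $\K$ is universal over $\F$, Theorem~\ref{theo:UniversalMeansCharacteristic} guarantees it is characteristic to $\F'$, so the KME is injective and all four topologies in \eqref{eq:Q2s} are genuine (Hausdorff) topologies on $\B$; following the conventions of Section~\ref{sec:Preliminaries} I identify $\F'$ with its image in $\HK'$, so that $b(\HK',\HK)\cap\B$ and $w(\HK',\HK)\cap\B$ are the pullbacks under $\embK$ of the norm and weak topologies of $\HK'$.

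The two flanking comparisons are cheap. The leftmost inclusion $b(\F',\F)\cap\B \supset b(\HK',\HK)\cap\B$ is just the restriction to $\B$ of the strong continuity statement $\embK : \F'_b \csubset (\HK')_b$ supplied by Proposition~\ref{prop:ContinuousEmbedding}. For the middle inclusion I would observe that on $\HK'$ the norm topology unconditionally dominates the weak topology, so $b(\HK',\HK)\cap\B \supset w(\HK',\HK)\cap\B$ for free; granting the rightmost equality, this reads exactly $b(\HK',\HK)\cap\B \supset w(\F',\F)\cap\B$. One half of the rightmost equality is likewise immediate: since $\HK \subset \F$, pointwise convergence on $\F$ forces pointwise convergence on $\HK$, i.e.\ $w(\F',\F) \supset w(\HK',\HK)$, which survives restriction to $\B$.

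Everything therefore hinges on the reverse inclusion $w(\HK',\HK)\cap\B \supset w(\F',\F)\cap\B$, and this is the genuinely delicate step, where both remaining hypotheses enter. Concretely I must show that a net $D_\alpha$ in $\B$ with $D_\alpha(g)\to D(g)$ for every $g\in\HK$ already satisfies $D_\alpha(f)\to D(f)$ for every $f\in\F$. Because $\F$ is barrelled and $\B$ is bounded---so that $\sup_{E\in\B}|E(f)|<\infty$ for each $f$, cf.\ \eqref{eq:BoundedSets}---the Banach-Steinhaus theorem (Theorem~\ref{theo:BS}) makes $\B$ \emph{equicontinuous}; equivalently there is a continuous seminorm $q$ on $\F$ with $|E(f)|\le q(f)$ for all $E\in\B$ and all $f\in\F$. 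Fixing $f\in\F$ and $\epsilon>0$, I would use universality (density of $\HK$ in $\F$) to pick $g\in\HK$ with $q(f-g)<\epsilon$, so that for every $\alpha$, using $D_\alpha,D\in\B$,
\begin{equation*}
	|D_\alpha(f) - D(f)| \le |D_\alpha(f-g)| + |D_\alpha(g) - D(g)| + |D(g-f)| \le 2\epsilon + |D_\alpha(g) - D(g)| \, .
\end{equation*}
The middle term tends to $0$ by hypothesis, giving $\limsup_\alpha |D_\alpha(f)-D(f)| \le 2\epsilon$, and letting $\epsilon\to 0$ yields $D_\alpha(f)\to D(f)$. This is precisely the classical fact that on an equicontinuous set the weak-star topology coincides with the topology of pointwise convergence on a dense subspace. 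The crux, and the one place real work is needed, is the equicontinuity: it is exactly what lets the density approximation be performed \emph{uniformly in $\alpha$}, and it is for this that barrelledness (to invoke Banach-Steinhaus) and boundedness of $\B$ are indispensable.
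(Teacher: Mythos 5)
Your proposal is correct and follows essentially the same route as the paper's proof: the paper also reduces everything to the classical fact that on an equicontinuous subset of $\F'$ the weak-star topology coincides with the topology of pointwise convergence on a dense subspace, invokes the Banach--Steinhaus theorem to identify bounded and equicontinuous sets in the dual of the barrelled space $\F$, and disposes of the two flanking inclusions via Proposition~\ref{prop:ContinuousEmbedding} and the norm-dominates-weak observation. The only difference is presentational: the paper cites Treves (Proposition~32.5 and Theorem~33.1) for the equicontinuity-plus-density step, whereas you write out the underlying three-epsilon argument explicitly, which is a faithful rendering of the same idea.
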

Theorem~\ref{theo:Q2} may seem abstract, but for sequences, it simply means the following.
\begin{corollary} \label{cor:Q2}
	Let $\F$ and $\K$ be as in Theorem~\ref{theo:Q2}. Let $D_n$ be a bounded sequence in $\F'$, and $D \in \F'$.
%	Then:
	\begin{enumerate}
		\item $D_n(f) \rightarrow D(f)$ for any $f \in \F$ \gls{iff} $\bar D_n(f) \rightarrow \bar D(f)$ for any $f \in \HK$.
		\item If $\normK{D_n - D} \rightarrow 0$\,, then $D_n(f) \rightarrow D(f)$ for any $f \in \F$.
		\item If $D_n \rightarrow D$ in $b(\F',\F)$, then $\normK{D_n - D} \rightarrow 0$.
	\end{enumerate}
\end{corollary}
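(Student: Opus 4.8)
The plan is to derive all three items as sequential readings of the chain of topology inclusions \eqref{eq:Q2s} furnished by Theorem~\ref{theo:Q2}, so essentially no new analysis is needed. The one preliminary observation I would make is that $\B := \{ D_n \mid n \in \nat \} \cup \{ D \}$ is a bounded subset of $\F'$: the sequence $(D_n)_n$ is bounded by hypothesis, a singleton is bounded, and a finite union of bounded sets is bounded in a TVS. Moreover, since $\F$ is barrelled, the Banach--Steinhaus characterisation \eqref{eq:BoundedSets} shows that boundedness is one and the same notion for every topology lying between the weak-star and the strong dual topology; hence $\B$ is bounded for all topologies occurring in \eqref{eq:Q2s}, and Theorem~\ref{theo:Q2} applies to it verbatim. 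Throughout I would also use the elementary fact that, for a sequence contained in $\B$ together with its limit point, convergence in any ambient topology is equivalent to convergence in the relative topology induced on $\B$ (the relative neighbourhoods of the limit are exactly the intersections with $\B$).

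For item~(i), Theorem~\ref{theo:Q2} gives the equality $w(\F',\F) \cap \B = w(\HK',\HK) \cap \B$, so $D_n$ converges to $D$ in $w(\F',\F)$ \gls{iff} it converges in $w(\HK',\HK)$. The former is literally ``$D_n(f) \to D(f)$ for all $f \in \F$''. For the latter I would invoke the defining identity \eqref{eq:KMECharacterisation}, namely $\ipdK{g}{\embK(D_n)} = \bar D_n(g)$, so that weak convergence of the embeddings, $\ipdK{g}{\embK(D_n)} \to \ipdK{g}{\embK(D)}$ for all $g \in \HK$, reads exactly as $\bar D_n(g) \to \bar D(g)$ for all $g \in \HK$. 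Combining the two descriptions yields~(i).

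Items~(ii) and~(iii) are then read off the dominations in \eqref{eq:Q2s}, again passing freely between ambient and relative convergence on $\B$. For~(ii), the inclusion $b(\HK',\HK) \cap \B \supset w(\F',\F) \cap \B$ states that the strong kernel topology dominates the weak-star topology on $\B$; hence $\normK{D_n - D} \to 0$ (convergence in $b(\HK',\HK)$) forces convergence in $w(\F',\F)$, i.e.\ $D_n(f) \to D(f)$ for all $f \in \F$. For~(iii), the inclusion $b(\F',\F) \cap \B \supset b(\HK',\HK) \cap \B$ states that the strong dual topology of $\F'$ dominates the strong kernel topology; hence $D_n \to D$ in $b(\F',\F)$ forces $\normK{D_n - D} \to 0$. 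I would also note that~(iii) is merely the sequential form of the continuity $\embK : \F'_b \csubset (\HK')_b$ already supplied by Proposition~\ref{prop:ContinuousEmbedding}, and in that guise needs neither boundedness nor barrelledness.

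I expect no genuine obstacle, since Theorem~\ref{theo:Q2} carries all the content; the only care required is bookkeeping. The three points to watch are: that it is precisely the boundedness of the enlarged set $\B$ which licenses the use of Theorem~\ref{theo:Q2} (whose conclusions hold only on bounded sets); the harmless but necessary translation between convergence in the ambient and in the relative topologies; and keeping track of the complex conjugation in~(i), which appears because the KME is the \emph{conjugate} restriction map and is made explicit through \eqref{eq:KMECharacterisation}.
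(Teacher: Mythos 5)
Your proposal is correct and matches the paper's intent exactly: the paper gives no separate proof of Corollary~\ref{cor:Q2}, presenting it as the sequential reading of the inclusion chain \eqref{eq:Q2s} from Theorem~\ref{theo:Q2}, which is precisely what you carry out. Your added bookkeeping (forming the bounded set $\B = \{D_n\} \cup \{D\}$, noting via Banach--Steinhaus that boundedness is topology-independent, and tracking the conjugation through \eqref{eq:KMECharacterisation}) is sound and only makes explicit what the paper leaves implicit.
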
 
With $\F = \Cont{m}{}$, we get the following important and stunning corollary.
\begin{corollary} \label{cor:Q2onDcomp}
	Let $\K$ be a kernel in $\Cont{(m,m)}{}$, and let $\mathcal B$ be a bounded subset of $\Dcomp{m}$. Then
	\begin{equation}\label{eq:Q2bis}
		b(\F',\F) \cap \B \ \supset \ b(\HK',\HK) \cap \B \ = \ w(\F',\F) \cap \B \ = \ w(\HK',\HK) \cap \B \, .
	\end{equation}
	In particular, on bounded subsets of $\Dcomp{m}$, the following statements are equivalent.
	\begin{enumerate}[series=equiv]
		\item For any $f \in \F$\,, $D_n(f) \rightarrow D(f)$. \label{conv1}
		\item For any $f \in \HK$\,, $D_n(f) \rightarrow D(f)$. \label{conv2}
		\item $\normK{D_n - D} \rightarrow 0$. \label{conv3}
	\end{enumerate}
	Moreover, when $m = \infty$, the inclusion in \eqref{eq:Q2bis} is an equality and \ref{conv1}~-~\ref{conv3} are equivalent to:
	\begin{enumerate}[resume=equiv]
		\item $D_n \rightarrow D$ in $b(\Dcomp{\infty},\Cont{\infty}{})$. \label{conv4}
	\end{enumerate}
\end{corollary}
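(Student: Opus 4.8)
The plan is to read off \eqref{eq:Q2bis} as the conjunction of two already-available facts specialised to $\F = \Cont{m}{}$ (so that $\F' = \Dcomp{m}$), and then to handle the case $m = \infty$ separately via the Montel property. First I would record the hypotheses needed to start the machinery: since $\K \in \Cont{(m,m)}{}$, Proposition~\ref{prop:HKinCm} gives $\HK \csubset \Cont{m}{}$; moreover $\Cont{m}{}$ is a Fréchet space, hence barrelled. Assuming $\K$ is universal over $\Cont{m}{}$ (equivalently, by Theorem~\ref{theo:UniversalMeansCharacteristic}, characteristic to $\Dcomp{m}$), Theorem~\ref{theo:Q2} applies verbatim and yields, on any bounded $\B \subset \F' = \Dcomp{m}$, the chain
\begin{equation*}
	b(\F',\F) \cap \B \ \supset \ b(\HK',\HK) \cap \B \ \supset \ w(\F',\F) \cap \B \ = \ w(\HK',\HK) \cap \B \, .
\end{equation*}
The only gap between this and \eqref{eq:Q2bis} is that the middle $\supset$ must be promoted to an equality.

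That promotion is exactly the content of Theorem~\ref{theo:Ws}. Since $\K$ is $(m,m)$-times continuously differentiable, Theorem~\ref{theo:Ws} tells us that $\embK \colon (\Dcomp{m})_w \to \HK'$ is continuous over bounded sets; unwinding the definitions, this says that a bounded net $D_\alpha \to D$ in $w(\F',\F)$ forces $\normK{D_\alpha - D} \to 0$, i.e. $w(\F',\F) \cap \B \supset b(\HK',\HK) \cap \B$. Combined with the reverse inclusion from Theorem~\ref{theo:Q2}, the two topologies coincide on $\B$, which is precisely \eqref{eq:Q2bis}. The equivalence of \ref{conv1}--\ref{conv3} is then the restatement of \eqref{eq:Q2bis} for sequences: \ref{conv1} $\Leftrightarrow$ \ref{conv2} is the equality $w(\F',\F)\cap\B = w(\HK',\HK)\cap\B$, while \ref{conv1} $\Rightarrow$ \ref{conv3} comes from Theorem~\ref{theo:Ws} and \ref{conv3} $\Rightarrow$ \ref{conv2} holds trivially since $\HK \subset \F$, closing the cycle.

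For $m = \infty$ I would invoke that $\Dcomp{\infty}$ is a Montel space: on its bounded subsets the weak-star and strong dual topologies agree (equivalently, a bounded, weak-star convergent sequence converges strongly). Hence $b(\F',\F) \cap \B = w(\F',\F) \cap \B$, so the first inclusion in \eqref{eq:Q2bis} also becomes an equality and statement~\ref{conv4} ($D_n \to D$ in $b(\Dcomp{\infty},\Cont{\infty}{})$) joins the equivalence class.

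I expect the main obstacle to be conceptual rather than computational: making sure the universality (density of $\HK$ in $\Cont{m}{}$) that Theorem~\ref{theo:Q2} requires is genuinely in force, because it is precisely the step \ref{conv2} $\Rightarrow$ \ref{conv1} — that an equicontinuous (i.e.\ bounded) net converging pointwise on the dense subspace $\HK$ must converge on all of $\Cont{m}{}$ — that fails otherwise; for a non-universal $\K$, e.g.\ a finite-rank kernel, weak kernel convergence plainly does not imply weak-star convergence. The rest is bookkeeping: tracking which direction of each inclusion comes from Theorem~\ref{theo:Q2} versus Theorem~\ref{theo:Ws}, and being careful that every claim is read as a relative topology on the fixed bounded $\B$, where, thanks to barrelledness and the Banach--Steinhaus theorem, boundedness is unambiguous across all topologies between the weak-star and the strong dual.
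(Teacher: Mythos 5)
Your proof is correct and follows essentially the same route as the paper's: combine Theorem~\ref{theo:Ws} (which gives $w(\F',\F)\cap\B \supset b(\HK',\HK)\cap\B$ on bounded sets) with Theorem~\ref{theo:Q2} (which gives the reverse inclusion together with $w(\F',\F)\cap\B = w(\HK',\HK)\cap\B$), then invoke the Montel property of $\Dcomp{\infty}$ for the case $m=\infty$. Your explicit remark that the universality of $\K$ over $\Cont{m}{}$ must be in force is a useful clarification of a hypothesis the paper only states at the level of the surrounding section.
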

\begin{proof}
	Equality~\eqref{eq:Q2bis} is a straightforward consequence of Theorems~\ref{theo:Ws} and~\ref{theo:Q2}. The statements concerning the special case $m=\infty$ stem from the fact that, because $\Dcomp{\infty}$ is a Montel space, on bounded subsets, its weak and strong dual topologies coincide.
\end{proof}

Before closing this subsection, we group a few remarks.
\needspace{2\baselineskip}
\begin{remark}\label{rem:Q2onDcomp}
	\leavevmode
	\begin{enumerate}[label = (\roman*), itemsep=1ex]
		\item Two topologies that coincide on every bounded subset of a topological space $X$ need not coincide on the whole space $X$. However, as any convergent sequence is bounded, they both define the same convergent sequences. (Thus, if both topologies are sequential and coincide on the bounded subsets of $X$, then they coincide on the whole space $X$.)
		\item A subset $\mathcal B$ of $\Mf$ is bounded \gls{iff} it is uniformly bounded for the total variation norm, which is equivalent to the seemingly weaker condition~\eqref{eq:BoundedSets}. It is bounded in $\Mc$ \gls{iff} it is bounded in $\Mf$ (or even $\Mr$) and if all its elements have their support in a common compact subset of $\inputS$. In that case, $w(\Mc, \Cont{}{}) \cap \B = \sigma(\Mc, \Cont{}{b}) \cap \B$. More generally, $\mathcal B$ is bounded in $\Dcomp{m}$ \gls{iff} it is bounded in $\DLone{m}$ (or even $\Dall{m}$) and if the support of its elements are all contained in a common compact subset of $\inputS$.\label{rem:p2}
		\item In Theorem~\ref{theo:Q2}, $\F$ continuously embeds into $\HK$, thus $\F'$ continuously embeds into $\HK'$. So bounded subsets of $\F'$ are also bounded for the kernel-induced metric. In general, the converse is false: if a subset $\B$ of $\F'$ is $d_\K$-bounded, it need not be bounded in the original topology of $\F'$. Consider for example an unbounded sequence $x_n$ in $\inputS=\real^d$ and a bounded kernel $\K$. Then $\embK(\delta_{x_n})$ is bounded in $\HK$\,, but $\delta_{x_n}$ is not bounded in $\Mc$\,, because the supports of the measures $\delta_{x_n}$ are not contained in a common compact subset of $\inputS$.
		\item Corollary~\ref{cor:Q2onDcomp} states that on bounded subsets $\B$ of $\Dcomp{m}$, the weak and strong topologies of $\HK$ coincide. This may seem surprising, as, in an infinite dimensional Hilbert space, even on bounded sets, weak and strong topologies in general do not coincide. For example, Bessel's inequality shows that any infinite orthonormal sequence converges weakly to 0. But it certainly does not converge strongly to 0. However, there again, $\B$ is not any bounded subset of $\HK$~: it is both included in $\Dcomp{m}$ and bounded for its topology (which is stronger than that of $\HK$~!).
	\end{enumerate}
\end{remark}

So far, we focused on the two most common dual topologies: the weak and the strong dual topologies. Probability theory, however, certainly uses the narrow topology $\narrow(\Mf, \Cont{}{b})$ even more. So let us see how it compares with the topologies induced by $\K$.

\subsection{When does \texorpdfstring{$\K$}{K} Metrise the Narrow Convergence?\label{sec:Narrow}}

A kernel $\K$ metrises the narrow topology on a set $\B \subset \Mf(\inputS)$ if the strong topology induced by $\K$ coincides on $\B$ with the narrow topology; i.e., if: $b(\HK',\HK) \cap \B = \narrow(\Mf, \Cont{}{b}) \cap \B$. When $\B$ is a bounded set consisting of \emph{positive} measures over a metrisable and separable (not necessarily locally compact) domain $\inputS$, the narrow convergence topology is metrisable\footnote{They are for example metrised by the Dudley or the Prokhorov metrics \citep[see][]{dudley02}.}. In that case, it is entirely defined  on $\B$ by its convergent sequences, and we can equivalently define: $\K$ metrises the narrow topology on a bounded set $\B \subset \Mf$ of positive measures iff: for any $\mu,\mu_1,\mu_2, \ldots \in \Mf(\inputS)$, $\mu_n \rightarrow \mu$ narrowly \gls{iff} $\embK(\mu_n) \rightarrow \embK(\mu)$ in RKHS norm.

The quest for kernels that do metrise the narrow topology dates back at least to the 1970s. Criteria were already given by \citet{guilbart78} \citep[see also][Theorems~100 and 101]{berlinet04}. More recently, \citet[Theorem~23]{sriperumbudur10b}\footnote{Note that they call ``weak'' what we call ``narrow''.} pointed out that \emph{on compact domains $\inputS$}, any $c_0$-universal kernel metrises the narrow convergence topology on $\Mprob$. Putting together Theorems~\ref{theo:Ws0} and \ref{theo:Q2} immediately yields a similar result (Corollary~\ref{cor:MetricOnCompact}). But things become trickier on unbounded domains. As we will see, the narrow topology is in general stronger than (the topology of) the kernel-induced metric $d_\K$ (Proposition~\ref{prop:WeakerThanNarrow}), and often \emph{strictly} stronger (Proposition~\ref{prop:NonMetrization}). But there is an important exception: when focusing on probability measures (Proposition~\ref{prop:MetrizationOfNarrow})! %, instead of considering the whole space $\Mf$, one restricts it to the set of \emph{probability} measures $\Mprob$, then both topologies suddenly can match again (Proposition~\ref{prop:MetrizationOfNarrow})!

We start with the case when the input set $\inputS$ is compact. Then $\Mf(\inputS) = \Mc(\inputS)$ and $w(\Mf,\Cont{}{0}) = w(\Mc,\Cont{}{})$. Thus Theorems~\ref{theo:Ws0} and \ref{theo:Q2} give:
\begin{corollary}\label{cor:MetricOnCompact}
	Let $\inputS$ be a compact Hausdorff space. A kernel $\K$ metrises the narrow topology on any bounded set of $\Mf$ \gls{iff} it is continuous and $c_0$-universal. In particular, a continuous $c_0$-universal kernel metrises the narrow topology over $\Mprob$.
\end{corollary}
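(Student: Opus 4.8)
The plan is to prove both implications by specialising Theorems~\ref{theo:Ws0} and~\ref{theo:Q2} to $\F = \Cont{}{0}$ and exploiting the simplifications available on a compact Hausdorff $\inputS$: there $\Mf = \Mc$ and $\Cont{}{0} = \Cont{}{b} = \Cont{}{}$, so the narrow, weak-star and vague topologies all coincide, $\narrow(\Mf,\Cont{}{b}) = w(\Mf,\Cont{}{0}) = w(\Mc,\Cont{}{})$. Throughout, $\B$ will denote an arbitrary bounded subset of $\Mf$, and I will argue at the level of the relative topologies. For the ``if'' direction, assume $\K$ is continuous and $c_0$-universal. Since $c_0$-universality presupposes $\HK \csubset \Cont{}{0}$ and $\Cont{}{0}$ is a Banach space (hence barrelled), Theorem~\ref{theo:Q2} applies with $\F = \Cont{}{0}$ and gives $b(\HK',\HK) \cap \B \supset w(\Mf,\Cont{}{0}) \cap \B$; by the compact identifications this reads $b(\HK',\HK) \cap \B \supset \narrow(\Mf,\Cont{}{b}) \cap \B$. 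For the reverse domination I would invoke continuity of $\K$: by Theorem~\ref{theo:Ws0} (condition~\ref{Ws01}~$\Rightarrow$~\ref{Ws04}) the map $\embK : (\Mc)_w \to \HK'$ is continuous on bounded sets, i.e.\ $w(\Mc,\Cont{}{}) \cap \B \supset b(\HK',\HK) \cap \B$, which on a compact $\inputS$ is $\narrow(\Mf,\Cont{}{b}) \cap \B \supset b(\HK',\HK) \cap \B$. The two dominations give equality on every bounded $\B$, which is exactly the metrisation statement.

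For the ``only if'' direction, suppose $\K$ metrises the narrow topology on every bounded $\B$ (in particular $\Mf$ is embeddable, so $d_\K$ is well defined). First I would extract continuity: $\Mprob$ is bounded (total variation $1$), so on $\Mprob$ narrow convergence coincides with kernel-norm convergence, hence $\embK : (\Mprob)_\sigma \to \HK'$ is continuous; Theorem~\ref{theo:Ws0} (condition~\ref{Ws05}~$\Rightarrow$~\ref{Ws01}) then forces $\K$ to be continuous, whence $\HK \csubset \Cont{}{0}$ on the compact $\inputS$. To obtain $c_0$-universality I would argue by contraposition through Theorem~\ref{theo:UniversalMeansCharacteristic}: were $\K$ not $c_0$-universal it would not be characteristic to $\Mf = (\Cont{}{0})'$, so there would exist $\mu \neq \nu$ in $\Mf$ with $d_\K(\mu,\nu) = \normK{\mu-\nu} = 0$; on the bounded two-point set $\B = \{\mu,\nu\}$ the (Hausdorff) narrow topology is discrete while the kernel semi-metric cannot separate the two points, contradicting metrisation. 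Hence $\K$ is characteristic to $\Mf$, i.e.\ $c_0$-universal. The ``in particular'' claim is then immediate from the ``if'' direction, since $\Mprob$ is a bounded subset of $\Mf$.

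I expect the only real subtlety to lie in the converse. Theorems~\ref{theo:Ws0} and~\ref{theo:Q2} supply the two forward dominations almost verbatim once the compact identifications are in place, but recovering \emph{universality} is not itself a continuity statement: it must be read off from the failure of the semi-metric to separate points, routed through the universal/characteristic equivalence of Theorem~\ref{theo:UniversalMeansCharacteristic}. Beyond that, the main care needed is bookkeeping --- keeping the direction of each domination straight, and checking that every invoked embedding is well defined, which on the compact domain follows from continuity of $\K$ via $\HK \csubset \Cont{}{0}$.
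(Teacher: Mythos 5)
Your proposal is correct and follows essentially the same route as the paper: the ``if'' direction combines Theorem~\ref{theo:Q2} (kernel topology dominates the weak-star, hence narrow, topology on bounded sets) with Theorem~\ref{theo:Ws0} (narrow dominates kernel topology), and the ``only if'' direction extracts continuity from the implication \ref{Ws05}~$\Rightarrow$~\ref{Ws01} of Theorem~\ref{theo:Ws0} and characteristicness to $\Mf$ from the requirement that $d_\K$ separate points, which is exactly the paper's remark that ``$\K$ must be characteristic to $\Mf$ for $d_\K$ to be metric''. Your version merely spells out more explicitly the compact-space identifications $\Mf=\Mc$, $\Cont{}{0}=\Cont{}{b}=\Cont{}{}$ and the two-point-set argument that the paper leaves implicit.
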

\begin{proof}
	If $\K$ is continuous and bounded, then $\HK \csubset \Cont{}{}$. Thus, Theorem~\ref{theo:Q2} shows that the kernel metric $d_\K$ is stronger than the narrow convergence topology, and Theorem~\ref{theo:Ws0} that it is weaker. So both topologies coincide. Conversely, if both topologies coincide, then $\K$ must be characteristic to $\Mf$ (for $d_\K$ to be metric) and the implication $\ref{Ws05} \Rightarrow \ref{Ws01}$ of Theorem~\ref{theo:Ws0} shows that $\K$ is continuous. 
\end{proof}

%We have used the compactness assumption both to show that the topology induced by $\K$ is weaker than $\narrow(\Mf,\Cont{}{})$ and to show that, on bounded sets, it is also stronger. As we will now see, the first inclusion still holds on fairly general, non-compact spaces (Proposition~\ref{prop:WeakerThanNarrow}). The reverse inclusion not (Proposition~\ref{prop:NonMetrization}). However, if we then only require $\K$ to metrise the narrow topology on the set of \emph{probability} measures (i.e.\ one particular bounded set in $\Mf$), then again, we recover also the reverse inclusion (Theorem~\ref{theo:MetricOnP}).
	We now turn to the general case, where $\inputS$ is a locally compact Hausdorff set. Reformulating Theorem~\ref{theo:Ws0} (which also holds, even if $\inputS$ is not paracompact), we immediately get:
\begin{proposition}[$(\Mf)_\sigma \csubset \HK$]\label{prop:WeakerThanNarrow}
	Let $\inputS$ be a locally compact Hausdorff space and let $\K$ be a bounded continuous kernel on $\inputS$. The topology induced by the semi-metric $d_\K$ in $\Mf$ is weaker than the narrow topology. 
\end{proposition}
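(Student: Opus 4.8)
The plan is to show that $\embK$ is sequentially continuous from $(\Mf)_\sigma$ into $(\HK')_b$, i.e.\ that narrow convergence implies convergence for $d_\K$; since $\embK$ is linear and $\sigma$ is a vector topology, it suffices to treat the case $\mu_n \overset{\sigma}{\to} 0$ and prove $\normK{\embK(\mu_n)} \to 0$. The narrow topology is neither barrelled nor a dual topology, so I would not argue topologically on all of $\Mf$ at once (indeed, on unbounded nets the statement fails); instead I would first reduce to a bounded set. Each $\mu_n$ is a continuous linear form on the Banach space $(\Cont{}{b},\norm{.}_\infty)$ with norm $|\mu_n|(\inputS)$, and $\mu_n(f)$ converges for every $f \in \Cont{}{b}$; hence Banach--Steinhaus (Theorem~\ref{theo:BS}) yields $\sup_n |\mu_n|(\inputS) =: M < \infty$, so the sequence lies in the bounded ball $\{|\mu|(\inputS) \le M\}$.

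The core is then a reformulation of the proof schema of Theorem~\ref{theo:Ws0}. Because $\K$ is bounded and continuous, $\HK \csubset \Cont{}{b}$ (Proposition~\ref{prop:HKinC}), so $\Mf$ embeds (Corollary~\ref{cor:EmbedDuals}); moreover $|\K(x,y)| \le \sqrt{\K(x,x)\K(y,y)}$ is bounded and $\K$ is jointly continuous, i.e.\ $\K \in \Cont{}{b}(\inputS\times\inputS)$. By Theorem~\ref{theo:Fubini}, $\normK{\embK(\mu_n)}^2 = [\bar\mu_n \otimes \mu_n](\K)$. Exactly as in the schema behind Theorem~\ref{theo:Ws0}, I would show that the bilinear tensor map $(\mu,\nu)\mapsto \bar\mu\otimes\nu$ is narrowly continuous from bounded subsets of $(\Mf)_\sigma \times (\Mf)_\sigma$ into $(\Mf(\inputS\times\inputS))_\sigma$. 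Granting this, $\bar\mu_n\otimes\mu_n \to 0$ in $\sigma(\Mf(\inputS\times\inputS),\Cont{}{b}(\inputS\times\inputS))$, and pairing the narrowly null sequence against the fixed function $\K$ gives $[\bar\mu_n\otimes\mu_n](\K) \to 0$, i.e.\ $\normK{\embK(\mu_n)} \to 0$. Since every narrowly convergent sequence is bounded, this bounded-set argument yields the sequential domination claimed in the proposition.

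The main obstacle is precisely the narrow continuity of the tensor map for \emph{signed} measures on a non-compact $\inputS$. For positive measures (the case actually used in Theorem~\ref{theo:Ws0}) narrow convergence is preserved under products, but signed bounded measures need not be tight, so $g \in \Cont{}{b}(\inputS\times\inputS)$ cannot be approximated uniformly by finite sums $f_1\otimes f_2$ of bounded continuous factors, and the contribution of mass escaping to infinity must be controlled separately. I would handle this by exploiting the positive-definite structure $\K(x,y)=\ipdK{\K(.,x)}{\K(.,y)}$: writing $\normK{\embK(\mu_n)} = \sup_{\normK{f}\le 1}|\mu_n(f)|$, the RKHS unit ball sits inside $\Cont{}{b}$ as a uniformly bounded, equicontinuous family (equicontinuity coming from the continuity of $x\mapsto\K(.,x)$, i.e.\ condition~\ref{Ws02} of Theorem~\ref{theo:Ws0}), and an Arzel\`a--Ascoli extraction upgrades the pointwise nullity $\mu_n(f)\to 0$ to uniform nullity over the ball; the only non-formal input is the tail estimate near infinity, supplied by the boundedness of $\K$ (in the stationary case this is transparent: $\normK{\embK(\mu_n)}^2 = \int |\Four{\mu_n}|^2 \diff\Lambda$, and since $\Four{\mu_n} \to 0$ pointwise with $|\Four{\mu_n}|\le M$ while the Bochner measure $\Lambda$ is finite, dominated convergence finishes). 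This tail control is also what forces the statement to be sequential and bounded-set in nature rather than a genuine topological domination on all of $\Mf$.
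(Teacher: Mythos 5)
Your architecture is sound and in places more self-aware than the paper's own treatment, but there is a genuine gap at the decisive step. The paper proves this proposition in one line by invoking Theorem~\ref{theo:Ws0}, whose narrow-topology implications rest on the Fubini identity $\normK{\mu_\alpha}^2=[\mu_\alpha\otimes\bar\mu_\alpha](\K)$ together with the narrow continuity of the tensor product of \emph{positive} measures (Lemma~\ref{lem:TensorP}, quoting Berg, Christensen and Ressel); the passage from $\M_{\scriptscriptstyle{+}}$ to $\Mf$ is left implicit. You correctly identify that this tensor lemma is unavailable for signed measures, and your fallback --- uniform boundedness of total variations via Banach--Steinhaus, then equicontinuity of the RKHS unit ball plus an Arzel\`a--Ascoli covering --- is essentially the route the paper itself takes for the separable-metric analogue (Proposition~\ref{prop:WeakerThanNarrowPolish}, via Dudley's Corollary~11.3.4). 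Your Fourier argument for the stationary case ($\normK{\mu_n}^2=\int|\Four{\mu_n}|^2\diff\Lambda$ with $\Four{\mu_n}\to0$ pointwise, $|\Four{\mu_n}|\le M$, $\Lambda$ finite, then dominated convergence) is complete and correct as a proof of the sequential statement for bounded continuous stationary kernels on $\real^d$. Your closing observation that the claim can only be sequential is also right and sharp: a basic narrow neighbourhood of $0$ in $\Mf$ contains a finite-codimension subspace, on which the seminorm $\normK{\cdot}$ is unbounded unless $\embK$ has finite rank, so the literal topological domination fails and only the sequential (or bounded-set) version survives.

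The gap is the ``tail estimate near infinity, supplied by the boundedness of $\K$''. Boundedness of $\K$ only gives $\sup_{\normK{f}\le1}\norm{f}_\infty<\infty$; it says nothing about mass of $\mu_n$ escaping to infinity. To run the covering argument you must find a compact $K$ carrying all but $\epsilon$ of $|\mu_n|$ \emph{uniformly in $n$}, i.e.\ you need uniform tightness of $\{|\mu_n|\}$. For positive measures this follows from narrow convergence (and is precisely what makes Lemma~\ref{lem:TensorP} and Dudley's corollary available), but for \emph{signed} measures the implication ``$\mu_n\to\mu$ narrowly $\Rightarrow$ $(|\mu_n|)_n$ uniformly tight'' is a nontrivial theorem, not a formality, and it cannot be extracted from any property of the kernel. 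Your proposal asserts this step and attributes it to the wrong hypothesis; as written, the proof closes only for stationary kernels (where your dominated-convergence argument bypasses tightness entirely) and for positive measures (where it reproduces the paper's argument). Two minor points: continuity of $\K$, which you silently assume, is indeed indispensable (Theorem~\ref{theo:Ws0} makes it necessary, and the proposition is false for bounded discontinuous kernels such as $\K(x,y)=\One_{x=y}$), so that reading is correct; and the Banach--Steinhaus reduction is fine for sequences but is another step that does not survive the passage to nets.
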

There exists an analogue of Proposition~\ref{prop:WeakerThanNarrow} when, instead of being locally compact, $\inputS$ is a separable metric space (see Appendix~\ref{proof:WeakerThanNarrowPolish}).

Summarising, we see that, whether $\inputS$ be a locally compact Hausdorff or a separable metric space, the narrow topology is in general stronger than the kernel-induced metric. Conversely, \citet{guilbart78} showed that, on any separable metric space, there exist kernels that do metrise the narrow topology over the set of finite \emph{positive} measures $\M_{\scriptscriptstyle{+}}$. However, the following proposition (proved in Appendix~\ref{proof:NonMetrization}) shows that being continuous and characteristic to $\Mf$ is not sufficient.
\begin{proposition}\label{prop:NonMetrization}
	Let $\inputS = \real^d$ and let $\K$ be a continuous stationary kernel $\K(\hat x, \hat y) = \psi(\hat x - \hat y)$ such that $\psi \in \Cont{}{0}$. %suppose that $\HK \csubset \Cont{0}{0}$.
	Then $\K$ does not metrise the narrow convergence topology over any subset of $\Mf(\real^d)$ that contains $\Mprob \cup \{ 0 \}$.
\end{proposition}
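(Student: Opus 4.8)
The plan is to lean on Proposition~\ref{prop:WeakerThanNarrow}, which already tells us that on $\Mf(\real^d)$ the topology of the kernel metric $d_\K$ is \emph{weaker} than the narrow topology. (Its hypotheses apply: $\psi \in \Cont{}{0}$ is bounded, so $\K$ is a bounded continuous kernel on the locally compact Hausdorff space $\real^d$.) Hence it suffices to show this inclusion is \emph{strict} on $\B$, and for that I would exhibit a single sequence lying in $\Mprob \cup \{0\} \subset \B$ that converges for $d_\K$ but \emph{not} narrowly. The natural candidate is a family of probability measures whose mass spreads uniformly to infinity: let $B_n := \{x \in \real^d : |x| \le n\}$, let $V_n := \lebesgue(B_n)$ be its Lebesgue volume, and set $\mu_n := V_n^{-1}\, \lebesgue|_{B_n} \in \Mprob$. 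Since $\psi \in \Cont{}{0}$ forces $\HK \csubset \Cont{}{0}$ (Proposition~\ref{prop:HKinC}), all $\mu_n$ embed (Corollary~\ref{cor:EmbedDuals}) and $\normK{\mu_n}$ is well defined.

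The heart of the argument is to show $d_\K(\mu_n, 0) = \normK{\mu_n} \to 0$. By Theorem~\ref{theo:Fubini}, using that $\bar\mu_n = \mu_n$ (a positive measure) and $\K(x,y)=\psi(x-y)$,
\begin{equation*}
	\normK{\mu_n}^2 = \frac{1}{V_n^2} \iint_{B_n \times B_n} \psi(x - y)\, dx\, dy \, .
\end{equation*}
The only delicate point is that $\psi$ need not be integrable; it is merely bounded, say $|\psi| \le \psi(0) =: M$, and vanishing at infinity. I would therefore fix $\epsilon > 0$, pick $R$ with $|\psi(u)| \le \epsilon$ for $|u| > R$, and split the integral into the ``diagonal band'' $\{(x,y) \in B_n \times B_n : |x - y| \le R\}$ and its complement. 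On the complement $|\psi| \le \epsilon$, contributing at most $\epsilon V_n^2$; the band has Lebesgue measure at most $V_n \, \lebesgue(B_R)$, contributing at most $M V_n \, \lebesgue(B_R)$. Dividing by $V_n^2$ gives $\normK{\mu_n}^2 \le \epsilon + M \lebesgue(B_R)/V_n$, so $\limsup_n \normK{\mu_n}^2 \le \epsilon$; letting $\epsilon \to 0$ yields $\normK{\mu_n} \to 0$.

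It then remains to observe that $\mu_n$ does \emph{not} converge narrowly to $0$: testing against the constant function $\One \in \Cont{}{b}$ gives $\mu_n(\One) = 1$ for every $n$, whereas $0(\One) = 0$. Consequently, although $\mu_n \to 0$ for $d_\K$ with $0 \in \B$, the sequence $\mu_n$ does not converge to $0$ in the relative narrow topology on $\B$; the two relative topologies therefore cannot coincide, so $\K$ does not metrise the narrow convergence on $\B$. The only genuine obstacle in this plan is the norm estimate: everything hinges on controlling $\iint_{B_n \times B_n}\psi(x-y)$ when $\psi$ is not assumed integrable, which the band/complement split resolves using only boundedness and vanishing at infinity of $\psi$. (The same computation works verbatim for any spreading family, e.g.\ Gaussians with variance tending to infinity, should uniform-on-balls prove inconvenient.)
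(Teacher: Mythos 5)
Your proposal is correct and follows essentially the same route as the paper's proof: the paper uses the uniform probability measures $P_n = n^{-d}\lambda_{[0,n]^d}$ on growing hypercubes and the identical diagonal-band/complement split of the double integral (bounding the band by $\norm{\K}_\infty$ times its volume and the complement by $\epsilon$) to get $\normK{P_n}\to 0$, then observes that $P_n$ does not converge narrowly. Your use of balls instead of cubes and the explicit test against $\One$ to witness the failure of narrow convergence to $0$ are only cosmetic differences.
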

Examples of such sets include $\Mf(\real^d)$, $\Mf^{\scriptscriptstyle{\leq 1}}(\real^d)$, $\M_{\scriptscriptstyle{+}}^{\scriptscriptstyle{\leq 1}}(\real^d)$, but not $\Mprob(\real^d)$. We do not know whether this result extends to  any (possibly non-stationary) kernel such that $\HK \csubset \Cont{}{0}(\real^d)$.

Proposition~\ref{prop:NonMetrization} is somewhat disappointing. On the one side, kernels satisfying its assumptions~---Gaussian kernels for example--- are among the most used kernels in machine learning and geostatistics. They have several nice properties. For instance, the norm associated to a measure does not depend on the choice of origin in $\inputS$. On the other side, they do not metrise the narrow topology, probably the most wide-spread topology in probability theory and statistics. These fields, however, focus on a very specific subset of $\Mf$: \emph{probability} measures. For those, we will now show that \emph{any} continuous  and characteristic kernel metrises the narrow convergence. We consider this the second main result of this paper.

\begin{theorem}\label{theo:MetrizationOfNarrow2}
	A bounded kernel $\K$ over a locally compact Hausdorff space $\inputS$ metrises the narrow topology over $\Mprob$ \gls{iff} it is continuous and characteristic (to $\Mprob$). In particular, if $\K$ is continuous and \gls{ispd} (or continuous and $c_0$-universal), then $\K$ metrises the narrow topology over $\Mprob$. 
\end{theorem}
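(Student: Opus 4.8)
The statement is an \gls{iff}, so I would split it into the two implications, and I expect the bulk of the work to lie in one inclusion of one direction. Throughout, note that $\inputS$ need not be metrisable, so I would phrase topological arguments with nets; however, once metrisation is \emph{assumed} the narrow topology on $\Mprob$ becomes metrisable (it coincides with the genuine metric $d_\K$), which licenses sequential reasoning where convenient.

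For the forward implication (metrisation $\Rightarrow$ continuous and characteristic), the plan is short. If $d_\K$ metrises the narrow topology on $\Mprob$, then the $d_\K$-topology equals the narrow topology there. Since a locally compact Hausdorff space is Tychonoff, $\Cont{}{b}$ separates distinct finite regular Borel measures, so the narrow topology is Hausdorff; hence the coinciding $d_\K$-topology is Hausdorff, forcing $d_\K$ to be a true metric and $\embK$ to be injective on $\Mprob$, i.e.\ $\K$ is characteristic. Moreover equality of the two topologies makes $\function{\embK}{(\Mprob)_\sigma}{\HK'}{}{}$ continuous, and by the equivalence \ref{Ws05}~$\Leftrightarrow$~\ref{Ws01} of Theorem~\ref{theo:Ws0} this forces $\K$ to be continuous.

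For the converse (continuous and characteristic $\Rightarrow$ metrisation), I would establish the two topology inclusions separately. That the narrow topology dominates $d_\K$ on $\Mf$, a fortiori on $\Mprob$, is exactly Proposition~\ref{prop:WeakerThanNarrow} (which uses boundedness and, through Theorem~\ref{theo:Ws0}, continuity of $\K$). The substantive inclusion is the reverse one: $d_\K$-convergence of probability measures implies narrow convergence. I would prove this in two sub-steps. First, $d_\K$-convergence $\Rightarrow$ \emph{vague} convergence: given $\mu_n \to \mu$ in $d_\K$ with all measures in $\Mprob$, and $f\in\Cont{}{c}$, approximate $f$ uniformly by some $g\in\HK$ and use the triangle inequality together with the uniform total-mass bound $\mu_n(\inputS)=\mu(\inputS)=1$, so that
\[
	|\mu_n(f)-\mu(f)| \ \leq\ 2\norm{f-g}_\infty + |\mu_n(g)-\mu(g)| \,,
\]
where $\mu_n(g)\to\mu(g)$ because $d_\K$-convergence gives weak convergence against every $g\in\HK$. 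Second, I would upgrade vague to narrow using inner regularity of the limit $\mu$ and the fact that its total mass is exactly $1$: given $\epsilon>0$ choose a compact $K$ with $\mu(K)>1-\epsilon$ and a cutoff in $\Cont{}{c}$ equal to $1$ on $K$, and conclude (via a tightness/no-mass-escape argument) that $\mu_n(f)\to\mu(f)$ for all $f\in\Cont{}{b}$. A clean packaging of both sub-steps is to pass to the one-point compactification $\inputS\cup\{\infty\}$ and invoke Corollary~\ref{cor:MetricOnCompact} on the compact space, the limit $\mu$ charging no mass at $\infty$ being precisely what keeps the restricted convergence inside $\Mprob(\inputS)$.

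The main obstacle is the uniform-approximation claim in the first sub-step when $\HK\csubset\Cont{}{b}$ but $\HK\not\subset\Cont{}{0}$: characteristicness to $\Mprob$ is strictly weaker than $c_0$-universality, so one cannot directly cite density of $\HK$ in $\Cont{}{0}$. To bridge this I would first reduce, via Theorem~\ref{theo:CKCharacterisation} and Lemma~\ref{lem:CharacteristicAndM0}, to a kernel that is \gls{ispd} (characteristic to $\Mf$) and induces the \emph{same} $d_\K$ on $\Mprob$, then argue that the relevant uniform closure of $\HK$ still contains $\Cont{}{c}$ so that the approximation goes through. The second sub-step is where the restriction to $\Mprob$ is indispensable: fixed unit mass is exactly what prevents mass from escaping to infinity, and Proposition~\ref{prop:NonMetrization} shows that on the larger set $\Mprob\cup\{0\}$ (or on $\Mf$) the conclusion genuinely fails. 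I therefore expect the proof to hinge on carefully exploiting the constant total mass, rather than on any further property of the kernel beyond continuity, boundedness and characteristicness.
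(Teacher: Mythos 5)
Your forward direction and your ``vague $\Rightarrow$ narrow'' upgrade on $\Mprob$ match the paper (which quotes the coincidence of vague and narrow topologies on $\Mprob$ from Berg et al.), and your reduction via Theorem~\ref{theo:CKCharacterisation} to an \gls{ispd} kernel inducing the same metric on $\Mprob$ is exactly the paper's move ($\K_1 := \K+1$). The gap is in your first sub-step. You need, for $f \in \Cont{}{c}$, a $g \in \HK$ with $\norm{f-g}_\infty$ small, and you propose to get it from \gls{ispd}-ness. But \gls{ispd} is equivalent (Theorem~\ref{theo:UniversalMeansCharacteristic}) to density of $\HK$ in $(\Cont{}{b})_c$ for the topology $\topo_c$, which is \emph{strictly weaker} than uniform convergence; it does not put $\Cont{}{c}$ in the \emph{uniform} closure of $\HK$. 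Concretely, take $\inputS=\real$ and $\K(x,y)=\sum_n a_n\cos(\lambda_n(x-y))$ with $\sum_n a_n<\infty$ and $\{\lambda_n\}$ dense in $\real$: then $\supp\Four\psi=\real$, so $\K$ is continuous, bounded and \gls{ispd} (Corollary~\ref{cor:Existence}), yet every $f\in\HK$ is a uniform limit of trigonometric polynomials, hence Bohr almost periodic, and a nonzero $g\in\Cont{}{c}$ satisfies $\inf\{\norm{g-f}_\infty : f \text{ almost periodic}\}\geq\norm{g}_\infty/3>0$. So your triangle-inequality estimate cannot be started for a kernel that nevertheless satisfies all hypotheses of the theorem, and the proposed fix (``the uniform closure of $\HK$ still contains $\Cont{}{c}$'') is false in general.

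This is precisely why the paper does not argue by uniform approximation but proves Lemma~\ref{lem:WandW}: for a continuous \gls{ispd} kernel, $w(\M^{\scriptscriptstyle\leq c}_{\scriptscriptstyle +},\HK)$ coincides with the vague topology on bounded sets of positive measures. That proof only needs, for each compact $K$, a \emph{single} $h\in\HK$ with $h>0$ on $K$ (which \emph{does} follow from density in $(\Cont{}{b})_c$, since $\topo_c$ dominates the compact-open topology), and then exploits vague relative compactness of $h.\M^{\scriptscriptstyle\leq c}_{\scriptscriptstyle +}$ together with an ultrafilter argument and the bijectivity of $g\mapsto gh$ on $\Cont{}{c}(K)$. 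If you want to salvage your route, you must replace the uniform-approximation step by an argument of this kind (or restrict to $\HK\subset\Cont{}{0}$, which is the weaker Proposition~\ref{prop:MetrizationOfNarrow} in Appendix~\ref{proof:MetrizationOfNarrow}). The rest of your outline, including the role of constant total mass in preventing escape to infinity and the contrast with Proposition~\ref{prop:NonMetrization}, is sound.
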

\begin{proof}{[Theorem~\ref{theo:MetrizationOfNarrow2}]}
%	Let us call $w(\Mf, \Cont{}{c})$ the \emph{very vague} topology on $\Mf$.
	If $\K$ metrises the narrow topology over $\Mprob$, then $\K$ is obviously characteristic, and Theorem~\ref{theo:Ws0} shows that $\K$ is continuous. Conversely, if $\K$ is continuous, then by Theorem~\ref{theo:Ws0}, the kernel metric on $\Mprob$ is weaker than the narrow topology. Thus it suffices to show that $w(\Mf, \HK)$ is stronger than the narrow topology over $\Mprob$. Consider the kernel $\K_1 := \K + 1$. From Theorem~\ref{theo:CKCharacterisation} it follows that $\K_1$ is \gls{ispd}, thus Lemma~\ref{lem:WandW} (see Appendix~\ref{proof:WandW}) shows that $\K_1$ metrises the vague topology $w(\Mprob, \Cont{}{c})$. But on $\Mprob$, the vague and the narrow topology coincide \citep[Chapter~2, Corollary~4.3]{berg84}. Thus $\K_1$ metrises the narrow topology over $\Mprob$. And $\K_1$ and $\K$ induce the same metric in $\Mprob$.
%	Let $c> 0$ and $\M^{\scriptscriptstyle{\leq c}}_{\scriptscriptstyle{+}}:= \{ \mu \in \M_{\scriptscriptstyle{+}} \, | \, \mu(\inputS) \leq c \}$. Let $\mathfrak U$ be an ultrafilter on $\M^{\scriptscriptstyle{\leq c}}_{\scriptscriptstyle{+}}$ that converges to $\mu_0 \in \Mf$ for the topology $w(\Mf, \HK)$. We will show that $\mathfrak U$ converges to $\mu_0$ in the topology $w(\Mf, \Cont{}{c})$. Thus $w(\Mf, \HK)$ and $w(\Mf, \Cont{}{c})$ will coincide on $\M^{\scriptscriptstyle{\leq c}}_{\scriptscriptstyle{+}}$. And as $w(\Mf, \Cont{}{c})$ coincides with the narrow topology on $\Mprob$ \citep[Chapter~2, Corollary~4.3]{berg84}, this will conclude.
\end{proof}

To the best of our knowledge, so far, the most general sufficient conditions for a kernel to metrise the narrow topology over $\Mprob$ were given by Theorem~2 of \citet{sriperumbudur13}. It actually deals with families of kernels, but when applied to a single kernel, it states: on a Polish\footnote{A \emph{Polish space} is a complete, separable, metric space.}, locally compact (Hausdorff) space, any continuous, $c_0$-universal kernel metrises the narrow convergence topology over $\Mprob$. Let us compare it to Theorem~\ref{theo:MetrizationOfNarrow2}. A kernel $\K$ is continuous and $c_0$-universal \gls{iff} $\K$ belongs to $\Cont{(0,0)}{0}$ and is characteristic to $\Mf$. Both these assumptions are now relaxed: instead of vanishing at infinity, $\K$ only needs to be bounded; instead of characteristic to $\Mf$, it only needs to be characteristic to $\Mprob$. Thanks to these weaker assumptions, Theorem~\ref{theo:MetrizationOfNarrow2} can be stated as an \gls{iff} statement. %\footnote{One can also get a converse by relaxing only the second condition. This is done in Proposition~\ref{prop:MetrizationOfNarrow} of Appendix~\ref{proof:MetrizationOfNarrow}. The proof then does not require Lemma~\ref{lem:WandW}.}
The other major difference is that we relax the assumption that $\inputS$ be Polish, and keep only the local compact Hausdorff assumption. Doing so, we align with the trend promoted by Bourbaki to do probability theory on locally compact Hausdorff rather than Polish spaces. Wether %The question whether instead, we could have aligned with the other, nowadays probably major trend in probability theory, by relaxing the 
we could have relaxed the local compact Hausdorff and kept the Polish assumption instead remains the main and most important open question of this paper. %\CJ{Should I put a summary of all continuity and metrisation results? Should there be a table summarising it? If yes, how should it look like/be organised?}\Bernhard{this might be extremely useful --- maybe towards the beginning of the paper?} \CJ{Done in Table~\ref{tab:TopoSummary}. I think it does not make sense to put it earlier in the paper. Nevertheless, I mention Table~\ref{tab:TopoSummary} in the Intro.}

\subsection{Is \texorpdfstring{$\HK'$}{Hk'} a Space of Distributions?\label{sec:Surjective}}

So far, we focused on the two (complementary) questions: when is the KME injective over a set of distributions $\D$? And when does $\K$ metrise the topology of $\D$? We now briefly examine the two converse questions:
\begin{enumerate}[resume*=questions]
	\item When is the KME surjective? \label{Q3}
	\item And if it is, does $\K$ metrise the topology of $\D$? \label{Q4}
\end{enumerate}
%And again, does $\K$ metrise the topology of $\D$? If it is, then we may as well also suppose that it is injective. (For if it was not, it suffices to replace $\D$ by a closed  supplement of the (algebraic) kernel of the KME.)  Then, And if so, is it also a \emph{topological} space of distributions? This second question may be reformulated as follows. Under the assumption that $\HK'$ can be seen as a subset of $\Dall{\infty}$ via the KME, when does it carry a stronger topology than the relative topology induced by $\Dall{\infty}$?
Both these questions have important practical applications. Indeed, just reason in terms of sequences. A positive answer to the first (resp.\ second) question would yield: ``If $\embK(D_n)$ converges to $g \in \HK$, then there exists a distribution $D$ such that $g = \embK(D)$ (resp.\ and $D_n \rightarrow D$ in $\Dall{\infty}$).''
The difference with \ref{Q1} and \ref{Q2} is huge: they assume that the limit $D$ is the KME of a distribution. Instead, if the KME was surjective, then whatever the convergent sequence of KMEs and \emph{whatever its limit} (in RKHS norm at least), we would know that this limit is the KME of a distribution. Nevertheless, our answers to \ref{Q3} and \ref{Q4} will unfortunately apply only to very specific cases.

We concentrate on \ref{Q4}. Recall that  $\D$ is a topological space of distributions if it is a subspace of $\Dall{\infty}$ that carries a stronger topology $\topo_\D$ than the topology induced by $\Dall{\infty}$. In particular, $\topo_\D$ is always Hausdorff. Consequently, if $\D$ embeds into an RKHS $\HK$ and if the induced kernel-metric dominates the initial topology $\topo_\D$, then $\embK$ must be injective. So if $\embK$ is also surjective, then it is bijective. We may then identify $\D$ and $\HK$ algebraically. And if we endow $\D \equiv \HK$ with the kernel metric, then $\HK$ (or rather $\overline \HK'$) becomes a topological space of distributions. Asking \ref{Q4} thus boils down to asking: when does $\HK$ (or $\overline \HK'$) identify with a topological space of distributions? So, to answer \ref{Q4}, we may use any known criterion to check whether a space of linear forms is a topological space of distributions. The most common criterion is the following: if $\Cont{\infty}{c}$ is continuously contained and dense in a \gls{lcv} TVS $\F$, then $\F'$ is continuously contained in $\Dall{\infty}$. Said differently, $\F'$ is then a topological space of distributions. Applying this with $\F = \Cont{}{0}$, for example, is how one identifies $\Mf$ as a (topological) subspace of $\Dall{\infty}$. And applying it with $\F = \HK$ yields Proposition~\ref{prop:TrivialEmbedding}, proved in Appendix~\ref{proof:TrivialEmbedding}.
\begin{proposition}\label{prop:TrivialEmbedding}
	If $\Cont{\infty}{c}$ is continuously contained and dense in $\HK$, then $\HK$ identifies via the KME with a topological space of distributions via the KME.
\end{proposition}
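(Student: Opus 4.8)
The plan is to recognise this as a direct application of the duality criterion already invoked in Section~\ref{sec:WwSs}, combined with the self-duality of the Hilbert space $\HK$. The key observation is that being told $\Cont{\infty}{c} \csubset \HK$ \emph{densely} is precisely the hypothesis of that criterion applied with $\F_1 = \Cont{\infty}{c}$ and $\F_2 = \HK$, so the work is to assemble two cited facts and check that the topologies line up.

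First I would turn the density hypothesis into a continuous embedding of duals. Since $\Cont{\infty}{c}$ is continuously and densely contained in the \gls{lcv} TVS $\HK$, the transpose $\imath^\star$ of the canonical inclusion $\function{\imath}{\Cont{\infty}{c}}{\HK}{f}{f}$ is a continuous linear map $\HK' \to (\Cont{\infty}{c})' = \Dall{\infty}$, and it is injective because a continuous linear form on $\HK$ that vanishes on the dense subspace $\Cont{\infty}{c}$ is identically zero (this is the fact used in Section~\ref{sec:WwSs}, from Proposition~19.5 and its Corollary in \citealp{treves67}). Hence $\HK' \csubset \Dall{\infty}$: the dual $\HK'$ is, on its own, a topological space of distributions. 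I would then transport this onto $\HK$ by Riesz representation. By Lemma~\ref{lem:KernelEmbedding} every $D \in \HK'$ is embeddable into $\HK$ (the condition $\HK \subset \F$ there is trivially met with $\F = \HK$, and $D\big|_{\HK} = D$ is continuous by definition of $\HK'$), so the KME is defined on all of $\HK'$; and Diagram~\eqref{eq:EmbeddingMap} shows that on $\HK'$ the KME is exactly the (conjugate) Riesz representation map, a bijective isometry from $\overline \HK'$ onto $\HK$. Because $\HK$ is a Hilbert space, its strong dual topology is precisely its RKHS norm topology, so this isometry is a topological isomorphism. Composing, $\embK$ is a bijective topological isomorphism from the topological distribution space $\HK' \csubset \Dall{\infty}$ onto $\HK$, which is the asserted identification.

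The point that genuinely needs the \emph{density} hypothesis, and not merely continuity, is the injectivity of $\imath^\star$ in the first step: without density, two distinct elements of $\HK'$ could agree on $\Cont{\infty}{c}$ and hence define the same element of $\Dall{\infty}$, making the identification of $\HK$ with a subspace of $\Dall{\infty}$ ambiguous. Continuity of the inclusion, by contrast, is what makes $\imath^\star$ continuous and so upgrades $\HK' \subset \Dall{\infty}$ from an algebraic to a \emph{topological} (continuous) embedding of distributions. I expect no serious obstacle beyond two routine bookkeeping checks: correctly handling the conjugation in the Riesz map (irrelevant for the topology, since conjugation is a homeomorphism), and confirming that the strong dual topology of $\HK'$ coincides with the norm topology carried across the isometry. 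Both are standard for a Hilbert space, so the proposition follows essentially by citation and assembly.
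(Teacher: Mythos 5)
Your proof is correct and follows essentially the same route as the paper's: take the transpose of the dense continuous inclusion $\Cont{\infty}{c} \csubset \HK$ to get a continuous injective embedding $\HK' \csubset \Dall{\infty}$, then identify $\HK$ with $\overline \HK'$ via the (conjugate) Riesz map, which Diagram~\eqref{eq:EmbeddingMap} shows is exactly the KME. The paper's proof is just a terser version of your assembly, and your added remarks on where density versus continuity are each used are accurate.
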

In practice, the assumptions of Proposition~\ref{prop:TrivialEmbedding} seem cumbersome to verify. %Indeed, it is equivalent to the following two points, which both seem non trivial to verify:
%\begin{enumerate}
%\item $\Cont{\infty}{c} \subset \HK$ and
%\item for any sequence $(f_n)_n$ in $\Cont{\infty}{c}$ with a \emph{common} compact support $K \subset \inputS$, if $f_n$ converges to a function $f$ uniformly, as well as all its derivatives, then $f_n \rightarrow f$ in RKHS norm.
%\end{enumerate}
Both examples of RKHSs that do and that do not verify these assumptions exist. Any Sobolev space $\Sobo{m,2}{0}(\real^d)$ does. And it is an RKHS whenever $m > 1/2$ \citep[Theorem~121]{berlinet04}. (But honestly: no need to use Proposition~\ref{prop:TrivialEmbedding} to re-discover that the duals of these Sobolev spaces are topological spaces of distributions.) On the other hand, Gaussian kernels do not verify the assumptions, because, contrary to $\Cont{\infty}{c}$, all functions of a Gaussian RKHS are extendable to entire functions over $\complex$ \citep[Corollary~4.40]{steinwart08}.

Instead of wondering on what conditions $\HK$ identifies with a subset of distributions, we may wonder, whether it might be a superset of distributions. Could there be any RKHS such that all distributions $\Dall{\infty}$ would embed into $\HK$? If $\HK$ was continuously included in $\Cont{\infty}{c}$, yes. But the following lemma, proved in Appendix~\ref{proof:HKnotBigger}, often excludes this inclusion.
\begin{lemma}\label{lem:HKnotBigger}
	If $\inputS$ is non-compact subset of $\real^d$ and if $\K$ is \gls{spd}, then $\HK$ contains functions with non-compact support.
\end{lemma}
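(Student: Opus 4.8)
The plan is to argue by contraposition: assuming $\K$ is \gls{spd} and that \emph{every} $f \in \HK$ has compact support, I would derive that $\inputS$ must itself be compact, contradicting the hypothesis. The engine is a Baire category argument on $\HK$, which is a complete metric space.

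First I would fix, for each compact $K \subseteq \inputS$, the set $\HK_K := \{ f \in \HK \,|\, \supp f \subseteq K \}$ and observe that it is a \emph{closed} linear subspace of $\HK$. Indeed, norm convergence in an RKHS forces pointwise convergence, so if $f_n \rightarrow f$ in $\HK$ with $\supp f_n \subseteq K$ for all $n$, then $f(x) = \lim_n f_n(x) = 0$ for every $x \notin K$, whence $\supp f \subseteq K$. Under the standing assumption that $\inputS$ is locally compact (and second countable, being a subset of $\real^d$), $\inputS$ is $\sigma$-compact and admits an exhaustion $\inputS = \bigcup_n K_n$ by compacta with $K_n \subseteq \mathrm{int}(K_{n+1})$, so that every compact subset of $\inputS$ lies in some $K_n$. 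The hypothesis that all functions have compact support then reads $\HK = \bigcup_n \HK_{K_n}$.

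Next I would invoke Baire's theorem: a nonempty complete metric space is not the countable union of nowhere-dense closed sets, so some $\HK_{K_n}$ has nonempty interior. But a linear subspace with nonempty interior is the whole space, so $\HK_{K_n} = \HK$ for this $n$; write $K := K_n$. In particular $\K(.,x) \in \HK_K$, i.e.\ $\supp \K(.,x) \subseteq K$ for every $x \in \inputS$. Finally I would use \gls{spd} through a \emph{single} point: characteristicness to $\Mfs$ forces $\embK(\delta_x) \neq 0$, i.e.\ $\normK{\delta_x}^2 = \K(x,x) = [\K(.,x)](x) > 0$, so $x \in \supp \K(.,x) \subseteq K$. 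Thus $\inputS \subseteq K$; but $K \subseteq \inputS$, whence $\inputS = K$ is compact — a contradiction.

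The hard part will be keeping the topology straight rather than any deep estimate: the supports must be taken inside $\inputS$, one needs a genuine compact exhaustion (which is exactly where local compactness together with $\sigma$-compactness enters, and what would fail for pathological non-locally-compact subsets of $\real^d$), and the clean punchline that $\inputS \subseteq K \subseteq \inputS$ yields \emph{compactness} of $\inputS$ (not merely relative compactness) hinges precisely on $K$ being drawn from the exhaustion, hence a subset of $\inputS$.
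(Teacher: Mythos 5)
Your proof is correct, but it takes a genuinely different route from the paper's. The paper argues \emph{directly}: it picks a sequence $(x_n)$ in $\inputS$ escaping every compact set and explicitly constructs $f := \sum_k \frac{a_k}{2^{2k}} \K(.,x_k)/\normK{\K(.,x_k)}$, with signs $a_k \in \{\pm 1\}$ chosen recursively so that the partial sums never cancel at the points $x_n$; a reproducing-property estimate then shows $|f(x_n)| > 0$ for all $n$, so $f \in \HK$ has non-compact support. You instead argue by contraposition with a Baire category argument: if every $f \in \HK$ had compact support, then $\HK$ would be a countable union of the closed subspaces $\HK_{K_n}$ attached to a compact exhaustion, one of which must therefore be all of $\HK$, forcing $\supp \K(.,x) \subseteq K_n$ for every $x$ and hence $\inputS \subseteq K_n$ via $\K(x,x)>0$. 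Each step you take is sound (closedness of $\HK_K$ from norm-to-pointwise convergence, the exhaustion from local compactness plus second countability, the "subspace with interior is everything" step, and $\K(x,x)>0$ from injectivity of $\embK$ on $\Mfs$). What each approach buys: the paper's construction is explicit and works for any non-compact subset of $\real^d$ without invoking an exhaustion (any metric non-compact space admits a sequence with no compact closure), and it exhibits the witness function as a limit of kernel sections; your argument is non-constructive but avoids the delicate sign-bookkeeping and cancellation estimate entirely, at the price of genuinely needing hemicompactness (a countable cofinal family of compacta), which holds here only because of the paper's standing local-compactness assumption on $\inputS$ --- a caveat you correctly flag yourself. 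Both proofs, notably, use strict positive definiteness only through the weak consequence $\K(x,x) > 0$.
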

Two comments on the assumptions of this result. First, the assumption that $\K$ be \gls{spd} is not very restrictive, for the identification of $\HK$ with a sub- or superset of $\Dall{\infty}$ would only make sense, if the canonical embedding of one into the other was injective. And this requires an \gls{spd} kernel. As for the other assumption, that $\inputS$ be non-compact, it is also rather natural, because we already know that, as soon as $\K$ is smooth, $\HK$ can embed all distributions with compact support.

Summarising, we answered \ref{Q4} positively in some cases (when $\Cont{\infty}{c}$ embeds continuously into $\HK$), but left many others unanswered. Lemma~\ref{lem:HKnotBigger} is good news: it fortifies the presumption that RKHSs are at least not bigger than $\Dall{\infty}$. Whether they can always be seen as the KME of a distribution space remains, however, an open question.

\section{Conclusion\label{sec:Conclusion}}

%\CJ{What do you think about this section in general?}\Bernhard{maybe it needs a little more motivation why this should appear in a machine learning journal} \CJ{Done. I suppressed the subsection with the step-by-step summary and added the section on Related Machine Learning Literature instead. What do you think now?}
We first discuss how this work relates and contributes to existing machine learning literature and then conclude with future work and closing remarks.

\paragraph{Comparison with Related Machine Learning Literature.}
	Universal and characteristic kernels play an important role in learning theory and machine learning. Universal kernels ensure the consistency of some RKHS-based estimators in the context of regression and classification \citep{steinwart01}, whereas characteristic kernels have found applications in the context of two-sample tests \citep{gretton07, gretton12}, independence tests \citep{gretton08, gretton10, fukumizu08} and kernel density estimators \citep{sriperumbudur13}. The terminology is surprisingly recent: in the past fifteen years, the machine learning community introduced various notions of universality \citep{steinwart01, micchelli06, carmeli06, caponnetto08},  accompanied by sufficient conditions on kernels to guarantee their universality. Meanwhile, \citet{fukumizu04} introduced the notion of a characteristic kernel to $\Mprob$ followed by a series of articles \citep{fukumizu04, fukumizu08, fukumizu09a, fukumizu09b, gretton07, sriperumbudur08, sriperumbudur10a, sriperumbudur10b} to characterise this notion. \Citet{gretton07} were first to link universal and characteristic kernels, by showing that characteristic implies universal. 
\Citet{fukumizu09b} were first to explicitly link \gls{ispd} and characteristic kernels, by showing that one implies the other. \Citet{sriperumbudur10b} then used this link to prove handy sufficient criteria for a kernel to be characteristic. In their overview article, \citet{sriperumbudur11} catalogue known (and complete some missing) relations between the various notions of universal, characteristic and \gls{spd} kernels that had appeared. However, they do not clearly systemise their equivalence. Theorem~\ref{theo:UniversalMeansCharacteristic} and Proposition~\ref{prop:SPD} fill this gap, by showing that, if $\HK$ continuously embeds into $\F$, then $\K$ is universal over $\F$ iff it is characteristic (or, equivalently, \gls{spd}) over $\D = \F'$. This is surprisingly easy to prove, simple to remember and covers almost any case encountered in practice... except the most common one: when the kernel is characteristic to $\Mprob$. Theorem~\ref{theo:CKCharacterisation} and Proposition~\ref{prop:UniversalityForP} palliate this shortage and complete respectively Theorem~\ref{theo:UniversalMeansCharacteristic} and Proposition~\ref{prop:SPD}.
	
	Although introduced much earlier in the mathematical community \citep{guilbart78}, the question whether a given kernel metrises the narrow topology was introduced and studied in the machine learning community by \cite{sriperumbudur10b}. They provided sufficient conditions, later on improved and used by \citet{sriperumbudur13} to upper bound convergence rates of kernel density estimators. Theorem~\ref{theo:MetrizationOfNarrow2} generalises these sufficient conditions and finally provides a converse on locally compact Hausdorff spaces. %With its particularly straightforward conditions, this theorem closes the quest for a characterisation of kernels that metrise the narrow topology over $\Mprob$ ---~at least, on locally compact Hausdorff spaces.
	
	The key observation underlying this paper is the dual link between the embedding of $\HK$ into $\F$ and the KME of $\F'$, which is captured in Lemma~\ref{lem:Transpose} and Corollary~\ref{cor:EmbedDuals}. We do not claim originality for this insight but hope that this work will convince the machine learning community of its usefulness. Not only does it underly the link between universal, characteristic and \gls{spd} kernels, but it straightforwardly leads to the extension of KMEs from measures to distributions. Although we discuss possible applications (see discussions after Theorem~\ref{theo:Differentiation}), we have not yet specifically used this extension in a machine learning problem. But the calculus rules of Section~\ref{sec:Calculus} are promising, particularly the link between derivatives of embeddings and embeddings of derivatives. And considering the importance of Schwartz-distributions even in such applied problems as the resolution of (partial) differential equations, we are confident that the extension of KMEs will find useful machine learning applications. Meanwhile, it can be seen as a curious byproduct of Lemma~\ref{lem:Transpose} that completes the theory of KMEs and offers memorable results (Fubini Theorem, Corollary~\ref{cor:Existence}, Proposition~\ref{prop:CompactCharacteristicRd}, Proposition~\ref{prop:UniqueExtension}).

\paragraph{Future Work and Closing Remark.}
	Except when explicitly stated otherwise, we always assumed that $\inputS$ is locally compact. This hypothesis seems more and more unadapted to the general trend in probability theory which is to use Polish spaces, such as separable Banach spaces. But a Banach space is locally compact iff it is finite dimensional. Whether Theorem~\ref{theo:MetrizationOfNarrow2} also holds for any Polish space is certainly the most important open question of this paper. Our choice to work with locally compact spaces is all the more questionable, seeing that every kernel over $\inputS$ defines a very natural (semi-) metric over $\inputS$, namely $\rho_\K(\hat x, \hat y) := \normK{\K(.,\hat x) - \K(.,\hat y)}$. Topologies over graphs, for example, are often defined via such kernel-induced metrics. However, local compactness naturally fits distributions in general, and more specifically measures, when defined via the Riesz representation theorem. %We hope that, despite of being overly general, our analysis with $\inputS$ being $\real^d$ (or a locally compact Hausdorff space) will at least offer interesting new insights in the the kind of functions (or functionals) that appear, when completing $\HKpre$ to $\HK$.% \CJ{Should I move the discussions of this paragraph to the end of Section~\ref{sec:Narrow}?}

	Overall, we think our work offers good insights in the nature of the functions that appear when completing $\HKpre$ to $\HK$. Indeed, recall that $\HKpre$ is the set of finite linear combinations of $\K(.,x)$. Thus the question: when completing $\HKpre$, what else could appear than functions that are ``infinite linear combinations of $\K(.,x)$''? When saying ``infinite linear combinations of $\K(.,x)$'', one might first think of continuous sums of weighted infinitesimals $\K(.,x) \diff \mu(x)$, meaning the KME $\int \K(.,x) \diff \mu(x)$ of a measure~$\mu$. One may, however, reasonably argue that distributions are similar to measures, but allow for more general weighting schemes. So it is only natural that, under suitable assumptions on $\K$, they may also embed into $\HK$; in general not all of them, but at least all those with compact support as soon as $\K$ is smooth. In some cases, the embeddable distributions even suffice to cover the whole space $\HK$ (for example, when $\HK$ is a Sobolev space $\Sobo{m,2}{0}(\real^d)$ with $m > 1/2$). However, remember that $\HK'$ identifies with $\HK$ via the anti-linear Riesz representer map. And after all, with weak-integration, this is just saying that $\HK$ contains exactly all those functions $\int \K(.,x) \diff \mu(x)$ where $\mu$ might not be a continuous linear form over $\Cont{\infty}{c}$ (in which case it would be a distribution), but is at least a continuous linear form over $\HK$.

\acks{
	We are deeply indebted to Alphabet for their support via a Google European Fellowship in Causal Inference accorded to Carl-Johann Simon-Gabriel. We thank Bharath Sriperumbudur for enlightening remarks, Jonas Peters for his kind and helpful advice, and Ilya Tolstikhin for useful discussions and encouragements.
}

%%%%%%%%%%%%%%%%%%%%%%%%%%%%%%%%%%%%%%%%%%%%%%%%%%%%%%%%%%%%%%%
%%%%%%%%%%%%%%%%%%%%%%%%%%%%%%%%%%%%%%%%%%%%%%%%%%%%%%%%%%%%%%%

% \newpage

\appendix

%\section{Appendix \label{sec:Appendix}}

%\renewcommand{\thesubsection}{\Alph{subsection}}

\section{Proofs\label{sec:Proofs}}

In this section, we gather all the complements to non fully proved theorems, propositions, corollaries or lemmas appearing in the main text.

\subsection{Proof of Proposition~\ref{prop:HKinC}\label{proof:HKinC}}
\begin{proof}
	Suppose that $\HK \subset \Cont{}{0}$. \ref{p:1} clearly holds. Suppose \ref{p:2} was not met. Then let $x_n \in \inputS$ such that $\K(x_n, x_n) = \normK{\K(.,x_n)}^2 \rightarrow \infty$. Thus $\K(.,x_n)$ is unbounded. But  $\ipdK{f}{\K(.,x_n)} = f(x_n)$ is bounded for any $f \in \HK$, thus $\K(.,x_n)$ is bounded (Banach-Steinhaus Theorem). Contradiction. Thus \ref{p:2} is met.
	
	Conversely, suppose that \ref{p:1} and \ref{p:2} hold. Let $\HKpre := \lin \{ \K(.,x) \, | \, x \in \inputS \}$. Then, $\HKpre \subset \Cont{}{0}$, and for any $f, g \in \HK$, $\norm{f - g}_\infty \leq \normK{f-g} \norm{\K}_\infty$. Thus $\HKpre$ continuously embeds into the \emph{closed} $\Cont{}{0}$, thus so does its $\normK{.}$-closure, $\HK$. The proof of the cases $\HK \subset \Cont{}{}$ and $\HK \subset \Cont{}{b}$ are similar \citep[see also][Theorem~17]{berlinet04}.
\end{proof}

\subsection{Proof of Proposition~\ref{prop:HKinCm}\label{proof:HKinCm}}

\begin{proof}
%	Suppose that $\K \in \Cont{(m,m)}{0}$, then \citep[Corollary~4.36]{steinwart08}, $\HK \subset \Cont{}{}$ and for any $x \in \inputS$, $f \in \HK$, $|p| \leq m$, we have $|\partial^p f(x)| \leq \normK{f} (\partial^{(p,p)} \K(x,x))^{1/2}$. Thus $\HK \subset \Cont{}{b}$ and if $f_n \rightarrow f$ in $\HK$, then $\max_{|p| \leq m} \sup_{x \in \inputS} |f_n(x) - f(x)| \rightarrow 0$.  \textcolor{red}{Show that $\HK \subset \Cont{}{0}$ and }	
		Suppose that $\K \in \Cont{(m,m)}{b}$. Then $\HKpre \subset \Cont{m}{b}$ \citep[Corollary~4.36]{steinwart08} and for any $x \in \inputS$, $f \in \HKpre$, and $|p| \leq m$, we have $\norm{\partial^p f}_\infty \leq \normK{f} \norm{\sqrt{\partial^{(p,p)} \K}}_\infty$. Thus $\HKpre$ continuously embeds into the \emph{closed} space $\Cont{m}{b}$, thus so does its $\normK{.}$-closure, $\HK$. But, almost by definition of $(\Cont{m}{b})_c$ (see Section~\ref{sec:MprobAndUniversality}), we have $\Cont{m}{b} \csubset (\Cont{m}{b})_c$. Thus $\HK \csubset (\Cont{m}{b})_c$
\end{proof}

\subsection{Proof of Theorem~\ref{theo:Fubini}\label{proof:Fubini}}

\begin{proof}
	The definition of the embedding of a distribution, together with the kernel's property that $\K(y,x) = \overline{\K(x,y)}$ leads to:
	\begin{align*}
		\ipdK{D}{T} &= \int_x \ipdK{\int_y \K(.,y) \diff D(y)}{\K(.,x)} \diff \bar T(x) \\
			&= \int_x \overline{\ipdK{\K(.,x)}{\int_y \K(.,y) \diff D(y)}} \diff \bar T(x) \\
			&= \int_x \overline{\int_y \ipdK{\K(.,x)}{\K(.,y)} \diff \bar D(y)} \diff \bar T(x) \\
			&= \iint \K(x,y) \diff D(y) \diff \bar T(x).
	\end{align*}
	Use $\ipdK{D}{T} = \overline{\ipd{T}{D}}_\K$ to prove the right-most part of \eqref{eq:FubiniIP}.
\end{proof}

\subsection{Proof of Lemma~\ref{lem:VectorDiff}\label{proof:VectorDiff}}

\begin{proof}
	That $\vec \varphi_\K$ is continuously differentiable follows from Corollary~4.36 of \citet{steinwart08}.
	Suppose now that $|p| =1$. Then, for any $x,y \in \inputS$, $h \in \real$:
	\begin{align*}
		[\vec \varphi_\K(x)](y) &= \ipdK{\partial^p \vec \varphi_\K(x)}{\K(.,y)} \\
			&\overset{(*)}{=} \lim_{h \rightarrow 0} \ipdK{\frac{\vec \varphi_\K(x+h \, p) - \vec \varphi_\K(x)}{h}}{\K(.,y)} \\
			&= \lim_{h \rightarrow 0} \frac{\K(y,x+h \, p) - \K(y,x)}{h} \\
			&= \partial^{(0,p)} \K(y,x) ,
	\end{align*}
	where $(*)$ holds because strong RKHS-convergence implies weak convergence. We then iterate the process by increasing $|p|$ step by step, until we get the announced result: $\partial^p \vec \varphi_\K = \partial^{(0,p)} \K$.
\end{proof}

\subsection{Proof of Theorem~\ref{theo:Differentiation}\label{proof:Differentiation}}

\begin{proof}
	%$\K \in \Cont{(m,m)}{*}$ implies $\HK \csubset \Cont{m}{*}$ (Proposition~\ref{prop:HKinCm}). And $D \in (\Cont{m - |p|}{*})'$ implies $\partial^p D \in (\Cont{m}{*})'$. \textcolor{red}{Reference?} Thus $\partial^p D$ embeds into $\HK$.
	The proof holds in the following equalities. For any $f \in \HK$,
	\begin{align}
		\ipdK{f}{\int \K(.,x) \diff [\partial^p D](x)} &= \int \ipdK{f}{\K(.,x)} \diff [\partial^p \bar D](x) = [\partial^p \bar D](f) \label{step1}\\
			&= (-1)^{|p|} \bar D(\partial^p f) \label{step3} \\
			&= (-1)^{|p|} \bar D(\ipdK{f}{\partial^{(0,p)}\K(.,\hat x)}) \label{step4} \\
			&= \ipdK{f}{(-1)^{|p|} \int \partial^{(0,p)} \K(.,x) \diff D(x)} \, . \label{step5}
	\end{align}
	%Equations \eqref{step1} and \eqref{step5} use the definition of a weak integral \gls{wrt} a distribution. 
	Equation~\eqref{step3} uses the definition of the distributional derivative \citep[see][]{schwartzTD}; \eqref{step4} uses the fact that, for any $x \in \inputS$, $\partial^p f(x) = \ipdK{f}{\partial^{(0,p)}\K(.,x)}$ \citep[see][proof of Corollary 4.36]{steinwart08}. And as the left-hand-side of \eqref{step1} is a continuous linear form over $\HK$, it suffices to apply the weak integral's definition to \eqref{step4} to get Equation~\eqref{step5}.
\end{proof}	

\subsection{Proof of Proposition~\ref{prop:Isometry}\label{proof:Isometry}}

\begin{proof}
	The linear span $\HH^{\scriptscriptstyle{pre}}_{\partial^{(0,p)}}$ of $\{ \partial^{(0,p)} \K(.,x) \, | \, x \in \inputS \}$ and that of $\{ \partial^{(p,p)} \K(.,x) \, | \, x \in \inputS \}$ are dense in $\Hop$ and $\Hpp$ respectively. And for any $x,y \in \inputS$ \citep[Lemma~4.29]{steinwart08},
	\begin{equation*}
		 \ipdK{\partial^{(0,p)} \K(.,x)}{\partial^{(0,p)} \K(.,y)} = \partial^{(p,p)} \K(y,x) = \ipd{\partial^{(p,p)} \K(.,x)}{\partial^{(p,p)} \K(.,y)}_{\partial^{(p,p)} \K} \, .
	\end{equation*}
%	Thus
%	\begin{equation*}
%		\ipdK{\partial^{(0,p)} \K}{\partial^{(0,p)} \K} = \ipd{\Dp(\partial^{(0,p)} \K(.,x))}{\Dp(\partial^{(0,p)} \K(.,y))}_{\partial^{(0,p)} \K} \, .
%	\end{equation*}
	Thus $\Hop$ and $\Hpp$ are congruent (i.e.\ isometrically isomorphic) via the isometry
	\begin{equation*}
		\function{\widetilde{\Dp}}{\vspace{5pt} \Hop}{\Hpp}{\partial^{(0,p)} \K(.,x)}{\partial^{(p,p)} \K(.,x) \qquad \forall x \in \inputS} \, .
	\end{equation*}%$\Dp$.
	The maps $\widetilde{\Dp}$ and $\Dp$ coincide on $\HH^{\scriptscriptstyle{pre}}_{\partial^{(0,p)}}$. Let now $f \in \Hop$ and $f_n \in \HH^{\scriptscriptstyle{pre}}_{\partial^{(0,p)}}$ such that $f_n \rightarrow f$. Then, strong convergence implying weak convergence: for any $x \in \inputS$,
	\begin{equation*}
	\begin{array}{ccccccc}
		\partial^p f_n(x) &=&\ipdK{f_n}{\partial^{(0,p)} \K(.,x)} &\longrightarrow& \ipdK{f}{\partial^{(0,p)} \K(.,x)} &=& \partial^p f(x) \\
		\partial^p f_n(x) &=& \ipd{\Dp(f_n)}{\Dp(\partial^{(0,p)} \K(.,x))} &\longrightarrow& \ipdK{\Dp(f)}{\Dp(\partial^{(0,p)} \K(.,x))} &=& \Dp(f)(x)
		%\partial^p f_n = \Dp(f_n) = \widetilde{\Dp}(f) \rightarrow \widetilde{\Dp}(f)
	\end{array} \, .
	\end{equation*}
	Thus $\Dp(f) = \partial^p f$. Thus $\widetilde{\Dp} = \Dp$.
\end{proof}

\subsection{Proof of Proposition~\ref{prop:IntDiffSwitch}\label{proof:IntDiffSwitch}}

\begin{proof}
	Conditions~\ref{pPoint1} and \ref{pPoint2} are equivalent by Theorem~\ref{theo:Differentiation}. Let now $f \in \Hop$, and let $D$ be a distribution such that $D(\partial^p f)$ is well defined. Using Proposition~\ref{prop:Isometry}, we may then write
	%That \ref{pPoint2} implies \ref{pPoint3} follows from the equality: for any $f \in \Hop$,
	\begin{equation}\label{eq:IDSaux}
	\begin{aligned}
		D \left(\ipdK{f}{\partial^{(0,p)} \K(.,x)} \right) &= D \left(\ipdK{\Dp f}{\Dp [\partial^{(0,p)} \K(.,x)]} \right) \\
			&= D \left(\ipd{\partial^p f}{\partial^{(p,p)} \K(.,x)}_{\partial^{(p,p)} \K} \right) \, .
	\end{aligned}
	\end{equation}
	As $\Dp$ is an isometry from $\Hop$ onto $\Hpp$, we see that the left-hand-side defines a continuous linear form over $\Hop$ \gls{iff} the the right-hand-side defines a continuous linear form over $\Hpp$.
	This both shows that \ref{pPoint2} implies \ref{pPoint3} (because $\Hop \subset \HK$), and that \ref{pPoint2} and \ref{pPoint3} are equivalent when $\K \in \Cont{(m,m)}{0}$ (because then $\Hop = \HK$).
	Let us now suppose that \ref{pPoint2} is met (which covers both preceding cases). Then the embedding $\emb_{\partial^{(0,p)} \K}(D)$ belongs to $\Hop$, because, for any $f$ in the orthogonal of $\Hop$, we have
	\begin{equation*}
		\ipdK{f}{\emb_{\partial^{(0,p)} \K}(D)} = \bar D \left(\ipdK{f}{\partial^{(0,p)} \K(.,x)} \right) = 0 \, .
	\end{equation*}
	And rewriting Equation~\eqref{eq:IDSaux} with KMEs, we get, for any $f \in \Hop$
	\begin{equation*}
		\ipdK{f}{\emb_{\partial^{(0,p)} \K}(D)} = \ipd{\Dp f}{\Dp \emb_{\partial^{(0,p)} \K(.,x)}(D)}_{\partial^{(p,p)} \K} = \ipd{\partial^p f}{\emb_{\partial^{(p,p)} \K(.,x)}(D)}_{\partial^{(p,p)} \K} \, .
	\end{equation*} 
	Thus $\partial^p \emb_{\partial^{(0,p)} \K}(D) = \emb_{\partial^{(p,p)} \K}(D)$, which proves Equation~\eqref{eq:IntDiffSwitch}.
\end{proof}

\subsection{Proof of Corollary~\ref{cor:dpMandMCharacteristic}\label{proof:dpMandMCharacteristic}}

\begin{proof}
	As the conditions \ref{pPoint1}-\ref{pPoint3} from Proposition~\ref{prop:IntDiffSwitch} are met, Equation~\eqref{eq:IntDiffSwitch} applies. But $\embK(\partial^p D)$ equals $\emb_{\partial^{(0,p)} \K}(D)$, which belongs to $\Hop$. Thus Equation~\eqref{eq:IntDiffSwitch} shows that $\emb_{\partial^{(p,p)} \K}(D)$ is simply the image of $(-1)^{|p|}\embK(\partial^p D)$ via the isometry $\Dp$ between $\Hop$ and $\Hpp$, which concludes.
\end{proof}

\subsection{Proof of Proposition~\ref{prop:CtoISPD}}\label{proof:CtoISPD}

\begin{proof}
	If $\K_0$ is characteristic to $\Mf$, then $\K_0 = \K$ and $\nu_0=0$ fulfil the requirements.
	Thus, for the rest of the proof, let us suppose that $\K_0$ is not characteristic to $\Mf$. Then there exists a non zero measure $\nu_0 \in \Mf$ such that $\emb(\nu_0) = 0$, and without loss of generality, we can choose $\nu_0(\One) = 1$.
	The proof now proceeds as follows. 
	\begin{enumerate}[label = (\alph*)]
		\item Show that if $\K_0$ is characteristic to $\Mf$, then the constant function $\One \not \in \HK$. \label{stepA}
		\item Construct a new Hilbert space of functions of the form $\HK = \lin \One \oplus \HH_{\K_0}$ that fulfils the requirements. \label{stepB}
		\item Show that it has a reproducing kernel $\K$. \label{stepC}
		\item Show that $\K$ fulfils the requirements. \label{stepD}
	\end{enumerate}
	\begin{itemize}%[font=\normalfont, noitemsep] %\itshape
		\item[\ref{stepA}] Suppose that $\One \in \HH_{\K_0}$. Then $1 = \bar \nu_0(\One) = \int \ipd{\One}{\K_0(.,x)}_{\K_0} \diff \bar \nu_0(x) \overset{(*)}{=} \ipd{\One}{\int \K_0(.,x) \diff \nu_0(x)}_{\K_0} = \ipd{\One}{\emb_{\K_0}(\nu_0)}_{\K_0} =0$, where in $(*)$ we use Equation~\eqref{eq:WICharacterization} (weak integral characterisation). Contradiction. Thus $\One \not \in \HH_{\K_0}$.
		\item[\ref{stepB}] Define $\HH := \lin \One \oplus \HH_{\K_0}$ and equip it with the inner product $\ipd{.}{.}$ that extends the inner product of $\HH_{\K_0}$ so, that 
		\begin{equation}\label{eq:NormConstraint}
			\One \perp \HH_{\K_0} \quad \text{and} \quad \norm{\One}=1 \, .
		\end{equation}
		In other words, for any $f = c_f \One + f^\perp \in \HH$ and any $g= c_g \One + g^\perp \in \HH$: $\ipd{f}{g} := \ipd{f^\perp}{g^\perp}_{\K_0} + c_f \bar{c_g}$. Obviously $\HH$ is a Hilbert space of functions.
		\item[\ref{stepC}] We now construct a function $\K$ by constructing first a new embedding $\emb$ and then taking the associated kernel. We then show that $\K$ is indeed a reproducing kernel and that $\emb$ is its associated KME.
		
		Define the hyperplane $\Mf^0 := \{ \mu \in \Mf \, | \, \mu(\One) = 0\}$. Each measure $\mu \in \Mf$ can be decomposed uniquely in a sum: $\mu = \mu^\perp + \mu(\One) \nu_0$ where $\mu^\perp = \mu - \mu(\One) \nu_0 \in \Mf^0$. Thus we may define the linear embedding $\function{\emb}{\Mf}{\HH}{}{}$ such that
		\begin{equation*}
			\emb(\mu) := \left \{
				\begin{array}{ll}
					\emb_{\K_0}(\mu) \quad & \mathrm{if} \ \mu \in \Mf^0 \\
					\One \quad & \mathrm{if} \ \mu = \nu_0
				\end{array}
				\right . \, .
		\end{equation*}
		Said differently: $\emb(\mu) := \emb_{\K_0}(\mu^\perp) +\mu(\One) \One = \emb_{\K_0}(\mu) + \mu(1)\One$. By construction, $\emb$ is injective.
		Now define $\K(x,y) := \ipd{\emb(\delta_y)}{\emb(\delta_x)}$.
		Then, for any $f \in \HH$, and any $x \in \inputS$,
		\begin{equation}\label{eq:IPD}
			\ipd{f}{\emb(\delta_x)} = \ipd{f^\perp}{\delta_x^\perp}_{\K_0} + c_f = f^\perp(x) + c_f \One(x) = f(x) \, .
		\end{equation}
		In particular, taking $f = \emb(\delta_y)$, we get $\K(.,y) = \emb(\delta_y)$. Thus \eqref{eq:IPD} may be rewritten:
		\[
			\forall f \in \HH, \forall x \in \inputS, \qquad \ipd{f}{\K(.,x)} = f(x).
		\]
		Thus $\HH$ is an RKHS with reproducing kernel $\K(x,y) = \K_0(x,y) + 1$, and $\emb$ is its associated KME $\embK$.
		
		\item[\ref{stepD}] As $\emb$ is injective over $\Mf$, $\K$ is characteristic to $\Mf$.
		To conclude, note that $\K_0(x,y) = \ipd{\delta_y - \nu_0}{\delta_x - \nu_0}$. %\jmlrQED%\qedhere
	\end{itemize}
	\vspace{-2.5em}
\end{proof}

\subsection{Proof of Theorem~\ref{theo:CKCharacterisation}}\label{proof:CKCharacterisation}

In the proof of Proposition~\ref{prop:CtoISPD}, we started with a characteristic kernel $\K_0$ and ended up showing that $\K(x,y) := \K_0(x,y) + 1$ is \gls{ispd} To prove Theorem~\ref{theo:CKCharacterisation}, it suffices, in \eqref{eq:NormConstraint}, to choose $\normK{\One} = \epsilon$ for some real $\epsilon >0$ instead of $\normK{\One} = 1$. This then shows that $\K(x,y) := \K_0(x,y) + \epsilon^2$ is \gls{ispd} The converse is clear.

\subsection{Proof of Proposition~\ref{prop:UniversalityForP}\label{proof:UniversalityForP}}

\begin{proof}
	As the vector space $N$ spanned by $\One$ is closed (because finite dimensional), the transpose of the canonical homomorphism $\function{}{(\Cont{}{b})_c}{(\Cont{}{b} / \One)}{}{}$ is a bijective linear map from the dual $((\Cont{}{b})_c/\One)'$ onto the polar of $N$, which is:  $\Mf^0 := \{ \mu \in \Mf \, | \, \mu(\One) = 0 \}$ \citep[Proposition~35.5]{treves67}. In other words, we may identify the dual of $(\Cont{}{b})_c / \One$ with the set of measures $\Mf^0$ that sum to 0. Thus $\K$ is characteristic to $\Mf^0$ \gls{iff} $\K$ is universal over $(\Cont{}{b})_c/\One$. Lemma~\ref{lem:CharacteristicAndM0} then concludes.
\end{proof}

\subsection{Proof of Theorem~\ref{theo:Q2}}\label{proof:Q2}

\begin{proof}
	That both weak topologies $w(\F',\F)$ and $w(\HK',\HK)$ coincide on bounded sets is the direct application of a known functional analysis result which can be stated as follows. On an equicontinuous set of a dual space $\F'$ (of a Hausdorff TVS $\F$), the weak-star topology ---or topology of pointwise convergence in $\F$--- coincides with the topology of pointwise convergence in a dense subset of $\F$ ---such as $\HK$, when $\K$ is universal over $\F$. The crux is then that, in a barrelled space, the equicontinuous sets in $\F'$  are precisely its bounded sets. For more details, see \citet[Proposition~32.5 and Theorem~33.1, ``Banach-Steinhaus Theorem'']{treves67}. This proves that $w(\F',\F) \cap \B \ = \ w(\HK',\HK) \cap \B$. The rest of Equation~\eqref{eq:Q2s} uses Proposition~\ref{prop:ContinuousEmbedding} and the fact that the weak topology of $\HK'$ is weaker than its strong topology.
\end{proof}

%Theorem~\ref{theo:Q2} was already proven in the main text. Here we give a few more details.
%\begin{proof}{(Of Theorem~\ref{theo:Q2}.)}
%
%	$\underline{w(\F',\F) \cap \B \subset b(\HK',\HK) \cap \B:}$
%	
%	Let $\B$ be a bounded subset of $\F'$.
%	The sets $V_{\epsilon, f_1, \ldots , f_n}(D_0) := \{ D \in \F' \, | \, |D(f_k) - D_0(f_k)| < \epsilon, k=1, \ldots, n\}$, where $\epsilon > 0$, $n \in \nat-\{0\}$, $f_1, f_2, \ldots, f_n \in \F$, $D_0 \in \mathcal \F'$ form a basis of open sets for the weak-star topology $w(\F',\F)$. Thus it will suffice to show that, for any $f \in \F$ and $\epsilon > 0$, there exists a neighbourhood of $D_0$ in $b(\HK',\HK)$ contained in $V_{\epsilon, f}(D_0)$.
%	
%	Let ${\mathcal{S}(D_0,\epsilon) := \{y \, | \, |y - D_0(f)| < \epsilon \}}$.
%	As $\F$ is barrelled, the Banach-Steinhaus theorem \citep[see][Theorem~33.1]{treves67} shows that, being bounded, $\B$ is an equicontinuous set. Thus there exists an open set $U \subset \F$ such that, for any $D \in \B$, $D(U) \subset \mathcal{S}(D_0, \epsilon)$. Thus, for any $D \in \F'$ and any $g \in U$, we have
%	\begin{align*}
%		|D(f) - D_0(f)|
%			& \leq |D(f) - D(g)| +|D(g) - D_0(g)| + |D_0(g) - D_0(f)| \\
%			& \leq \epsilon + |D(g) -D_0(g)| + \epsilon \\
%			& \leq 2\epsilon + \normK{D - D_0} \normK{g} \, .
%	\end{align*}
%In particular, the open sphere $\{D \in \B \, | \, \normK{D - D_0} < \frac{\epsilon}{\normK{g}} \} \subset V_{3\epsilon, f}(D_0)$, which concludes.
%\end{proof}

\subsection{A Second Proof of Corollary~\ref{cor:Existence}}\label{proof:Existence}

\begin{proof}{[Second Proof]}
	Theorem~9 of \citet{sriperumbudur10b} shows that we only need to prove that~\ref{th:Support} implies~\ref{th:DLone}. %In particular, this theorem proves that~\ref{th:Support} implies~\ref{th:Mprob}.
	The idea of this second proof is to generalise the proof of \citeauthor{sriperumbudur10b} so as to directly show that~\ref{th:Support} implies~\ref{th:DLone} instead of showing that~\ref{th:Support} implies~\ref{th:Mprob}. The justification of equalities $(a)$ and $(b)$ are rather long: we prove them in two separate proofs.
	
	%First, Proposition~5 of \citet{sriperumbudur11} already gives the equivalences $\ref{prop:SPD} \Leftrightarrow \ref{th:Mprob} \Leftrightarrow \ref{th:Mf} \Leftrightarrow \ref{th:Support}$.
	Suppose that $\K$ be $c_0$-universal. We will show that, for any $D \in \DLone{m}$, $\normK{D}^2 = [D \otimes \bar D](\K) =0 \Rightarrow D=0$.
	Let $\Lambda := \Four \psi$, which by Bochner's Theorem is a positive, finite (Borel) measure. %Further, we will write $D_x$ and $D_y$ when wanting to specify the variable on which the distribution acts.
	Then:
	\begin{align*}
		[D \otimes \bar D](\K) &= [D_x \otimes \bar D_y]\left (\int e^{-i(x-y)\xi} \diff \Lambda (\xi) \right ) \\
			&\overset{(a)}{=} \int [D_x \otimes \bar D_y](e^{-i(x-y)\xi}) \diff \Lambda(\xi) \\
			&= \int |D_x(e^{-ix\xi})|^2 \diff \Lambda(\xi).
	\end{align*}
	As $\Lambda$ has full support over $\inputS$, this implies $D_x(e^{-ix\xi}) \overset{(b)}{=} [\Four D_x](\xi) = 0$ for
%	\footnote{It is $(a)$ and $(b)$ that need justification. $(a)$ may seem a consequence of the commutation of tensor products $[D \otimes \bar D] \otimes \Lambda = \Lambda \otimes [D \otimes \bar D] \in \DLone{m}(\inputS \times \inputS \times \inputS)$. But unfortunately $e^{-i(\hat x - \hat y) \hat \xi} \not \in \Cont{m}{0}(\inputS \times \inputS \times \inputS)$ (the domain of $[D \otimes \bar D] \otimes \Lambda$), nor is it even in $\Cont{m}{b}(\inputS \times \inputS \times \inputS)$, a subspace of the dual of $\DLone{m}(\inputS \times \inputS \times \inputS)$. As for $(b)$, one has to show that the Fourier transform, which, according to the classical theory \citep{schwartzTD}, is only known to be a \emph{distribution}, indeed coincides with the \emph{function} $D_x(e^{-ix \hat \xi})$.}
 all $\xi \in \inputS$. Thus $D = 0$.
\end{proof}

Before justifying $(a)$ and $(b)$, here is some necessary background material. Both justifications essentially boil down to the question, whether the two (or three) distributions of a tensor product commute. For example, in $(a)$, we have to justify that $[D_x \otimes \bar D_y \otimes \Lambda_\xi](e^{i(x-y)\xi}) = {[\Lambda \otimes D_x \otimes \bar D_y](e^{i(x-y)\xi})}$. Now, usually \citep[for example in][]{schwartzTD}, the tensor product of distributions $S_x \in \Dall{\infty}_x$ and $T_y \in \Dall{\infty}_y$ is defined as a new distribution $S_x \otimes T_y$ in $\Dall{\infty}_{x,y}$. (Here, and in all this proof, the indices such as $x,y$ specify the underlying dummy variables.) This distribution acts on test functions in $\Cont{\infty}{c,x,y}$, and for those functions $\varphi$, one knows that $S_x \otimes T_y(\varphi) = T_y \otimes S_x(\varphi)$. Here however, we only used the tensor product notation as a proxy for successive parametric integrations: $[D_x \otimes \bar D_y \otimes \Lambda_\xi](e^{i(x-y)\xi})$ means that we first integrate the function $e^{i(x-y)\xi}$ \gls{wrt} $\Lambda$, then \gls{wrt} $D_y$, then \gls{wrt} $D_x$. If $e^{i(x-y)\xi}$ was in $\Cont{\infty}{c,x,y,\xi}$, this would be the same as evaluating the usual tensor product at a given test function. But $e^{i(x-y)\xi}$ is not in $\Cont{\infty}{c,x,y,\xi}$ so, a priori, we cannot interpret our successive parametric integrations as a usual tensor product of distributions. However, notice that, if $D_x$ and $T_y$ are in $\DLone{m}$, we know that they can (be uniquely extended to) integrate not only test functions in $\Cont{\infty}{c}$, but also any function in $\Cont{m}{b}$. Thus their tensor product $D_x \otimes T_y$ should also be uniquely extendable to a larger set of functions than $\Cont{\infty}{c,x,y}$, which we call $\HH^{m,m}_{x,y}$. This is done by \citet{schwartzFDVV}. Furthermore, on these spaces $\HH^{m,m}_{x,y}$, Schwartz shows that the tensor product commutes and can still be evaluated by successive parametric integrations. Thus, if $e^{i\hat x \hat y}$ happened to be in $\HH^{m,m}_{x,y}$, we see the quantity $[S_x \otimes T_y](e^{ixy})$, seen as a successive parametric integration, is actually the tensor product of $S_x$ and $T_y$ evaluated in $e^{ixy}$. And it commutes.

We now proceed with the justifications of $(a)$ and $(b)$. They will mostly be about identifying this space $\HH^{m,m}_{x,y}$ (or its analogue for the tensor product of three distributions) and verifying that the integrand belongs to it.  
\\

\begin{proof}{[Justification of $(a)$]}
We want to show that $[D_x \otimes \bar D_y \otimes \Lambda_\xi](e^{i(x-y)\xi}) = {[\Lambda \otimes D_x \otimes \bar D_y](e^{i(x-y)\xi})}$ (where the right-hand side is not yet proven to be well-defined). As just mentioned, the tensor product of distributions is usually defined to act on test functions in $\Cont{\infty}{c}$. But $e^{i(\hat x-\hat y) \hat \xi}$ is obviously not such a test function. So we start by extending the domain of definition of certain tensor products. Proposition~14 of \citet{schwartzFDVV}, and its two preceding paragraphs (applied with $L^l_x = (\Cont{m}{b,x})_c$, $M^m_y = (\Cont{m}{b,y})_c$ and $E = (\Cont{\scriptscriptstyle 0}{b,\xi})_c$) show that, for any distributions $S_x, T_y, Q_\xi$ in $(\DLone{m})_x, (\DLone{m})_y$ and $(\Mf)_\xi$, their tensor product $S_x \otimes T_y \otimes Q_\xi$ can be extended to a continuous linear form over a much wider class of functions than $\Cont{\infty}{c, x, y, \xi}$, which we call $\HH^{m,m,\scriptscriptstyle 0}_{x,y,\xi}$. Precisely, $\HH^{m,m,\scriptscriptstyle 0}_{x,y,z}$ is the tensor product of $(\Cont{m}{b,x})_c$, $(\Cont{m}{b,y})_c$ and $(\Cont{\scriptscriptstyle 0}{b,\xi})_c$, completed for Grothendieck's $\epsilon$-topology. We write:
\begin{equation*}
	\HH^{m,m,\scriptscriptstyle 0}_{x,y,\xi} := (\Cont{m}{b,x})_c \hat \otimes_\epsilon (\Cont{m}{b,y})_c \hat \otimes_\epsilon (\Cont{\scriptscriptstyle 0}{b,\xi})_c \, .
\end{equation*}
 As the tensor products of $S_x$, $T_y$ and $Q_\xi$ commute on the algebraic (non-completed) tensor product $(\Cont{m}{b,x})_c \otimes (\Cont{m}{b,y})_c \otimes (\Cont{\scriptscriptstyle 0}{b,\xi})_c$, which is dense in $\HH^{m,m,\scriptscriptstyle 0}_{x,y,\xi}$, they commute also on $\HH^{m,m,\scriptscriptstyle 0}_{x,y,\xi}$. Now, the proof would be finished, if $e^{i(\hat x - \hat y) \xi}$ belonged to $\HH^{m,m,\scriptscriptstyle 0}_{x,y,\xi}$. But when $m \not = 0$, this is wrong. Indeed, Proposition~13 of \citet{schwartzFDVV} shows that $\HH^{m,m,\scriptscriptstyle 0}_{x,y,\xi}$ contains exactly all the functions $\varphi(\hat x, \hat y, \hat \xi)$ such that, for all $|p|,|q| \leq m$, the partial derivatives $\partial^p_x \partial^q_x \varphi$  exist and are continuous and bounded. Thus $e^{i(\hat x - \hat y) \hat \xi} \not \in \HH^{m,m,\scriptscriptstyle 0}_{x,y,\xi}$. 

To solve this issue, we use the following trick. First, write
\begin{equation*}
	[D_x \otimes \bar D_y \otimes \Lambda_\xi]\left (e^{i(x-y)\xi} \right) = [D_x \otimes \bar D_y \otimes \widetilde \Lambda_\xi]\left (\frac{e^{i(x-y)\xi}}{1+|\xi|^{2m}} \right) \, ,
\end{equation*}
where $\widetilde \Lambda := (1+|\hat \xi|^{2m}) \Lambda$. Now the function $e^{i(x-y)\xi}/(1+|\xi|^{2m})$ is obviously in $\HH^{m,m,\scriptscriptstyle 0}_{x,y,\xi}$. And to finish the proof, we show that $\widetilde \Lambda \in \Mf$. Indeed, remember that $\Lambda$ is the (distributional) Fourier transform of the kernel $\K(x,y) = \psi(x - y)$: $\Lambda = \Four \psi$. But $\K$ is in $\Cont{(m,m)}{x,y}$. Thus, for any $|p| \leq m$, $\partial^{(p,p)} \K(x,y) = (-1)^p \partial^{2p} \psi(x-y)$ is a continuous kernel. So, using Bochner's theorem, we see that the Fourier transform of $\partial^{2p} \psi$, $(-1)^{|p|} \Four {(\partial^{2p} \psi)} (\hat \xi) = \hat \xi^{2p} \Four \psi(\hat \xi) = \hat \xi^{2p} \Lambda(\hat \xi)$ is a finite positive measure. Being a positive linear combination of such measures, $\widetilde \Lambda$ also belongs to $\Mf$.
\end{proof}
\begin{proof}{[Justification of $(b)$]}
%	For any $\xi \in \real^d$ and $p = (p_1, \ldots, p_d) \in \nat^d$, let $\xi^p$ be the number $\xi_1^{p_1} \cdots \xi_d^{p_d}$. 
	Using Théorème~2 of \citet{schwartzFDVV}, applied with $H^m = (\Cont{m}{b})_c$ and $E = (\Cont{\scriptscriptstyle 0}{b})_c$, we see that the function $\function{f}{}{}{\xi}{D_x(e^{-i x \xi})}$ is well defined and continuous. (Note that, having justified step $(a)$ already shows that $f$ is well-defined and integrable \gls{wrt} $\Lambda$.)  Now, let $\function{T^f}{\Cont{\infty}{c}}{\complex}{\varphi}{\int \varphi f}$ be the distribution associated to $f$. We want to show that $T^f = \Four D$. To do so, let $\varphi \in \Dall{\infty}$ and let $T^\varphi$ be the distribution associated to $\varphi$. The following equality concludes.
	\begin{multline*}
		\Four D(\varphi) := D(\Four \varphi) = \D_\xi(\int e^{ix\xi} \varphi(x) \diff x) = [D_\xi \otimes T^\varphi_x] \left (e^{ix\xi} \right )
			= [T^\varphi_x \otimes D_\xi] \left (e^{ix\xi} \right ) = T^f(\varphi) \, .
	\end{multline*}
	Here, the justification for the swap of tensor products is essentially the same as the justification of $(a)$. ($[D_\xi \otimes T^\varphi_x]$ is a continuous linear form over $\HH^{m,\scriptscriptstyle 0}_{x,\xi} := \Cont{m}{b,x} \hat \otimes_\epsilon \Cont{\scriptscriptstyle 0}{\xi}$; it commutes on the dense subspace $\Cont{m}{b,x} \otimes \Cont{\scriptscriptstyle 0}{\xi}$ thus also on $\HH^{m,\scriptscriptstyle 0}_{x,\xi}$; and $e^{i\hat x \hat \xi}$ belongs to $\HH^{m,\scriptscriptstyle 0}_{x,\xi}$.)
\end{proof}
This ends the proof of Theorem~\ref{cor:Existence}.

\subsection{Proof of Theorem~\ref{theo:CompactCharacteristic}}\label{proof:CompactCharacteristic}

\begin{proof}{[of Theorem~\ref{theo:CompactCharacteristic}]}
	We recall that an entire (or holomorphic) function $\function{f}{\complex^d}{\complex}{}{}$ is said of exponential type \gls{iff} $|f(z)| \leq M e^{\tau |z|}$ for some positive numbers $M$ and $\tau$, and sufficiently large $z$. We also remind that the zeroes of an entire function are necessarily isolated: they are thus at most countable. Our proof now relies on the following three theorems. The first is proved by \citet[Chapitre~VII, Théorème~XVI]{schwartzTD} \citep[see also][Theorem~29.2]{treves67}; the second, which we will only state for the particular case of entire functions of exponential type, can be found in the book by \citet[Theorem~2.10.1]{boas54}; the third is the compilation of Theorem~2.6.5 of \citet{boas54} and the remarks just preceding it.
	\begin{theorem}[Paley-Wiener-Schwartz]
		The (distributional) Fourier transform of a distribution over $\inputS = \real^d$ with compact support is the restriction to $\real^d$ of an entire function of exponential type defined over $\complex^d$; and vice-versa.
	\end{theorem}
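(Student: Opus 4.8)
The plan is to prove the two implications separately along the classical route, using for a compactly supported distribution $u$ the pairing $F(\zeta) := u_x(\mathrm{e}^{-i \langle x, \zeta \rangle})$, $\zeta \in \complex^d$; this is legitimate because $u$ extends to a continuous linear form on $\Cont{\infty}{}$ and $x \mapsto \mathrm{e}^{-i\langle x,\zeta\rangle}$ is smooth for each fixed $\zeta$.

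For the forward direction I would fix $A$ with $\supp u \subset \{|x|\le A\}$ and record the structural estimate of a compactly supported distribution: there exist $C>0$, $N \in \nat$ and (any) $A'>A$ with
\[
	|u(\varphi)| \le C \sum_{|\alpha|\le N} \sup_{|x|\le A'} |\partial^\alpha \varphi(x)| \qquad (\varphi \in \Cont{\infty}{}).
\]
That $F$ is entire I would get by showing $\zeta \mapsto \mathrm{e}^{-i\langle\cdot,\zeta\rangle}$ is a holomorphic $\Cont{\infty}{}$-valued map (its difference quotients converge in the topology of $\Cont{\infty}{}$) and composing with the continuous $u$; equivalently, one differentiates under the pairing and checks the Cauchy--Riemann equations. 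The exponential-type bound then drops out of the estimate above applied to $\varphi = \mathrm{e}^{-i\langle\cdot,\zeta\rangle}$: since $\partial^\alpha_x \mathrm{e}^{-i\langle x,\zeta\rangle} = (-i\zeta)^\alpha \mathrm{e}^{-i\langle x,\zeta\rangle}$ and $|\mathrm{e}^{-i\langle x,\zeta\rangle}| = \mathrm{e}^{\langle x,\operatorname{Im}\zeta\rangle} \le \mathrm{e}^{A'|\operatorname{Im}\zeta|}$ on $\{|x|\le A'\}$, one obtains
\[
	|F(\zeta)| \le C\,(1+|\zeta|)^N\,\mathrm{e}^{A'|\operatorname{Im}\zeta|}.
\]
On $\real^d$ this $F$ coincides with the tempered Fourier transform $\Four u$, which is therefore the restriction of an entire function of exponential type.

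For the converse I would start from an entire $F$ satisfying the refined bound $|F(\zeta)| \le C(1+|\zeta|)^N \mathrm{e}^{A|\operatorname{Im}\zeta|}$ (this is the correct reading of the ``vice versa''). Its restriction to $\real^d$ grows polynomially, hence is tempered, so there is a unique tempered distribution $u$ with $\Four u = F|_{\real^d}$. The point is to show $\supp u \subset \{|x|\le A\}$. Formally $u(x) = (2\pi)^{-d}\int_{\real^d} F(\xi)\,\mathrm{e}^{i\langle x,\xi\rangle}\,d\xi$, and Cauchy's theorem lets one translate the contour to $\xi+i\eta$, producing a factor $\mathrm{e}^{-\langle x,\eta\rangle}\mathrm{e}^{A|\eta|}$; taking $\eta = t\,x/|x|$ and $t\to\infty$ yields $\mathrm{e}^{-t(|x|-A)}$, which vanishes as soon as $|x|>A$, forcing $u=0$ there.

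The main obstacle is making this converse rigorous at the level of distributions: when $N\ge 0$ the inverse-Fourier integral is not absolutely convergent, so the contour cannot be shifted pointwise. I would remove this difficulty by regularising — write $F = (1+|\zeta|^2)^k G$ with $2k > N+d$, so that $G$ decays and $u = (1-\Delta)^k g$, where $g$ is the classical (now absolutely convergent) inverse Fourier transform of $G$; the contour shift is carried out for the smooth $g$ and the support statement is inherited through the local operator $(1-\Delta)^k$. Equivalently, one performs the deformation inside the pairing $\langle u,\varphi\rangle$ against Schwartz functions $\varphi$ supported in $\{|x|>A\}$. The two places demanding genuine care are the holomorphy of the $\Cont{\infty}{}$-valued exponential map in the forward direction and the uniformity of the regularised contour estimate in the converse; the remainder is bookkeeping with the compact-support seminorm estimate.
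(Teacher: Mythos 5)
First, a point of orientation: the paper does not prove this theorem at all --- it is quoted as classical background inside the proof of Theorem~\ref{theo:CompactCharacteristic}, with explicit references to \citet[Chapitre~VII, Th\'eor\`eme~XVI]{schwartzTD} and \citet[Theorem~29.2]{treves67}. So your proposal is not competing with an argument in the text; it is an attempt at the textbook proof and must be judged on its own. Your forward direction is correct and standard: the pairing $F(\zeta)=u_x(e^{-i\langle x,\zeta\rangle})$ is legitimate for $u\in\Dcomp{\infty}$, holomorphy follows from the holomorphy of $\zeta\mapsto e^{-i\langle\cdot,\zeta\rangle}$ as a $\Cont{\infty}{}$-valued map composed with the continuous $u$, and the compact-support seminorm estimate gives $|F(\zeta)|\le C(1+|\zeta|)^{N}e^{A'|\operatorname{Im}\zeta|}$, which is of exponential type (the loss from $A$ to $A'$ is immaterial for the statement as given). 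You are also right that the ``vice versa'' must be read with the refined bound $C(1+|\zeta|)^{N}e^{A|\operatorname{Im}\zeta|}$: the literal converse is false, since $e^{\zeta}$ is entire of exponential type but its restriction to $\real$ is not even tempered.

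The converse, however, has a genuine gap at the regularisation step. Writing $F=(1+|\zeta|^2)^{k}G$ fails for two reasons. First, the holomorphic extension $1+\zeta\cdot\zeta$ of $1+|\xi|^{2}$ vanishes on the complex variety $\{\xi\cdot\eta=0,\ |\eta|^{2}=1+|\xi|^{2}\}$, so $G$ is meromorphic rather than entire and the contour $\xi\mapsto\xi+i\eta$ cannot be pushed past $|\eta|=1$; the shift only yields exponential decay of $g$, not vanishing. Second, and more fundamentally, $g$ need not be compactly supported at all: for $u=\delta_0$, $F\equiv 1$, the function $g$ is the Bessel potential kernel, which has full support. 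Since $(1-\Delta)^{k}$ is a \emph{local} operator, locality gives $\supp u\subset\supp g$ --- the wrong inclusion --- so no support statement for $u$ can be ``inherited'' from $g$ this way. The standard repair goes in the opposite direction: multiply rather than divide. Put $u_\epsilon:=u*\chi_\epsilon$ with $\chi_\epsilon\in\Cont{\infty}{c}$ supported in $\{|x|\le\epsilon\}$; then $\Four{u_\epsilon}=F\cdot\Four{\chi_\epsilon}$ is entire, of type $A+\epsilon$, and rapidly decreasing on every plane $\operatorname{Im}\zeta=\eta$, so the contour shift is unobstructed and absolutely convergent, the classical function-level Paley--Wiener argument gives $\supp u_\epsilon\subset\{|x|\le A+\epsilon\}$, and $\epsilon\to0$ (with $u_\epsilon\to u$ in $\Dall{\infty}$) concludes. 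Your alternative of deforming inside the pairing against $\varphi$ supported in $\{|x|>A\}$ hits the same wall: $\Four{\varphi}$ then carries a factor $e^{B|\operatorname{Im}\zeta|}$ with $B>A$, so the shifted integrand grows instead of decaying.
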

	\begin{theorem}[Lindelöf]
		Let $\function{f}{\complex}{\complex}{}{}$ be an entire function. Let $(z_n)_n$ be its zeroes. %and $S(r) := \sum_{|z_n| \leq r} z_n^{-1}$ where $r > 0$.
		The function $f$ is of exponential type \gls{iff} there exists a constant $M>0$, such that, for any $r>0$
		\begin{equation}\label{eq:Lindeloef}
			|\sum_{|z_n| \leq r} \frac{1}{z_n} | \leq M \qquad \text{and} \qquad \# \{|z_n| \leq r \} \leq M r \, .
		\end{equation}
	\end{theorem}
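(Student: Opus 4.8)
The plan is to prove both implications of this classical result, working---as the phrase ``particular case of entire functions of exponential type'' indicates, and as is automatic in our application via Paley--Wiener--Schwartz---within the class of entire $f$ of order at most one. This restriction is genuinely needed, since e.g.\ $e^{z^2}$ has no zeroes, trivially satisfies the two conditions, yet is not of exponential type. Throughout I would normalise by dividing out the zero at the origin so that $f(0)\neq 0$, set $n(r):=\#\{|z_n|\le r\}$, and order the zeroes by increasing modulus. The two workhorses are \emph{Jensen's formula} and the \emph{Poisson--Jensen formula}; for the converse I would additionally invoke \emph{Hadamard's factorisation}.

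\textbf{Necessity.} Assume $|f(z)|\le Me^{\tau|z|}$. Jensen's formula reads
\begin{equation*}
	\frac{1}{2\pi}\int_0^{2\pi}\log|f(Re^{i\theta})|\,d\theta = \log|f(0)| + \int_0^R\frac{n(t)}{t}\,dt \, .
\end{equation*}
Bounding the left-hand side by $\log M+\tau R$ gives $\int_0^R n(t)/t\,dt=\BigO{R}$, and then $n(R)\log 2\le\int_R^{2R}n(t)/t\,dt=\BigO{R}$ yields the counting bound $n(R)\le MR$. For the sum I would extract the coefficient of $w$ in the Poisson--Jensen formula for $\log|f(w)|$ on $|w|<R$; matching the $w$-linear parts isolates the identity
\begin{equation*}
	\sum_{|z_n|<R}\frac{1}{z_n} = \frac{1}{\pi R}\int_0^{2\pi} e^{-i\theta}\log|f(Re^{i\theta})|\,d\theta \; -\; \frac{f'(0)}{f(0)} \; +\; \frac{1}{R^2}\sum_{|z_n|<R}\bar z_n \, .
\end{equation*}
The middle term is a fixed constant; the last is bounded by $n(R)/R=\BigO{1}$ via the counting bound; and the integral is $\BigO{1}$ because $\tfrac{1}{2\pi}\int_0^{2\pi}\bigl|\log|f(Re^{i\theta})|\bigr|\,d\theta=\BigO{R}$ (split into $\log^{+}|f|$, bounded by $\log M+\tau R$, and the Jensen integral, bounded below by $\log|f(0)|$). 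Hence the partial sums are uniformly bounded.

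\textbf{Sufficiency.} Conversely, given $n(r)\le Mr$ and $\bigl|\sum_{|z_n|\le r}z_n^{-1}\bigr|\le M$, the bound $n(r)=\BigO{r}$ forces $\sum|z_n|^{-2}<\infty$, so the genus-one canonical product $P(z)=\prod_n(1-z/z_n)e^{z/z_n}$ converges. I would estimate $\log|P(z)|$ for $|z|=r$ by splitting the factors at $|z_n|=2r$: for the far factors $|z/z_n|<\tfrac12$, the elementary bound $\log|(1-u)e^{u}|\le C|u|^2$ together with $\sum_{|z_n|>2r}|z_n|^{-2}=\BigO{1/r}$ gives a contribution $\BigO{r}$; for the near factors I would use $\log|(1-u)e^u|\le\log(1+|u|)+\operatorname{Re}u$, control $\sum_{|z_n|\le2r}\log(1+r/|z_n|)=\BigO{r}$ by integration by parts against $n(t)\le Mt$, and control $\sum_{|z_n|\le2r}\operatorname{Re}(z/z_n)=\operatorname{Re}\bigl(z\sum_{|z_n|\le 2r}z_n^{-1}\bigr)$ by $r\cdot M$ using exactly the sum hypothesis. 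Thus $\log|P(z)|=\BigO{r}$ and $P$ is of exponential type; since $f$ has order $\le1$, Hadamard's theorem gives $f(z)=z^{m} e^{az+b}P(z)$ with the exponential factor of exponential type, whence $f$ is of exponential type.

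\textbf{Main obstacle.} The substantive step is the necessity of the \emph{sum} bound: the counting bound is a one-line consequence of Jensen, but the boundedness of the (only conditionally convergent) partial sums $\sum_{|z_n|<R}z_n^{-1}$ requires the finer Poisson--Jensen coefficient identity above, together with the cancellation encoded in integrating $\log|f|$ against $e^{-i\theta}$. In the converse, the delicate point is the near-zero estimate of the canonical product, where the $\operatorname{Re}(z/z_n)$ terms are tamed \emph{precisely} by the sum hypothesis and the logarithmic terms by the counting hypothesis; the only hidden assumption is that $f$ already has order at most one, which is harmless in our setting since Paley--Wiener--Schwartz delivers functions of exponential type.
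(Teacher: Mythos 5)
Your proof is correct, but there is nothing in the paper to compare it against: the paper does not prove this statement, it imports it wholesale from \citet[Theorem~2.10.1]{boas54}. Your argument is the standard one from that literature, and every step checks out. For necessity, Jensen's formula gives $\int_0^R n(t)/t\,dt = \BigO{R}$ and hence $n(R)\log 2 \le \int_R^{2R} n(t)/t\,dt = \BigO{R}$; the boundedness of the partial sums $\sum_{|z_n|\le r} z_n^{-1}$ then follows from the linear-coefficient identity extracted from Poisson--Jensen, where your estimate $\int_0^{2\pi}\bigl|\log|f(Re^{i\theta})|\bigr|\,d\theta = \BigO{R}$ (splitting into $\log^{+}|f|$, bounded by the type hypothesis, and the Jensen lower bound $\log|f(0)|$) is exactly the cancellation needed. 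For sufficiency, the genus-one product converges since $n(r)\le Mr$ forces $\sum|z_n|^{-2}<\infty$, and your split at $|z_n|=2r$ is sound: the far factors contribute $\BigO{r}$ via $\sum_{|z_n|>2r}|z_n|^{-2}=\BigO{1/r}$, the near logarithmic terms contribute $\BigO{r}$ by integration by parts against $n(t)\le Mt$, and the $\operatorname{Re}(z/z_n)$ terms are controlled by the sum hypothesis at radius $2r$; Hadamard then finishes. Your observation that the statement is false for unrestricted entire functions ($e^{z^2}$ has no zeroes, satisfies both conditions, and is not of exponential type), so that the implicit restriction to order at most one is essential, is exactly right --- this is how Boas states the theorem, and the paper's phrasing glosses over it.

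One small correction to your closing sentence: in the paper's \emph{converse} application (the Weierstrass construction in the proof of Theorem~\ref{theo:CompactCharacteristic}), the order-$\le 1$ hypothesis is not supplied by Paley--Wiener--Schwartz. There Lindel\"of is applied to the canonical product $P(z)=z\prod_n E(z/z_n;1)$ itself, whose order is at most one because the exponent of convergence of its zeroes is $\le 1$ (a consequence of the counting bound \eqref{eq:Growth2} and Borel's theorem on canonical products), which is precisely the situation your sufficiency argument covers. Paley--Wiener--Schwartz only enters afterwards, to convert the resulting exponential-type function back into a compactly supported distribution.
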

	\begin{theorem}[Weierstrass]
		For any integer $p$ and any $z \in \complex$, we define the \emph{Weierstrass primary factor} $E(z;p)$ as
		\begin{equation*}
			E(z;p) = \left \{
				\begin{array}{ll}
					1-z \quad & \mathrm{if} \ p = 0 \\
					(1-z) \exp(z +\frac{z^2}{2} + \cdots + \frac{z^p}{p}) & \mathrm{if} \ p>0
				\end{array}
				\right . \, .
		\end{equation*}
		Let now $(z_n)_n$ be a possibly finite complex sequence, ordered by increasing modulus, with $z_0 \not = 0$, and such that
		\begin{equation}\label{eq:Genus}
			\forall \epsilon >0, \qquad \sum_n \frac{1}{|z_n|^{p+\epsilon}} < \infty \, .
		\end{equation}
		Then the products $\prod_n E(\frac{z}{z_n} ; p)$ converge pointwise to an entire function $P(\hat z)$, whose zeroes are exactly the $z_n$'s.
	\end{theorem}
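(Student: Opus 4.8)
The plan is to reduce the whole assertion to one quantitative estimate on the Weierstrass primary factor and then feed that estimate into the standard theory of infinite products of holomorphic functions. (If the sequence $(z_n)_n$ is finite the product is a finite product of entire functions, hence trivially entire with zeros exactly the $z_n$, so I assume the sequence is infinite throughout.) The two ingredients are: a pointwise bound $|E(w;p)-1| \leq |w|^{p+1}$ valid on the closed unit disk, and the classical criterion that local uniform convergence of $\sum_n |f_n - 1|$ forces the partial products $\prod_{n\leq N} f_n$ to converge locally uniformly to a holomorphic limit.

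First I would establish the key lemma: for every integer $p \geq 0$ and every $w$ with $|w| \leq 1$,
\[
	|E(w;p) - 1| \leq |w|^{p+1} \, .
\]
To prove it, write $g(w) := \sum_{k=1}^{p} w^k/k$ (with $g \equiv 0$ when $p=0$). Using $g'(w) = (1-w^p)/(1-w)$, a one-line computation gives $E'(w;p) = -w^p e^{g(w)}$. Since $g$ has nonnegative Taylor coefficients, so does $e^{g(w)} = \sum_{m \geq 0} a_m w^m$ with all $a_m \geq 0$; integrating term by term from $0$ (and using $E(0;p)=1$) yields $E(w;p)-1 = -w^{p+1}\sum_{m \geq 0} b_m w^m$ with $b_m := a_m/(m+p+1) \geq 0$. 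Evaluating at $w=1$, where $E(1;p)=0$, forces $\sum_m b_m = 1$. Hence for $|w|\leq 1$ we get $|E(w;p)-1| \leq |w|^{p+1}\sum_m b_m |w|^m \leq |w|^{p+1}$.

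Next I would assemble the convergence argument. Taking $\epsilon = 1$ in \eqref{eq:Genus} gives $\sum_n |z_n|^{-(p+1)} < \infty$; in particular $|z_n| \to \infty$, since otherwise infinitely many summands would be bounded below. Fix $R>0$ and restrict attention to $|z|\leq R$. Because $|z_n|\to\infty$, all but finitely many $n$ satisfy $|z/z_n| \leq 1$, so the lemma applies and yields $|E(z/z_n;p)-1| \leq (R/|z_n|)^{p+1}$. Summing, the tail of $\sum_n |E(z/z_n;p)-1|$ is dominated by $R^{p+1}\sum_n |z_n|^{-(p+1)}$, which converges; the domination is uniform in $z$ on the disk, so $\sum_n |E(z/z_n;p)-1|$ converges uniformly on $|z|\leq R$. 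By the standard infinite-product theorem, the partial products $P_N(z) = \prod_{n\leq N} E(z/z_n;p)$ then converge locally uniformly on $\complex$ to an entire function $P$.

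Finally I would identify the zeros. Each factor $E(z/z_n;p)$ vanishes exactly at $z = z_n$ (the exponential prefactor is zero-free, and $1-z/z_n$ vanishes only there), with multiplicity equal to the number of times the value is repeated in the sequence. On a disk $|z|\leq R$ only the finitely many factors with $|z_n|\leq R$ can vanish, while the tail product $\prod_{|z_n|>R} E(z/z_n;p)$ is a locally uniform limit of products of factors each within distance $1$ of $1$, hence is holomorphic and nowhere zero on that disk. Thus on $|z|\leq R$ the zero set of $P$, with multiplicities, is precisely $\{z_n : |z_n|\leq R\}$; letting $R\to\infty$ shows the zeros of $P$ are exactly the $z_n$. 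The main obstacle is the primary-factor estimate of the first step; once it is secured, the remainder is routine infinite-product theory.
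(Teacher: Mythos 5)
Your proof is correct and complete. For comparison: the paper itself offers no proof of this statement --- it is quoted as auxiliary background inside the proof of Theorem~\ref{theo:CompactCharacteristic} and attributed to \citet[Theorem~2.6.5 and the remarks preceding it]{boas54} --- so your argument fills in what the paper delegates to the literature. What you give is the classical canonical-product argument: the primary-factor estimate $|E(w;p)-1|\le |w|^{p+1}$ for $|w|\le 1$, proved via $E'(w;p)=-w^p e^{g(w)}$ with $g(w)=\sum_{k=1}^{p} w^k/k$ and the nonnegativity of the Taylor coefficients of $e^{g}$ (your computation checks out, including the normalisation $\sum_m b_m = 1$ extracted from $E(1;p)=0$; note that $1-E(\cdot\,;p)$ is entire and vanishes to order $p+1$ at $0$, so evaluating the coefficient series at $w=1$ is legitimate), followed by the standard theorem that uniform convergence of $\sum_n |f_n-1|$ on compacts yields locally uniform convergence of the partial products to a holomorphic limit whose zero multiplicities are the sums of those of the factors. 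Two points are worth recording. First, you invoke the hypothesis \eqref{eq:Genus} only with $\epsilon = 1$, i.e.\ $\sum_n |z_n|^{-(p+1)}<\infty$, which is exactly the classical convergence condition; your proof therefore establishes the statement under a weaker hypothesis than the one stated (the full ``for all $\epsilon$'' condition serves Boas to control the genus and type of the product, which is not needed for convergence and the zero set). Second, you rightly upgrade the stated pointwise convergence to locally uniform convergence: pointwise convergence alone could certify neither that $P$ is entire nor that the tail products are zero-free, so this strengthening is not a luxury but the step that makes the zero identification valid, and your tail bound $\sum_{|z_n|>R} (R/|z_n|)^{p+1}<\infty$ delivers it uniformly on each disk $|z|\le R$.
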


	We now prove Theorem~\ref{theo:CompactCharacteristic}.
	First, note that, as $\K$ is in $\Cont{(m,m)}{}$, the distributions with compact support of order $m$, $\Dcomp{m}$, indeed embed into $\HK$.
	Let now $D \in \Dcomp{m}$. As proven in the second proof of Corollary~\ref{cor:Existence}, we have
	\begin{equation}\label{eq:NormFour}
		\begin{aligned}
			\normK{D}^2 &= [D_x \otimes D_y] \left (\int e^{-i(x-y)\xi} \diff \Lambda (\xi) \right ) \\
				&\overset{(*)}{=} \int [D_x \otimes D_y](e^{-i(x-y)\xi}) \diff \Lambda(\xi) \\
				&= \int |D_x(e^{-ix\xi})|^2 \diff \Lambda(\xi) \, ,
		\end{aligned}
	\end{equation}
	where $\Lambda = \Four \psi$. In particular, we had shown that the function $D_x(e^{-ix \hat \xi})$ coincides with the distributional Fourier-transform $\Four D$ of $D$. This time, however, as $D$ has compact support, the Paley-Wiener-Schwartz theorem even shows that $\Four D$ is an entire function of exponential type. And Lindelöf's theorem shows that its zeroes $(z_n)_n$ verify Equation~\eqref{eq:Lindeloef}.
	\\
	
	For the direct part, suppose that $\K$ is not characteristic to $\DLone{m}$. Then we may as well suppose that $\normK{D} = 0$.
%	\begin{equation}\label{eq:ConditionProof}
%		\sum_n \frac{1}{|z_n|^{1+\epsilon}} < \infty \, .
%	\end{equation}
	Let us now decompose $\Lambda$ into its atomic part $\Lambda_\delta$ and non-atomic part $\Lambda_s$: $\Lambda = \Lambda_\delta +  \Lambda_s$ \citep[Theorem~3.5.1]{dudley02}. As $\Lambda$ is a positive measure (Bochner theorem), so are $\Lambda_\delta$ and $\Lambda_s$. Lindelöf's theorem and Equation~\ref{eq:NormFour} show that the support $\{x_n \}$ of the atomic part must satisfy $\# \{|z_n| \leq r \} \leq M r$ for any $r>0$ and a fixed $M>0$. As for $\Lambda_s$, we claim it is null, which will prove the direct part.
	
	Indeed, suppose that $\Lambda_s \not = 0$. Let $C$ be a compact subset of its support that contains no zero $z_n$ of $\Four D$. Then, for all $\xi \in C$, $|\Four D(\xi)|^2 \geq \min_C |\Four D|^2 > 0$ and
	\begin{equation}
		\normK{D}^2 = \int |D_x(e^{-ix\xi})|^2 \diff \Lambda(\xi)
%			&\geq \int_C |D_x(e^{-ix\xi})|^2 \diff \Lambda_s(\xi) \\
			\geq \min_C |\Four D|^2 \int_C \diff \Lambda(\xi) > 0 \, .
	\end{equation}
	Contradiction. Thus $\Lambda_s = 0$, which proves the direct part.
%	As this holds for any $D \in \Dcomp{m}$, $\K$ is \gls{spd} ---thus characteristic--- over $\Dcomp{m}$. By contraposition, if $\K$ is not characteristic to $\Dcomp{m}$, then $\Lambda_s = 0$ and $\Lambda$ is an atomic measure: $\Lambda = \sum_n \alpha_n \delta_{x_n}$. And using Equation~\eqref{eq:ConditionProof}, we see that the sequence $(x_n)_n$ must satisfy the same equation. This proves \ref{Direct}.
	\\
	
	Conversely, suppose that the Fourier transform $\Lambda$ of $\psi$ has a countable support $(x_n)_n$ and that there exists $M >0$ such that, for any $r>0$
	\begin{equation}\label{eq:Growth2}
		\# \{ |x_n| \leq r \} \leq M r \, .
	\end{equation}
	We will now construct an entire function of exponential type that cancels in each $x_n$.
	
	Consider a complex sequence $z_n$, ordered by increasing modulus, such that $\{z_n\} = (\{ x_n \} \cup \{ - x_n \}) \backslash \{0\}$. As $(x_n)_n$ satisfies Equation~\eqref{eq:Growth2}, $(z_n)_n$ satisfies Equation~\eqref{eq:Genus}. Thus Weierstrass' theorem applies and give an entire function $P(z) := z \prod_n E(\frac{z}{z_n} ; 1)$ that cancels on each $x_n$. And, as for any $r>0$, $|\sum_{|z_n| \leq r} z_n^{-1} | = 0$, Lindelöf's theorem shows that $P$ is of exponential type. And the Paley-Wiener-Schwartz theorem shows that $P$ is the Fourier transform of a distribution $T$ with compact support: $T \in \Dcomp{\infty}$. $T$, however, need not be of order $m$. Thus, instead, we consider a distribution $D := \phi \star T$, where $\phi$ is any fixed smooth function with compact support and where $\star$ denotes the convolutional product operator. Then $D$ is a smooth function with compact support, so $D \in \Dcomp{m}$. And the Fourier transform $\Four D$ of $D$ is the product of the two entire functions $\Four \phi$ and $\Four T$, thus it cancels on every $x_n$. So $\normK{D} = 0$, showing that $\K$ is not characteristic to $\Dcomp{m}$.% Theorem~\ref{theo:CompactCharacteristic} is now proven.
\end{proof}
	
\begin{remark}
	We did not find an analogue of Lindelöf's theorem when $\inputS$ is $\real^d$ instead of $\real$. That is why we restricted this Theorem~\ref{theo:CompactCharacteristic} to the one-dimensional case.
\end{remark}

	To prove the converse part, we constructed a distribution $D \in \Dcomp{m}$ whose Fourier transform cancels on the support $(x_n)_n$ of $\Four \psi$. To do so, we used Weierstrass' theorem. Here is an alternative, if, instead of assuming that $(x_n)_n$ satisfies Equation~\eqref{eq:Growth2}, we suppose that is verifies the stronger condition
	\begin{equation}
		\sum_n \frac{1}{|x_n|} < \infty \, .
	\end{equation}
	
	%Suppose that the$\Four \psi = \sum_n \alpha_n \delta_{x_n}$ with $\sum_n \frac{1}{|x_n|} < \infty$ and $|x_1| \leq {|x_2| \leq \ldots \ }$.
	We will construct $D$ as the limit of the sequence of convolutional products of the following functions: 
	\begin{equation*}
		f_{n}(x) := \left \{
		\begin{array}{cl}
			\frac{|x_n|}{2} \qquad &\mathrm{if} \ |x| \leq |x_n| \\
			0 \qquad &\mathrm{otherwise}
		\end{array}
		\right . \, .
	\end{equation*}
If the sequence $(x_n)_n$ has finite cardinal $N$, then $D$ can be chosen as the $N$-fold convolutional product $D := f_1 \ast f_2 \ast \cdots \f_N$. That $D$ then satisfies all the requirements is clear. Thus, from now on, we suppose that $(x_n)_{n \geq 1}$ is not a finite sequence. This time, $D$ will be the limit of the convolutional products $F_n := f_1 \ast f_2 \ast \cdots \ast f_n$ when $n$ grows indefinitely. To show that these convolutional products do converge to a probability measure $D$, we use Levy's continuity theorem \citep[Theorem~9.8.2]{dudley02}. Indeed: 
\begin{itemize}
	\item $[\Four f_n](\hat x) = \sinc(\hat x/|x_n|)$, where $\sinc(\hat x) := \frac{\sin \hat x}{\hat x}$. For any $x \in \real$, $|\sinc x| \leq 1$, thus $|[\Four F_n](x)| = |[\Four f_1](x)[\Four f_2](x) \cdots [\Four f_n](x)|$ decreases with $n$, thus converges. And, as soon as $|x| \leq |x_m|$, $[\Four f_m](x) \geq 0$. Thus, for any $n \geq m$ the sign of $\Four F_n$ stays constant, thus converges. Thus $\Four F_n$ converges pointwise to a function we denote $\phi$.
	\item Let us now show that $\phi$ is continuous in $0$. Let $|x| < |x_1|$. Then $1 \geq \sinc \frac{x}{|x_n|} \geq 1 - \frac{x^2}{6 |x_n|^2}$. Thus 
	\begin{equation*}
		0 = \ln \phi(0) \geq \ln \phi(x) %= \sum_n \ln \sinc \frac{x}{|x_n|}
			\geq \sum_n \ln(1 - \frac{x^2}{|x_n|^2})
			\geq - \frac{x^2}{6} \sum_n \frac{1}{|x_n|^2} \xrightarrow{x \rightarrow 0} 0 \, .
	\end{equation*}
	Thus $\phi$ is continuous in $0$.
\end{itemize}
Thus $\phi$ is the Fourier transform $\Four D$ of a probability measure $D$. Finally, note that for any $n$,
\begin{equation*}
	\supp F_n \subset \left [- \sum_n \frac{1}{|x_n|}, \sum_n \frac{1}{|x_n|} \right ] =: S \, .
\end{equation*}
Thus $P$ has also its support in the compact set $S$. Finally, by construction, for any $n$, $[\Four P](x_n) = 0$, thus $\normK{P}^2 = \int |\Four P|^2 \diff \Four \psi = 0$, although $P \not = 0$. Thus $\psi(\hat x - \hat y)$ is not characteristic to $\Mprob \cap \Dcomp{m}$.

\subsection{Proof of Proposition~\ref{prop:CompactCharacteristicRd}}\label{proof:CompactCharacteristicRd}

\begin{proof}
	The proof is completely similar to the direct part of the preceding proof (of Theorem~\ref{theo:CompactCharacteristic}). To show Statement~\ref{p2CC}, suppose that $\K$ is not characteristic to $\Dcomp{m}$ and take $D \in \Dcomp{m}$ such that $\normK{D} =0$. Write
	\begin{equation}
		\normK{D}^2  = \int |D_x(e^{-ix\xi})|^2 \diff \Lambda(\xi) \, .
	\end{equation}
	Notice that $D_x(e^{-ix\xi})$ has at most a countable number of zeroes. Deduce that the support of the atomic part of $\Lambda$ is at most countable. Conclude by showing that $\Lambda$ is an atomic measure. This proves~\ref{p2CC}. 
	
	To prove Statement~\ref{p1CC}, do the same, and consider now the convolutional product $P$ of $D$ with a smooth, compactly supported function $\varphi$ ($P := D \star \varphi$, with $\varphi \in \Cont{\infty}{c}$). Then $P$ is also a smooth, compactly supported function, so defines a measure in $\Mc$: $P \in \Mc$. But any zero of $\Four D$ is also a zero of $\Four P$, because $\Four P = \Four \varphi \cdot \Four D$. Thus they also contain the support of $\Lambda$, which shows that $\normK{P} = 0$, and that $\K$ is not characteristic to $\Mc$ either. By contraposition, we proved that if $\K$ is characteristic to $\Mc$, then it is characteristic to $\Dcomp{m}$. The converse is clear.
\end{proof}

\subsection{Proof of Theorems~\ref{theo:Ws0} and \ref{theo:Ws}}
%\begin{proof}{[Theorem~\ref{theo:Ws}]}
%	It remains to prove that the bilinear map $\function{B}{(\Dcomp{m}(\inputS))_w \times (\Dcomp{m}(\inputS))_w}{(\Dcomp{m}(\inputS \times \inputS))_w}{(D,T)}{D \otimes T}$ is continuous over $\mathcal B \times \mathcal B$ (where $\mathcal B$ is bounded subset of $(\Dcomp{m}(\inputS)$. We start with the case where $m = 0$ and $\inputS$ is any locally compact Hausdorff space. We will prove the following
%
%	First, note that
%	\Todo[inline]{Finish proof on continuity of tensor product.} 
%\end{proof}
What was left, was to show the continuity of appropriate tensor product maps $\function{}{}{}{(D,T)}{D \otimes T}$. These maps will be given in Lemma~\ref{lem:TensorP}, \ref{lem:Tensor0} and \ref{lem:TensorM}. They respectively prove {\ref{Ws04}$\Rightarrow$\ref{Ws01},} {\ref{Ws07}$\Rightarrow$\ref{Ws01}} and Theorem~\ref{theo:Ws}. 

\begin{lemma}\label{lem:TensorP}
	Let $\inputS, \mathcal Y$ be locally compact Hausdorff spaces. The bilinear map
	\begin{equation*}
		 \function{B}{(\M_{\scriptscriptstyle +}(\inputS))_\sigma \times (\M_{\scriptscriptstyle +}(\mathcal Y))_\sigma}{(\M_{\scriptscriptstyle +}(\inputS \times \mathcal Y))_w}{(\mu,\nu)}{\mu \otimes \nu}
	\end{equation*}
	is continuous (where $\sigma$ denotes the narrow convergence topology).
\end{lemma}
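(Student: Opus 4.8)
The plan is to establish joint continuity directly through nets: given $\mu_\alpha \to \mu$ narrowly in $\M_{\scriptscriptstyle +}(\inputS)$ and $\nu_\alpha \to \nu$ narrowly in $\M_{\scriptscriptstyle +}(\mathcal Y)$, I would show $\mu_\alpha \otimes \nu_\alpha \to \mu \otimes \nu$ in $w(\M_f(\inputS\times\mathcal Y),\Cont{}{0})$, that is, $\int g\,d(\mu_\alpha\otimes\nu_\alpha) \to \int g\,d(\mu\otimes\nu)$ for every $g \in \Cont{}{0}(\inputS\times\mathcal Y)$. Since $B$ is bilinear and the spaces carry linear topologies, checking this net statement for every single test function $g$ suffices to conclude continuity.

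First I would dispose of the rank-one test functions. For $g = f\otimes h$ with $f\in\Cont{}{0}(\inputS)$ and $h\in\Cont{}{0}(\mathcal Y)$, Fubini gives $\int f\otimes h\,d(\mu_\alpha\otimes\nu_\alpha) = \mu_\alpha(f)\,\nu_\alpha(h)$; as $f$ and $h$ are bounded and continuous, narrow convergence yields $\mu_\alpha(f)\to\mu(f)$ and $\nu_\alpha(h)\to\nu(h)$, whence the products converge to $\mu(f)\nu(h)$. By linearity this extends to any finite sum $g_0 = \sum_{i} f_i\otimes h_i$ in the algebraic tensor product $\Cont{}{0}(\inputS)\otimes\Cont{}{0}(\mathcal Y)$.

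Next comes the reduction from a general $g$ to separable ones, whose engine is a uniform mass bound. Testing narrow convergence against the constant function $\One$ gives $\mu_\alpha(\One)\to\mu(\One)$ and $\nu_\alpha(\One)\to\nu(\One)$, so there is an index $\alpha_0$ beyond which $(\mu_\alpha\otimes\nu_\alpha)(\One)=\mu_\alpha(\One)\,\nu_\alpha(\One)$ is bounded by a constant $C$ that also dominates $(\mu\otimes\nu)(\One)$. Meanwhile $\Cont{}{0}(\inputS)\otimes\Cont{}{0}(\mathcal Y)$ is a conjugation-closed, point-separating subalgebra of $\Cont{}{0}(\inputS\times\mathcal Y)$ that vanishes nowhere, so the locally compact Stone--Weierstrass theorem makes it dense for $\norm{.}_\infty$. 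Given $\eta>0$ I would pick a finite sum $g_0$ with $\norm{g-g_0}_\infty < \eta/(3C)$ and run the standard three-term estimate: the two tail terms $\int(g-g_0)\,d(\mu_\alpha\otimes\nu_\alpha)$ and $\int(g-g_0)\,d(\mu\otimes\nu)$ are each bounded by $\norm{g-g_0}_\infty\cdot C<\eta/3$ (for $\alpha\ge\alpha_0$), while the middle term $\int g_0\,d(\mu_\alpha\otimes\nu_\alpha)-\int g_0\,d(\mu\otimes\nu)$ is $<\eta/3$ for $\alpha$ past some $\alpha_1\ge\alpha_0$ by the rank-one case. Hence the total difference is $<\eta$ eventually, giving convergence.

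The main obstacle, and the reason the \emph{narrow} (rather than merely vague) topology must appear in the domain, is exactly this uniform mass bound: the three-term argument collapses unless the masses $\mu_\alpha(\One)\,\nu_\alpha(\One)$ stay bounded along the tail of the net, and narrow convergence is what supplies this by admitting $\One$ as a legitimate test function, whereas a vaguely convergent net may gain or lose mass at infinity and be unbounded. A secondary point to verify with care is the Stone--Weierstrass hypothesis in the non-compact setting, namely that the rank-one family separates the points of $\inputS\times\mathcal Y$ and is nowhere simultaneously zero; both follow from local compactness via Urysohn bump functions.
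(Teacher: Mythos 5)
Your proof is correct. The paper itself does not prove this lemma at all: its ``proof'' is a one-line citation to Chapter~2, Theorem~3.3 of Berg, Christensen and Ressel (1984), so your argument is a self-contained reconstruction of (essentially the standard proof of) the cited result. The two load-bearing points are exactly the ones you isolate: (a) sup-norm density of the algebraic tensor product $\Cont{}{0}(\inputS)\otimes\Cont{}{0}(\mathcal Y)$ in $\Cont{}{0}(\inputS\times\mathcal Y)$ via the locally compact Stone--Weierstrass theorem (the separation and non-vanishing hypotheses do follow from Urysohn on a locally compact Hausdorff space, and $\inputS\times\mathcal Y$ is again such a space), and (b) the eventual uniform bound on the total masses, which is where both the narrowness of the domain topology and the \emph{positivity} of the measures enter --- positivity is what lets you bound $\lvert\int(g-g_0)\,d(\mu_\alpha\otimes\nu_\alpha)\rvert$ by $\norm{g-g_0}_\infty\,\mu_\alpha(\One)\nu_\alpha(\One)$ rather than by an uncontrolled total variation. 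Your correct use of nets (rather than sequences) also means you genuinely obtain continuity and not merely sequential continuity, which matters since these spaces need not be sequential. The only point glossed over is that $\mu\otimes\nu$ is itself a finite positive Radon measure on $\inputS\times\mathcal Y$, a standard fact for finite Radon measures on locally compact Hausdorff spaces that is worth a sentence. What the citation buys the paper is brevity; what your argument buys is transparency about why the narrow topology is needed on the domain while only the weaker $w(\Mf,\Cont{}{0})$ topology is obtained on the codomain.
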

\begin{proof}
	See Chapter~2, Theorem~3.3 of \citet{berg84}.
\end{proof}
\begin{lemma}\label{lem:Tensor0}
	Let $\inputS, \mathcal Y$ be locally compact Hausdorff spaces. The bilinear map
	\begin{equation*}
		 \function{B}{(\Mc(\inputS))_w \times (\Mc(\mathcal Y))_w}{(\Mc(\inputS \times \mathcal Y))_w}{(\mu,\nu)}{\mu \otimes \nu}
	\end{equation*}
	is continuous over any sets $\mathcal B \times \mathcal C$, where $\mathcal B, \mathcal C$ are bounded subsets of $\Mc(\inputS)$ and $\Mc(\mathcal Y)$.
\end{lemma}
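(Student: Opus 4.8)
The plan is to exploit the concrete description of bounded sets in $\Mc$. By Remark~\ref{rem:Q2onDcomp}~\ref{rem:p2}, boundedness of $\mathcal B$ in $\Mc(\inputS)$ (resp.\ $\mathcal C$ in $\Mc(\mathcal Y)$) means that every element is supported in one common compact set $K \subset \inputS$ (resp.\ $L \subset \mathcal Y$) and that the total variations are uniformly bounded, say $\norm{\mu}_{TV} \le M_1$ for $\mu \in \mathcal B$ and $\norm{\nu}_{TV} \le M_2$ for $\nu \in \mathcal C$. Hence every $\mu \otimes \nu$ with $(\mu,\nu) \in \mathcal B \times \mathcal C$ is supported in the fixed compact set $K \times L$ and satisfies $\norm{\mu\otimes\nu}_{TV} \le M_1 M_2$. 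This uniform bound is the crucial ingredient that upgrades mere separate continuity into joint continuity on $\mathcal B \times \mathcal C$.

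First I would fix a net $(\mu_\alpha,\nu_\alpha)$ in $\mathcal B \times \mathcal C$ converging to some $(\mu,\nu) \in \mathcal B \times \mathcal C$ for the product of the relative weak topologies, i.e.\ $\mu_\alpha(f) \to \mu(f)$ for all $f \in \Cont{}{}(\inputS)$ and $\nu_\alpha(h) \to \nu(h)$ for all $h \in \Cont{}{}(\mathcal Y)$, and a test function $g \in \Cont{}{}(\inputS \times \mathcal Y)$; the goal is $[\mu_\alpha \otimes \nu_\alpha](g) \to [\mu\otimes\nu](g)$. For a \emph{tensor} test function $g = \sum_{i=1}^{n} f_i \otimes h_i$ this is immediate, since $[\mu_\alpha \otimes \nu_\alpha](g) = \sum_{i} \mu_\alpha(f_i)\,\nu_\alpha(h_i) \to \sum_{i} \mu(f_i)\,\nu(h_i) = [\mu\otimes\nu](g)$ using the weak convergence of both nets and the finiteness of the sum.

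The general case follows by density. Since every measure in play is supported in $K \times L$, only $g|_{K \times L} \in C(K\times L)$ matters. By the Stone--Weierstrass theorem the algebra generated by products $f(x)h(y)$ with $f \in C(K)$, $h \in C(L)$ is uniformly dense in $C(K\times L)$; and each $f \in C(K)$ extends to a function $\tilde f \in \Cont{}{c}(\inputS) \subset \Cont{}{}(\inputS)$ (choose an open neighbourhood of $K$ with compact closure, apply Tietze on that closure, then cut off with an Urysohn bump), so the tensor functions are realised as genuine elements of $\Cont{}{}(\inputS \times \mathcal Y)$; note also that $\mu_\alpha(\tilde f)$ does not depend on the chosen extension because $\mu_\alpha$ lives on $K$. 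Given $\epsilon > 0$, I would pick a tensor function $\tilde g$ with $\sup_{K\times L}|g-\tilde g| < \epsilon$ and estimate
\[
|[\mu_\alpha\otimes\nu_\alpha](g) - [\mu\otimes\nu](g)| \le \norm{\mu_\alpha\otimes\nu_\alpha}_{TV}\,\epsilon + |[\mu_\alpha\otimes\nu_\alpha](\tilde g) - [\mu\otimes\nu](\tilde g)| + \norm{\mu\otimes\nu}_{TV}\,\epsilon .
\]
The first and third terms are $\le M_1 M_2\,\epsilon$ uniformly in $\alpha$, and the middle term tends to $0$ by the tensor case; hence $\limsup_\alpha |[\mu_\alpha\otimes\nu_\alpha](g) - [\mu\otimes\nu](g)| \le 2 M_1 M_2\,\epsilon$, and letting $\epsilon \to 0$ concludes.

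The main obstacle I anticipate is not any single step but making the uniformity honest: bilinear maps into a weak topology are generically only separately continuous, and it is exactly the common-compact-support-plus-uniform-total-variation characterisation of bounded sets that lets the $\epsilon$-approximation be controlled simultaneously for every index $\alpha$. The only other point deserving care is the extension of $f \in C(K)$ to $\Cont{}{}(\inputS)$ on a space that is merely locally compact Hausdorff (global normality is unavailable, so one localises to a compact neighbourhood before invoking Tietze).
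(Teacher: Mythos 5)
Your proof is correct, and it takes a genuinely different route from the paper's. Both arguments begin the same way, by invoking the characterisation of bounded subsets of $\Mc$ (common compact support plus uniform total-variation bound) to localise everything to a fixed compact $K \times L$. From there the paper cuts off test functions with a bump function equal to $1$ near the common support, concludes that on $\mathcal B$ (resp.\ $\mathcal C$, resp.\ $B(\mathcal B \times \mathcal C)$) the weak topology $w(\Mc,\Cont{}{})$ coincides with the vague topology $w(\Mc,\Cont{}{c})$, and then delegates the actual continuity of $(\mu,\nu)\mapsto\mu\otimes\nu$ to Bourbaki (\emph{Int\'egration}, Ch.~3, \S4, n\textsuperscript{o}3, Proposition~6). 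You instead prove that continuity statement from scratch: the tensor test functions are handled trivially, Stone--Weierstrass on $C(K\times L)$ supplies the uniform approximation of a general test function by tensors (the algebra of tensors is self-adjoint, so the complex version applies), and the uniform bound $\norm{\mu_\alpha\otimes\nu_\alpha}_{TV}\le M_1M_2$ is exactly what makes the $\epsilon$-error uniform in the net index. Your remarks on the two delicate points --- extending $f\in C(K)$ to $\Cont{}{c}(\inputS)$ by localising Tietze to a compact neighbourhood, and the independence of $\mu_\alpha(\tilde f)$ from the chosen extension --- are both needed and correctly handled. What the paper's route buys is brevity; what yours buys is a self-contained argument that makes explicit \emph{why} boundedness upgrades separate continuity to joint continuity, which the citation to Bourbaki leaves implicit.
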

\begin{proof}
	As $\mathcal B$ is bounded in $\Mc(\inputS)$, there exists a compact $K_0 \subset \inputS$ such that, for any $\mu \in \mathcal B$, the support of $\mu$ is contained in $K_0$ \citep[p.\,90]{schwartzTD}. Let $K$ be a compact neighbourhood of $K_0$ (which exists, because $\inputS$ is locally compact Hausdorff). Let $\One_{K_0}$ be a function with support contained in $K$, that equals 1 on a neighbourhood of $K_0$. Then, for any $\mu \in \mathcal B$ and any $f \in \Cont{}{}(\inputS)$,$\One_{K_0} f \in \Cont{}{c}$ and $\mu(f) = \mu(\One_{K_0} f)$. Thus, on $\mathcal B$, the weak topology $w(\Mc, \Cont{}{})$ coincides with the vague topology $w(\Mc, \Cont{}{c})$. The same holds for $\mathcal C$ and for $B(\mathcal B \times \mathcal C)$ (because $\inputS \times \mathcal Y$ is also locally compact Hausdorff and $\supp \mu \otimes \nu = \supp \mu \times \supp \nu$). Thus the result follows from Proposition~6 of \citet[Chapitre~3, §4, n°3]{bourbakiINT} and its subsequent remark.
\end{proof}

The proof of Lemma~\ref{lem:TensorM} needs some preparation. Let $\inputS$ and $\mathcal Y$ be open subsets of $\real^d$ and let $\Cont{(m,m)}{}(\inputS \times \mathcal Y)$ be the the of functions $f$ such that, for any $p,q \in \nat \cup \{\infty\}$, with $|p|,|q| \leq m$, the derivative $\partial^{(p,q)} f$ exists and is continuous. Endow $\Cont{(m,m)}{}(\inputS \times \mathcal Y)$ with the topology of uniform convergence of $f$ and of all its derivatives up to the order $(m,m)$ over the compacts of $\inputS \times \mathcal Y$. It is the topology generated by the family
\begin{equation*}
	\mathcal V_{p, \, q, \, \mathcal K, \, \mathcal U} := \left \{\varphi \in \Cont{(m,m)}{}(\inputS \times \mathcal Y) \, | \, [\partial^{(p,q)} \vec \varphi](\mathcal K) \subset \mathcal U \right \} \, ,
\end{equation*}
where $|p| \leq m$, $|q| \leq m$, $\mathcal U$ is an open subset of $\real$ and $\mathcal K$ a compact in $\inputS \times \mathcal Y$.
Note $\Dcomp{(m,m)}(\inputS \times \mathcal Y)$ its dual. %Note that $\Cont{(\infty,\infty)}{}(\inputS \times \mathcal Y) = \Cont{\infty}{}(\inputS \times \mathcal Y)$ and $\Dcomp{(\infty,\infty)}(\inputS \times \mathcal Y) = \Dcomp{\infty}(\inputS \times \mathcal Y)$.
	Now, let $\Cont{m}{}(\inputS ; \Cont{m}{}(\mathcal Y))$ be the space of functions over $\inputS$ with values in $\Cont{m}{}(\mathcal Y)$ which are $m$-times continuously differentiable. Similarly to $\Cont{m}{}(\inputS ; \real)$, equip $\Cont{m}{}(\inputS; \Cont{m}{}(\mathcal Y))$ with the the topology generated by 
\begin{equation*}
	\mathcal V_{p, \, \mathcal K, \, \mathcal U} := \left \{\vec \varphi \in \Cont{m}{}(\inputS ; \Cont{m}{}(\mathcal Y)) \, | \, [\partial^p \vec \varphi](\mathcal K) \subset \mathcal U \right \} \, ,
\end{equation*}
where $|p| \leq m$, $\mathcal U$ is an open subset of $\Cont{m}{}(\mathcal Y)$ and $\mathcal K$ a compact in $\inputS$. Then
\begin{lemma}\label{lem:CanIso}
	The map
	\begin{equation*}
		\function{}{\Cont{(m,m)}{}(\inputS \times \mathcal Y)}{\Cont{m}{}(\inputS ; \Cont{m}{}(\mathcal Y))}{\varphi}{\function{\vec \varphi}{}{}{x}{\varphi(x,.)}}
	\end{equation*}
	is a bijective isomorphism for the TVS structures.
\end{lemma}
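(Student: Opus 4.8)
The plan is to exhibit the map $\varphi \mapsto \vec\varphi$, with $\vec\varphi(x) := \varphi(x,\cdot)$, as a linear bijection that identifies, for all multi-indices $p,q$ with $|p|,|q|\le m$, the joint partial derivative $\partial^{(p,q)}\varphi(x,y)$ with $\partial^q_y\big([\partial^p\vec\varphi](x)\big)(y)$, where $\partial^p\vec\varphi$ denotes the $\Cont{m}{}(\mathcal Y)$-valued derivative of $\vec\varphi$. Once this derivative identity is secured in both directions, everything else is cheap: linearity is clear, injectivity is immediate (if $\vec\varphi\equiv 0$ then $\varphi(x,y)=\vec\varphi(x)(y)=0$), and the topological claim follows \emph{for free}. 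Indeed, a generating seminorm of $\Cont{m}{}(\inputS;\Cont{m}{}(\mathcal Y))$ is $\vec\varphi\mapsto \sup_{x\in\mathcal K}\sup_{y\in\mathcal L}|\partial^q_y([\partial^p\vec\varphi](x))(y)|$, which under the identity equals $\sup_{(x,y)\in\mathcal K\times\mathcal L}|\partial^{(p,q)}\varphi(x,y)|$, exactly a generating seminorm of $\Cont{(m,m)}{}(\inputS\times\mathcal Y)$ (every compact of $\inputS\times\mathcal Y$ lies in a product $\mathcal K\times\mathcal L$). So the two seminorm families coincide and the map is a homeomorphism.

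The \emph{crux} is well-definedness. Fix $\varphi\in\Cont{(m,m)}{}$. That $\vec\varphi(x)\in\Cont{m}{}(\mathcal Y)$ is immediate, since $\partial^q_y\varphi(x,\cdot)$ exists and is continuous for $|q|\le m$. The real work is to prove $\vec\varphi$ is differentiable into $\Cont{m}{}(\mathcal Y)$ with $[\partial^{e_i}\vec\varphi](x)=\partial^{(e_i,0)}\varphi(x,\cdot)$; this demands that the difference quotient $h^{-1}(\vec\varphi(x+he_i)-\vec\varphi(x))$ converge in $\Cont{m}{}(\mathcal Y)$, i.e. uniformly on every compact $\mathcal L\subset\mathcal Y$ together with all $y$-derivatives up to order $m$. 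For $|q|\le m$ I would use the integral mean value identity
\[
\frac{\partial^{(0,q)}\varphi(x+he_i,y)-\partial^{(0,q)}\varphi(x,y)}{h}=\int_0^1\partial^{(e_i,q)}\varphi(x+the_i,y)\,\diff t,
\]
so that the difference with the candidate limit $\partial^{(e_i,q)}\varphi(x,y)$ is $\int_0^1[\partial^{(e_i,q)}\varphi(x+the_i,y)-\partial^{(e_i,q)}\varphi(x,y)]\,\diff t$, whose supremum over $y\in\mathcal L$ tends to $0$ as $h\to 0$ because $\partial^{(e_i,q)}\varphi$ (which exists and is continuous as $|e_i|,|q|\le m$) is uniformly continuous on the compact set $\{(x+the_i,y):t\in[0,1],\,y\in\mathcal L\}$. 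Iterating, raising $|p|$ by one unit at a time, and invoking continuity of the relevant $\partial^{(p,q)}\varphi$ to get continuity of $x\mapsto[\partial^p\vec\varphi](x)$ in $\Cont{m}{}(\mathcal Y)$, yields $\vec\varphi\in\Cont{m}{}(\inputS;\Cont{m}{}(\mathcal Y))$ together with the derivative identity.

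For surjectivity I would argue in reverse: given $\vec\psi\in\Cont{m}{}(\inputS;\Cont{m}{}(\mathcal Y))$, set $\varphi(x,y):=\vec\psi(x)(y)$. Since each evaluation $\mathrm{ev}_y:\Cont{m}{}(\mathcal Y)\to\complex$, $g\mapsto g(y)$, is continuous and linear, it commutes with the vector-valued derivative, whence $x\mapsto\varphi(x,y)$ is $m$-times differentiable with $\partial^p_x\varphi(x,y)=[\partial^p\vec\psi(x)](y)$; joint continuity of $(x,y)\mapsto[\partial^p\vec\psi(x)](y)$ follows from continuity of $x\mapsto\partial^p\vec\psi(x)$ into $\Cont{m}{}(\mathcal Y)$ and joint continuity of the evaluation map $\Cont{m}{}(\mathcal Y)\times\mathcal Y\to\complex$. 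Applying the continuous linear operators $\partial^q_y$ on $\Cont{m}{}(\mathcal Y)$ shows $\partial^q_y\partial^p_x\varphi$ exists and is continuous for all $|p|,|q|\le m$, and a standard iterated-derivative (Schwarz) argument upgrades this to $\varphi\in\Cont{(m,m)}{}$, with $\vec\varphi=\vec\psi$ and the same seminorm-matching identity. I expect the crux above to be the main obstacle, precisely because it promotes pointwise convergence of the $x$-difference quotients to convergence in the finer $\Cont{m}{}(\mathcal Y)$ topology, which is exactly where uniform continuity on compacts of the top-order mixed derivative $\partial^{(e_i,q)}\varphi$ is needed; everything else is routine. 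One minor care point: when $m=\infty$, ``$m$-times continuously differentiable'' is read as ``of every finite order,'' and the argument applies verbatim at each finite order.
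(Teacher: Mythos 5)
Your proof is correct, but it takes a genuinely different route from the paper's. The paper disposes of this lemma in one line by citing Propositions~9 and~12 of \citet{schwartzFDVV}, i.e.\ by invoking Schwartz's general $\varepsilon$-product machinery (the identification $\Cont{m}{}(\inputS;E)\cong\Cont{m}{}(\inputS)\,\varepsilon\,E$ for quasi-complete $E$, iterated once). You instead give a direct, self-contained argument, and you correctly isolate the one non-trivial point: promoting the pointwise convergence of the $x$-difference quotients to convergence in $\Cont{m}{}(\mathcal Y)$, which you obtain from the integral mean-value identity plus uniform continuity of $\partial^{(e_i,q)}\varphi$ on compacts (with Schwarz's theorem silently used to commute the mixed partials --- legitimate here since all $\partial^{(p,q)}\varphi$ with $|p|,|q|\le m$ exist and are continuous by definition of $\Cont{(m,m)}{}$). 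The seminorm-matching step and the observation that every compact of $\inputS\times\mathcal Y$ sits inside a product of compacts correctly deliver the topological isomorphism. What each approach buys: yours is elementary and requires no background beyond multivariable calculus; the paper's citation is shorter and reuses exactly the vector-valued-distribution framework that the surrounding appendix (e.g.\ Lemma~\ref{lem:MHypocontinuity} and the justification of step $(a)$ in Appendix~\ref{proof:Existence}) already depends on, and Schwartz's propositions hold for arbitrary quasi-complete target spaces rather than just $\Cont{m}{}(\mathcal Y)$.
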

\begin{proof}
%	Once Lemma~\ref{lem:CanIso} is established for $m=0$ (which is done for example by \citealp[Chapitre~X, §3, n°4, Théorème~3 and Corollaire~2]{bourbakiTG2}), the rest is simply playing around with the definitions (of differentiability of TVS-valued functions). We leave it to the reader.
	Combine Proposition~9 and 12 of \citet{schwartzFDVV}.
\end{proof}
Now comes the actual lemma of interest for the proof of Theorem~\ref{theo:Ws}.
\begin{lemma}\label{lem:TensorM}
	Let $\inputS$ and $\mathcal Y$ be open subset of $\real^d$ and let $m \in \nat \cup \{\infty\}$. The bilinear map 
	\begin{equation*}
		 \function{B}{(\Dcomp{m}(\inputS))_w \times (\Dcomp{m}(\mathcal Y))_w}{(\Dcomp{(m,m)}(\inputS \times \mathcal Y))_w}{(D_x,T_y)}{D_x \otimes T_y}
	\end{equation*}
	is continuous over any sets $\mathcal B \times \mathcal C$, where $\mathcal B, \mathcal C$ are bounded subsets of $\Dcomp{m}(\inputS)$ and $\Dcomp{m}(\mathcal Y)$.
\end{lemma}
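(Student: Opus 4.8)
The plan is to check weak continuity of $B$ on $\mathcal B \times \mathcal C$ directly on nets, by unravelling the pairing of $D_x \otimes T_y$ against an arbitrary test function and reducing everything to two convergences treated separately. Take a net $(D_\alpha, T_\alpha)$ in $\mathcal B \times \mathcal C$ with $D_\alpha \to D$ in $(\Dcomp{m}(\inputS))_w$ and $T_\alpha \to T$ in $(\Dcomp{m}(\mathcal Y))_w$, and fix $\varphi \in \Cont{(m,m)}{}(\inputS \times \mathcal Y)$. By Lemma~\ref{lem:CanIso} it corresponds to $\vec\varphi \in \Cont{m}{}(\inputS ; \Cont{m}{}(\mathcal Y))$ with $\vec\varphi(x) = \varphi(x,.)$, and the successive parametric integration defining the tensor product reads $[D_x \otimes T_y](\varphi) = D(T\circ\vec\varphi)$. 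Here $T\circ\vec\varphi \in \Cont{m}{}(\inputS)$: since $T$ is linear and continuous it commutes with the vector-valued derivatives of $\vec\varphi$, so $\partial^p(T\circ\vec\varphi) = T(\partial^p\vec\varphi)$ is continuous for every $|p| \leq m$. The target is thus $D_\alpha(T_\alpha\circ\vec\varphi) \to D(T\circ\vec\varphi)$.

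I would split the difference as
\begin{equation*}
	D_\alpha(T_\alpha\circ\vec\varphi) - D(T\circ\vec\varphi) = D_\alpha\big((T_\alpha - T)\circ\vec\varphi\big) + (D_\alpha - D)(T\circ\vec\varphi) \, .
\end{equation*}
The second summand tends to $0$ because $T\circ\vec\varphi$ is a \emph{fixed} element of $\Cont{m}{}(\inputS)$ and $D_\alpha \to D$ weakly. For the first summand I would exploit boundedness: $\Cont{m}{}(\inputS)$ is a Fréchet space, hence barrelled, so the bounded set $\mathcal B$ is equicontinuous (Banach-Steinhaus, see Appendix~\ref{sec:Reminders}), giving a compact $K \subset \inputS$, an integer $N \leq m$ and a constant $C$ with $|D_\alpha(h)| \leq C \sup_{x\in K,\,|p|\leq N}|\partial^p h(x)|$ for all $\alpha$. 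Applying this to $h = (T_\alpha - T)\circ\vec\varphi$ and pulling the derivatives through the linear form $T_\alpha - T$ yields
\begin{equation*}
	\big|D_\alpha\big((T_\alpha - T)\circ\vec\varphi\big)\big| \leq C \sup_{\psi \in L}\big|(T_\alpha - T)(\psi)\big|, \qquad L := \{\partial^p\vec\varphi(x) \,:\, x\in K,\ |p|\leq N\} \, .
\end{equation*}

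The crux is to show this last supremum vanishes. The set $L$ is a compact subset of $\Cont{m}{}(\mathcal Y)$, being the continuous image of the compact index set $K \times \{p : |p|\leq N\}$ under $(x,p)\mapsto \partial^p\vec\varphi(x)$. By the same barrelledness argument $\mathcal C$ is equicontinuous in $\Dcomp{m}(\mathcal Y)$, and on an equicontinuous subset of a dual the weak-$*$ topology coincides with the topology of uniform convergence on precompact (in particular compact) subsets of the predual \citep{treves67}. Since $T_\alpha \to T$ weakly while the whole net lies in the equicontinuous $\mathcal C$, this forces $\sup_{\psi\in L}|(T_\alpha - T)(\psi)| \to 0$; hence the first summand also vanishes and $B$ is continuous on $\mathcal B \times \mathcal C$. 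The main obstacle is exactly this upgrade from pointwise to uniform-on-compacts convergence of $T_\alpha$, which is what equicontinuity (and thus boundedness) buys us; the remaining ingredients—the identity $[D\otimes T](\varphi) = D(T\circ\vec\varphi)$, the smoothness of $T\circ\vec\varphi$ with $\partial^p(T\circ\vec\varphi) = T(\partial^p\vec\varphi)$, and the compactness of $L$—are routine once Lemma~\ref{lem:CanIso} is available.
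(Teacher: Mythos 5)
Your proof is correct, and it reaches the conclusion by a genuinely more elementary route than the paper. The paper first establishes a stronger statement (Lemma~\ref{lem:MHypocontinuity}): hypocontinuity of $\widetilde B$ with respect to bounded sets when all three duals carry the compact-convergence topology, obtained by invoking Schwartz's Corollaire~2 of Proposition~19 on the hypocontinuity of the evaluation map $\function{F}{}{}{(T_y,\vec\varphi)}{T_y(\varphi(\hat x, y))}$; it then deduces Lemma~\ref{lem:TensorM} by restricting to $\mathcal B \times \mathcal C$ and using that, $\Cont{m}{}$ being barrelled, the weak and compact-convergence topologies coincide on bounded subsets of $\Dcomp{m}$. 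You instead perform the standard bilinear splitting $D_\alpha(T_\alpha\circ\vec\varphi) - D(T\circ\vec\varphi) = D_\alpha\bigl((T_\alpha - T)\circ\vec\varphi\bigr) + (D_\alpha - D)(T\circ\vec\varphi)$ and verify each term by hand, replacing the citation of Schwartz's hypocontinuity result with an explicit seminorm estimate coming from the equicontinuity of $\mathcal B$ and the compactness of $L = \{\partial^p\vec\varphi(x) : x \in K,\ |p|\leq N\}$ in $\Cont{m}{}(\mathcal Y)$. The key ingredients are ultimately the same in both arguments (Lemma~\ref{lem:CanIso}, Banach--Steinhaus via barrelledness of $\Cont{m}{}$, and the coincidence of the weak-star and precompact-convergence topologies on equicontinuous sets, i.e.\ Proposition~32.5 of \citet{treves67}), but your version is self-contained and avoids the vector-valued-distributions machinery, at the price of proving only continuity on $\mathcal B\times\mathcal C$ rather than the reusable hypocontinuity statement. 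Two small points you gloss over but which do hold: the limit $T$ lies in the weak closure of $\mathcal C$, which is again equicontinuous, so the net $T_\alpha - T$ indeed ranges in an equicontinuous set where weak-star convergence upgrades to uniform convergence on $L$; and for $m=\infty$ a continuous seminorm on $\Cont{\infty}{}$ still involves only finitely many derivative orders $N$, so your estimate remains valid in that case.
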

Actually, we will prove the following strengthened version of the lemma.
\begin{lemma}\label{lem:MHypocontinuity}
	Let $\inputS$ and $\mathcal Y$ be open subsets of $\real^d$ and let $m \in \nat \cup \{\infty\}$. Let $(\Dcomp{m}(\inputS))_c$ (and similarly $(\Dcomp{m}(\mathcal Y))_c$ and $(\Dcomp{(m,m)}(\inputS \times \mathcal Y))_c$) denote the space $\Dcomp{m}(\inputS)$ equipped with the topology of uniform convergence over the compact subsets of $\Cont{m}{}(\inputS)$. The bilinear map 
	\begin{equation*}
		 \function{\widetilde B}{(\Dcomp{m}(\inputS))_c \times (\Dcomp{m}(\mathcal Y))_c}{(\Dcomp{(m,m)}(\inputS \times \mathcal Y))_c}{(D_x,T_y)}{D_x \otimes T_y}
	\end{equation*}
	is hypocontinuous \gls{wrt} the bounded subsets of $\Dcomp{m}(\inputS)$ and $\Dcomp{m}(\mathcal Y)$.	
\end{lemma}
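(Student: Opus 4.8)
The plan is to reduce the bilinear tensor product to a ``partial evaluation'' followed by a single distribution, using the canonical isomorphism of Lemma~\ref{lem:CanIso}, and then to read off hypocontinuity from the fact that bounded subsets of the dual of a Fréchet space are equicontinuous. Concretely, I would identify $\Cont{(m,m)}{}(\inputS \times \mathcal Y)$ with $\Cont{m}{}(\inputS ; \Cont{m}{}(\mathcal Y))$ via Lemma~\ref{lem:CanIso}, so that each $\varphi$ corresponds to the vector-valued map $\vec\varphi : x \mapsto \varphi(x, \cdot)$. For $T \in \Dcomp{m}(\mathcal Y)$, the composition $T \circ \vec\varphi : x \mapsto T(\varphi(x, \cdot))$ is again $m$-times continuously differentiable with $\partial^p(T \circ \vec\varphi) = T \circ \partial^p \vec\varphi$, so it defines an element $T_\sharp \vec\varphi \in \Cont{m}{}(\inputS)$; by the very definition of the tensor product as a successive parametric integration (and the extension of that computation to $\Cont{m}{}$-valued functions established by \citet{schwartzFDVV}) one has $[D_x \otimes T_y](\varphi) = D(T_\sharp \vec\varphi)$.

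Next I would prove the two equicontinuity statements whose conjunction is precisely hypocontinuity with respect to bounded sets; by the commutativity of the tensor product on the relevant completed tensor products (used already in the justification of step $(a)$ of Corollary~\ref{cor:Existence}) together with the symmetric isomorphism $\Cont{(m,m)}{}(\inputS \times \mathcal Y) \cong \Cont{m}{}(\mathcal Y ; \Cont{m}{}(\inputS))$, it suffices to treat one of them. Fix a bounded set $\mathcal B \subset \Dcomp{m}(\inputS)$ and a basic $0$-neighbourhood $\mathcal K^\circ$ of the target, where $\mathcal K \subset \Cont{(m,m)}{}(\inputS \times \mathcal Y)$ is compact. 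Since $\Cont{m}{}(\inputS)$ is a Fréchet, hence barrelled, space, the Banach-Steinhaus theorem (Theorem~\ref{theo:BS}) shows that $\mathcal B$ is equicontinuous, so there exist a finite integer $N$, a compact $K_0 \subset \inputS$ and a constant $C > 0$ with $|D(f)| \le C \max_{|p| \le N} \sup_{K_0} |\partial^p f|$ for every $D \in \mathcal B$ and $f \in \Cont{m}{}(\inputS)$. Applying this to $f = T_\sharp \vec\varphi$ yields
\begin{equation*}
	|[D_x \otimes T_y](\varphi)| \;\le\; C \max_{|p| \le N} \sup_{x \in K_0} \bigl| T\bigl( (\partial^p \vec\varphi)(x) \bigr) \bigr| \, .
\end{equation*}

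The key compactness step is to observe that the set $\mathcal L := \{\, (\partial^p \vec\varphi)(x) \mid \varphi \in \mathcal K, \ |p| \le N, \ x \in K_0 \,\}$ is relatively compact in $\Cont{m}{}(\mathcal Y)$: the operators $\partial^p$ are continuous on $\Cont{m}{}(\inputS; \Cont{m}{}(\mathcal Y))$, so each $\partial^p \mathcal K$ is relatively compact, and the evaluation map $(\vec g, x) \mapsto \vec g(x)$ is jointly continuous (as $\inputS$ is locally compact), so it carries the product of the compact sets $\overline{\partial^p \mathcal K}$ and $K_0$ to a compact set; a finite union over $|p| \le N$ is then relatively compact. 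Consequently $\overline{\mathcal L}$ is compact, $(C\,\overline{\mathcal L})^\circ$ is a $0$-neighbourhood in $(\Dcomp{m}(\mathcal Y))_c$, and the displayed estimate gives $|[D_x \otimes T_y](\varphi)| \le 1$ for all $D \in \mathcal B$, all $T \in (C\,\overline{\mathcal L})^\circ$ and all $\varphi \in \mathcal K$; that is, $\widetilde B\bigl(\mathcal B \times (C\,\overline{\mathcal L})^\circ\bigr) \subset \mathcal K^\circ$, which is exactly the required equicontinuity of $\{\widetilde B(D, \cdot) : D \in \mathcal B\}$. The main obstacle will be precisely this compactness bookkeeping together with the functional-analytic facts it rests on --- that post-composition with $T$ lands in $\Cont{m}{}(\inputS)$ commuting with derivatives, and that successive integration computes the tensor product on these extended domains --- all of which are the vector-valued distribution results of \citet{schwartzFDVV} that must be cited with care; the reduction to finitely many derivatives (valid even when $m = \infty$) is what makes $\mathcal L$ relatively compact and is supplied by the equicontinuity of $\mathcal B$.
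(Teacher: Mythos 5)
Your proof is correct, and it shares the paper's skeleton: both arguments pass through the identification of $\Cont{(m,m)}{}(\inputS\times\mathcal Y)$ with $\Cont{m}{}(\inputS;\Cont{m}{}(\mathcal Y))$ from Lemma~\ref{lem:CanIso}, reduce $D\otimes T$ to the partial evaluation $x\mapsto T(\varphi(x,\cdot))$, use Banach--Steinhaus (barrelledness of $\Cont{m}{}$) to turn the bounded set into an equicontinuous one, and prove one half of hypocontinuity explicitly while getting the other by the symmetry of the tensor product. Where you genuinely diverge is in the execution of the key compactness step. The paper outsources it: it cites Schwartz's hypocontinuity of the partial-evaluation map with respect to equicontinuous sets of distributions and relatively compact sets of functions, concludes that the image of $\mathcal B\times\mathcal K$ under partial evaluation is a compact subset of $\Cont{m}{}$, and then lets $D\rightarrow 0$ uniformly over that compact. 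You instead unwind the equicontinuity of $\mathcal B$ into an explicit seminorm bound $|D(f)|\le C\max_{|p|\le N}\sup_{K_0}|\partial^p f|$, show directly that the set of slices $\mathcal L=\{(\partial^p\vec\varphi)(x)\}$ is relatively compact in $\Cont{m}{}(\mathcal Y)$ via joint continuity of evaluation on compacta, and exhibit the required $0$-neighbourhood as the polar $(C\,\overline{\mathcal L})^\circ$. What your route buys is self-containedness --- the only external inputs are Banach--Steinhaus, differentiation under the distribution sign, and Lemma~\ref{lem:CanIso} --- at the cost of some bookkeeping that the paper's shorter proof delegates to \citet{schwartzFDVV}; your observation that the equicontinuity estimate involves only finitely many derivatives, even when $m=\infty$, is precisely what keeps $\mathcal L$ relatively compact and is the same fact the paper relies on implicitly.
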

\begin{proof}{[Lemma~\ref{lem:MHypocontinuity}]}
	Let $\mathcal K$ be a compact subset of $\Cont{(m,m)}{}(\inputS \times \mathcal Y)$ and let $\mathcal B$ be a bounded subset of $\Dcomp{m}(\mathcal Y)$. We need to show that, when $D_x$ converges to $0$ uniformly over the compacts of $\Cont{m}{}(\inputS)$ while $T_y$ varies in $\mathcal B$, then $D_x \otimes T_y$ converges to $0$ uniformly over $\mathcal K$. Because $\Cont{m}{}(\inputS)$ is barrelled, bounded sets of its dual, such as $\mathcal B$, are relatively weakly compact and equicontinuous. But the bilinear map 
	\begin{equation*}
		\function{F}{(\Dcomp{m}(\inputS))_c \times \Cont{m}{}(\inputS ; \Cont{m}{}(\mathcal Y))}{\Cont{m}{}(\mathcal Y)}{(T_y, \, \vec \varphi)}{T_y(\varphi(\hat x, y))}
	\end{equation*}
	is hypocontinuous \gls{wrt} equicontinuous subsets of $\Dcomp{m}$ and relatively compact subsets of $\Cont{m}{}$ \citep[Corollaire~2 of Proposition~19]{schwartzFDVV}. Thus its restriction to $\mathcal B \times \mathcal K$ (where we identify $\Cont{m}{}(\inputS ; \Cont{m}{}(\mathcal Y))$ and $\Cont{(m,m)}{}(\inputS \times \mathcal Y)$ using Lemma~\ref{lem:CanIso}) is continuous. But seen that $\mathcal B$ is equicontinuous, the weak and the compact convergence topology coincide. Thus, being relatively weakly compact, $\mathcal K$ is also a compact subset of $(\Dcomp{m}(\mathcal Y))_c$. Thus the image of $\mathcal B \times \mathcal K$ via $F$ %the set $\{ T_y(\varphi) \, | \, T_y \in \mathcal B, \, \varphi \in \mathcal K \}$ 
is a compact subset of $\Cont{m}{}(\mathcal Y)$. Thus $D_x(T_y(\varphi))$ converges uniformly to $0$ when $\varphi$ varies in $\mathcal K$ and $T_y$ varies in $\mathcal B$.
%	
%	But the restriction to $\mathcal B$ of the linear map $\function{L_\varphi}{\Dcomp{m}(\inputS)}{\Cont{m}{}(\mathcal Y)}{D_x}{D_x(\varphi(x, \hat y))}$ is weakly continuous. (Indeed, Corollary~2 of Proposition~19 of \citealp{schwartzFDVV} shows that it is continuous when $\Dcomp{m}(\inputS)$ is equipped with the topology of uniform convergence over the compact subsets of $\Cont{m}{}(\inputS)$. And on bounded sets this topology coincides with the weak topology \citealp[Proposition~32.5 and Theorem~33.1]{treves67}). Thus $L_\varphi(\mathcal B) := \{ D_x(\varphi(x,\hat y)) \, | \, D_x \in \mathcal B\}$ is a relatively compact subset of $\Cont{m}{}(\mathcal Y)$. Now, suppose that $T_y$ converges weakly to $0$. As $T_y$ is contained in a bounded set, $T_y$ converges to $0$ uniformly over the compact subsets of $\Cont{m}{}(\mathcal Y)$. In particular, $T_y \otimes D_x(\varphi)$ converges to $0$.
\end{proof}
\begin{proof}{[Lemma~\ref{lem:TensorM}]}
	First, note that the bounded subsets $\mathcal B$ and $\mathcal C$ of $\Dcomp{m}(\inputS)$ and $\Dcomp{m}(\mathcal Y)$ are also bounded for the compact convergence topology. But the map $\widetilde B$ being hypocontinuous \gls{wrt} bounded subsets of $\Dcomp{m}(\inputS)$ and $\Dcomp{m}(\mathcal Y)$, its restriction to $\mathcal B \times \mathcal C$ is continuous. And $\Cont{m}{}$ being barrelled, on bounded subsets of $\Dcomp{m}$, the weak and the compact convergence topology coincide \citep[Proposition~32.5 and Theorem~33.2]{treves67}.
\end{proof}
\begin{remark}
	Note that when $m=\infty$, then $\Cont{(\infty,\infty)}{} = \Cont{\infty}{}$ and $\Dcomp{(\infty, \infty)} = \Dcomp{\infty}$. In this case, \citet[Chapitre~IV, Théorème~VI]{schwartzTD} even proves that $B$ is continuous when $\Dcomp{\infty}(\inputS)$, $\Dcomp{\infty}(\mathcal Y)$ and $\Dcomp{\infty}(\inputS \times \mathcal Y)$ are equipped with their strong topology. This implies Lemma~\ref{lem:TensorM}, because on bounded sets of $\Dcomp{\infty}$, weak and strong topologies coincide.
\end{remark}

\subsection{Proposition~\ref{prop:WeakerThanNarrow} for Separable Metric Spaces\label{proof:WeakerThanNarrowPolish}}
%\footnote{A \emph{Polish space} is a complete, separable, metric space. Note that when $\inputS$ is not locally compact, the duality between $\Cont{}{0}(\inputS)$ and $\Mf(\inputS)$ may not apply, and our definition of measure embeddings becomes unusable. The embedding of a measure $\mu \in \Mf(\inputS)$ into the RKHS is then defined whenever $\int \K(x,x)^{1/2} \diff \mu(x) < \infty$ as $\embK(\mu) := \int \K(.,x) \diff \mu(x)$, where the integral is to be understood as a Bochner integral. If $\inputS$ is locally compact, both definitions coincide.}
\begin{proposition}[$(\Mf)_\sigma \overset{seq}{\csubset} \HK$]\label{prop:WeakerThanNarrowPolish}
	Let $\inputS$ be any topological space with a countable dense subset, and let $\K$ be a bounded continuous kernel on $\inputS$. The semi-inner product topology induced by $\K$ in $\Mf$ is sequentially weaker than the narrow topology.
\end{proposition}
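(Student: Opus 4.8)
The plan is to prove directly that $\embK$ is sequentially continuous at each point of $\Mf$ for the narrow topology, i.e.\ that $\mu_n\overset{\narrow}{\longrightarrow}\mu$ forces $\normK{\mu_n-\mu}\to 0$. Set $\nu_n:=\mu_n-\mu$, so that $\nu_n(f)\to 0$ for every $f\in\Cont{}{b}$. A first, routine reduction controls the total variations: each $\nu_n$ is a bounded linear form on the Banach space $\Cont{}{b}$ (of norm $|\nu_n|(\inputS)$) and the sequence converges pointwise on $\Cont{}{b}$, so the Banach--Steinhaus theorem yields $V:=\sup_n|\nu_n|(\inputS)<\infty$. It then suffices to show $\normK{\nu_n}\to 0$.

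The key reduction is to the dual bounded-Lipschitz norm attached to the kernel metric $\rho_\K(x,y):=\normK{\K(.,x)-\K(.,y)}$. Since $\K$ is bounded and continuous, Proposition~\ref{prop:HKinC} gives $\HK\csubset\Cont{}{b}$, and every $f\in\HK$ satisfies $|f(x)-f(y)|\le\normK{f}\,\rho_\K(x,y)$ together with $\norm{f}_\infty\le\normK{f}\sqrt{\sup_x\K(x,x)}$; hence $\norm{f}_{BL}\le R\,\normK{f}$ with $R:=1+\sqrt{\sup_x\K(x,x)}$, the bounded-Lipschitz norm being taken on $(\inputS,\rho_\K)$. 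Applying the Fubini identity (Theorem~\ref{theo:Fubini}) to $g_n:=\embK(\nu_n)\in\HK$ gives $\normK{\nu_n}^2=\int g_n\diff\bar\nu_n$, whence
\[
	\normK{\nu_n}^2 \;\le\; \norm{g_n}_{BL}\,\norm{\nu_n}_{BL^*} \;\le\; R\,\normK{\nu_n}\,\norm{\nu_n}_{BL^*}\,,
\]
that is, $\normK{\nu_n}\le R\,\norm{\nu_n}_{BL^*}$, where $\norm{\cdot}_{BL^*}$ denotes the dual bounded-Lipschitz norm on $(\inputS,\rho_\K)$. This is exactly the abstract content behind the domination of $d_\K$ by the Dudley metric noted in the Example after Proposition~\ref{prop:ContinuousEmbedding}.

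It therefore remains to prove that $\nu_n\overset{\narrow}{\longrightarrow}0$ implies $\norm{\nu_n}_{BL^*}\to 0$. The original topology of $\inputS$ is finer than the $\rho_\K$-topology (continuity of $\K$ forces $\rho_\K(x_\alpha,x)\to 0$ whenever $x_\alpha\to x$), so narrow convergence for the original topology entails narrow convergence on the separable pseudometric space $(\inputS,\rho_\K)$, separability being inherited from the countable dense subset of $\inputS$. One then runs a Dudley/Fortet--Mourier-type argument: extract from the competing $1$-bounded-Lipschitz test functions a pointwise limit by Arzelà--Ascoli along a countable dense set, pair $\nu_n$ against this fixed limit using narrow convergence, and control the remainder through the common equicontinuity of the test functions.

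The main obstacle is concentrated precisely here, in the \emph{signed} case: a narrowly convergent sequence of signed measures with uniformly bounded total variation need \emph{not} be uniformly tight (already on $\real$ one can exhibit non-tight null sequences, for instance of the type $\nu_n=\frac1n\sum_{k=1}^n(\delta_{k+1/(kn)}-\delta_k)$, whose mass escapes to infinity while $\nu_n\overset{\narrow}{\longrightarrow}0$). Consequently the naive Arzelà--Ascoli split leaves an a priori uncontrolled tail $2\,|\nu_n|(K^c)$, and tightness-based proofs break down. The point to be argued is that narrow convergence forces the positive and negative escaping mass to interlace at arbitrarily small $\rho_\K$-scales, so that the slowly varying Lipschitz test functions cannot separate them; quantitatively, one trades the tail against the Lipschitz modulus of continuity to obtain $\limsup_n\norm{\nu_n}_{BL^*}\le\epsilon$ for every $\epsilon>0$. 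It is this cancellation mechanism, rather than tightness, that carries the proof, and turning it into a rigorous estimate is the hard part.
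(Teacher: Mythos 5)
Your reduction is exactly the paper's: the unit ball $B_\K$ of $\HK$ is uniformly bounded by $\sup_x \K(x,x)^{1/2}$ and $1$-Lipschitz for the kernel pseudometric $\rho_\K$, so that $\normK{\mu_n-\mu}=\sup_{f\in B_\K}\left|\int f \diff(\mu_n-\mu)\right|$ is dominated by a constant times the dual bounded-Lipschitz norm of $\mu_n-\mu$ on the separable pseudometric space $(\inputS,\rho_\K)$. Up to that point the argument is correct and coincides with the one in the paper. The problem is what comes after. The paper closes the proof in one line by invoking Corollary~11.3.4 of \citet{dudley02}, which is precisely the statement that narrow convergence implies uniform convergence of $\int f\diff\mu_n$ over any uniformly bounded, equicontinuous family on a separable metric space. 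You instead undertake to re-derive that fact, correctly observe that the naive Arzel\`a--Ascoli/tightness argument does not close in the signed case, and then stop: your final paragraph explicitly leaves the ``cancellation mechanism'' as ``the hard part'' still ``to be argued''. Since everything before that step is a two-line Cauchy--Schwarz computation, the entire mathematical content of the proposition is concentrated in exactly the estimate you leave open; as written, the proposal is not a proof.

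A secondary issue: the example you offer to show that narrowly null signed sequences need not be tight, $\nu_n=\frac1n\sum_{k=1}^n(\delta_{k+1/(kn)}-\delta_k)$, is not in fact narrowly null. Take $f\in\Cont{}{b}(\real)$ with $0\le f\le 1$, $f(k)=0$ and $f\equiv 1$ on $[k+\frac{1}{2k^2},\,k+\frac{1}{k^2}]$ for every integer $k\ge 2$; since $1/(kn)$ lies in that window exactly when $n/2\le k\le n$, one gets $\nu_n(f)\ge\frac12$ for all $n$, so $\nu_n\not\to 0$ against $\Cont{}{b}$. So the obstruction you describe is at least not witnessed by this example, and your diagnosis of where the difficulty lies is itself not secured. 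The cleanest repair is the paper's: cite Dudley's result (whose proof is where the genuine work for signed, non-tight sequences is done), rather than attempt to reconstruct it.
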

In other words: for any $\mu, \mu_1, \mu_2, ... \in \Mf$, if $\mu_n(f) \rightarrow \mu(f)$ for any $f \in \Cont{}{b}$, then $\embK(\mu_n) \rightarrow \embK(\mu)$ in $\normK{.}$. Note that on bounded sets of positive measures ---such as $\Mprob$---, and when $\inputS$ is a separable metric space, then the narrow convergence topology is metrisable.  In that case, ``sequentially weaker''  implies ``weaker''.% For the proof of Proposition~\ref{prop:WeakerThanNarrowPolish}, see Appendix~\ref{proof:WeakerThanNarrowPolish}

\begin{proof}
	Let $B_\K := \{ f \in \HK \, | \, \normK{f} \leq 1 \}$ and $C := \sup_{x \in \inputS} \K(x,x)^{1/2}$. Let $\mu_n,\mu \in \Mf$ such that $\mu_n \rightarrow \mu$ narrowly.
%	$\K$ being continuous, the function $\K(.,\hat{x})$ is continuous \citep[Lemma~4.29]{steinwart08}. Thus, by definition, there exists a (so-called module of continuity) function $\function{m}{\real_{\geq 0}}{\real_{\geq 0}}{}{}$, strictly increasing on a neighbourhood of $0$, such that $m(\delta) \rightarrow 0$ when $\delta \rightarrow 0$, and that verifies: $\normK{\K(.,x) - \K(.,y)} \leq m(d(x,y))$. Then the inequality $|f(x) - f(y)| \leq \normK{f} \normK{\K(.,x) - \K(.,y)} \leq m(d(x,y))$, that holds for any $f \in B_\K$, shows that $B_K$ is an equicontinuous set of functions. Furthermore, as $|f(x)| \leq C$ for any $x \in \inputS$ and any $f \in B_\K$, $B_K$ is uniformly bounded. Thus Corollary~11.3.4. of \citet{dudley02} applies and gives: $\normK{\mu_n - \mu} = \sup_{f \in B_\K} \int f \diff (\mu_n - \mu) \rightarrow 0$.
	Define the possibly degenerate metric $d(\hat x,\hat y) := \normK{\K(.,\hat x) - \K(.,\hat y)}$.
	First, note that the underlying topology $\topo$ is stronger than the topology induced by $d$. (Use the fact that $\K$ being continuous, so is $\K(.,\hat x)$.) 
	Now, for any $f \in B_\K$,
	\begin{equation*}
		|f(x) - f(y)| \leq \normK{f} \normK{\K(.,x) - \K(.,y)} \leq d(x,y) \, .
	\end{equation*}
	Thus $B_\K$ is a $d$-equicontinuous set of functions. But $\topo$ being stronger than $d$, $B_\K$ is also a $\topo$-equicontinuous. And it is uniformly bounded, since $|f(x)| \leq C$ for any $x \in \inputS$ and any $f \in B_\K$. Thus Corollary~11.3.4. of \citet{dudley02} applies and gives: $\normK{\mu_n - \mu} = \sup_{f \in B_\K} \int f \diff (\mu_n - \mu) \rightarrow 0$.
\end{proof}

\subsection{Proof of Proposition~\ref{prop:NonMetrization}\label{proof:NonMetrization}}

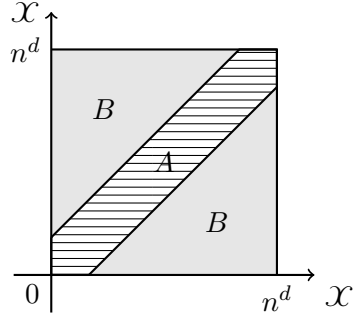
\begin{figure}[h]
	\centering
	\begin{tikzpicture}
		\path[fill = gray!20] (0,0.5)--(2.5,3)--(0,3)--(0,0.5) node at (0.7,2.2) {$B$};
		\path[fill = gray!20] (0.5,0)--(3,0)--(3,2.5)--(0.5,0) node at (2.2,0.7) {$B$};
		    
		\draw[thick,->,black] (-.5,0)--(3.5,0) node[below right] {$\inputS$}; % x axis
		\draw[thick,->,black] (0,-.5)--(0,3.5) node[left] {$\inputS$}; % y axis
		\draw[thick,-,black] (3,3)--(0,3) node[left] {$n^d$};
		\draw[thick,-,black] (3,3)--(3,0) node[below] {$n^d$};
		\draw[thick,-,black, pattern=horizontal lines] (0,0.5)--(2.5,3)--(3,3)--(3,2.5)--(0.5,0)--(0,0)--(0,0.5) node at (1.5,1.5) {$A$};
		\node at (-.25,-.25) {$0$};
	\end{tikzpicture}
	\caption{The $2d$-dimensional hypercube $\mathcal{C}_n$ and its subparts $A$ and $B$ (see Proof of Proposition~\ref{prop:NonMetrization})\label{fig:Hypercube}}
\end{figure}

\begin{proof}
Consider the sequence of probability measures $P_n := \frac{1}{n^d} \lambda_{[0,n]^d}$, where $[0,n]^d$ denotes the $d$-dimensional hypercube with edge of length $n$. The sequence $P_n$ does not converge narrowly (as it is not uniformly tight). But we will show that $P_n$ converges to the null measure in RKHS-norm, which concludes.
Indeed, let $\epsilon > 0$ and $\alpha >0$ such that, for any $h \in \real^d$ with $|h| > \alpha$, $|\psi(h)| \leq \epsilon$. Consider  the $2d$-dimensional hypercube $\mathcal{C}_n$ in $\real^d \times \real^d$, with edges of length $n \in \nat$.
Divide the hypercube into 2 parts $A$ and $B$ as schematised on Figure~\ref{fig:Hypercube}. More precisely, $A$ are all points of $\mathcal{C}_n$ at distance $\leq \alpha$ from the ``diagonal'' $\{(x,x) \in (\inputS \times \inputS)\cap \mathcal{C}_n\}$; $B$ is the rest of $\mathcal{C}_n$. Note that there exist constants $a$ and $b$, independent of $\epsilon$, $\alpha$ and $n$, such that the volumes $V_A$, $V_B$ of $A$ and $B$ verify: $V_A \leq a \alpha \, n^d$ and $V_B \leq b n^{2d}$. Furthermore, by construction $\sup_{(x,y) \in B} |\K(x,y)| \leq \epsilon$. Thus
\begin{align*}
	\normK{P_n}^2 &= \frac{1}{n^{2d}} \iint_{A \, \cup \, B} \K(x,y) \diff x \diff y \\
		&\leq \frac{1}{n^{2d}} \Big( V_A \norm{\K}_\infty + \epsilon \, V_B)\\
		&\leq \frac{a \alpha \norm{\K}_\infty}{n^d} + \epsilon b \, .
\end{align*}
For $n$ sufficiently large, $\normK{P_n}^2 \leq 2 b \epsilon$, thus $P_n$ converges to $0$ in the RKHS-norm. 
%Denote $\Delta$, and call \emph{diagonal}, the set $\{ (x,y) \in \real^d \times \real^d \}$. Now, divide the cube into three parts:
%\begin{enumerate}[label = (\Alpha*)]
%	\item The cylinder $A$ around the diagonal $\Delta$ consisting of all the points at a distance superior $A := \{ (x,y) \in \mathcal{C}_n \, | \, \}$
%\end{enumerate}
\end{proof}

\subsection{Statement and Proof of Lemma~\ref{lem:WandW}\label{proof:WandW}}

The proof of Theorem~\ref{theo:MetrizationOfNarrow2} is based on the following lemma.
\needspace{5\baselineskip}
\begin{lemma}\label{lem:WandW}
	Let $\K$ be a continuous, \gls{ispd} kernel. Let $c>0$ and  $\M^{\scriptscriptstyle{\leq c}}_{\scriptscriptstyle{+}}:= {\{ \mu \in \M_{\scriptscriptstyle{+}} \, | \, \mu(\inputS) \leq c \}}$. The following topologies coincide on $\M^{\scriptscriptstyle{\leq c}}_{\scriptscriptstyle{+}}$:
	\begin{enumerate}
		\item the weak topology $w(\M^{\scriptscriptstyle{\leq c}}_{\scriptscriptstyle{+}}, \HK)$ induced by $\K$, or topology of pointwise convergence in $\HK$~; \label{ww1}
		\item the vague topology $w(\M^{\scriptscriptstyle{\leq c}}_{\scriptscriptstyle{+}}, \Cont{}{c})$, or topology of pointwise convergence in $\Cont{}{c}$~. \label{ww2}
		\item the topology $w(\M^{\scriptscriptstyle{\leq c}}_{\scriptscriptstyle{+}}, \Cont{}{0})$, or topology of pointwise convergence in $\Cont{}{c}$~. \label{ww3}
%		\item the narrow topology $w(\Mprob, \Cont{}{b})$, or topology of pointwise convergence in $\Cont{}{b}$.
	\end{enumerate}
\end{lemma}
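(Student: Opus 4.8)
The plan is to prove the three topologies coincide pairwise on the bounded set $\M^{\scriptscriptstyle{\leq c}}_{\scriptscriptstyle{+}}$, separating the purely measure-theoretic identification of the vague and $\Cont{}{0}$-topologies (items \ref{ww2} and \ref{ww3}) from the identification of the kernel-induced weak topology (item \ref{ww1}), which rests on a Banach--Steinhaus (equicontinuity) argument of the same flavour as Theorem~\ref{theo:Q2}.

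First I would dispatch $w(\M^{\scriptscriptstyle{\leq c}}_{\scriptscriptstyle{+}}, \Cont{}{c}) = w(\M^{\scriptscriptstyle{\leq c}}_{\scriptscriptstyle{+}}, \Cont{}{0})$ by hand. Since $\Cont{}{c} \subset \Cont{}{0}$, the vague topology is trivially weaker. For the converse, given $f \in \Cont{}{0}$ and $\epsilon > 0$, I would pick a compact $K$ outside which $|f| \le \epsilon$, together with a cutoff $g \in \Cont{}{c}$ satisfying $0 \le g \le 1$ and $g = 1$ on $K$, so that $\norm{f - fg}_\infty \le \epsilon$ and $fg \in \Cont{}{c}$. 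For any net $\mu_\alpha \to \mu$ converging vaguely in $\M^{\scriptscriptstyle{\leq c}}_{\scriptscriptstyle{+}}$, positivity and the uniform mass bound give $|\mu_\alpha(f - fg)| \le \epsilon\, \mu_\alpha(\inputS) \le \epsilon c$ and likewise for $\mu$, while $\mu_\alpha(fg) \to \mu(fg)$ by vague convergence; hence $\limsup_\alpha |\mu_\alpha(f) - \mu(f)| \le 2\epsilon c$, and letting $\epsilon \to 0$ yields $\mu_\alpha(f) \to \mu(f)$. This is exactly where boundedness of the mass and positivity of the measures are needed.

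Next I would identify the kernel-weak topology (item \ref{ww1}) with $w(\M^{\scriptscriptstyle{\leq c}}_{\scriptscriptstyle{+}}, \Cont{}{0})$ through the equicontinuity principle. Taking $\F = \Cont{}{0}$, a Banach (hence barrelled) space with $(\Cont{}{0})' = \Mf$, the set $\M^{\scriptscriptstyle{\leq c}}_{\scriptscriptstyle{+}}$ is norm-bounded in $\Mf$ and therefore an equicontinuous subset of $\F'$. On an equicontinuous set the weak-star topology $w(\F',\F)$ coincides with the topology of pointwise convergence on any dense subspace of $\F$ \citep[Proposition~32.5]{treves67}. I would then verify that $\HK \csubset \Cont{}{0}$ (Proposition~\ref{prop:HKinC}) and that $\HK$ is dense in $\Cont{}{0}$, i.e.\ that $\K$ is $c_0$-universal, which by Theorem~\ref{theo:UniversalMeansCharacteristic} and Table~\ref{tab:UCS} is the translation of $\K$ being \gls{ispd}. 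Applying the principle with the dense subspace $\HK$ gives $w(\M^{\scriptscriptstyle{\leq c}}_{\scriptscriptstyle{+}}, \HK) = w(\M^{\scriptscriptstyle{\leq c}}_{\scriptscriptstyle{+}}, \Cont{}{0})$, and applying it with $\Cont{}{c}$ re-proves the previous step; chaining the two equalities closes the argument. (Continuity of $\K$ also provides the reverse continuity $(\M_{\scriptscriptstyle{+}})_\sigma \csubset \HK'$ via Theorem~\ref{theo:Ws0}, which I would invoke to pass afterwards from these weak statements to the norm-metrisation claim used in Theorem~\ref{theo:MetrizationOfNarrow2}.)

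The main obstacle is the faithful translation of the hypotheses ``$\K$ continuous and \gls{ispd}'' into the two ingredients the equicontinuity argument consumes---$\HK \csubset \Cont{}{0}$ and density of $\HK$ in $\Cont{}{0}$---together with the verification that $\M^{\scriptscriptstyle{\leq c}}_{\scriptscriptstyle{+}}$ is equicontinuous. The delicate point is the interplay between the functions of $\HK$, which a priori only sit in $\Cont{}{b}$, and the test functions of $\Cont{}{0}$: it is precisely positivity together with the uniform mass bound $\mu(\inputS) \le c$ (and, in the application, the restriction to $\Mprob$, where the mass is frozen at $1$) that forbid mass from escaping to infinity and thereby force the kernel-weak, vague and $\Cont{}{0}$-weak convergences to agree on the bounded set. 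Once these structural facts are established, no further computation is required.
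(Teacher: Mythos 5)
There is a genuine gap in your identification of the kernel-weak topology with $w(\M^{\scriptscriptstyle{\leq c}}_{\scriptscriptstyle{+}}, \Cont{}{0})$. You translate the hypothesis ``$\K$ is \gls{ispd}'' into ``$\HK \csubset \Cont{}{0}$ and $\HK$ dense in $\Cont{}{0}$ (i.e.\ $\K$ is $c_0$-universal)'', but this translation is false: by Table~\ref{tab:UCS}, \gls{ispd}-ness corresponds to density of $\HK$ in $(\Cont{}{b})_c$, not in $\Cont{}{0}$, and a continuous \gls{ispd} kernel need not satisfy $\HK \subset \Cont{}{0}$ at all. The paper's own example is $\K + 1$ with $\K$ Gaussian: it is \gls{ispd} but its RKHS contains the constants. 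This is not a corner case one can wave away --- the lemma is invoked in the proof of Theorem~\ref{theo:MetrizationOfNarrow2} precisely on the kernel $\K_1 := \K + 1$, so your proof would fail in exactly the situation where the lemma is needed. Nor can you repair the argument by running the equicontinuity principle with $\F = (\Cont{}{b})_c$ in place of $\Cont{}{0}$: Theorem~\ref{theo:Q2} requires $\F$ barrelled, and the paper explicitly notes that $(\Cont{m}{b})_c$ is the one function space in Section~\ref{sec:Definitions} that is not barrelled. Indeed, Remark~\ref{rem:WandW} in the paper states that \emph{under the additional hypothesis} of $c_0$-universality the equality $w(\Mf,\Cont{}{0}) \cap \M^{\scriptscriptstyle{\leq c}}_{\scriptscriptstyle{+}} = w(\HK',\HK) \cap \M^{\scriptscriptstyle{\leq c}}_{\scriptscriptstyle{+}}$ follows from Theorem~\ref{theo:Q2} --- which is your argument --- and Appendix~\ref{proof:MetrizationOfNarrow} gives exactly this route as a proof of a \emph{weaker} version of the metrisation theorem. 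The whole point of Lemma~\ref{lem:WandW} is to dispense with that extra hypothesis.

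The paper closes this gap with a different mechanism: it takes an ultrafilter $\mathfrak U$ on $\M^{\scriptscriptstyle{\leq c}}_{\scriptscriptstyle{+}}$ converging in $w(\Mf,\HK)$, fixes a compact $K$, and uses density of $\HK$ in $(\Cont{}{b})_c$ (which \emph{does} follow from \gls{ispd}-ness) to produce $h \in \HK$ with $h > 0$ on $K$. The multiplied family $h.\M^{\scriptscriptstyle{\leq c}}_{\scriptscriptstyle{+}}$ is bounded in $\Mf$, hence relatively compact for the vague topology, so $h.\mathfrak U$ has a vague limit; bijectivity of $g \mapsto gh$ on $\Cont{}{c}(K)$ then pulls this back to vague convergence of $\mathfrak U$ itself, and density of both $\HK$ and $\Cont{}{c}$ in $(\Cont{}{b})_c$ identifies the two limits. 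Your first step (the cutoff argument showing $w(\M^{\scriptscriptstyle{\leq c}}_{\scriptscriptstyle{+}}, \Cont{}{c}) = w(\M^{\scriptscriptstyle{\leq c}}_{\scriptscriptstyle{+}}, \Cont{}{0})$ using positivity and the mass bound) is correct and matches the paper's in substance, but the second step needs to be replaced by an argument of this kind that works with $(\Cont{}{b})_c$ rather than $\Cont{}{0}$.
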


\begin{remark}\label{rem:WandW}
	When $\K$ is continuous and $c_0$-universal, then the equality $w(\Mf, \Cont{}{0}) \cap \M^{\scriptscriptstyle{\leq c}}_{\scriptscriptstyle{+}} = w(\HK',\HK) \cap \M^{\scriptscriptstyle{\leq c}}_{\scriptscriptstyle{+}}$ from Lemma~\ref{lem:WandW} is a direct consequence of Theorem~\ref{theo:Q2}.
\end{remark}

\begin{proof}{[Lemma~\ref{lem:WandW}]}
	The topologies of \ref{ww2} and \ref{ww3} are equivalent because $\Cont{}{c}$ is dense in $\Cont{}{0}$ and $\M^{\scriptscriptstyle{\leq c}}_{\scriptscriptstyle{+}}$ is a bounded subset of their dual $\Mf$. We now show the equality between the topologies \ref{ww1} and \ref{ww2}.
	
	 The kernel $\K$ being continuous, its induced metric on $\M^{\scriptscriptstyle{\leq c}}_{\scriptscriptstyle{+}}$ is weaker than the narrow topology (Theorem~\ref{theo:Ws0}). Thus it suffices to show that $w(\Mf, \HK)$ is stronger than the narrow topology over $\M^{\scriptscriptstyle{\leq c}}_{\scriptscriptstyle{+}}$. As any filter is the intersection of its finer ultrafilters \citep[Chapter~I, §6, n\textdegree5, Proposition~7]{bourbakiTG}, it suffices to show that any ultrafilter $\mathfrak U$ on $\M^{\scriptscriptstyle{\leq c}}_{\scriptscriptstyle{+}}$ that converges to $\mu_0 \in \Mf$ in $w(\Mf, \HK)$ also converges to $\mu_0$ in $w(\Mf, \Cont{}{c})$. Thus, let $\mathfrak U$ be such an ultrafilter and let $K$ be a compact subset of $\inputS$. Because $\K$ is \gls{ispd}, $\HK$ is dense in $(\Cont{}{b})_c$ (Theorem~\ref{theo:Q2}). In particular, there exists $h \in \HK$ such that $h>0$ on $K$. For any $f \in \Cont{}{c}(\inputS)$ and $\mu \in \M^{\scriptscriptstyle{\leq c}}_{\scriptscriptstyle{+}}$, $|[h.\mu](f)| = |\mu(hf)|\leq c \norm{h}_\infty \norm{f}_\infty$. Thus $h.\M^{\scriptscriptstyle{\leq c}}_{\scriptscriptstyle{+}}$ is bounded in $\Mf$, thus relatively compact in $w(\Mf, \Cont{}{c})$. Thus the ultrafilter generated by $\mathfrak U_h := h . \mathfrak U$ converges in $w(\Mf, \Cont{}{c})$ to a measure $\mu_1 \in \M^{\scriptscriptstyle{\leq c}}_{\scriptscriptstyle{+}}$. We have now shown in particular that, for any compact $K \subset \inputS$,
	\begin{equation}\label{eq:VagueConvergence}
		\forall g \in \Cont{}{c}(K), \qquad \lim_{\mu \in \mathfrak U} \mu(gh) = \mu_1(gh) \, .
	\end{equation}
	But the map $\function{}{\Cont{}{c}(K)}{\Cont{}{c}(K)}{g}{gh}$ is bijective (because $h>0$ on $K$). Thus Equation~\eqref{eq:VagueConvergence} actually reads: for all $g \in \Cont{}{c}(K)$, $\lim_{\mu \in \mathfrak U} \mu(g) = \mu_1(g)$. Thus $\mathfrak U$ converges to $\mu_0$ in $w(\Mf,\Cont{}{c})$. But $\HK$ and $\Cont{}{c}$ are both dense in $(\Cont{}{b})_c$. Thus $\mu_0 = \mu_1$.
\end{proof}

\subsection{Alternative Proof for a Weaker Version of Theorem~\ref{theo:MetrizationOfNarrow2}\label{proof:MetrizationOfNarrow}}

In the proof of Theorem~\ref{theo:MetrizationOfNarrow2}, a key equality was $w(\Mf,\Cont{}{0}) \cap \Mprob = w(\HK',\HK) \cap \Mprob$, which was given by Lemma~\ref{lem:WandW}. As noted in Remark~\ref{rem:WandW}, this equality can also be obtained from Theorem~\ref{theo:Q2}, when $\HK \csubset \Cont{}{0}$. Here, we show a weaker version of Theorem~\ref{theo:MetrizationOfNarrow2} where we add the constrain $\HK \subset \Cont{}{0}$, so that its proof relies now only on Theorem~\ref{theo:Q2}.
\begin{proposition}\label{prop:MetrizationOfNarrow}
	%A bounded continuous kernel metrises the narrow topology over $\Mprob$ \gls{iff} it is characteristic (over $\Mprob$).
	Let $\inputS$ be a locally compact Hausdorff space and suppose that $\HK \subset \Cont{}{0}(\inputS)$. The kernel $\K$ metrises the narrow topology over $\Mprob$ \gls{iff} it is continuous and characteristic (to $\Mprob$).
	In particular, any $c_0$-universal kernel metrises the narrow topology over $\Mprob$.
\end{proposition}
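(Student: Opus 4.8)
The plan is to prove the two implications separately: the ``only if'' direction from Theorem~\ref{theo:Ws0}, and the ``if'' direction from Theorem~\ref{theo:Q2}, using the classical fact that the vague and narrow topologies coincide on $\Mprob$ \citep[Chapter~2, Corollary~4.3]{berg84} as the bridge between the two.

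For the forward implication, suppose $\K$ metrises the narrow topology over $\Mprob$. Then $d_\K$ is a genuine metric there, so $\embK$ is injective on $\Mprob$, i.e.\ $\K$ is characteristic. Moreover narrow convergence then implies $d_\K$-convergence, so $\embK\colon(\Mprob)_\sigma \to \HK'$ is continuous; the implication \ref{Ws05}$\,\Rightarrow\,$\ref{Ws01} of Theorem~\ref{theo:Ws0} then forces $\K$ to be continuous.

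For the reverse implication I would first note that $\HK \subset \Cont{}{0}$ upgrades to $\HK \csubset \Cont{}{0}$ by Proposition~\ref{prop:HKinC}, and that continuity of $\K$ already gives, through Theorem~\ref{theo:Ws0}, that the narrow topology dominates $d_\K$ on $\Mprob$. It remains to prove the reverse domination, and this is where Theorem~\ref{theo:Q2} does the work. In the cleanest case, when $\K$ is universal over $\Cont{}{0}$ (i.e.\ $c_0$-universal), the space $\Cont{}{0}$ is Banach, hence barrelled, and $\Mprob$ is a bounded subset of its dual $\Mf$; Theorem~\ref{theo:Q2} with $\F=\Cont{}{0}$ then yields, on $\Mprob$,
\begin{equation*}
	b(\HK',\HK)\cap\Mprob \ \supset\ w(\Mf,\Cont{}{0})\cap\Mprob \ =\ w(\HK',\HK)\cap\Mprob .
\end{equation*}
On the bounded set $\Mprob$ the weak-star topology $w(\Mf,\Cont{}{0})$ coincides with the vague topology $w(\Mf,\Cont{}{c})$ (Banach--Steinhaus, since $\Cont{}{c}$ is dense in $\Cont{}{0}$), which in turn equals the narrow topology on $\Mprob$. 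Hence $d_\K$ dominates the narrow topology, and combined with the previous step this gives metrisation. This already establishes the ``in particular'' assertion, since $c_0$-universal means precisely $\HK\csubset\Cont{}{0}$ together with universality over $\Cont{}{0}$.

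The main obstacle is to weaken the hypothesis from ``universal over $\Cont{}{0}$'' (= characteristic to $\Mf$) to merely ``characteristic to $\Mprob$''. My plan is to reduce to the universal case via the shift $\K_1:=\K+1$: by Theorem~\ref{theo:CKCharacterisation}, $\K_1$ is characteristic to $\Mf$ and still continuous, and since all probability measures agree on the constant $\One$, one has $d_\K=d_{\K_1}$ and the same weak topology on $\Mprob$. One then wants to run the Theorem~\ref{theo:Q2} argument for $\K_1$ to recover the key equality $w(\Mf,\Cont{}{0})\cap\Mprob=w(\mathcal H_{\K_1}',\mathcal H_{\K_1})\cap\Mprob$, exactly as Remark~\ref{rem:WandW} extracts it from Theorem~\ref{theo:Q2}. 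The delicate point I expect to be the real obstacle is that adding the constant destroys $\mathcal H_{\K_1}\subset\Cont{}{0}$ (the constant does not vanish at infinity), so Theorem~\ref{theo:Q2} cannot be applied verbatim with $\F=\Cont{}{0}$; the natural fix is to pass to the one-point compactification $\inputS^{\ast}$, where $\mathcal H_{\K_1}\subset\Cont{}{}(\inputS^{\ast})$ and $\Cont{}{}(\inputS^{\ast})$ is again Banach and barrelled, and to verify there the universality needed for Theorem~\ref{theo:Q2}. Carrying out this compactification step---and thereby controlling the escaping-mass phenomenon, which is precisely where the assumption $\HK\subset\Cont{}{0}$ is genuinely used---is the technical heart of the argument; the remaining steps are routine invocations of the quoted results.
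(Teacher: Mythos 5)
Your forward implication and your treatment of the $c_0$-universal case are correct and coincide with the paper's own proof (Theorem~\ref{theo:Ws0} for one direction, Theorem~\ref{theo:Q2} plus ``vague $=$ narrow on $\Mprob$'' for the other). The gap is in the reduction of the general case to the universal one. Passing to $\K_1=\K+1$ on the one-point compactification $\inputS^{\ast}$ does place $\mathcal H_{\K_1}$ continuously inside the Banach (hence barrelled) space $\Cont{}{}(\inputS^{\ast})$, but the universality hypothesis of Theorem~\ref{theo:Q2} then fails precisely in the case where you need it. Indeed, the dual of $\Cont{}{}(\inputS^{\ast})$ is $\Mf(\inputS)\oplus\complex\,\delta_{\infty}$, and with the only continuous extension $\K_1(\cdot,\infty)=\One$ one gets $\emb_{\K_1}(\mu+c\,\delta_{\infty})=\embK(\mu)+\big(\mu(\inputS)+c\big)\One$. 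If $\K$ is characteristic to $\Mprob$ but not to $\Mf$, take $\nu\neq 0$ with $\embK(\nu)=0$; by Lemma~\ref{lem:CharacteristicAndM0} necessarily $\nu(\inputS)\neq 0$, and then $\nu-\nu(\inputS)\delta_{\infty}$ is a nonzero measure on $\inputS^{\ast}$ annihilated by $\emb_{\K_1}$. Hence $\K_1$ is not characteristic to $\Mf(\inputS^{\ast})$, so by Theorem~\ref{theo:UniversalMeansCharacteristic} it is not universal over $\Cont{}{}(\inputS^{\ast})$, and Theorem~\ref{theo:Q2} cannot deliver the key equality of weak topologies on $\Mprob$. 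The compactification does not tame the escaping mass; it recreates, at the added point, exactly the degenerate direction that characteristicness-to-$\Mprob$-only leaves in the kernel of the embedding. (The paper's remark already warns that the $\K+\One$ route fails; your fix does not repair it.)

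The paper's actual repair is different: it perturbs by a rank-one term \emph{inside} $\Cont{}{0}$ rather than by a constant. Choosing $\varphi\in\Cont{}{0}$ with $\nu(\varphi)=1$ and setting $\K_\varphi(x,y):=\varphi(x)\bar\varphi(y)+\K(x,y)$ kills the one-dimensional kernel spanned by $\nu$ while keeping $\mathcal H_{\K_\varphi}\subset\Cont{}{0}(\inputS)$, so $\K_\varphi$ genuinely is $c_0$-universal and the first part of the proof applies to it. The conclusion is then transferred back to $\K$ by observing that $\K$ and $\K_\varphi$ induce the same metric on the hyperplane $\Mf^\varphi=\{\mu\,|\,\mu(\varphi)=0\}$ and by decomposing differences of probability measures along $\Mf^0\cap\Mf^\varphi$ plus a transverse direction. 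To complete your outline you must replace the compactification step by such a construction (or by a direct argument in the spirit of Lemma~\ref{lem:WandW}); as written, the step you call the technical heart of the argument relies on a hypothesis that is false exactly when it is needed.
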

\begin{proof}
	The only if part is clear. Conversely, if $\K$ is continuous and characteristic to $\Mprob$, then Proposition~\ref{prop:WeakerThanNarrow} shows that the metric induced by $\K$ is weaker than the narrow topology. And if, additionally to being characteristic, we suppose that $\K$ be even $c_0$-universal, then Theorem~\ref{theo:Q2}, applied to the bounded set $\mathcal B = \Mprob$ shows that the metric induced by $\K$ on $\Mprob$ is stronger than the weak topology $w(\Mf, \Cont{}{0}) \cap \Mprob$. As $\Mprob$ is bounded in $\Mf$, this weak topology coincides with the vague topology $w(\Mf, \Cont{}{c})\cap \Mprob$, which is known to coincide with the narrow topology \citep[Chapter~2, Corollary~4.3]{berg84}. Thus narrow and kernel induced topology coincide on $\Mprob$. The remaining of the proof now consists in treating the case where $\K$ is characteristic, but not $c_0$-universal. The idea is to use $\K$ to construct a new kernel $\K_\varphi$ that be $c_0$-universal, and such that, on $\Mprob$, the topologies induced by $\K$ and by $\K_\varphi$ coincide.

	Suppose now that $\HK \csubset \Cont{}{0}$, but without $\K$ being $c_0$-universal. Let $\nu \not = 0$ be a finite measure such that $\embK(\nu) = 0$. Let $\varphi \in \Cont{}{0}$ such that $\nu(\varphi) = 1$ and define the kernel $\K_\varphi(x,y) := \varphi(x) \bar \varphi(y) + \K(x,y)$. The kernel $\K_\varphi$ is continuous and $c_0$-universal. Thus, the first part of the proof shows that it metrises the vague topology $w(\Mf, \Cont{}{c}) \cap \Mprob$ over $\Mprob$.
	
	Let now $\Mf^\varphi$ and $\Mf^0$ be the hyperplanes of $\Mf$ defined respectively by $\mu(\varphi) = 0$ and $\mu(\One) = 0$.  On $\Mf^\varphi$, $\K$ and $\K_\varphi$ induce the same metric. Thus $\K$ metrises the vague topology $w(\Mf, \Cont{}{c}) \cap \Mf^\varphi$ over $\Mf^\varphi$. But $\Mf^\varphi$ and $\Mf^0$ are two non-identical hyperplanes in $\Mf$. Thus
	\begin{align*}
		\Mf^0 &= (\Mf^0 \cap \Mf^\varphi) \oplus \lin \nu_0 \\
		\Mf^\varphi &= (\Mf^0 \cap \Mf^\varphi) \oplus \lin \nu \, , 
	\end{align*}
	where $\nu_0 \in \Mf^0 \backslash \Mf^\varphi$ such that $\nu_0(\varphi) =1$.
	Thus $\Mf^0$ and $\Mf^\varphi$ are isomorphic, algebraically and topologically, via
	\begin{equation*}
		\function{\imath}{}{}{\mu}{ \left \{
			\begin{array}{ll}
				\mu \qquad &\mathrm{if} \ \mu \in \Mf^0 \cap \Mf^\varphi \\
				\nu \qquad &\mathrm{if} \ \mu = \nu_0
			\end{array}
			\right.} \, .
	\end{equation*}
	Let now $P, P_1, P_2, \ldots \in \Mprob$. We decompose $P_n$ (and $P$) on $(\Mf^0 \cap \Mf^\varphi) \oplus \lin \nu_0$: $P_n = P_n(\varphi) \nu_0 + p_n$. Suppose that $\normK{P_n - P} \rightarrow 0$. Then, on one side, both $P_n(\varphi) \rightarrow P(\varphi)$ and $\normK{p_n - p} \rightarrow 0$. On the other, $\normK{\imath(P_n - P)} \rightarrow 0$, which shows that $\imath(P_n)$ converges vaguely to $\imath(P)$. Said differently: both $p_n \rightarrow p$ and $P_n(\varphi) \nu_0 \rightarrow P(\varphi) \nu_0$ vaguely. Thus both $p_n \rightarrow p$ and $P_n(\varphi) \nu \rightarrow P(\varphi) \nu$ vaguely. Thus $P_n \rightarrow P$ vaguely. And being probability measures, they also converge narrowly, which concludes.	
\end{proof}
\begin{remark}
	Note that, following Theorem~\ref{theo:CKCharacterisation}, one is tempted to choose $\K_\varphi = \K + \One$. $\K_\varphi$ is then known to be characteristic to $\Mf$, and the metrics induced by $\K$ and $\K_\varphi$ would obviously coincide on $\Mprob$. However, this approach fails because, when $\HK \subset \Cont{}{0}$, then $\HH_{\K + \One} \not \subset \Cont{}{0}$.
\end{remark}

\subsection{Proof of Proposition~\ref{prop:TrivialEmbedding}\label{proof:TrivialEmbedding}}

\begin{proof}
	Indeed, if $\Cont{\infty}{c}$ embeds continuously and densely into $\HK$ via the canonical embedding $\function{\jmath}{\Cont{\infty}{c}}{\HK}{f}{f}$, then $\HK'$ embeds continuously and injectively into $\Dall{\infty}$ via the transpose $\jmath^\star$ of $\jmath$. Thus $\overline \HK' \equiv \HK$ embeds continuously and injectively into $\Dall{\infty}$ via the complex conjugate map of $\jmath^\star$: said differently, it identifies with a topological space of distributions. 
\end{proof}

\subsection{Proof of Lemma~\ref{lem:HKnotBigger}\label{proof:HKnotBigger}}

\begin{proof}
	Let $(x_n)_{n \in \nat}$ be a sequence in $\inputS$ that is not included in any compact. Consider then the sequence $f_n := \sum_k^n \frac{a_k}{2^{2k}}\frac{\K(.,x_k)}{\normK{\K(.,x_k)}}$ where $a_n=1$ if $n=0$ or if $f_{n-1}(x_n) \geq 0$, and $a_n=-1$ otherwise. Then $(f_n)_n$ obviously converges to a function $f$ in $\HK$, and $f$ has non-compact support, because 
	\begin{multline*}
		|f(x_n)| \geq |f_n(x_n)| - \left |\sum_{k >n} \frac{a_k}{2^{2k}}\frac{\K(.,x_k)}{\normK{\K(.,x_k)}} \right | \geq \\
			\geq \frac{1}{2^{2n}}\frac{\K(x_n,x_n)}{\normK{\K(.,x_n)}} - \sum_{k >n} \frac{\normK{\K(.,x_n)}}{2^{2k}} 
			= \frac{4}{3} \frac{\normK{\K(.,x_n)}}{2^{2n}} > 0 \, . %\hspace{2em} %\qedhere
			%\vspace{-2.3em} 
	\end{multline*}
\end{proof}

\section{The Parametric Integral is not Enough\label{sec:Parametric}}

In this section, we give examples of kernels $\K$ and measures $\mu$ over $\inputS = \real^d$, such that the function $\function{}{}{}{y}{\int \K(y,x) \diff \mu(x)}$, although being well-defined for any $x \in \inputS$, is not contained in $\HK$.

First, remember that if $\K \in \Cont{}{0}$, then all functions in $\HK$ vanish at infinity. Let now $\K(\hat x, \hat y) = \psi(\hat x - \hat y)$ be any continuous stationary kernel over $\inputS = \real^d$ such that $\psi$ is Lebesgue-integrable ($\psi \in \Lp{1}$) and $\int \psi(x) \diff x \not = 0$. (Take for example a gaussian kernel.) Then, for any $y \in \real^d$, the integral $\int \K(y,x) \diff x$ is well-defined. But the function $\function{}{}{}{y}{\int \K(y,x) \diff x}$ is non-zero and constant. So it does not vanish at infinity and cannot be contained in $\HK$. Thus the Lebesgue measure does not embed into $\HK$, although it can parametrically integrate the kernel $\K$. This is why we used weak-integration, and not parametric integration.

\section{\texorpdfstring{\GLS{cpd}}{C.P.D.\ } Kernels and Characteristic Kernels to \texorpdfstring{$\Mprob$}{P}\label{sec:CPD}}

The two most relevant criteria we gave for characteristicness (Corollary~\ref{cor:Existence} and Theorem~\ref{theo:CompactCharacteristic}) apply only to stationary kernels, because their proofs rely on Bochner's theorem. But there is an equivalent of Bochner's theorem that holds for a larger class of stationary kernels, namely the stationary \emph{conditionally} positive definite kernels (see Definition~\ref{def:CPDKernels}). Using this generalised Bochner theorem (Theorem~\ref{theo:GeneralisedBochner}), we get an analogue of Theorem~\ref{theo:CompactCharacteristic} that holds for this more general class of kernels. In particular, these results show that the Brownian motion kernel $\K(\hat x, \hat y) = \min (|\hat x|, |\hat y|)$ is characteristic to the probability measures with compact support $\Mprob \cap \Mc$. We start with reminders on \gls{cpd} kernels and refer to the geostatistics literature \citep{matheron73, chiles12} or \citet{auffray09} for many more details.
\begin{definition} \label{def:CPDKernels}
	A \glsreset{cpd}\gls{cpd} kernel is a function ${\function{\Kc}{\inputS \times \inputS}{\complex}{}{}}$ such that, for any $n \in \nat$, $x_1, \ldots, x_n \in \inputS$, $\lambda_1, \ldots, \lambda_n \in \complex$,
	\begin{equation*}
		\sum_{i = 1}^n \lambda_i = 0 \quad \Rightarrow \quad \sum_{i,j =1}^n \lambda_i \overline{\lambda_j} \Kc(x_i, x_j) \geq 0 \, .
	\end{equation*}
\end{definition}

\begin{remark}\label{rem:EquivClassCPD}
	In the geostatistics literature, one likes to identify all \gls{cpd} kernels $\Kc$ that cannot be distinguished by the values $\sum_{i,j =1}^n \lambda_i \overline{\lambda_j} \Kc(x_i, x_j)$ when the $\lambda_i$ vary in the hyperplane defined by $\sum_{i = 1}^n \lambda_i = 0$. All such functions define an equivalence class of \gls{cpd} kernels. Note that for any \gls{cpd} kernel $\Kc$, and any function $\function{\psi}{\inputS}{\complex}{}{}$, the \gls{cpd} kernels $\Kc$ and $\Kc(\hat x, \hat y) + \psi(\hat x) + \psi(\hat y)$ are in the same equivalence class.
\end{remark}

Similarly to p.d.\ kernels, we may associate to each \gls{cpd} kernel $\Kc$ a map
\begin{equation*}
	\function{\embC}{\Mfs}{\Func}{\delta_x}{\Kc(.,x) = \int \Kc(.,s) \diff \delta_x(s)} \, .
\end{equation*}
This time however, we focus on 
\begin{align*}
	\Mfs^0 :&= \left \{ \sum_{i =1}^n \lambda_i \delta_{x_i} \, | \, n \in \nat, \, \lambda_i \in \complex, \, x_i \in \inputS, \, \sum_{i=1}^n \lambda_i = 0 \right \} \\
		&= \left \{ \mu \in \Mfs \, | \int \diff \mu = 0 \right \} = \Mfs \cap \Mf^0
\end{align*}
and the induced space of functions $\HCpre := \embC(\Mfs^0)$.  $\HCpre$ is equipped with the inner product defined, for any $f = \embC(\mu)$ and $g = \embC(\nu)$ in $\HCpre$ ($\mu = \sum_i \mu_i \delta_{x_i}, \nu = \sum_j \mu_j \delta_{y_j} \in \Mfs^0$), as\footnote{As for p.d.\ kernels, one easily verifies that if $f = \embC(\mu) = \embC(\lambda)$, with $\mu, \lambda \in \Mfs^0$, then the right hand side does not change if we replace $\mu$ by $\lambda$.}
\begin{equation*}
	\ipdC{f}{g} := \sum_{i,j} \mu_i \K(x_i,x_j) \overline{\nu_j} = \int \Kc(x,y) \diff \mu(x) \diff \bar{\nu}(y) \, ,
\end{equation*}
%As in the p.d.\ case\footnote{For details, see for example \citet{auffray09}, or in the geostatistics literature, for example \citet{matheron73} and \citet{chiles12}. The latter literature is much older, but formulated in terms of stochastic (Gaussian) processes and covariances, rather than RKHS and kernels.},
%Next, we complete $\HCpre$ to a Hilbert space of functions $\HC$ %that contains $\HCpre$.
which we use to define the induced semi-inner product on $\Mfs^0$:
\begin{equation*}
	\forall \, \mu, \nu \in \Mfs^0, \qquad \ipdC{\mu}{\nu} := \ipdC{\embC(\mu)}{\embC(\nu)} \, .
\end{equation*}
It is a (proper) inner product over $\Mfs^0$ iff, for any $\mu \in \Mfs^0$:
\[
	\normC{\mu}^2 = \sum_{i,j = 1}^n \mu_i \K(x_i, x_j) \overline{\mu_j} = 0 \quad \Rightarrow \quad \mu = 0,
\]
that is \gls{iff} $\Kc$ is characteristic to $\Mfs^0$. In that case, $\Kc$ is said conditionally \emph{strictly} positive definite.

	Next we complete $\HCpre$ to a Hilbert space of functions $\HC$ and extend the embedding $\embC$ to larger sets of measures or distributions. We again focus only on sets $\D^0$ that are contained in the (closed) hyperplane of $\DLone{\infty}$ defined by the equation $\int \diff D = 0$: for example $\Mc^0$, $\Mf^0$, $(\DLone{m})^0$. We proceed as for the p.d.\ case: if $\HC \csubset \Cont{m}{}$ (resp.\ $\Cont{m}{0}$), then any distribution in $D \in (\Dcomp{m})^0$ (resp.\ $(\DLone{m})^0$) defines a continuous linear form over $\HC$. Thus it defines an element in the dual $(\HC)'$, whose Riesz representer $\int \Kc(.,x) \diff D(x)$ we call the KME of $D$ in $\HC$.

Obviously, each p.d.\ kernel is also \gls{cpd} Conversely, given any \gls{cpd} kernel $\Kc$, the function defined for a given $z_0 \in \inputS$ and any $x,y \in \inputS$ as $\K(x, y) := \ipdC{\delta_x - \delta_{z_0}}{\delta_y - \delta_{z_0}}$, is a p.d.\ kernel ``that coincides with $\Kc$ on $\Mfs^0$ '' (read: ``such that $\embC = \embK$ on $\Mfs^0$ ''). Thus any metric we induce in $\Mfs^0$ (or $\Mc^0$, $\Mf^0$, ...) using a \gls{cpd} kernel could also have been induced by a p.d.\ kernel. So why bother about \gls{cpd} kernels? Because there exist p.d.\ kernels $\K$ that are not stationary, but coincide on $\Mfs^0$ with a stationary \gls{cpd} kernel. %(i.e.\ a kernel $\Kc$ for which there exists a function $\psi_c$ over $\inputS$, such that: $\Kc(\hat x, \hat y) = \psi_c(\hat x - \hat y)$).
The class of stationary \gls{cpd} kernels is thus \emph{strictly} bigger than the class of \gls{cpd} kernels that coincide with a stationary p.d.\ kernel on $\Mfs^0$. 

\begin{example}\label{ex:BM}
	Let $z_0 \in \inputS = \real$ and $\K_1(x,y) := e^{-|x-y|} - e^{-|x-z_0|} - e^{-|z_0 - y|} + 1$ and $\K_2(x,y) := \min(|x| , |y|)$ (Brownian motion (or Wiener) kernel). Both $\K_1$ and $\K_2$ are non stationary p.d.\ kernels, but on $\Mfs^0$, they coincide with the \gls{cpd} kernels $\K_{c,1}(x,y) = e^{-|x-y|}$ and $\K_{c,2}(x,y) = - |x-y|$ respectively. As it happens, $\K_{c,1}$ is p.d., but $\K_{c,2}$ is not. And one may prove \citep{matheron73,chiles12} that there exists no stationary p.d.\ kernel that coincides with $\K_2$ on $\Mfs^0$.
\end{example}

%Lemma~\ref{lem:CharacteristicAndM0} showed that a kernel is characteristic to $\Mprob$ \gls{iff} it is characteristic to the hyperplane $\Mf^0$. In other words, it is characteristic to $\Mprob$ iff, as a \gls{cpd} kernel, it is characteristic to $\Mf^0$. This suggests to formulate a theorem like Corollary~\ref{cor:Existence} but for stationary \gls{cpd} kernels.	
	For ease of presentation, we now restrict ourselves to real-valued measures and functions. The following theorem extends Bochner's theorem to \gls{cpd} kernels \citep[Theorem~2.1]{matheron73}: 
\begin{theorem}[Generalised Bochner Theorem]\label{theo:GeneralisedBochner}
	A continuous symmetrical function $\psi_c(\hat x - \hat y)$ over $\real^d \times \real^d$ is a real-valued stationary \gls{cpd} kernel \gls{iff} there exists an even \gls{cpd} polynomial $P_0$ of order $\leq 2$ and a positive (necessarily unique and symmetric) Radon measure $\chi$ without atom at the origin such that $\int \frac{\diff \chi (\xi)}{1 + |\xi|^2} < +\infty$ and, for any $h \in \real^d$,
	\begin{equation*}
		\psi_c(h) = \int \frac{\cos \ipd{h}{\xi}_{\real^d} - 1}{|\xi|^2} \diff \chi(\xi) + P_0(h) \, .
	\end{equation*}
\end{theorem}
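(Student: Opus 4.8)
The proof splits into the two implications, of which the converse (sufficiency of the representation) is routine and the direct implication (existence of the representation) carries all the weight. For the easy direction I would verify that the right-hand side is \gls{cpd} block by block. Fix $\xi \in \real^d$ and $\lambda = \sum_i \lambda_i \delta_{x_i} \in \Mfs^0$, so $\sum_i \lambda_i = 0$ with all $\lambda_i$ real. Since
\[
	\sum_{i,j} \lambda_i \lambda_j \cos\dbrack{x_i - x_j}{\xi} = \Big| \sum_i \lambda_i \,\mathrm{e}^{\mathrm{i}\dbrack{x_i}{\xi}} \Big|^2 \geq 0 ,
\]
the function $h \mapsto \cos\dbrack{h}{\xi}$ is positive definite, hence \gls{cpd}; because the zero-sum condition annihilates constants, $h \mapsto \cos\dbrack{h}{\xi} - 1$ is \gls{cpd} as well. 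Dividing by $|\xi|^2 > 0$ and integrating against the positive measure $\chi$ preserves \gls{cpd}-ness, the integral converging thanks to $|\cos t - 1| \leq \min(2, t^2/2)$ together with $\int (1+|\xi|^2)^{-1}\diff\chi < \infty$ (which forces $\chi$ finite near the origin and controls the tail like $|\xi|^{-2}$). Finally $P_0$ is \gls{cpd} by hypothesis, so the sum $\psi_c$ is \gls{cpd}.

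For the direct implication the key idea is to reduce the \gls{cpd} function $\psi_c$ to a genuinely positive definite \emph{stationary} function, to which Bochner's theorem applies. After subtracting the constant $\psi_c(0)$ (a \gls{cpd} term, to be absorbed into $P_0$) I may assume $\psi_c(0) = 0$. For $a \in \real^d$ set $\Delta_a \psi_c(h) := \psi_c(h+a) + \psi_c(h-a) - 2\psi_c(h)$. Let $\Phi$ be a feature map of the (non-stationary) positive definite kernel $\K_0(x,y) := \psi_c(x-y) - \psi_c(x) - \psi_c(y) + \psi_c(0)$ associated with $\psi_c$ (Remark~\ref{rem:EquivClassCPD}). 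A direct expansion shows that the mixed second difference of $\K_0$ telescopes to a function of $x-y$ only:
\[
	\big\langle \Phi(x+a) - \Phi(x) \,\big|\, \Phi(y+a) - \Phi(y)\big\rangle = -\Delta_a\psi_c(x-y) .
\]
The left-hand side has manifest Gram structure, so $h \mapsto -\Delta_a\psi_c(h)$ is a continuous stationary positive definite function, and Bochner's theorem yields a finite positive symmetric measure $\Lambda_a$ with $-\Delta_a\psi_c(h) = \int \cos\dbrack{h}{\xi}\diff\Lambda_a(\xi)$.

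It then remains to disentangle the family $(\Lambda_a)_a$ into a single Lévy-type measure and a quadratic part, and to integrate back. Matching the Bochner representation with the candidate formula (using $\cos\dbrack{h+a}{\xi} + \cos\dbrack{h-a}{\xi} = 2\cos\dbrack{h}{\xi}\cos\dbrack{a}{\xi}$) suggests $\diff\Lambda_a(\xi) = 2\,(1 - \cos\dbrack{a}{\xi})\,|\xi|^{-2}\diff\chi(\xi)$ on $\real^d \setminus \{0\}$, plus an atom at the origin of mass $2q(a)$ accounting for the constant produced by the second difference of a quadratic form. Concretely I would define $\chi$ on $\real^d\setminus\{0\}$ through $\diff\chi(\xi) := |\xi|^2\,(2 - 2\cos\dbrack{a}{\xi})^{-1}\diff\Lambda_a(\xi)$, prove consistency in $a$, isolate the atom of $\Lambda_a$ at $0$ to recover a nonnegative quadratic form $q$, and verify $\int (1+|\xi|^2)^{-1}\diff\chi < \infty$ from the finiteness of $\Lambda_a$. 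Finally, the function $r(h) := \psi_c(h) - \int (\cos\dbrack{h}{\xi} - 1)\,|\xi|^{-2}\diff\chi(\xi)$ has, by the same second-difference identity, constant second differences $\Delta_a r = -2q(a)$, hence is an even polynomial of degree $\leq 2$; setting $P_0 := r$ completes the representation. Uniqueness of $\chi$ and $P_0$ follows because each $\Lambda_a$ is the (unique) Bochner measure of $-\Delta_a\psi_c$, so $\chi$ on $\real^d\setminus\{0\}$, the quadratic form and the residual constant are all determined by $\psi_c$. (An alternative route to existence runs through Schoenberg's theorem, writing $\mathrm{e}^{t\psi_c}$ as a Bochner transform and extracting the Lévy measure as $t \to 0^+$.)

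The main obstacle is the extraction step: showing that the recipe $\diff\chi = |\xi|^2(2-2\cos\dbrack{a}{\xi})^{-1}\diff\Lambda_a$ is genuinely independent of $a$ and defines a Radon measure on $\real^d\setminus\{0\}$ with no atom at the origin, while correctly separating the origin's contribution (the quadratic form $q$) from the continuous spectral part. This needs care precisely on the hyperplanes where $1 - \cos\dbrack{a}{\xi}$ vanishes — handled by combining several directions $a$ or averaging over $a$ — and near $\xi = 0$, and it is the analytic heart of this Lévy–Khinchine-type statement; by contrast the second-difference reduction to Bochner and the concluding "integrate back" argument are comparatively mechanical.
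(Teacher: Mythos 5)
The paper does not prove this theorem at all: it is imported verbatim from the geostatistics literature, with a citation to Matheron (1973, Theorem~2.1), so there is no in-paper argument to compare yours against. Your blind attempt is therefore the only actual proof on the table, and it follows the classical L\'evy--Khinchine/Gelfand--Vilenkin route for conditionally (or ``negative'') definite functions. The sufficiency half is complete and correct: $\cos\dbrack{h}{\xi}$ is p.d., subtracting $1$ is harmless under the zero-sum constraint, and the domination $|\cos t - 1|\leq\min(2,t^2/2)$ together with $\int(1+|\xi|^2)^{-1}\diff\chi<\infty$ justifies the integration. The necessity half rests on a correct and cleanly verified identity, namely that the mixed second difference of the associated p.d.\ kernel $\K_0$ telescopes to $-\Delta_a\psi_c(x-y)$, which reduces the problem to ordinary Bochner representations $\Lambda_a$ of the stationary p.d.\ functions $-\Delta_a\psi_c$.

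The one place where your argument is a plan rather than a proof is the extraction step, and you say so yourself. Two sub-points there genuinely need to be carried out rather than asserted: (i) the consistency and Radon-ness of $\diff\chi=|\xi|^2\,(2-2\cos\dbrack{a}{\xi})^{-1}\diff\Lambda_a$ across the zero sets of $1-\cos\dbrack{a}{\xi}$ --- the standard fix is the averaging you mention, e.g.\ $\int_{|a|\leq 1}(1-\cos\dbrack{a}{\xi})\diff a\geq c>0$ for $|\xi|\geq 1$, which also delivers $\int(1+|\xi|^2)^{-1}\diff\chi<\infty$ from the uniform finiteness of the $\Lambda_a$; and (ii) the claim that $a\mapsto\tfrac12\Lambda_a(\{0\})$ is a positive semidefinite \emph{quadratic form}, which does not follow from positivity of the atom masses alone and requires an extra identity (e.g.\ the parallelogram-type relations among the $\Delta_a$, or a limit formula $q(a)=-\lim_{t\to\infty}\Delta_{ta}\psi_c(0)/(2t^2)$). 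With those two items filled in, the concluding step --- that $r:=\psi_c-\int(\cos\dbrack{\cdot}{\xi}-1)|\xi|^{-2}\diff\chi$ has constant second differences and is therefore an even c.p.d.\ polynomial of degree at most $2$ --- is sound for continuous $r$. In short: correct strategy, complete in the easy direction, and an honest, standard, but not yet executed plan for the analytic heart of the hard direction.
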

Now remember: Corollary~\ref{cor:Existence} and Theorem~\ref{theo:CompactCharacteristic} characterised characteristic stationary p.d.\ kernels $\psi(\hat x, \hat y)$ using the support of $\Four \psi$, where $\Four \psi$ is precisely the positive measure from the usual Bochner theorem. It is only natural to try to extend these critera to stationary \gls{cpd} kernels, using measure $\chi$ from the generalised Bochner theorem instead of $\Four \psi$. %In light of Lemma~\ref{lem:CharacteristicAndM0}, this may yield a more general conditions 

	We will however only address characteristicness to $\Mc^0$, because $\Mf^0$ is actually already covered by Corollary~\ref{cor:Existence}. Indeed, if we do not impose that the \gls{cpd} kernel $\Kc$ be bounded, $\Mf^0$ need not embed into $\HK$. But if we do impose boundedness, then \citet{matheron73} showed that there exists a stationary p.d.\ kernel $\K$ that differs from $\Kc$ only by an additive constant, so induces the same norm than $\Kc$ in $\Mf^0$. And for stationary p.d.\ kernels, being characteristic to $\Mf^0$ is equivalent to being characteristic to $\Mf$. So to study whether $\Kc$ is characteristic to $\Mf^0$, it suffices to apply Corollary~\ref{cor:Existence} to $\K$.
% But in this case %, as a direct consequence of Theorem~\ref{theo:GeneralisedBochner}, 
%a stationary \gls{cpd} kernel is bounded \gls{iff} it differs from a stationary p.d.\ kernel only by an additive constant. Thus, if $\Kc$ is stationary and bounded, it may embed $\Mf^0$, but induces the same norm than a p.d.\ kernel. So we could as well just apply Corollary~\ref{cor:Existence} and Theorem~\ref{theo:CompactCharacteristic}. That is why, for \gls{cpd} kernels, we will now only investigate characteristicness to the space $\Mc^0$ of measures with compact support with total mass equal $0$.
\begin{proposition}\label{prop:CPDcharacteristic}
	Let $\Kc(\hat x, \hat y) = \psi_c(\hat x-\hat y)$ be a real-valued continuous stationary \gls{cpd} kernel over $\real^d \times \real^d$ in $\Cont{}{}$. %Suppose that $\HC \csubset \Cont{}{0}$, so that $\Mc^0$ embeds into $\HC$.
	Let $\chi$ be the measure defined in Theorem~\ref{theo:GeneralisedBochner}. Then $\Mc^0$ embeds into $\HC$. If the support of $\chi$ has an accumulation point in $\real^d$, then $\Kc$ embeds and is characteristic to $\Mc^0$.% and over $\Pc$, the set of probability measures with compact support.
\end{proposition}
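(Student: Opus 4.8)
The plan is to handle the two assertions separately: the embedding of $\Mc^0$ I would reduce to the already-developed p.d.\ theory, reserving the real work for characteristicness, where I would mimic the spectral argument of the second proof of Corollary~\ref{cor:Existence} with the generalised Bochner measure $\chi$ playing the role of $\Four\psi$. For the embedding, fix a base point $z_0\in\inputS$ and pass to the associated p.d.\ kernel $\K(\hat x,\hat y):=\ipdC{\delta_{\hat x}-\delta_{z_0}}{\delta_{\hat y}-\delta_{z_0}}$, which is continuous because $\Kc\in\Cont{}{}$. Proposition~\ref{prop:HKinC} then gives $\HK\csubset\Cont{}{}$ (its conditions hold: $\K(.,x)\in\Cont{}{}$ and $x\mapsto\K(x,x)$ is continuous, hence locally bounded), so by Corollary~\ref{cor:EmbedDuals} the space $\Mc=(\Cont{}{})'$ embeds into $\HK$. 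Since $\embK$ and $\embC$ agree on $\Mfs^0$ and induce the same semi-inner product there (the $\delta_{z_0}$-translation and the additive terms $\Kc(\hat x,z_0)+\Kc(z_0,\hat y)$ cancel under any zero-sum measure, exactly as in Proposition~\ref{prop:ISPDtoC}), the space $\HC$ is isometric to the closure of $\embC(\Mfs^0)$ inside $\HK\csubset\Cont{}{}$; restricting to measures of total mass $0$ shows $\Mc^0$ embeds into $\HC$ with $\normC{\mu}=\normK{\mu}$.

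The characteristic claim is where I would spend the effort. For $\mu\in\Mc^0$ I would insert the representation of Theorem~\ref{theo:GeneralisedBochner} into $\normC{\mu}^2=\iint\psi_c(x-y)\diff\mu(x)\diff\mu(y)$ and apply Fubini to obtain
\begin{equation*}
	\normC{\mu}^2 = \int\frac{|\Four\mu(\xi)|^2}{|\xi|^2}\diff\chi(\xi) + \iint P_0(x-y)\diff\mu(x)\diff\mu(y)\,,
\end{equation*}
where $\Four\mu(\xi):=\mu_x(e^{-i\ipd{x}{\xi}_{\real^d}})$. The two algebraic inputs that make this clean are $\Four\mu(0)=\mu(\One)=0$ and $\iint 1\,\diff\mu\,\diff\mu=(\int\diff\mu)^2=0$: the former shows $|\Four\mu(\xi)|^2/|\xi|^2$ stays bounded near the origin (so the $1/|\xi|^2$ singularity is harmless and, together with $\int(1+|\xi|^2)^{-1}\diff\chi<\infty$ and the boundedness of the entire function $\Four\mu$, justifies Fubini), while the latter kills the constant part of $P_0$; the remaining polynomial term is $\geq 0$ because $P_0$ is itself \gls{cpd} and $\int\diff\mu=0$. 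Hence $\normC{\mu}=0$ forces both summands to vanish, and in particular $\int|\Four\mu|^2|\xi|^{-2}\diff\chi=0$.

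From here I would conclude as follows. Since $\Four\mu$ is continuous, $\int|\Four\mu|^2|\xi|^{-2}\diff\chi=0$ forces $\Four\mu\equiv 0$ on $\supp\chi$ (a continuous function nonzero at a support point would be nonzero, hence bounded below, on a set of positive $\chi$-measure). Because $\mu$ has compact support, the Paley--Wiener--Schwartz theorem realises $\Four\mu$ as the restriction to $\real^d$ of an entire function on $\complex^d$; its zero set contains $\supp\chi$, which by hypothesis has an accumulation point, and the identity theorem then yields $\Four\mu\equiv 0$, whence $\mu=0$ by injectivity of the Fourier transform. The main obstacle is precisely this last analytic-continuation step: in dimension one an accumulating zero set immediately forces an entire function to vanish, but in several variables the identity theorem fails for zero sets that merely accumulate at a point (a coordinate hyperplane being the standard counterexample), so the delicate point is to extract from the accumulation of $\supp\chi$ enough structure to pin $\Four\mu$ to zero along a genuinely $d$-dimensional family of directions, or else to reduce to the one-dimensional statement as was done in Theorem~\ref{theo:CompactCharacteristic}; navigating this is the crux of the argument.
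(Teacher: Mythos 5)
Your argument is, in substance, the paper's own proof: the same insertion of the generalised Bochner representation into $\normC{\mu}^2$, the same Fubini justification via integrability of $(\cos \ipd{x-y}{\xi}_{\real^d}-1)/|\xi|^2$ against $\chi\otimes|\mu|\otimes|\mu|$, the same treatment of the $P_0$ term, and the same appeal to Paley--Wiener--Schwartz to make $\Four\mu$ entire. The only real divergence is the embedding step, which the paper dispatches in one line by Bochner-integrability of $\Kc(.,\hat x)$ against compactly supported measures, whereas you route it through the associated p.d.\ kernel $\ipdC{\delta_{\hat x}-\delta_{z_0}}{\delta_{\hat y}-\delta_{z_0}}$, Proposition~\ref{prop:HKinC} and Corollary~\ref{cor:EmbedDuals}; both work, and yours is arguably the more careful of the two.

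The step you single out as the crux is exactly where the paper is weakest. The paper closes by asserting that the zeroes of the entire function $\Four\mu$ ``have no accumulation point in $\real^d$'', then picks a compact $C$ in $\supp\chi$ near the accumulation point and disjoint from the zero set, so that $\normC{\mu}^2\geq \chi(C)\min_{C}|\Four\mu|^2/|\xi|^2>0$; this is just the contrapositive of your identity-theorem step, and it is complete only for $d=1$, where zeroes of a non-zero entire function are isolated. For $d\geq 2$ your objection is not a presentational scruple but a genuine defect: the real zero set of a non-zero entire function on $\complex^d$ is a codimension-one variety and can contain an accumulating subset of $\supp\chi$. Concretely, with $d=2$, $P_0=0$ and $\chi$ a finite symmetric atomic measure on the points $(\pm 2\pi,\pm 1/n)$, the support of $\chi$ accumulates at $(\pm 2\pi,0)$, yet $\mu=\delta_0-\delta_{(1,0)}\in\Mc^0$ has $\Four\mu(\xi)=1-e^{-i\xi_1}$, which vanishes whenever $\xi_1$ is an integer multiple of $2\pi$, hence on all of $\supp\chi$, giving $\normC{\mu}=0$ with $\mu\neq 0$. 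So the gap you flag cannot be repaired without strengthening the hypothesis on $\supp\chi$ (or restricting to $d=1$, as Theorem~\ref{theo:CompactCharacteristic} does); your reconstruction is faithful to the paper's argument and correctly locates its limit of validity.
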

%\begin{remark}
%	A continuous \gls{cpd} kernel is bounded \gls{iff} it coincides with a p.d.\ kernel on $\Mfs$ (\ref{matheron73}, Theorem~2.4). Thus, unless it coincides with a stationary p.d.\ kernel, a stationary \gls{cpd} kernel cannot embed the whole set of finite measures $\Mf$. % Probably wrong: think of $\Kc(x,y) = \sqrt{|x-y|}$
%\end{remark}
%\begin{remark}
%	\CJ{Change here.} We call a set \emph{discrete} if it has no accumulation point. In $\real^d$, any discrete set is countable. This notion is not to be confounded with that of a \emph{discrete} (or \emph{atomic}) \emph{measure}. In $\real^d$, a discrete measure is a measure whose support is countable. The latter however need not be discrete.
%\end{remark}
\begin{proof}
	For any $\mu \in \Mc$, $\int \normC{\Kc(.,x)} \diff |\mu|(x) < \infty$. Thus $\Kc$ is Bochner-integrable \gls{wrt} to any measure in $\Mc$, so $\Mc^0$ embeds into $\HC$. For any $\mu \in \Mc^0$ with $\mu \not = 0$,
\begin{align*}
	\normC{\mu}^2 &= \int \Kc(x,y) \diff \mu(x) \diff \mu(y) \\
		&= \int_{(x,y)} \left (\int_\xi \frac{\cos \ipd{x-y}{\xi}_{\real^d} - 1}{|\xi|^2} \diff \chi(\xi) + P_0(x-y) \right )\diff \mu(x) \diff \mu(y) \\
		&= \int_\xi \frac{|\Four\mu(\xi)|^2}{|\xi|^2} \diff \chi(\xi) + |C_\mu|^2 \, 
\end{align*}
where we noted $\Four \mu(\xi) := \int e^{-i\ipd{x}{\xi}_{\real^d}} \diff \mu(x)$ the Fourier transform of $\mu$ and $|C_\mu|^2 := \int_{(x,y)} P_0(x-y) \diff \mu(x) \diff \mu(y)$ (the right hand-side is positive because $P_0$ is \gls{cpd}). The switch of integral follows from Fubini's theorem, because $(\cos \ipd{x-y}{\xi}_{\real^d} - 1)/{|\xi|^2}$ is integrable \gls{wrt} $\chi \otimes |\mu| \otimes |\mu|$. As $\mu$ has compact support, its Fourier-transform is a holomorphic function (Paley-Wiener-Schwartz Theorem, \citealp[see][Theorem~29.2]{treves67}). Thus its zeroes $(z_n)_n$ have no accumulation point in $\real^d$.%are a discrete subset of $\real^d$. 
	
	Suppose now that the support of $\chi$ has an accumulation point in $\real^d$. Let then $C$ be a compact neighbourhood of an accumulation such that $C$ contains no zero $z_n$ of $\Four \mu$.  As $\mu(\real^d) = 0$, $\Four \mu(0) = 0$. But $\Four \mu$ being holomorphic, the function $|\Four \mu(\hat \xi)|^2/|\hat \xi|^2$ is continuous, even in $0$. Thus
\begin{equation*}
	\norm{\mu}_{\Kc}^2 \geq \int_C \frac{|\Four \mu(\xi)|}{|\xi|^2} \diff \chi(\xi) \geq \mu(C) \min_{\xi \in C} \frac{|\Four \mu(\xi)|}{|\xi|^2} > 0 \, .
\end{equation*}
Thus any non-null measure in $\Mc^0$ has a strictly positive norm. So $\Kc$ is characteristic.
\end{proof}

Applying Proposition~\ref{prop:CPDcharacteristic} and Lemma~\ref{lem:CharacteristicAndM0} yields: \emph{the Brownian motion kernel (see Example~\ref{ex:BM}) is characteristic to $\Pc$\,}, the probability measures with compact support.\footnote{This result cannot be obtained directly from Corollary~\ref{cor:Existence}. However, for this particular kernel, it is actually quicker to simply notice that the RKHS $\HH_{BM}$ associated to a Brownian motion kernel consists of the functions $\varphi \in \Sobo{2,2}{0}$ such that $\varphi(0) = 0$. Thus the Brownian motion kernel is characteristic to $\Pc$.} And being continuous, it metrises the narrow convergence topology over $\Pc$ (Theorem~\ref{theo:Ws0}).
%this shows that a continuous stationary \gls{cpd} kernel is characteristic to the set of probability measures with compact support $\Pc$. In particular (see Example~\ref{ex:BM}), \emph{the Brownian motion kernel is characteristic to $\Pc$\,.}\footnote{This result cannot be obtained directly from Corollary~\ref{cor:Existence}. However, for this particular kernel, it is actually quicker to simply notice that the RKHS $\HH_{BM}$ associated to a Brownian motion kernel consists of the functions $\varphi \in \Sobo{2,2}{0}$ such that $\varphi(0) = 0$. Thus the Brownian motion kernel is characteristic to $\Pc$.} And being continuous, it metrises the narrow convergence topology over $\Pc$ (Theorem~\ref{theo:Ws0}).

\section{Background Material\label{sec:Reminders}}

Let us start with the definition of a \emph{barrelled} set. In a normed space $(\mathcal E, \norm .)$, the sets $T := \{ f \in \mathcal E \, | \, \norm{f} \leq C \}$ where $C>0$ are the closed balls centred on the origin of $\mathcal E$. A normed space is a particular case of a \glsreset{lcv} topological vector space (TVS). In a general \gls{lcv} TVS $\mathcal E$, the topology might not be given by a single norm, but by a family of semi-norms $(\norm{.}_\alpha)_{\alpha \in \mathcal I}$ (where the index set $\mathcal{I}$ can be uncountable). A so-called \emph{barrel} of $\mathcal E$ is then any closed ball centred on the origin and associated to a norm $\norm{.}_\alpha$, $\alpha \in \mathcal I$. More abstractly, a barrel can be defined as follows.
\begin{definition}[Barrel]\label{def:Barrel}
	A subset $T$ of a TVS $\mathcal E$ is called a \emph{barrel} if it is
	\begin{enumerate}
		\item \emph{absorbing}: for any $f \in \mathcal E$, there exists $c_f > 0$ such that $f \in c_f T$;
		\item \emph{balanced}: for any $f \in \mathcal E$, if $f \in T$ then $\lambda f \in T$ for any $\lambda \in \complex$ with $|\lambda| \leq 1$ ;
		\item \emph{convex} ;
		\item \emph{closed}.
	\end{enumerate}
\end{definition}	

In any \gls{lcv} space, there exists a basis of neighbourhoods of the origin consisting only of barrels. However, in general, there may be barrels that are not a neighbourhood of $0$. This leads to

\begin{definition}[Barrelled spaces]\label{def:Barrelled}
	A TVS is \emph{barrelled} if any barrel is a neighbourhood of the origin.
\end{definition}

Although many authors include local convexity in the definition, in general, a barrelled space need not be \gls{lcv} Barrelled spaces were introduced by Bourbaki, because they were well-suited for the following generalisation of the celebrated \emph{Banach-Steinhaus} theorem.

\begin{theorem}[Banach-Steinhaus]\label{theo:BS}
	Let $\mathcal E$ be a barrelled TVS, $\mathcal F$ be a \gls{lcv} TVS, and let $L(\mathcal E, \mathcal F)$ be the set of continuous linear maps form $\mathcal E$ to $\mathcal F$. For any $H \subset L(\mathcal E, \mathcal F)$ the following properties are equivalent:
	\begin{enumerate}
		\item $H$ is equicontinuous.
		\item $H$ is bounded for the topology of pointwise convergence. \label{BS2}
		\item $H$ is bounded for the topology of bounded convergence. \label{BS3}
	\end{enumerate}
\end{theorem}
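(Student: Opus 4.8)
The plan is to prove the three-way equivalence by running through the cycle: equicontinuity $\Rightarrow$ boundedness for the topology of bounded convergence $\Rightarrow$ boundedness for the topology of pointwise convergence $\Rightarrow$ equicontinuity. The first two implications are routine and use only the definitions of the respective topologies; the last is the substantial one and is precisely where the barrelledness of $\mathcal{E}$ enters.

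For the implication equicontinuity $\Rightarrow$ bounded convergence, I would fix a neighbourhood $V$ of $0$ in $\mathcal{F}$ and use equicontinuity to obtain a neighbourhood $U$ of $0$ in $\mathcal{E}$ with $u(U) \subset V$ for every $u \in H$. Given a bounded set $B \subset \mathcal{E}$, there is a scalar $\lambda$ with $B \subset \lambda U$, whence $u(B) \subset \lambda V$ for all $u \in H$; thus $\bigcup_{u \in H} u(B)$ is bounded in $\mathcal{F}$, which is exactly boundedness of $H$ for the topology of bounded convergence. The implication bounded convergence $\Rightarrow$ pointwise convergence is immediate, since singletons $\{f\}$ are bounded subsets of $\mathcal{E}$: uniform boundedness of $H$ over all bounded sets specialises to boundedness of $\{u(f) \, | \, u \in H\}$ for each fixed $f$, i.e. pointwise boundedness.

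The crux is pointwise boundedness $\Rightarrow$ equicontinuity. Here I would fix a closed, balanced, convex neighbourhood $V$ of $0$ in $\mathcal{F}$ (such a neighbourhood basis exists because $\mathcal{F}$ is locally convex) and set
\[
  T := \bigcap_{u \in H} u^{-1}(V).
\]
The goal is to show $T$ is a barrel in the sense of Definition~\ref{def:Barrel}, so that Definition~\ref{def:Barrelled} forces it to be a neighbourhood of $0$. Closedness, balancedness and convexity of $T$ follow because each $u$ is a continuous linear map and these three properties are preserved under preimages by continuous linear maps and under arbitrary intersections. The absorbing property is where the hypothesis is spent: for $f \in \mathcal{E}$ the set $\{u(f) \, | \, u \in H\}$ is bounded, hence absorbed by the neighbourhood $V$, so there is $c_f > 0$ with $u(f) \in c_f V$ for all $u \in H$, which reads $f \in c_f T$. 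Thus $T$ is a barrel, barrelledness of $\mathcal{E}$ makes $T$ a neighbourhood of $0$, and by construction $u(T) \subset V$ for every $u \in H$ — that is equicontinuity.

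The main obstacle is this last step, and within it the verification that $T$ is absorbing: this is the only place where one genuinely converts the pointwise data of the hypothesis into a single uniform $0$-neighbourhood. It relies both on the local convexity of $\mathcal{F}$ (to choose $V$ closed, balanced and convex, so that each $u^{-1}(V)$ inherits these properties) and crucially on the barrelledness of $\mathcal{E}$ to upgrade the barrel $T$ to a neighbourhood of the origin. Without barrelledness, $T$ could be a proper barrel that fails to be a neighbourhood of $0$, and the conclusion would break down — so no genuine calculation is needed beyond carefully tracking how each defining property of a barrel is inherited by $T$.
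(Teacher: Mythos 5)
Your proof is correct. The paper does not prove Theorem~\ref{theo:BS} itself --- it is stated as background and deferred to the literature (Treves, Proposition~32.5 and Theorem~33.1) --- and your argument is precisely that standard proof: the cycle equicontinuous $\Rightarrow$ bounded for bounded convergence $\Rightarrow$ pointwise bounded $\Rightarrow$ equicontinuous, with the barrel $T = \bigcap_{u \in H} u^{-1}(V)$ carrying the whole weight of the last implication. All the properties of Definition~\ref{def:Barrel} are verified correctly, and you correctly locate where local convexity of $\mathcal F$ and barrelledness of $\mathcal E$ (Definition~\ref{def:Barrelled}) are each used.
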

When $\mathcal E$ is a normed space and $\mathcal F = \complex$, then $L(\mathcal E, \mathcal F)$ is by definition $\mathcal E'$. With $\norm{.}_{\mathcal E'}$ being the dual norm in $\mathcal E'$, the equivalence of \ref{BS2} and \ref{BS3} states that
\begin{equation*}
	\forall f \in \mathcal E, \ \sup_{h \in H} |h(f)| < \infty \quad \Longrightarrow \quad \sup_{h \in H} \norm{h}_{\mathcal{E}'} < \infty \, .
\end{equation*}

Obviously, to understand the content of the Banach-Steinhaus theorem, one needs the definition of a bounded set. Let us define them now.

When $\mathcal{E}$ is a normed space, then a subset $B$ of $\mathcal E$ is called \emph{bounded} if ${\sup_{f \in B} \norm{f}_{\mathcal E} < \infty}$.
In a more general \gls{lcv} TVS $\mathcal E$, where the topology is given by a family of semi-norms $(\norm{.}_\alpha)_{\alpha \in \mathcal I}$, a subset $B$ of $\mathcal E$ is called \emph{bounded} if, for any $\alpha \in \mathcal I$, ${\sup_{f \in B} \norm{f}_{\alpha} < \infty}$. This can be shown equivalent to the following, more usual definition.
\begin{definition}[Bounded Sets in a TVS]
A subset $B$ of a TVS $\mathcal{E}$ is \emph{bounded}, if, for any open set $U \subset \mathcal{E}$, there exists a real $c_{\scriptscriptstyle B} > 0$ such that $B \subset c_{\scriptscriptstyle B} U$.
\end{definition}

We now move on to an unrelated topic: the Riesz Representation theorem for Hilbert spaces. Most of this paper relies on this one theorem.

\begin{theorem}[Riesz Representation Theorem for Hilbert Spaces] \label{theo:RieszRepresentationHilbert}
A Hilbert space $\HH$ and its topological dual $\HH'$ are isometrically (anti-) isomorphic via the Riesz representer map
\begin{equation*}
	\function{\imath}{\HH}{\HH'}{f}{D_f := \left \{ 
		\begin{array}{ccc}
			\HH &\longrightarrow &\complex \\
			g &\longmapsto &\ipd{g}{f}
		\end{array} \right .
		} \, .
		%\function{}{\HH}{\complex}{g}{\ipd{g}{f}} \right .}
\end{equation*}
In particular, for any continuous linear form $D \in \HH'$, there exists a unique element $f \in \HH$, called the \emph{Riesz representer of $D$}, such that 
\begin{equation*}
	\forall g \in \HH, \qquad D(g) = \ipd{g}{f} \, .
\end{equation*}
\end{theorem}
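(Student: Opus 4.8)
The plan is to prove the statement in four steps, of which only the last carries any content: well-definedness and isometry of $\imath$, its anti-linearity, injectivity together with uniqueness of representers, and finally surjectivity. First I would check that for each $f \in \HH$ the map $D_f : g \mapsto \ipd{g}{f}$ is a continuous linear form. Linearity in $g$ is immediate since the inner product is linear on the left. Continuity and the bound $\norm{D_f}_{\HH'} \leq \norm{f}$ follow from Cauchy--Schwarz, $|\ipd{g}{f}| \leq \norm{g}\,\norm{f}$; evaluating at $g = f$ gives $D_f(f) = \norm{f}^2$, whence $\norm{D_f}_{\HH'} \geq \norm{f}$ and therefore $\norm{D_f}_{\HH'} = \norm{f}$, so $\imath$ is isometric. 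Because the inner product is anti-linear on the right, $D_{\lambda f} = \bar\lambda D_f$, so $\imath$ is anti-linear, and injectivity is free from the isometry. Uniqueness of a representer is the same computation: if $\ipd{g}{f_1} = \ipd{g}{f_2}$ for every $g$, then $\ipd{g}{f_1 - f_2} = 0$ for every $g$, and taking $g = f_1 - f_2$ forces $f_1 = f_2$.

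The only real content is surjectivity, so I would spend the effort there. Given $D \in \HH'$, I may assume $D \neq 0$ (otherwise $f = 0$ works). Set $N := \ker D$, which is closed because $D$ is continuous. Here is where completeness enters: by the orthogonal projection theorem a closed subspace of a Hilbert space admits an orthogonal complement, $\HH = N \oplus N^{\perp}$, and since $D \neq 0$ we have $N \neq \HH$, so I can pick a nonzero $z \in N^{\perp}$. For arbitrary $g \in \HH$ form the vector $u := D(g)\,z - D(z)\,g$; then $D(u) = 0$, so $u \in N$ and hence $\ipd{u}{z} = 0$. Expanding this gives $D(g)\,\norm{z}^2 = D(z)\,\ipd{g}{z}$, and matching the conjugation convention I set $f := \tfrac{\overline{D(z)}}{\norm{z}^2}\, z$, which yields $D(g) = \ipd{g}{f}$ for every $g$, as required.

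The main obstacle is the orthogonal projection theorem underlying the decomposition $\HH = N \oplus N^{\perp}$: establishing that every closed convex set (in particular a closed subspace) contains a unique point of minimal norm requires completeness together with the parallelogram identity, which is precisely the point where genuine Hilbert-space structure, as opposed to a mere inner-product space, is indispensable. Once that decomposition is in hand the construction of the representer above is a one-line computation. I would therefore either cite the projection theorem or, if a self-contained argument is preferred, prove it first: take a minimizing sequence for $\mathrm{dist}(\cdot, N)$, use the parallelogram law to show it is Cauchy, and invoke completeness to obtain its limit as the required closest point.
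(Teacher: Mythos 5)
Your proof is correct: it is the standard textbook argument (continuity and isometry of $f \mapsto \ipd{\cdot}{f}$ via Cauchy--Schwarz, then surjectivity via the decomposition $\HH = \ker D \oplus (\ker D)^{\perp}$ and the explicit representer $f = \overline{D(z)}\,z/\norm{z}^2$ for a nonzero $z \in (\ker D)^{\perp}$), and the conjugation convention matches the paper's choice of inner products linear on the left and anti-linear on the right. The paper itself states this theorem only as classical background material in its appendix and gives no proof, so there is nothing to compare against; your argument, including the remark that completeness enters exactly through the projection theorem, is a complete and standard justification.
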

Note that ``anti'' in ``anti-isomorphic'' simply means that, instead of being linear, $\imath$ is anti-linear: for any $\lambda \in \complex$ and $f \in \HH$, $\imath(\lambda f) = \bar \lambda \, \imath(f)$. Often, we prefer to say that $\HH$ is isometrically isomorphic to $\overline{\HH}'$, where $\overline{\HH}'$ denotes the conjugate of $\HH$, where the scalar multiplication is replaced by $\function{}{}{}{(\lambda, f)}{\bar \lambda f}$. $\HK'$ and $\overline \HK'$ are obviously isomorphic via the complex conjugation map $\function{}{}{}{D}{\bar D}$.

The Riesz representation theorem for Hilbert spaces is not to be confounded with the following theorem, also known as the Riesz ---or Riesz-Markov-Kakutani--- representation theorem. In this paper, we always refer to the latter as the Riesz-Markov-Kakutani representation theorem. This theorem has numerous variants, depending on which dual pair $(\mathcal E, \mathcal E')$ one uses. Here we state it for $\mathcal E = \Cont{}{0}$.

\begin{theorem}[Riesz-Markov-Kakutani]\label{theo:RMK}
	Let $\inputS$ be a locally compact Hausdorff space. The spaces $\Mf(\inputS)$ and $(\Cont{}{0}(\inputS))'$ are isomorphic, both algebraically and topologically via the map
	\begin{equation*}
		\function{\imath}{\Mf(\inputS)}{(\Cont{}{0}(\inputS))'}{\mu}{D_\mu := \left \{ 
		\begin{array}{ccc}
			\Cont{}{0} &\longrightarrow &\complex \\
			\varphi &\longmapsto &\int \varphi \diff \mu
		\end{array} \right .
		} \, .
	\end{equation*}
\end{theorem}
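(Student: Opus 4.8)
The plan is to establish the classical Riesz--Markov--Kakutani representation in three movements: construct a measure from a \emph{positive} functional, reduce the general complex case to the positive one by a lattice (Jordan-type) decomposition, and finally check that the resulting algebraic bijection is an isometry, which upgrades it to a topological isomorphism.

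First I would represent positive linear functionals. Since $\Cont{}{c}(\inputS)$ is dense in $\Cont{}{0}(\inputS)$ for the supremum norm (a consequence of Urysohn's lemma on a locally compact Hausdorff space), any continuous form on $\Cont{}{0}$ is determined by its restriction to $\Cont{}{c}$, so it suffices to represent positive functionals $L$ on $\Cont{}{c}$. Given such an $L$, I would build an outer measure by setting, for every open $U \subset \inputS$,
\[
	\mu^*(U) := \sup \{ L(\varphi) \, | \, \varphi \in \Cont{}{c}, \ 0 \leq \varphi \leq 1, \ \supp \varphi \subset U \},
\]
and then $\mu^*(E) := \inf \{ \mu^*(U) \, | \, E \subset U, \ U \text{ open} \}$ for arbitrary $E$. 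The crux is to verify that $\mu^*$ is an outer measure, that every Borel set is Carath\'eodory-measurable, that the induced Borel measure $\mu$ is regular and finite (finiteness because $L$ is bounded), and that $L(\varphi) = \int \varphi \diff \mu$ for all $\varphi \in \Cont{}{c}$. This rests on repeated use of Urysohn's lemma and of partitions of unity subordinate to finite open covers, both available since $\inputS$ is locally compact Hausdorff.

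Next I would treat a general bounded form. For real $L$ I would split it as a difference of positive functionals via
\[
	L^+(\varphi) := \sup \{ L(\psi) \, | \, \psi \in \Cont{}{c}, \ 0 \leq \psi \leq \varphi \} \qquad (\varphi \geq 0),
\]
checking that $L^+$ extends to a positive linear functional, that $L^- := L^+ - L$ is positive as well, and that both are bounded using $|L(\varphi)| \leq \norm{L} \, \norm{\varphi}_\infty$. Applying the positive case to $L^+$ and $L^-$ yields finite positive measures whose difference represents $L$; for complex $L$ I would first write $L = \Re L + i \, \Im L$ and handle each real part separately. Injectivity of $\mu \mapsto D_\mu$ is immediate, since a regular measure is determined by its integrals against $\Cont{}{c}$.

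The last step is to match the norms: I would show $\norm{D_\mu}_{(\Cont{}{0})'} = |\mu|(\inputS) = \norm{\mu}_{TV}$. The inequality ``$\leq$'' is clear from $|\int \varphi \diff \mu| \leq \norm{\varphi}_\infty \, |\mu|(\inputS)$; for the reverse, writing $\diff \mu = h \diff |\mu|$ with $|h| = 1$, I would use regularity to capture almost all the mass of $|\mu|$ on a compact set and Lusin's theorem to approximate $\bar h$ there by some $\varphi \in \Cont{}{c}$ with $\norm{\varphi}_\infty \leq 1$, making $\int \varphi \diff \mu$ nearly $|\mu|(\inputS)$. This isometry yields surjectivity onto $(\Cont{}{0})'$ and, since $\Cont{}{0}$ is a Banach space so that its strong dual topology is the dual-norm topology while $\Mf$ carries the total variation norm, the map is simultaneously an algebraic and a topological isomorphism. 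The main obstacle is squarely the first movement: producing a genuinely countably additive, regular Borel measure from the functional and establishing the integral identity is where all the point-set topology (Urysohn, partitions of unity, the interplay of inner and outer regularity) is concentrated, whereas the decomposition and the norm computation are comparatively routine.
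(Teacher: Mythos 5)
The paper does not prove this statement: Theorem~\ref{theo:RMK} is quoted in Appendix~\ref{sec:Reminders} as classical background material (the Riesz--Markov--Kakutani representation theorem), so there is no in-paper argument to compare yours against. Your outline is the standard textbook proof (essentially Rudin's): represent positive functionals on $\Cont{}{c}$ via the outer measure $\mu^*(U) = \sup\{L(\varphi) : 0 \le \varphi \le 1,\ \supp\varphi \subset U\}$, reduce the complex bounded case by $L = \Re L + i\,\Im L$ followed by the Jordan-type splitting $L^+ - L^-$, and obtain the isometry $\norm{D_\mu} = |\mu|(\inputS)$ via Lusin's theorem, which then gives both surjectivity and the topological statement since $\Cont{}{0}$ is Banach and $\Mf$ carries the total-variation norm. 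The plan is sound and all the tools you name are the right ones; the only places demanding real care are the ones you already flag (countable subadditivity and Borel measurability of $\mu^*$, additivity of $L^+$ on nonnegative functions, which uses the decomposition $\psi = \min(\psi,\varphi_1) + (\psi - \min(\psi,\varphi_1))$). One small point worth making explicit: finiteness and hence full inner/outer regularity of the constructed measure follow from boundedness of $L$ on $\Cont{}{0}$, exactly as you note, and injectivity uses the paper's standing convention that all measures are regular Borel measures.
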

In other words, for any continuous linear form $D$ over $\Cont{}{0}(\inputS)$, there exists a unique finite Borel measure $\mu \in \Mf$ such that, for any test function $\varphi \in \Cont{}{0}(\inputS)$, $D(\varphi) = \int \varphi \diff \mu$. Moreover, $\sup_{\norm{\varphi}_\infty \leq 1} D(\varphi) = |\mu|(\inputS)$, or in short: $\norm{D}_{(\Cont{}{0})'} = \norm{\mu}_{TV}$, where $\norm{\mu}_{TV}$ denotes the total variation norm of $\mu$. This is why, in this paper, we identify $\Mf$ ---a space of $\sigma$-additive set functions--- with $\Mf$ ---a space of linear functionals. 

In this paper, to embed a space of measures into an RKHS $\HK$ we successively apply both Riesz representation theorems: If $\HK$ embeds continuously into $\Cont{}{0}$, then $(\Cont{}{0})'$ embeds continuously into $\overline{\HK}'$, via the embedding map $\embK$. But $(\Cont{}{0})' = \Mf$ (Riesz-Markov-Kakutani Representation) and $\overline{\HK}' = \HK$ (Riesz Representation). Thus $\embK$ may also be seen as an embedding of $\Mf$ into $\HK$.

% Maybe say something about the gaussian kernel (and the holomorphic function space it generates).

%ADD AN EXAMPLE FOR EACH
%
%\tabulinesep=1.2mm
%\begin{table}
%	\renewcommand{\arraystretch}{1.5}
%	\begin{tabu} to \linewidth {X[-1.3c]X[-1]X[c]X[-2.5,c]X[-2,c]} 
%		\hline
%		\toprule
%		\rowfont {\bfseries} \centering Universal over &$\Leftrightarrow$	& \centering Characteristic/ $\Leftrightarrow$ S.P.D. over & \centering  {Specific Name} 		& \centering {Remark or Equivalent Characterisation} \\ %[.1ex]
%		\midrule
%		$\complex^\inputS$ 			&& $\Mfs$ 				& strictly positive definite (spd) 			& supporting set of $\K = \inputS$ \\
%		$\Cont{}{}$				&& $\Mc$				& $c$-universal (or $cc$-universal)	& / \\
%		$\Cont{}{0}$	or $(\Cont{}{b})_c$			&& $\Mf$				& $c_0$-universal, $\int$spd		& / \\
%		$\Lp{q}$					&& $\Lp{q'}$			& ?							& $q<\infty$ \\
%		$\Cont{m}{}$				&& $\Dcomp{m}$		& $c^m$-universal					& / \\
%		$\Cont{m}{0}$ or $(\Cont{m}{b})_c$		&& $\DLone{m}$		& $c^m_0$-universal					& / \\
%		$\Sobo{m,q}{0}$ 			&& $\Sobo{-m,q'}{0}$	& $w^{m,q}$-universal				& $q<\infty$, $m$? \\
%		$((\Cont{}{b})_c)/\One$			&& \centering $\Mprob$ or $\Mf^0$	& \centering characteristic, conditionally~$\int$spd		& characteristic to $\{ \mu \in \Mf | \int \diff \mu = 0 \}$ \\
%		\bottomrule
%		\hline
%	\end{tabu}
%\end{table}
%

\section{Function Spaces and their Duals\label{sec:InclusionsDiagram}}

The following two diagrams depict how some function spaces (first diagram) and their corresponding duals (second diagram) embed one into another. The shaded line in the first diagram highlights those function spaces, for which we do not know any RKHS that they continuously and densely contain. Likewise, we do not know whether their duals can be continuously embedded via a KME. Lemma~\ref{lem:HKnotBigger} might hint, that such KMEs do not exist.

%\begin{multicols}{2}
\begin{equation*}
	\newcommand{\col}{\cellcolor{gray!20}}
	\renewcommand{\arraystretch}{1.2}
	\begin{array}{p{.6em}cccccccc}
		&\multirow{4}{*}{\rotatebox[origin=c]{55}{$\xhookleftarrow{\quad \quad}$}}&\Lp{q} &\csupset &\Sobo{m,q}{0} &\csupset & \Sobo{\infty,q}{0} &\csupset & \HH_{\mathrm{gauss}}\\
		&&\cupset & &\cupset & &\cupset & &\\
		&&\col \Cont{}{c} &\col \csupset &\col \Cont{m}{c} &\col \csupset &\col \Cont{\infty}{c} & &\scalebox{1.3}{\vequal} \\
		&&\cdownset & &\cdownset & &\cdownset & &\\
		$\complex^\inputS$&&\Cont{}{0} &\csupset &\Cont{m}{0} &\csupset &\Cont{\infty}{0} & \csupset & \HH_{\mathrm{gauss}}\\
		&\multirow{4}{*}{\rotatebox[origin=c]{-55}{$\xhookleftarrow{\quad \quad}$}}&\cdownset & &\cdownset & &\cdownset & &\\
		&&(\Cont{}{b})_c &\csupset & (\Cont{m}{b})_c & \csupset & (\Cont{\infty}{b})_c & & \\
		&&\cdownset & &\cdownset & &\cdownset & &\\
		&&\Cont{}{} &\csupset &\Cont{m}{} & \csupset & \Cont{\infty}{} & &\\
	\end{array}
\end{equation*}%\break

\begin{equation*}
	\newcommand{\col}{\cellcolor{gray!20}}
	\renewcommand{\arraystretch}{1.2}
	\begin{array}{p{.6em}cccccccc}
		&\multirow{4}{*}{\rotatebox[origin=c]{55}{$\xhookrightarrow{\quad \quad}$}}&\Lp{q'} &\csubset &\Sobo{-m,q'}{0} &\csubset &\Sobo{-\infty, q'}{0} &\csubset &\HH'_{\mathrm{gauss}}\\
		&&\cdownset & &\cdownset & &\cdownset & &\\
		&&\col \Mr & \col \csubset &\col \Dall{m} & \col \csubset &\col \Dall{\infty} & &\scalebox{1.3}{\vequal} \\
		& &\cupset & & \cupset & & \cupset & &\\
		$\Mfs$&&\Mf &\csubset &\DLone{m} & \csubset &\DLone{\infty} & \csubset & \HH_{\mathrm{gauss}}' \\
		&\multirow{4}{*}{\rotatebox[origin=c]{-55}{$\xhookrightarrow{\quad \quad}$}}&\cupset & & \cupset & & \cupset & &\\
		&&\Mf &\csubset &\DLone{m} & \csubset &\DLone{\infty} & & \\
		&&\cupset & & \cupset & & \cupset & &\\
		&&\Mc &\csubset &\Dcomp{m} & \csubset &\Dcomp{\infty} & & \\
	\end{array}
\end{equation*}

\bibliography{DistributionEmbeddings}

\end{document}